\newcommand{\poly}{\mathrm{poly}}
\newcommand{\E}{\mathbb{E}}
\newcommand{\diag}{\mat{diag}}
\newcommand{\tr}{\mathrm{tr}}
\def\cN{\mathcal{N}}
\newcommand{\mat}[1]{\mathbf{#1}}
\newcommand{\vect}[1]{\mathbf{#1}}
\newcommand{\norm}[1]{\left\|#1\right\|}
\newcommand{\inner}[1]{\left\langle#1\right\rangle}
\newcommand{\abs}[1]{\left|#1\right|}
\newcommand{\expect}{\mathbb{E}}
\newcommand{\vectorize}[1]{\text{vec}\left(#1\right)}
\newcommand{\variance}{\text{Var}}
\newcommand{\params}{\vect{\theta}}
\newcommand{\relu}[1]{\sigma\left(#1\right)}
\newcommand{\lambdamin}{\lambda_{\min}\left(\mat{K}^{(H)}\right)}
\newtheorem{thm}{Theorem}[section]
\newtheorem{lem}{Lemma}[section]
\newtheorem{prop}{Proposition}[section]
\newtheorem{defn}{Definition}[section]
\newtheorem{rem}{Remark}[section]
\newtheorem{condition}{Condition}[section]
\newcommand{\gaussian}{\mathcal{P}}
\newcommand{\linfunc}{\mathcal{L}}
\newcommand{\linsub}{\mathcal{W}}
\newcommand{\detmap}{\mathcal{D}}
\newcommand{\activate}{\rho}
\newcommand{\bias}{b}
\newcommand{\error}{\mathcal{E}}
\newcommand{\wbound}{\mathfrak{W}}
\newcommand{\rhobound}{\Lambda}
\newcommand{\gaussianspace}{{\mathcal{L}^2}}
\icmltitlerunning{Gradient Descent Finds Global Minima of Deep Neural Networks}
\begin{document}

\twocolumn[
\icmltitle{Gradient Descent Finds Global Minima of Deep Neural Networks}



\icmlsetsymbol{equal}{*}

\begin{icmlauthorlist}
\icmlauthor{Simon S. Du}{equal,cmu}
\icmlauthor{Jason D. Lee}{equal,usc}
\icmlauthor{Haochuan Li}{equal,physics,bigdata}
\icmlauthor{Liwei Wang}{equal,key,bigdata}
\icmlauthor{Xiyu Zhai}{equal,mit}

\end{icmlauthorlist}

\icmlaffiliation{cmu}{Machine Learning Department, Carnegie Mellon University}
\icmlaffiliation{usc}{Data Science and Operations Department, University of Southern California}
\icmlaffiliation{physics}{School of Physics, Peking University}
\icmlaffiliation{key}{Key Laboratory of Machine Perception, MOE, School of EECS, Peking University}
\icmlaffiliation{bigdata}{Center for Data Science, Peking University, Beijing Institute of Big Data Research}
\icmlaffiliation{mit}{Department of EECS, Massachusetts Institute of Technology}
\icmlcorrespondingauthor{Simon S. Du}{ssdu@cs.cmu.edu}

\icmlkeywords{gradient descent, non-convex, convergence, deep neural network}

\vskip 0.3in
]



\printAffiliationsAndNotice{\icmlEqualContribution} 

\begin{abstract}
\label{sec:abs}
Gradient descent finds a global minimum in training deep neural networks despite the objective function being non-convex. The current paper proves gradient descent achieves zero training loss in polynomial time for a deep over-parameterized neural network with residual connections (ResNet). Our analysis relies on the particular structure of the Gram matrix induced by the neural network architecture. This structure allows us to show the Gram matrix is stable throughout the training process and this stability implies the global optimality of the gradient descent algorithm. We further extend our analysis to deep residual convolutional neural networks and obtain a similar convergence result.

\end{abstract}

\section{Introduction}
\label{sec:intro}
One of the mysteries in deep learning is randomly initialized first-order methods like gradient descent achieve zero training loss, even if the labels are arbitrary~\citep{zhang2016understanding}.
Over-parameterization is widely believed to be the main reason for this phenomenon as only if the neural network has a sufficiently large capacity, it is possible for this neural network to fit all the training data.
For example, \citet{NIPS2017_7203} proved that except for a measure zero set, all functions cannot be approximated by ReLU networks with a width less than the input dimension.
In practice, many neural network architectures are highly over-parameterized.
For example, Wide Residual Networks have 100x parameters than the number of training data~\citep{zagoruyko2016wide}.

The second mysterious phenomenon  in training deep neural networks is ``deeper networks are harder to train."
To solve this problem, \citet{he2016deep} proposed the deep residual network (ResNet) architecture which enables randomly initialized first order method to train neural networks with an order of magnitude more layers.
Theoretically, \citet{hardt2016identity} showed that residual links in linear networks prevent gradient vanishing in a large neighborhood of zero, but for neural networks with non-linear activations,  the advantages of using residual connections are not well understood.

In this paper, we demystify these two mysterious phenomena.
We consider the setting where there are $n$ data points, and the neural network has $H$ layers with width $m$.
We focus on the least-squares loss and assume the activation function is Lipschitz and smooth.
This assumption holds for many activation functions including the soft-plus and sigmoid.
Our contributions are summarized below.
\begin{itemize}
\item As a warm-up, we first consider a fully-connected feedforward network. We show if $m = \Omega\left(\poly(n)2^{O(H)}\right)$\footnote{The precise polynomials and data-dependent parameters are stated in Section~\ref{sec:deep_discrete},~\ref{sec:resnet},~\ref{sec:conv_resnet}.}, then randomly initialized gradient descent converges to zero training loss at a linear rate.
\item Next, we consider the ResNet architecture. 
We show as long as $m = \Omega\left(\poly(n,H)\right)$, then randomly initialized gradient descent converges to zero training loss at a linear rate.
Comparing with the first result, the dependence on the number of layers improves exponentially for ResNet.
This theory demonstrates the advantage of using residual architectures.
\item Lastly, we apply the same technique to analyze convolutional ResNet. 
We show if $m = \poly(n,p,H)$ where $p$ is the number of patches, then randomly initialized gradient descent achieves zero training loss.
\end{itemize}
Our proof builds on two ideas from previous work on gradient descent for two-layer neural networks. First, we use the observation by \cite{li2018learning} that if the neural network is over-parameterized, every weight matrix is close to its initialization.
Second, following \cite{du2018gradient}, we analyze the dynamics of the predictions whose convergence is determined by the least eigenvalue of the Gram matrix induced by the neural network architecture and to lower bound the least eigenvalue, it is sufficient to bound the distance of each weight matrix from its initialization.

Different from these two works, in analyzing deep neural networks, we need to exploit more structural properties of deep neural networks and develop new techniques for analyzing both the initialization and gradient descent dynamics.
In Section~\ref{sec:technique} we give an overview of our proof technique.

\subsection{Organization}
\label{sec:org}
This paper is organized as follows.
In Section~\ref{sec:rel}, we discuss related works.
In Section~\ref{sec:pre}, we formally state the problem setup.
In Section~\ref{sec:technique}, we present our main analysis techniques.
In Section~\ref{sec:deep_discrete}, we give a warm-up result for the deep fully-connected neural network.
In Section~\ref{sec:resnet}, we give our main result for the ResNet.
In Section~\ref{sec:conv_resnet}, we give our main result for the convolutional ResNet.
We conclude in Section~\ref{sec:conclusion} and defer all proofs to the appendix.

\section{Related Works}
\label{sec:rel}
Recently, many works try to study the optimization problem in deep learning.
Since optimizing a neural network is a non-convex problem, one approach is first to develop a general theory for a class of non-convex problems which satisfy desired geometric properties and then identify that the neural network optimization problem belongs to this class.
One promising candidate class is the set of functions that satisfy: a) all local minima are global and b) there exists a negative curvature for every saddle point.
For this function class, researchers have shown  (perturbed) gradient descent~\citep{jin2017escape,ge2015escaping,lee2016gradient,du2017gradient} can find a global minimum.
Many previous works thus  try to study the optimization landscape of neural networks with different activation functions~\citep{soudry2017exponentially,safran2017spurious,safran2016quality,zhou2017critical,freeman2016topology,hardt2016identity,nguyen2017loss,kawaguchi2016deep,venturi2018neural,soudry2016no,du2018power,soltanolkotabi2018theoretical,haeffele2015global}.
However, even for a three-layer linear network, there exists a saddle point that does not have a negative curvature~\citep{kawaguchi2016deep}, so it is unclear whether this geometry-based approach can be used to obtain the global convergence guarantee of first-order methods.

Another way to attack this problem is to study the dynamics of a specific algorithm for a specific neural network architecture.
Our paper also belongs to this category.
Many previous works put assumptions on the input distribution and assume the label is generated according to a planted neural network.
Based on these assumptions, one can obtain global convergence of gradient descent for some shallow neural networks~\citep{tian2017analytical,soltanolkotabi2017learning,brutzkus2017globally,du2017spurious,li2017convergence,du2017convolutional}.
Some local convergence results have also been proved~\citep{zhong2017learning,zhong2017recovery,zhang2018learning}.
In comparison, our paper does not try to recover the underlying neural network. 
Instead, we focus on minimizing the training loss and rigorously prove that randomly initialized gradient descent can achieve zero training loss.

The most related papers are \cite{li2018learning,du2018gradient} who observed that when training an over-parametrized two-layer fully-connected neural network, the weights do not change a large amount, which
we also use to show the stability of the Gram matrix.
They used this observation to obtain the convergence rate of gradient descent on a two-layer over-parameterized neural network for the cross-entropy and least-squares loss. 
More recently, \citet{allen2018convergence} generalized ideas from \cite{li2018learning} to derive convergence rates of training recurrent neural networks.

Our work extends these previous results in several ways: a) we consider deep networks,  b) we generalize to ResNet architectures, and c) we generalize to  convolutional networks. To improve the width dependence $m$ on sample size $n$, we utilize a smooth activation (e.g. smooth ReLU). For example, our results specialized to depth $H=1$ improve upon \cite{du2018gradient} in the required amount of overparametrization from $m =\Omega\left( n^6\right)$ to $m=  \Omega\left( n^4\right)$. See Theorem \ref{thm:main_gd} for the precise statement.

 \citet{chizat2018note} brought to our attention the paper of \citet{jacot2018neural} which proved a similar weight stability phenomenon for deep networks, but only in the asymptotic setting of infinite-width networks and gradient flow run for a finite time. \citet{jacot2018neural} do not establish the convergence of gradient flow to a global minimizer.  In lieu of their results, our work can be viewed as a generalization of their result to: a) finite width, b) gradient descent as opposed to gradient flow, and c) convergence to a global minimizer.

\citet{mei2018mean,chizat2018global,sirignano2018mean,rotskoff2018neural,wei2018margin} used optimal transport theory to analyze gradient descent on over-parameterized models.
However, their results are limited to two-layer neural networks and may require an exponential amount of over-parametrization.

\citet{daniely2017sgd} developed the connection between deep neural networks with kernel methods and showed stochastic gradient descent can learn a function that is competitive with the best function in the conjugate kernel space of the network. 
\citet{andoni2014learning} showed that gradient descent can learn networks that are competitive with polynomial classifiers.
However, these results do not imply gradient descent can find a global minimum for the empirical loss minimization problem. Our analysis of the Gram matrices at random initialization is closely related to prior work on the analysis of infinite-width networks as Gaussian Processes~\cite{raghu2016expressive,matthews2018gaussian,lee2017deep,schoenholz2016deep}. Since we require the initialization analysis for three distinct architectures (ResNet, feed-forward, and convolutional ResNet), we re-derive many of these prior results in a unified fashion in Appendix \ref{sec:general_formulation}.

Finally, in concurrent work,  \citet{allen2018convergencetheory} also analyze gradient descent on deep neural networks. The primary difference between the two papers is that we analyze general smooth activations, and \citet{allen2018convergencetheory} develop specific analysis for ReLU activation. The two papers also differ significantly on their data assumptions. \textit{We wish to emphasize a fair comparison is not possible due to the difference in setting and data assumptions. We view the two papers as complementary since they address different neural net architectures.} 

For ResNet, the primary focus of this manuscript, the required width per layer for \citet{allen2018convergencetheory} is  $m \gtrsim n^{30} H^{30} \log^2 \frac{1}{\epsilon}$  and for this paper's Theorem \ref{thm:resnet_gd} is $m \gtrsim n^4 H^2$.\footnote{In all comparisons, we ignore the polynomial dependency on data-dependent parameters which only depends on the input data and the activation function. The two papers use different measures and are not directly comparable.}
Our paper requires a width $m$ that does not depend on the desired accuracy $\epsilon$. As a consequence, Theorem \ref{thm:resnet_gd} guarantees the convergence of gradient descent to a global minimizer. 
The iteration complexity of \citet{allen2018convergencetheory} is $T \gtrsim  n^6 H^2 \log \frac{1}{\epsilon}$ and of Theorem \ref{thm:resnet_gd} is $T \gtrsim n^2 \log \frac{1}{\epsilon}$.

For fully-connected networks, \citet{allen2018convergencetheory} requires width $m \gtrsim n^{30} H^{30} \log^2 \frac{1}{\epsilon}$ and iteration complexity $T \gtrsim n^6 H^2 \log \frac{1}{\epsilon}$. Theorem \ref{thm:gd-mlp} requires width $m \gtrsim n^4 2^{O(H)}$ and iteration complexity $T \gtrsim n^2 2^{O(H)}  \log \frac{1}{\epsilon}$. 
The primary difference is for very deep fully-connected networks, \citet{allen2018convergencetheory} has milder dependence on $H$, but worse dependence on $n$.
Commonly used fully-connected networks such as VGG are not extremely deep ($H=16$), yet the dataset size such as ImageNet ($n\sim 10^6$) is very large.

In a second concurrent work, \citet{zou2018stochastic} also analyzed the convergence of gradient descent on fully-connected networks with ReLU activation. The emphasis is on different loss functions (e.g. hinge loss), so the results are not directly comparable. Both \citet{zou2018stochastic} and \citet{allen2018convergencetheory} train a subset of the layers, instead of all the layers as in this work, but also analyze stochastic gradient.

\section{Preliminaries}
\label{sec:pre}
\subsection{Notations}
We Let $[n] = \{1, 2, \ldots, n\}$.
We use $N(\vect{0},\mat{I})$ to denote the standard Gaussian distribution.
For a matrix $\vect A$, we use $\mat A_{ij}$ to denote its $(i, j)$-th entry. We will also use $\mat{A}_{i,:}$ to denote the $i$-th row vector of $\mat{A}$ and define $\mat{A}_{i,j:k}=(\mat{A}_{i,j},\mat{A}_{i,j+1},\cdots,\mat{A}_{i,k})$ as part of the vector. 
Similarly $\mat{A}_{:,i}$ is the $i$-th column vector and $\mat{A}_{j:k,i}$ is a part of $i$-th column vector.
For a vector $\vect{v}$, we use $\norm{\vect{v}}_2$ to denote the Euclidean norm.
For a matrix $\mat{A}$ we use $\norm{\mat{A}}_F$ to denote the Frobenius norm and $\norm{\mat{A}}_2$ to denote the operator norm.
If a matrix $\mat{A}$ is positive semi-definite, we use $\lambda_{\min}(\mat{A})$ to denote its smallest eigenvalue.
We use $\langle \cdot, \cdot \rangle$ to denote the standard Euclidean inner product between two vectors or matrices.
We let $O(\cdot)$ and $\Omega\left(\cdot\right)$ denote standard Big-O and Big-Omega notations, only hiding  constants.
In this paper we will use  $C$ and $c$ to denote constants. The specific value can be different from line to line.

\subsection{Activation Function}
We use $\relu{\cdot}$ to denote the activation function.
In this paper we impose some technical conditions on the activation function.
The guiding example is softplus: $\relu{z} = \log(1+\exp(z))$.
\begin{condition}[Lipschitz and Smooth]\label{cond:lip_and_smooth}
There exists a constant $c>0$ such that	$\abs{\relu{0}} \le c$ and for any $z,z' \in \mathbb{R}$,
\begin{align*}
\abs{\relu{z} -\relu{z'} }\le &c\abs{z-z'}, \\
\text{ and }\abs{\sigma'(z)-\sigma'(z)} \le &c \abs{z-z'}.
\end{align*}
\end{condition}
These two conditions will be used to show the stability of the training process.
Note for softplus both Lipschitz constant and smoothness constant are $1$.
In this paper, we view all activation function related parameters as constants.

\begin{condition}\label{cond:analytic}
$\relu{\cdot}$ is analytic and is not a polynomial function.
\end{condition}
This assumption is used to guarantee the positive-definiteness of certain Gram matrices which we will define later.
Softplus function satisfies this assumption by definition.

\subsection{Problem Setup}
In this paper, we focus on the empirical risk minimization problem with the quadratic loss function\begin{align}
	\min_{\params} L(\params) = \frac{1}{2}\sum_{i=1}^{n}(f(\params,\vect{x}_i)-y_i)^2 \label{eqn:loss}
\end{align}where $\left\{\vect{x}_i\right\}_{i=1}^n$ are the training inputs, $\left\{y_i\right\}_{i=1}^n$ are the labels, $\params$ is the parameter we optimize over and $f$ is the prediction function, which in our case is a neural network.
We consider the following architectures. 
\begin{itemize}
\item \textbf{Multilayer fully-connected neural networks:}
Let $\vect{x} \in \mathbb{R}^{d}$ be the input, $\mat{W}^{(1)} \in \mathbb{R}^{m \times d}$ is the first weight matrix, $\mat{W}^{(h)} \in \mathbb{R}^{m \times m}$ is the weight at the $h$-th layer for $2\le h\le H$, $\vect{a} \in \mathbb{R}^{m}$ is the output layer and $\relu{\cdot}$ is the activation function.\footnote{We assume intermediate layers are square matrices for simplicity. It is not difficult to generalize our analysis to rectangular weight matrices.} 
We define the prediction function recursively (for simplicity we let $\vect{x}^{(0)}=\vect{x}$).
\begin{align}
\vect{x}^{(h)}&=\sqrt{\frac{c_{\sigma}}{m}} \relu{\mat{W}^{(h)}\vect{x}^{(h-1)}}, 1\le h \le H\nonumber\\
f(\vect{x},\params)&=\vect{a}^\top \vect{x}^{(H)}.
\label{eqn:mlp}
\end{align}
where 
$c_{\sigma}=\left(\expect_{x\sim N(0,1)}\left[\sigma(x)^2\right]\right)^{-1}$ is a scaling factor to normalize the input in the initialization phase.

\item \textbf{ResNet}\footnote{We will refer to this architecture as ResNet, although this differs by the standard ResNet architecture since the skip-connections at every layer, instead of every two layers. 
This architecture was previously studied in \cite{hardt2016identity}.
We study this architecture for the ease of presentation and analysis.
It is not hard to generalize our analysis to architectures with skip-connections are every two or more layers.
}:
We use the same notations as the multilayer fully connected neural networks.
We define the prediction recursively.
\begin{align}
\vect{x}^{(1)}&=\sqrt{\frac{c_{\sigma}}{m}} \relu{\mat{W}^{(1)}\vect{x}}, \nonumber\\
\vect{x}^{(h)} & =\vect{x}^{(h-1)}+\frac{c_{res}}{H\sqrt{m}} \relu{\mat{W}^{(h)}\vect{x}^{(h-1)}} \nonumber\\
& ~~~~~~~~~~~~~~~~~~~~~~~~~~~~~~~\text{ for } 2\le h\le H, \nonumber\\
f_{res}(\vect{x},\params)&=\vect{a}^\top \vect{x}^{(H)}
 \label{eqn:resnet}
\end{align}
where  $0< c_{res} < 1$ is a small constant.
Note here we use a $\frac{c_{res}}{H\sqrt{m}}$ scaling. This scaling plays an important role in guaranteeing the width per layer only needs to scale polynomially with $H$.
In practice, the small scaling is enforced by a small initialization of the residual connection~\citep{hardt2016identity,zeroinit2018}, which obtains state-of-the-art performance for deep residual networks. We choose to use an explicit scaling, instead of altering the initialization scheme for notational convenience.

\item \textbf{Convolutional ResNet}:
Lastly, we consider the convolutional ResNet architecture.
Again we define the prediction function in a recursive way.

	Let $\vect{x}^{(0)} \in \mathbb{R}^{d_0\times p}$ be the input, where $d_0$ is the number of input channels and $p$ is the number of pixels.
	For $h \in [H]$, we let the number of channels be $d_h = m$ and number of pixels be $p$.
	Given $\vect{x}^{(h-1)} \in \mathbb{R}^{d_{h-1}\times p}$ for $h \in [H]$,
	we first use an operator $\phi_h(\cdot)$ to divide $\vect{x}^{(h-1)} $ into $p$ patches.
	Each patch has size $qd_{h-1}$ and this  implies a map $\phi_h(\vect{x}^{(h-1)})\in\mathbb{R}^{qd_{h-1}\times p}$. 
	For example, when the stride is $1$ and $q=3$
	\begin{align*}
	&\phi_h(\mat{x}^{(h-1)}) \\
	=&\begin{pmatrix}
	\left(\mat{x}^{(h-1)}_{1,0:2}\right)^\top, 
	&\ldots &, \left(\mat{x}^{(h-1)}_{1,p-1:p+1}\right)^\top\\
	\ldots, 
	& \ldots, & \ldots \\
	\left(\mat{x}^{(h-1)}_{d_{h-1},0:2}\right)^\top, 
	&\ldots, &  \left(\mat{x}^{(h-1)}_{d_{h-1},p-1:p+1}\right)^\top
	\end{pmatrix}
	\end{align*}

where we let $\mat{x}^{(h-1)}_{:,0}=\mat{x}^{(h-1)}_{:,p+1}=\vect{0}$, i.e., zero-padding.
Note this operator has the property\begin{align*}
 \norm{\vect{x}^{(h-1)}}_F \le \norm{\phi_h(\vect{x}^{(h-1)})}_F \le \sqrt{q}\norm{\vect{x}^{(h-1)}}_F .
	\end{align*}
because each element from $\vect{x}^{(h-1)}$ at least appears once and at most appears $q$ times.
In practice, $q$ is often small like $3\times 3$, so throughout the paper we view $q$ as a constant in our theoretical analysis.
To proceed, let $\mat{W}^{(h)} \in \mathbb{R}^{d_h \times qd_{h-1}}$, we have \begin{align*}
	\vect{x}^{(1)}  =&\sqrt{\frac{c_{\sigma}}{m}} \relu{\mat{W}^{(1)} \phi_1(\vect{x})} \in \mathbb{R}^{m\times p},\\
		\vect{x}^{(h)}  =&\vect{x}^{(h-1)}+\frac{c_{res}}{H\sqrt{m}} \relu{\mat{W}^{(h)} \phi_h(\vect{x}^{(h-1))}} \in \mathbb{R}^{m\times p} \\
&~~~~~~~~~~~~~~~~~~~~~~~~~~~~~~~~~~~~~~~~~~~~		\text{for }2\le h\le H,
	\end{align*} where  $0< c_{res} < 1$ is a small constant.
	Finally, for $\vect{a} \in \mathbb{R}^{m \times p}$, the output is defined as \begin{align*}
	f_{cnn}(\vect{x},\params) = \langle \vect{a}, \vect{x}^{(H)} \rangle.
	\end{align*}
	Note here we use the similar scaling $O(\frac{1}{H\sqrt{m}})$ as ResNet.

\end{itemize}

To learn the deep neural network, we consider the randomly initialized gradient descent algorithm to find the global minimizer of the empirical loss~\eqref{eqn:loss}.
Specifically, we use the following random initialization scheme.
For every level $h \in [H]$, each entry is sampled from a standard Gaussian distribution, $\mat{W}_{ij}^{(h)} \sim N(0,1)$ and each entry of the output layer $\vect{a}$ is also sampled from $N(0,1)$.
In this paper, we train all layers by gradient descent, for $k=1,2,\ldots,$ and $h\in[H]$\begin{align*}
\mat{W}^{(h)}(k) &= \mat{W}^{(h)}(k-1) - \eta \frac{\partial L(\params(k-1))}{\partial \mat{W}^{(h)}(k-1)},\\
\vect{a}(k) &= \vect{a}(k-1) - \eta \frac{\partial L(\params(k-1))}{\partial \vect{a}(k-1)}
\end{align*} where $\eta >0 $ is the step size.

\section{Technique Overview}
\label{sec:technique}
In this section, we describe our main idea of proving the global convergence of gradient descent.
Our proof technique is inspired by \citet{du2018gradient} who proposed to study the dynamics of differences between labels and predictions.
Here the individual prediction at the $k$-th iteration is \[
u_i(k) = f(\params(k),\vect{x}_i)
\] and we denote $\vect{u}(k)=\left(u_1(k),\ldots,u_n(k)\right)^\top \in \mathbb{R}^n$.
\citet{du2018gradient} showed that for two-layer fully-connected neural network, the sequence  $\left\{\vect{y}-\vect{u}(k)\right\}_{k=0}^\infty$ admits the following dynamics\begin{align*}
\vect{y}-\vect{u}(k+1) =  \left(\mat{I}-\eta\mat{H}(k)\right)\left(\vect{y}-\vect{u}(k)\right)
\end{align*} where $\mat{H}(k) \in \mathbb{R}^{n \times n}$ is a Gram matrix with\footnote{This formula is for the setting that only the first layer is trained. } \[\mat{H}_{ij}(k) = \left\langle\frac{\partial u_i(k)}{\partial \mat{W}^{(1)}(k)},\frac{\partial u_j(k)}{\partial \mat{W}^{(1)}(k)} \right\rangle.\]
The key finding in \cite{du2018gradient} is that if $m$ is sufficiently large, $\mat{H}(k) \approx \mat{H}^\infty$ for all $k$ where $\mat{H}^{\infty}$ is defined as $\mat{H}^\infty_{ij} = \expect_{\vect{w}\sim N(\vect{0},\mat{I})}\left[\sigma'\left(\vect{w}^\top \vect{x}_i\right)\sigma'\left(\vect{w}^\top \vect{x}_j\right) \vect{x}_i ^\top \vect{x}_j\right]$.
Notably, $\mat{H}^\infty$ is a fixed matrix which only depends on the training input, but \emph{does not} depend on neural network parameters $\params$.
As a direct result, in the large $m$ regime,   the dynamics of $\left\{\vect{y}-\vect{u}(k)\right\}_{k=0}^\infty$ is approximately \emph{linear}\begin{align*}
	\vect{y}-\vect{u}(k+1) \approx \left(\mat{I} - \eta\mat{H}^\infty\right)\left(\vect{y}-\vect{u}(k)\right).
\end{align*}
For this linear dynamics, using standard analysis technique for power method, one can show $\left\{\vect{y}-\vect{u}(k)\right\}_{k=0}^\infty$ converges to $\vect{0}$ where the rate is determined by the least eigenvalue of $\mat{H}^\infty$ and the step size $\eta$.

We leverage this insight to our deep neural network setting.
Again we consider the sequence $\{\vect{y}-\vect{u}(k)\}_{k=0}^\infty$, which admits the dynamics \[
\vect{y}-\vect{u}(k+1)= \left(\mat{I}-\eta \mat{G}(k)\right)\left(\vect{y}-\vect{u}(k)\right)
\]
where \begin{align*}
&\mat{G}_{ij}(k) \\
= &\inner{\frac{\partial u_i(k)}{\partial \params(k)},\frac{\partial u_j(k)}{\partial \params(k)}} \\
= &\sum_{h=1}^{H}\inner{\frac{\partial u_i(k)}{\partial \mat{W}^{(h)}(k)},\frac{\partial u_j(k)}{\partial \mat{W}^{(h)}(k)}} + \inner{\frac{\partial u_i(k)}{\partial \vect{a}(k)},\frac{\partial u_j(k)}{\partial \vect{a}(k)}} \\
\triangleq & \sum_{h=1}^{H+1}\mat{G}^{(h)}_{ij}(k).
\end{align*}
Here we define $\mat{G}^{(h)} \in \mathbb{R}^{n \times n}$ with $\mat{G}_{ij}^{(h)}(k)= \inner{\frac{\partial u_i(k)}{\partial \mat{W}^{(h)}(k)}, \frac{\partial u_j(k)}{\partial \mat{W^{(h)}}(k)} }$ for $h=1,\ldots,H$ and $\mat{G}_{ij}^{(H+1)}(k) = \inner{\frac{\partial u_i(k)}{\partial \vect{a}(k)},\frac{\partial u_j(k)}{\partial \vect{a}(k)}}$.
Note for all $h\in [H+1]$, each entry of $\mat{G}^{(h)}(k)$ is an inner product.
Therefore, $\mat{G}^{(h)}(k)$ is a positive semi-definite (PSD) matrix for $h \in [H+1]$.
Furthermore, if there exists one $h \in [H]$ that $\mat{G}^{(h)}(k)$ is strictly positive definite, then if one chooses the step size $\eta$ to be sufficiently small, the loss decreases at the $k$-th iteration according the analysis of power method.
In this paper we focus on $\mat{G}^{(H)}(k)$, the gram matrix induced by the weights from $H$-th layer for simplicity at the cost of a minor degradation in convergence rate.\footnote{Using the contribution of all the gram matrices to the minimum eigenvalue can potentially improve the convergence rate.}

We use the similar observation in \cite{du2018gradient} that we show if the width is large enough for all layers, for all $k=0,1,\ldots$, $\mat{G}^{(H)}(k)$ is close to a fixed matrix $\mat{K}^{(H)} \in \mathbb{R}^{n \times n}$ which depends on the input data, neural network architecture and the activation but does not depend on neural network parameters $\params$.
According to the analysis of the power method, once we establish this, as long as $\mat{K}^{(H)}$ is strictly positive definite, then the gradient descent enjoys a linear convergence rate.
We will show for $\mat{K}^{(H)}$ is strictly positive definite as long as the training data is not degenerate (c.f. Proposition~\ref{prop:fullrank_fc} and~\ref{prop:resnet-depth-ind}). 

While following the similar high-level analysis framework proposed by \citet{du2018gradient}, analyzing the convergence of gradient descent for \emph{deep} neural network is significantly more involved and requires new technical tools.
To show $\mat{G}^{(H)}(k)$ is close to $\mat{K}^{(H)}$, we have two steps.
First, we show in the initialization phase $\mat{G}^{(H)}(0)$ is close to $\mat{K}^{(H)}$.
Second, we show during training $\mat{G}^{(H)}(k)$ is close to $\mat{G}^{(H)}(0)$ for $k=1,2,\ldots$.
Below we give overviews of these two steps.
\paragraph{Analysis of Random Initialization}
Unlike \cite{du2018gradient} in which they showed $\mat{H}(0)$ is close to $\mat{H}^\infty$ via a simple concentration inequality, showing $\mat{G}^{(H)}(0)$ is close to $\mat{K}^{(H)}$ requires more subtle calculations.
First, as will be clear in the following sections, $\mat{K}^{(H)}$ is a recursively defined matrix.
Therefore, we need to analyze how the perturbation (due to randomness of initialization and finite $m$) from lower layers propagates to the $H$-th layer.
Second, this perturbation propagation involves non-linear operations due to the activation function.
To quantitatively characterize this perturbation propagation dynamics, we use induction and leverage techniques from Malliavin calculus~\citep{malliavin1995gaussian}.
We derive a general framework that allows us to analyze the initialization behavior for the fully-connected neural network, ResNet, convolutional ResNet and other potential neural network architectures in a unified way.

One important finding in our analysis is that ResNet architecture makes the ``perturbation propagation" more stable.
The high level intuition is the following.
For fully connected neural network, suppose we have some perturbation $\norm{\mat{G}^{(1)}(0)-\mat{K}^{(1)}}_2\le \error_1$ in the first layer.
This perturbation propagates to the $H$-th layer admits the form
\begin{align}
\norm{\mat{G}^{(H)}(0)-\mat{K}^{(H)}}_2\triangleq \error_H \lesssim 2^{O(H)}\error_1. \label{eqn:why_not_fc}
\end{align}
Therefore, we need to have $\error_1 \le \frac{1}{2^{O(H)}}$ and this makes $m$ have exponential dependency on $H$.\footnote{We not mean to imply that fully-connected networks necessarily depend exponentially on $H$, but simply to illustrate in our analysis why the exponential dependence arises. For specific activations such as ReLU and careful initialization schemes, this exponential dependence may be avoided.}
 
On the other hand, for ResNet the perturbation propagation admits the form\begin{align}
\error_H \lesssim \left(1+O\left(\frac{1}{H}\right)\right)^{H}\epsilon_1 = O\left(\epsilon_1\right) \label{eqn:why_resnet}
\end{align}
Therefore we do not have the exponential explosion problem for ResNet.
We refer readers to Section~\ref{sec:general_formulation} for details.

\paragraph{Analysis of Perturbation of During Training}
The next step is to show $\mat{G}^{(H)}(k)$ is close to $\mat{G}^{(H)}(0)$ for $k=0,1,\ldots$. 
Note $\mat{G}^{(H)}$ depends on weight matrices from all layers, so to establish that $\mat{G}^{(H)}(k)$ is close to $\mat{G}^{(H)}(0)$, we need to show $\mat{W}^{(h)}(k) - \mat{W}^{(h)}(0)$ is small for all $h \in [H]$ and $\vect{a}(k)-\vect{a}(0)$ is small.

In the two-layer neural network setting~\citep{du2018gradient},  they are able to show \emph{every} weight vector of the first layer is close to its initialization, i.e., $\norm{\mat{W}^{(1)}(k)-\mat{W}^{(1)}(0)}_{2,\infty}$ is small for $k=0,1,\ldots$.
While establishing this condition for two-layer neural network is not hard, this condition may not hold for multi-layer neural networks.
In this paper, we show instead, the averaged Frobenius norm
\begin{align}
\frac{1}{\sqrt{m}}\norm{\mat{W}^{(h)}(k)-\mat{W}^{(h)}(0)}_F \label{eqn:close_to_init} 
\end{align} is small for all $k=0,1,\ldots$.

Similar to the analysis in the initialization, showing Equation~\eqref{eqn:close_to_init} is small is highly involved because again, we need to analyze how the perturbation propagates.
We develop a unified proof strategy for the fully-connected neural network, ResNet and convolutional ResNet.
Our analysis in this step again sheds light on the benefit of using ResNet architecture for training.
The high-level intuition is similar to Equation~\eqref{eqn:why_resnet}.
See Section~\ref{sec:deep_discrete_proof},~\ref{sec:resnet_proof}, and~\ref{sec:conv_resnet_proof} for details.

\section{Warm Up: Convergence Result of GD for Deep Fully-connected Neural Networks}
\label{sec:deep_discrete}
In this section, as a warm up, we show gradient descent with a constant positive step size converges to the global minimum at a linear rate.
As we discussed in Section~\ref{sec:technique}, the convergence rate depends on least eigenvalue of the Gram matrix $\mat{K}^{(H)}$.
\begin{defn}\label{defn:gram_mlp}
The Gram matrix $\mat{K}^{(H)}$ is recursively defined as follows,
for $(i,j) \in [n] \times [n]$, and $h=1,\ldots,H-1$ \begin{align}
	\mat{K}^{(0)}_{ij} = &\langle \vect{x}_i, \vect{x}_j \rangle , \nonumber \\
	\mat{A}_{ij}^{(h)} = &\begin{pmatrix}
	\mat{K}^{(h-1)}_{ii} & \mat{K}^{(h-1)}_{ij} \\
	\mat{K}^{(h-1)}_{ji} & \mat{K}^{(h-1)}_{jj}
	\end{pmatrix} ,\label{eqn:kernel_mlp}\\
	\mat{K}^{(h)}_{ij} =  &c_{\sigma} \expect_{\left(u,v\right)^\top \sim N\left(\vect{0},\mat{A}_{ij}^{(h)}\right)}\left[\relu{u}\relu{v}\right],  \nonumber \\
	\mat{K}^{(H)}_{ij} = & c_{\sigma}\mat{K}^{(H-1)}_{ij} \expect_{\left(u,v\right)^\top \sim N\left(\vect{0},\mat{A}_{ij}^{(H-1)}\right) }\left[\sigma'(u)\sigma'(v)\right] .\nonumber
\end{align}
\end{defn}
The derivation of this Gram matrix is deferred to Section~\ref{sec:general_formulation}.
The convergence rate and the amount of over-parameterization depends on the least eigenvalue of this Gram matrix.
In Section~\ref{sec:fullrank_fc} we show as long as the input training data is not degenerate, then $\lambdamin$ is strictly positive.
We remark that if $H=1$, then $\mat{K}^{(H)}$ is the same the Gram matrix defined in \cite{du2018gradient}.

Now we are ready to state our main convergence result of gradient descent for deep fully-connected neural networks.
\begin{thm}[Convergence Rate of Gradient Descent for Deep Fully-connected Neural Networks]\label{thm:main_gd}
Assume for all $i \in [n]$, $\norm{\vect{x}_i}_2 = 1$, $\abs{y_i} = O(1)$  and the number of hidden nodes per layer 
\begin{align*}
m=\Omega\left(2^{O(H)}\max\left\{
\frac{n^4}{\lambda_{\min}^4\left(\mat{K}^{(H)}\right)},\frac{n}{\delta}, \frac{n^2\log(\frac{Hn}{\delta})}{\lambda_{\min}^2\left(\mat{K}^{(H)}\right)}
\right\}\right)
\end{align*} where $\mat{K}^{(H)}$ is defined in Equation~\eqref{eqn:kernel_mlp}.
If we set the step size 
\[\eta = O\left(\frac{\lambda_{\min}\left(\mat{K}^{(H)}\right)}{n^22^{O(H)}}\right),\] 
then with probability at least $1-\delta$ over the random initialization the loss, for $k=1,2,\ldots$, the loss at each iteration satisfies
\begin{align*}
L(\params(k))\le \left(1-\frac{\eta \lambda _{\min}\left(\mat{K}^{(H)}\right)}{2}\right)^{k}L(\params(0)).
\end{align*}
\label{thm:gd-mlp}
\end{thm}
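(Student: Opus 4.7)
The plan is to mirror the three-step template outlined in Section~\ref{sec:technique}: (i) show that at initialization the Gram matrix $\mat{G}^{(H)}(0)$ is close to the deterministic limit $\mat{K}^{(H)}$ so that $\lambda_{\min}(\mat{G}^{(H)}(0)) \ge \tfrac{3}{4}\lambda_{\min}(\mat{K}^{(H)})$; (ii) show that while the weights stay inside a ball of appropriately chosen radius $R = O(\sqrt{n}/\lambda_{\min}(\mat{K}^{(H)}))$ around initialization, $\mat{G}^{(H)}(k)$ remains close to $\mat{G}^{(H)}(0)$ so that $\lambda_{\min}(\mat{G}^{(H)}(k)) \ge \tfrac{1}{2}\lambda_{\min}(\mat{K}^{(H)})$; (iii) run a joint induction on $k$ that simultaneously tracks the loss decrease and the trajectory length of every weight matrix. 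Concretely, the dynamics identity
\[
\vect{y} - \vect{u}(k+1) = (\mat{I} - \eta \mat{G}(k))(\vect{y} - \vect{u}(k)) + \vect{\varepsilon}(k),
\]
where $\vect{\varepsilon}(k)$ is a second-order (in $\eta$) correction from the non-linearity of $f$ in $\params$, gives geometric contraction at rate $1 - \eta \lambda_{\min}(\mat{K}^{(H)})/2$ as long as $\mat{G}^{(H)}(k)$ remains well-conditioned and $\eta$ is small enough to absorb $\vect{\varepsilon}(k)$.

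For step (i), I would set up a recursive calculation of the forward quantities $\vect{x}_i^{(h)}$ and their correlations across data points. Over the randomness of $\mat{W}^{(h)}$, the pre-activation $\mat{W}^{(h)} \vect{x}_i^{(h-1)}$ is (conditionally) Gaussian with covariance that concentrates around $\mat{A}^{(h)}_{ij}$ by standard Hanson–Wright / sub-Gaussian concentration. Applying the Lipschitz property of $\sigma$ and $\sigma'$ from Condition~\ref{cond:lip_and_smooth} then propagates this concentration through the non-linearity; the induction delivers $\|\mat{G}^{(H)}(0) - \mat{K}^{(H)}\|_2 = O(\mathrm{poly}(n) 2^{O(H)}/\sqrt{m})$, which is $\le \tfrac{1}{4}\lambda_{\min}(\mat{K}^{(H)})$ for our choice of $m$. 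The factor $2^{O(H)}$ comes precisely from the recursion \eqref{eqn:why_not_fc}, where each layer can amplify the norm of the forward signal by a constant factor.

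For step (ii), I would prove a deterministic perturbation lemma: if $\|\mat{W}^{(h)} - \mat{W}^{(h)}(0)\|_F \le R$ and $\|\vect{a} - \vect{a}(0)\|_2 \le R$ for every $h$, with $R/\sqrt{m}$ small, then simultaneously the forward features $\vect{x}_i^{(h)}$, the backward signals $\partial u_i/\partial \mat{W}^{(h)}$, and consequently $\mat{G}^{(H)}$ are all close to their values at initialization. The proof is again a layerwise induction using the Lipschitz/smoothness of $\sigma$ and standard operator norm bounds for Gaussian matrices $\|\mat{W}^{(h)}(0)\|_2 = O(\sqrt{m})$. Choosing $R$ so that the resulting perturbation is at most $\lambda_{\min}(\mat{K}^{(H)})/4$ fixes the dependence of $m$ on $n/\lambda_{\min}(\mat{K}^{(H)})$ and on $2^{O(H)}$.

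Finally, step (iii) is the induction on $k$. The inductive hypotheses are (a) $L(\params(k')) \le (1 - \eta \lambda_{\min}(\mat{K}^{(H)})/2)^{k'} L(\params(0))$ for all $k'\le k$, and (b) $\|\mat{W}^{(h)}(k') - \mat{W}^{(h)}(0)\|_F \le R$ for all $k'\le k, h\in[H]$. Hypothesis (a) together with the gradient formula bounds $\|\frac{\partial L}{\partial \mat{W}^{(h)}(k')}\|_F$ by $O(\sqrt{L(\params(k'))} \cdot \sqrt{m})$; summing the geometric series gives a total travel distance $\sum_{k' \le k}\eta \|\nabla_{\mat{W}^{(h)}} L\|_F = O(\sqrt{n L(\params(0))}/\lambda_{\min}(\mat{K}^{(H)})) \cdot \sqrt{m}$, which is $\le R$ for the claimed $m$ and matches the scaling in \eqref{eqn:close_to_init}. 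Hypothesis (b) combined with step (ii) then maintains $\lambda_{\min}(\mat{G}^{(H)}(k+1)) \ge \lambda_{\min}(\mat{K}^{(H)})/2$, which together with the small choice of $\eta$ (needed to control the second-order term $\|\vect{\varepsilon}(k)\|_2$ using smoothness of $\sigma$) yields (a) at step $k+1$. The main obstacle is bookkeeping the $2^{O(H)}$ amplification through all three steps in a consistent way, which requires the induction on layers to preserve both a forward norm bound $\|\vect{x}_i^{(h)}\|_2 \le c^h$ and a backward sensitivity bound $\|\partial u_i/\partial \mat{W}^{(h)}\|_F^2 \le c^H$; these dual bounds are what force the $2^{O(H)}$ dependence in the width requirement and in the step size.
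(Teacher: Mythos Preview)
Your proposal follows essentially the same three-step template as the paper's proof (Appendix~\ref{sec:deep_discrete_proof}): initialization analysis (Lemmas~\ref{lem:lem:init_norm} and~\ref{lem:mlp_least_eigen}), a deterministic Gram-matrix perturbation lemma under bounded weight movement (Lemmas~\ref{lem:pertubation_of_neuron} and~\ref{lem:close_to_init_small_perturbation_smooth}), and a joint induction on $k$ coupling loss decay with trajectory length (Lemmas~\ref{lem:dist_from_init}, \ref{lem:i2_mlp}, \ref{lem:quadratic_term}). Two small corrections to your bookkeeping: the forward norms $\|\vect{x}_i^{(h)}(0)\|_2$ are in fact held in $[1/2,2]$ (Lemma~\ref{lem:lem:init_norm}) rather than growing like $c^h$, so the $2^{O(H)}$ factor enters through the backward product $\prod_k \|\mat{J}_i^{(k)}\mat{W}^{(k)}\|_2/m^{1/2}$ and the layerwise perturbation recursion of Lemma~\ref{lem:pertubation_of_neuron}; and the per-step gradient $\|\partial L/\partial \mat{W}^{(h)}\|_F$ is $O(1)$ in $m$ (the $(c_\sigma/m)^{(H-h+1)/2}$ prefactor exactly cancels $\|\vect{a}\|_2\prod_{k>h}\|\mat{W}^{(k)}\|_2$), not $O(\sqrt{m})$ as you wrote, which is precisely what makes the relative displacement $\tfrac{1}{\sqrt{m}}\|\mat{W}^{(h)}(k)-\mat{W}^{(h)}(0)\|_F$ vanish with $m$.
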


This theorem states that if the width $m$ is large enough and we set step size appropriately then gradient descent converges to the global minimum with zero loss at linear rate.
The main assumption of the theorem is that we need a large enough width of each layer.
The width $m$ depends on  $n$, $H$ and  $1/\lambdamin$.
The dependency on $n$ is only polynomial, which is the same as previous work on shallow neural networks~\citep{du2018gradient,li2018learning}.
Similar to \cite{du2018gradient}, $m$ also polynomially depends on $1/\lambdamin$.
However, the dependency on the number of layers $H$ is exponential.
As we discussed in Section~\ref{sec:technical_proofs}, this exponential comes from the instability of the fully-connected architecture (c.f. Equation~\eqref{eqn:why_not_fc}).
In the next section, we show with ResNet architecture, we can reduce the dependency on $H$ from $2^{(H)}$ to $\poly(H)$.

Note the requirement of $m$ has three terms.
The first term is used to show  the Gram matrix is stable during training. 
The second term is used to guarantee the output in each layer is approximately normalized at the initialization phase.
The third term is used to show  the perturbation  of Gram matrix at the initialization phase is small.
See Section~\ref{sec:deep_discrete_proof} for proofs.

The convergence rate depends step size $\eta$ and $\lambdamin$, similar to \cite{du2018gradient}.
Here we require $\eta = O\left(\frac{\lambda_{\min}\left(\mat{K}^{(H)}\right)}{n^22^{O(H)}}\right)$.
When $H=1$, this requirement is the same as the one used in \cite{du2018gradient}.
However, for deep fully-connected neural network, we require $\eta$ to be exponentially small in terms of number of layers.
The reason is similar to that we require $m$ to be exponentially large.
Again, this will be improved  in the next section.

\section{Convergence Result of GD for ResNet}
\label{sec:resnet}
In this section we consider the convergence of gradient descent for training a ResNet.
We will focus on how much over-parameterization is needed to ensure the global convergence of gradient descent and compare it with fully-connected neural networks.
Again we first define the key Gram matrix whose least eigenvalue will determine the convergence rate.
\begin{defn}\label{defn:gram_resnet}
The Gram matrix $\mat{K}^{(H)}$ is recursively defined as follows, for $(i,j) \in [n] \times [n]$ and $h=2,\ldots,H-1$:
\begin{align}
\mat{K}_{ij}^{(0)} =& \langle \vect{x}_i, \vect{x}_j \rangle, \nonumber\\
\mat{K}^{(1)}_{ij} =  & \expect_{\left(u,v\right)^\top \sim N\left(\vect{0},\begin{pmatrix}
	\mat{K}^{(0)}_{ii} & \mat{K}^{(0)}_{ij} \\
	\mat{K}^{(0)}_{ji} & \mat{K}^{(0)}_{jj}
	\end{pmatrix}\right)}c_{\sigma}\relu{u}\relu{v},\nonumber\\
\vect{b}_i^{(1)} =  &\sqrt{c_{\sigma}}\expect_{u \sim N(0, \mat{K}_{ii}^{(0)})}\left[\relu{u}\right],  \nonumber \\
\mat{A}_{ij}^{(h)} = &\begin{pmatrix}
\mat{K}^{(h-1)}_{ii} & \mat{K}^{(h-1)}_{ij} \\
\mat{K}^{(h-1)}_{ji} & \mat{K}^{(h-1)}_{jj}
\end{pmatrix} \label{eqn:kernel_resnet}\\
	\mat{K}^{(h)}_{ij} =  & \mat{K}_{ij}^{(h-1)}+\nonumber\\& \expect_{\left(u,v\right)^\top \sim N\left(\vect{0},\mat{A}^{(h)}_{ij}\right)}\left[\frac{c_{res}\vect{b}_i^{(h-1)}\relu{u}}{H}\right. \nonumber\\
&	+\left.\frac{c_{res}\vect{b}_j^{(h-1)}\relu{v}}{H}+\frac{c_{res}^2\relu{u}\relu{v}}{H^2}\right], \nonumber \\
\vect{b}_i^{(h)} =  &\vect{b}_i^{(h-1)} + \frac{c_{res}}{H}\expect_{u \sim N(0, \mat{K}_{ii}^{(h-1)})}\left[\relu{u}\right],\nonumber\\
\mat{K}^{(H)}_{ij} =  &\frac{c_{res}^2}{H^2}\mat{K}^{(H-1)}_{ij}  \expect_{\left(u,v\right)^\top \sim N\left(\vect{0},\mat{A}_{ij}^{(H-1)}\right) }\left[\sigma'(u)\sigma'(v)\right].\nonumber
\end{align}
\end{defn}
Comparing $\mat{K}^{(H)}$ of the ResNet and the one of the fully-connect neural network, the definition of $\mat{K}^{(H)}$ also depends on a series of $\{\vect{b}^{(h)}\}_{h=1}^{H-1}$.
This dependency is comes from the skip connection block in the ResNet architecture.
See Section~\ref{sec:general_formulation}.
In Section~\ref{sec:fullrank_resnet}, we show as long as the input training data is not degenerate, then $\lambdamin$ is strictly positive.
Furthermore, $\lambdamin$ does not depend inversely exponentially in $H$.

Now we are ready to state our main theorem for ResNet.
\begin{thm}[Convergence Rate of Gradient Descent for ResNet]\label{thm:resnet_gd}
Assume for all $i \in [n]$, $\norm{\vect{x}_i}_2 = 1$, $\abs{y_i} = O(1)$ and the number of hidden nodes per layer \begin{align}m=&\Omega\left(\max\left\{\frac{n^4 }{\lambda_{\min}^4\left(\mat{K}^{(H)}\right)H^6},\frac{n^2 }{\lambda_{\min}^2(\mat{K}^{(H)})H^2},\right.\right.\\
&~~~~~~~~~~~~~~~~\left.\left.\frac{n}{\delta}, \frac{n^2\log\left(\frac{Hn}{\delta}\right)}{\lambda_{\min}^2\left(\mat{K}^{(H)}\right)} \right\}\right).\nonumber
\end{align}
If we set the step size $\eta = O\left(\frac{\lambdamin H^2 }{n^2}\right)$, then with probability at least $1-\delta$ over the random initialization we have for $k=1,2,\ldots$\begin{align*}
L(\params(k))\le \left(1-\frac{\eta \lambdamin}{2}\right)^{k}L(\params(0)).
\end{align*}
\end{thm}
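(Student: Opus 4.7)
The plan is to follow the dynamics-based framework sketched in Section~\ref{sec:technique}: express $\vect{y} - \vect{u}(k+1) = (\mat{I} - \eta \mat{G}(k))(\vect{y} - \vect{u}(k))$, split $\mat{G}(k) = \sum_{h=1}^{H+1} \mat{G}^{(h)}(k)$, and focus on lower bounding $\lambda_{\min}(\mat{G}^{(H)}(k))$ by showing that $\mat{G}^{(H)}(k)$ stays uniformly close to the fixed data-dependent kernel $\mat{K}^{(H)}$ from Definition~\ref{defn:gram_resnet} throughout training. Once we establish $\lambda_{\min}(\mat{G}^{(H)}(k)) \ge \tfrac{1}{2}\lambda_{\min}(\mat{K}^{(H)})$ for all $k$, a standard power-method style inductive argument on the norm of $\vect{y}-\vect{u}(k)$ (using the Lipschitz/smoothness bounds in Condition~\ref{cond:lip_and_smooth} to control the second-order terms produced by the non-linear discretization, together with the step size $\eta = O(\lambda_{\min}(\mat{K}^{(H)}) H^2 / n^2)$) yields the claimed geometric decay.

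The first main piece is the initialization analysis: show that with probability $1-\delta$, $\|\mat{G}^{(H)}(0) - \mat{K}^{(H)}\|_2 \le \tfrac14 \lambda_{\min}(\mat{K}^{(H)})$. I would do this by induction on the layer index $h$, simultaneously controlling (i) each hidden representation $\vect{x}_i^{(h)}$ being close to a Gaussian with covariance determined by $\mat{K}^{(h)}$ and $\vect{b}^{(h)}$, (ii) the empirical inner products $\langle \vect{x}_i^{(h)}, \vect{x}_j^{(h)}\rangle / m$ concentrating around $\mat{K}^{(h)}_{ij}$, and (iii) the mean vectors $\vect{b}_i^{(h)}$ being approximated. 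The crucial point is the propagation inequality: due to the $c_{res}/(H\sqrt{m})$ scaling on the residual branch, the perturbation from layer $h-1$ to layer $h$ grows by a factor of only $(1 + O(1/H))$ rather than a constant $>1$, so after $H$ layers the total blow-up is $(1+O(1/H))^H = O(1)$, matching the intuition in Equation~\eqref{eqn:why_resnet}. The per-layer perturbation is $O(1/\sqrt{m})$ from Gaussian concentration combined with the $\sigma'$ Lipschitz bound, which gives the $\log(Hn/\delta)/\lambda_{\min}^2$ and $n/\delta$ terms in the width requirement. The derivative factor in the last layer and the $c_{res}^2/H^2$ prefactor are handled by the same Malliavin-style calculation used in the general framework in Appendix~\ref{sec:general_formulation}.

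The second and most delicate piece is the training-time stability: show $\|\mat{G}^{(H)}(k) - \mat{G}^{(H)}(0)\|_2 \le \tfrac14 \lambda_{\min}(\mat{K}^{(H)})$ as long as the loss is decreasing. I would do this by an inductive argument on $k$ of the following form. Assume $L(\params(k')) \le (1-\eta \lambda_{\min}(\mat{K}^{(H)})/2)^{k'} L(\params(0))$ for all $k' \le k$. Using the gradient formula and summing $\|\partial L/\partial \mat{W}^{(h)}\|_F \le \sqrt{2L} \cdot \max_i \|\partial u_i/\partial \mat{W}^{(h)}\|_F$, the total movement satisfies
\begin{equation*}
\frac{1}{\sqrt{m}}\|\mat{W}^{(h)}(k) - \mat{W}^{(h)}(0)\|_F \lesssim \frac{\sqrt{n}\sqrt{L(\params(0))}}{\sqrt{m}\,\lambda_{\min}(\mat{K}^{(H)})} \cdot \frac{1}{H},
\end{equation*}
where the extra $1/H$ comes from bounding $\|\partial u_i / \partial \mat{W}^{(h)}\|_F = O(1/H)$ for residual layers, again using the $c_{res}/H$ scaling. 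The key lemma to prove is then a perturbation bound: if $\|\mat{W}^{(h)} - \mat{W}^{(h)}(0)\|_F \le R \sqrt{m}$ for every $h$ with $R = O(1/\poly(n,H))$, then $\|\vect{x}_i^{(h)} - \vect{x}_i^{(h)}(0)\|_2$ stays $O(1)$ across all $H$ layers. This is where the ResNet structure is essential: the recurrence for the perturbation is $\delta_h \le (1 + C/H) \delta_{h-1} + O(R/H)$, so $\delta_H = O(R)$ instead of $2^{O(H)} R$. From this perturbation of hidden activations, a second Lipschitz argument transfers it to a bound $\|\mat{G}^{(H)}(k) - \mat{G}^{(H)}(0)\|_2 \le O(n R^{1/3})$ or similar, closing the induction once $m \gtrsim n^4 / (\lambda_{\min}^4 H^6)$.

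The main obstacle, as in Section~\ref{sec:technique}, is the training-time perturbation analysis with the correct $H$ dependence: I need to simultaneously track how perturbations in earlier-layer weights propagate through the ResNet to perturb every intermediate feature $\vect{x}_i^{(h)}$, the backpropagated signals $\partial u_i / \partial \mat{W}^{(h)}$, and finally $\mat{G}^{(H)}$. Getting a bound that is only polynomial in $H$ (rather than exponential) requires carefully exploiting the $c_{res}/H$ factor in every telescoping step, both in the forward pass and in the backward sensitivities, and verifying the operator-norm bounds on the relevant Jacobians $\prod_{h'} (\mat{I} + \tfrac{c_{res}}{H\sqrt{m}} \mat{D}^{(h')} \mat{W}^{(h')})$ are $O(1)$ throughout training. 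Once this stability lemma is established, combining it with the initialization estimate and the power-method descent inequality yields the linear convergence rate at the stated step size and width.
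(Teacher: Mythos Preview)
Your proposal is correct and follows essentially the same route as the paper: the paper also proves Theorem~\ref{thm:resnet_gd} via the induction on Condition~\ref{cond:linear_converge}, using (i) an initialization analysis (Lemma~\ref{lem:resnet_least_eigen}, relying on the general framework of Appendix~\ref{sec:general_formulation}) to get $\lambda_{\min}(\mat{G}^{(H)}(0))\ge\tfrac34\lambda_0$, (ii) a feature-perturbation recursion of exactly the form $\delta_h\le(1+O(1/H))\delta_{h-1}+O(R/H)$ you describe (Lemma~\ref{lem:pertubation_of_neuron_res}), (iii) a Gram-matrix stability bound (Lemma~\ref{lem:close_to_init_small_perturbation_res_smooth}), (iv) a weight-movement bound with the extra $1/H$ factor from the residual scaling (Lemma~\ref{lem:dist_from_init_resnet}), and (v) bounds on the discretization terms $\vect{I}_2$ and $\|\vect{u}(k+1)-\vect{u}(k)\|_2^2$ (Lemmas~\ref{lem:resnet_I2} and~\ref{lem:quadratic_resnet}); the Jacobian products $\prod_{h'}(\mat{I}+\tfrac{c_{res}}{H\sqrt m}\mat{J}^{(h')}\mat{W}^{(h')})$ you mention are exactly what appear in the gradient formula and are bounded by $e^{O(1)}$ as you anticipate. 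One small correction: the Gram-matrix perturbation is \emph{linear} in $R$ (specifically $\|\mat{G}^{(H)}(k)-\mat{G}^{(H)}(0)\|_F\lesssim nR/H^2$), not $O(nR^{1/3})$; this is what makes the first width requirement $m\gtrsim n^4/(\lambda_0^4 H^6)$ come out, since one needs $R'\lesssim \lambda_0 H^2/n$ and $R'\asymp \sqrt{n}/(H\lambda_0\sqrt{m})$.
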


In sharp contrast to Theorem~\ref{thm:main_gd}, this theorem is fully polynomial in the sense that both the number of neurons and the convergence rate is polynomially in $n$ and $H$.
Note the amount of over-parameterization depends on $\lambdamin$ which is the smallest eigenvalue of the $H$-th layer's Gram matrix.
The main reason that we do not have any exponential factor here is that the skip connection block makes the overall architecture more stable in both the initialization phase and the training phase.

Note the requirement on $m$ has $4$ terms.
The first two terms are used to show  the Gram matrix stable during training. 
The third term is used to guarantee the output in each layer is approximately normalized at the initialization phase.
The fourth term is used to show bound the size of the perturbation of the Gram matrix at the initialization phase.
See Section~\ref{sec:resnet_proof} for details.

\section{Convergence Result of GD for Convolutional ResNet}
\label{sec:conv_resnet}
In this section we generalize the convergence result of  gradient descent for ResNet to convolutional ResNet.
Again, we focus on how much over-parameterization is needed to ensure the global convergence of gradient descent.
Similar to previous sections, we first define the $\mat{K}^{(H)}$ for this architecture.
\begin{defn}\label{defn:gram_convresnet}
The Gram matrix $\mat{K}^{(H)}$ is recursively defined as follows, for $(i,j) \in [n] \times [n]$, $(l,r) \in [p] \times [p]$ and $h=2,\ldots,H-1$, 
\begin{align}
\mat{K}_{ij}^{(0)} = & \phi_1\left(\vect{x}_{i}\right)^\top \phi_1\left(\vect{x}_{j}\right) \in \mathbb{R}^{p\times p},  \nonumber\\
\mat{K}^{(1)}_{ij} =  &  \expect_{\left(\mat{u},\mat{v}\right)\sim N\left(\vect{0},\begin{pmatrix}
	\mat{K}^{(0)}_{ii}& \mat{K}^{(0)}_{ij} \\
	\mat{K}^{(0)}_{ji} & \mat{K}^{(0)}_{jj}
	\end{pmatrix}\right)}c_{\sigma}\relu{\mat{u}}^{\top}\relu{\mat{v}}, \nonumber\\
\vect{b}_i^{(1)} = &\sqrt{c_{\sigma}}\expect_{\mat{u}\sim N\left(\vect{0},\mat{K}^{(0)}_{ii}\right)}\left[\relu{\mat{u}}\right],\nonumber \\
\mat{A}_{ij}^{(h)} = &\begin{pmatrix}
\mat{K}^{(h-1)}_{ii}& \mat{K}^{(h-1)}_{ij} \\
\mat{K}^{(h-1)}_{ji} & \mat{K}^{(h-1)}_{jj}
\end{pmatrix} \nonumber \\
\mat{H}^{(h)}_{ij} =   &\mat{K}_{ij}^{(h-1)}+\nonumber\\
&  \expect_{\left(\mat{u},\mat{v}\right)\sim N\left(\vect{0},\mat{A}_{ij}^{(h-1)}\right)}\left[\frac{c_{res}\vect{b}_i^{(h-1)\top} \relu{\mat{u}}}{H} \right. \label{eqn:kernel_conv_resnet}\\
&\left.+\frac{c_{res}\vect{b}_j^{(h-1)\top}\relu{\mat{v}}}{H}+\frac{c_{res}^2\relu{\mat{u}}^\top\relu{\mat{v}}}{H^2}\right],\nonumber \\
	\mat{K}^{(h)}_{ij,lr} =&\tr\left(\mat{H}^{(h)}_{ij,D_l^{(h)}D_r^{(h)}}\right), \nonumber\\
\vect{b}_i^{(h)} = & \vect{b}_i^{(h-1)} + \frac{c_{res}}{H}\expect_{\mat{u}\sim N\left(\vect{0},\mat{K}^{(h-1)}_{ii}\right)}\left[\relu{\mat{u}}\right]
 \nonumber\\
 \mat{M}^{(H)}_{ij,lr}=&\mat{K}^{(H-1)}_{ij,lr}\expect_{\left(\mat{u},\mat{v}\right)\sim N\left(\vect{0},\mat{A}_{ij}^{(H-1)}\right) }\left[\sigma'(u_l)\sigma'(v_r)\right]\nonumber\\
\mat{K}^{(H)}_{ij} = & \tr(\mat{M}^{(H)}_{ij})
\nonumber
\end{align}
where $\vect{u}$ and $\vect{v}$ are both random row vectors and $D_l^{(h)}\triangleq\{s:\vect{x}^{(h-1)}_{:,s} \in \text{the $l^{th}$ patch}\}$.
\end{defn}
Note here $\mat{K}_{ij}^{(h)}$ has dimension $p \times p$ for $h=0,\ldots,H-1$  and $\mat{K}_{ij,lr}$ denotes the $(l,r)$-th entry.

Now we state our main convergence theorem for the convolutional ResNet.
\begin{thm}[Convergence Rate of Gradient Descent for Convolutional ResNet]\label{thm:convresnet_gd}
Assume for all $i \in [n]$, $\norm{\vect{x}_i}_F = 1$, $\abs{y_i} = O(1)$ and the number of hidden nodes per layer 
\begin{align}
m=&\Omega\left(\max\left\{\frac{n^4 }{\lambda_0^4H^6},\frac{n^4 }{\lambda_0^4H^2},\right.\right. \nonumber\\
&\left.\left.
~~~~~~~~~~~~~~~~\frac{n}{\delta}, \frac{n^2\log\left(\frac{Hn}{\delta}\right)}{\lambda_0^2}\right\}\poly(p)\right).
\end{align}
If we set the step size $\eta = O\left(\frac{\lambda_0H^2}{n^2\poly\left(p\right)}\right) $, then with probability at least $1-\delta$ over the random initialization we have for $k=1,2,\ldots$\begin{align*}
L(\params(k))\le \left(1-\frac{\eta \lambdamin}{2}\right)^{k}L(\params(0)).
\end{align*}
\end{thm}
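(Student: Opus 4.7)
}

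The plan is to follow the same three-stage template used for Theorem \ref{thm:resnet_gd}, adapted to handle the patch operator $\phi_h$ and the pixel dimension $p$. Write the gradient descent dynamics of the residuals as
\begin{align*}
\vect{y}-\vect{u}(k+1) = \left(\mat{I}-\eta \mat{G}(k)\right)\left(\vect{y}-\vect{u}(k)\right) + \vect{\zeta}(k),
\end{align*}
where $\vect{\zeta}(k)$ is the second-order error coming from the nonlinearity of $f_{cnn}$, and $\mat{G}(k) = \sum_{h=1}^{H+1} \mat{G}^{(h)}(k)$ as in Section~\ref{sec:technique}. By positive semidefiniteness of the other summands it suffices to show $\mat{G}^{(H)}(k) \succeq \tfrac{1}{2}\lambdamin \mat{I}$ for all $k$, which reduces everything to controlling $\|\mat{G}^{(H)}(k)-\mat{K}^{(H)}\|_2$.

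The first stage is the initialization analysis: show that with probability at least $1-\delta$,
\begin{align*}
\|\mat{G}^{(H)}(0)-\mat{K}^{(H)}\|_2 \le \tfrac{1}{4}\lambdamin.
\end{align*}
I would push through the recursive definition \eqref{eqn:kernel_conv_resnet} inductively, layer by layer, tracking the perturbations of the $p\times p$ pixel-level matrices $\mat{K}^{(h)}_{ij}$ and the vectors $\vect{b}_i^{(h)}$. At each layer, the randomness in $\mat{W}^{(h)}$ enters through terms like $\frac{1}{m}\langle\sigma(\mat{W}^{(h)}\phi_h(\vect{x}^{(h-1)}_i)), \sigma(\mat{W}^{(h)}\phi_h(\vect{x}^{(h-1)}_j))\rangle$, which concentrate around their expectation at rate $O(1/\sqrt{m})$ by standard Gaussian / sub-exponential concentration, with an extra $\poly(p)$ factor accounting for the pixel dimension and the norm inflation $\|\phi_h(\vect{x})\|_F \le \sqrt{q}\|\vect{x}\|_F$. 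The crucial point, analogous to \eqref{eqn:why_resnet}, is that each induction step multiplies the accumulated error $\epsilon_h$ by a factor of the form $1+O(1/H)$ because of the $c_{res}/H$ scaling on the residual branch; hence $\epsilon_H \lesssim (1+O(1/H))^H \epsilon_1 = O(\epsilon_1)$, avoiding any $2^{O(H)}$ blow-up. The unified framework of Section~\ref{sec:general_formulation} should supply the exact perturbation recursion; I would reuse it verbatim, replacing scalar inner products with traces of patch submatrices $\tr(\cdot_{D_l^{(h)}D_r^{(h)}})$. The required width for this stage is $m \gtrsim \frac{n^2\log(Hn/\delta)}{\lambda_0^2}\poly(p)$, matching the fourth term in the hypothesis.

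The second stage is the training stage perturbation bound. Let $R$ denote a radius to be chosen with $R = O\left(\frac{\sqrt{n}}{\lambda_0 H \sqrt{m}}\sqrt{L(\params(0))}\right)\cdot \poly(p)$. I would prove by induction on $k$ that (i) $L(\params(k)) \le (1-\eta\lambda_0/2)^k L(\params(0))$, and (ii) for every $h\in[H]$, $\frac{1}{\sqrt{m}}\|\mat{W}^{(h)}(k)-\mat{W}^{(h)}(0)\|_F \le R$ and $\|\vect{a}(k)-\vect{a}(0)\|_2 \le R\sqrt{m}$. The inductive step needs two sub-lemmas, each proved by a second induction over $h$: a forward-perturbation lemma showing $\|\vect{x}^{(h)}(k)-\vect{x}^{(h)}(0)\|_F$ and $\|\vect{x}^{(h)}(k)\|_F$ stay $O(1)$ under the weight perturbation $R$ (again using the $(1+O(1/H))^H = O(1)$ telescoping), and a backward-perturbation lemma of the same type controlling the gradients $\partial u_i/\partial \mat{W}^{(h)}$. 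Combining these two gives
\begin{align*}
\|\mat{G}^{(H)}(k)-\mat{G}^{(H)}(0)\|_2 \lesssim n \cdot R \cdot \poly(p) \le \tfrac{1}{4}\lambdamin,
\end{align*}
provided $m$ satisfies the first two terms $n^4/(\lambda_0^4 H^6)$ and $n^4/(\lambda_0^4 H^2)$ times $\poly(p)$ in the hypothesis. The step size constraint $\eta = O(\lambda_0 H^2/(n^2\poly(p)))$ is exactly what is needed for the second-order error $\|\vect{\zeta}(k)\|_2$ to be dominated by $\eta\lambda_0\|\vect{y}-\vect{u}(k)\|_2/4$, closing the induction and yielding the claimed geometric decay.

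The main obstacle is cleanly absorbing the factor $p$: the patch operator $\phi_h$ turns scalar Gram entries into $p\times p$ blocks, and the convolutional weight $\mat{W}^{(h)}\in\mathbb{R}^{d_h\times q d_{h-1}}$ is shared across pixels, so concentration now gives deviations of size $O(\sqrt{p/m})$ rather than $O(1/\sqrt{m})$, and each backward-gradient bound picks up extra $\sqrt{q}$ factors from $\|\phi_h(\vect{x})\|_F \le \sqrt{q}\|\vect{x}\|_F$. These terms propagate through both the initialization induction and the training induction, and careful bookkeeping is needed to make sure every such factor of $p$ is collected into the final $\poly(p)$ appearing in the hypothesis on $m$ and $\eta$. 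Once this bookkeeping is set up, all other steps mirror the ResNet proof in Section~\ref{sec:resnet_proof} essentially verbatim, with traces over patch index sets $D_l^{(h)}, D_r^{(h)}$ replacing scalar inner products.
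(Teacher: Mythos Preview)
Your proposal is correct and follows essentially the same approach as the paper's proof in Section~\ref{sec:conv_resnet_proof}: the paper likewise establishes initialization-norm control (Lemma~\ref{lem:lem:init_norm_cnn}), the least-eigenvalue bound at initialization via the general framework of Section~\ref{sec:general_formulation} (Lemma~\ref{lem:conv_resnet_least_eigen}), forward feature-perturbation bounds by induction on $h$ (Lemma~\ref{lem:pertubation_of_neuron_cnn}), the Gram-matrix stability bound (Lemma~\ref{lem:close_to_init_small_perturbation_cnn_smooth}), the distance-from-initialization bound by induction on $k$ (Lemma~\ref{lem:dist_from_init_cnn}), and the second-order and quadratic remainder bounds (Lemmas~\ref{lem:cnn_I2} and~\ref{lem:cnn_quadratic}), then closes with the same $\vect{I}_1/\vect{I}_2$ decomposition you summarize as $\vect{\zeta}(k)$. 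Your bookkeeping of the $\poly(p)$ factors and the $(1+O(1/H))^H=O(1)$ stability mechanism matches the paper's; the only point you leave implicit is that the third term $n/\delta$ in the width hypothesis comes from the feature-norm concentration in Lemma~\ref{lem:lem:init_norm_cnn}, which is a prerequisite for the other lemmas.
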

This theorem is similar to that of ResNet.
The number of neurons required per layer is only polynomial in the depth and the number of data points and step size is only polynomially small.
The only extra term is $\poly(p)$ in the requirement of $m$ and $\eta$.
The analysis is also similar to ResNet and we refer readers to Section~\ref{sec:conv_resnet_proof} for details.

\section{Conclusion}
\label{sec:conclusion}
In this paper, we show that gradient descent on deep overparametrized networks can obtain zero training loss. 
Our proof builds on a careful analysis of the random initialization scheme and a perturbation analysis which shows that the Gram matrix is increasingly  stable under overparametrization.
These techniques allow us to show that every step of gradient descent decreases the loss at a geometric rate. 

We list some directions for future research:
\begin{enumerate}
	\item The current paper focuses on the training loss, but does not address the test loss. It would be an important problem to show that gradient descent can also find solutions of low test loss. In particular, existing work only demonstrate that gradient descent works under the same situations as kernel methods and random feature methods \citep{daniely2017sgd,li2018learning,allen2018learning,arora2019fine}. To further investigate of generalization behavior, we believe some algorithm-dependent analyses may be useful~\citep{pmlr-v48-hardt16,pmlr-v75-mou18a,2018arXiv180401619C}.
	\item The width of the layers $m$ is polynomial in all the parameters for the ResNet architecture, but still very large. Realistic networks have number of parameters, not width, a large constant multiple of $n$. We consider improving the analysis to cover commonly utilized networks an important open problem.
	\item The current analysis is for gradient descent, instead of stochastic gradient descent. We believe the analysis can be extended to stochastic gradient, while maintaining the linear convergence rate. 
\item The convergence rate can be potentially improved if the minimum eigenvalue takes into account the contribution of all Gram matrices, but this would considerably complicate the initialization and perturbation analysis.
	\end{enumerate}

\section*{Acknowledgments}
We thank Lijie Chen and Ruosong Wang for useful discussions. 
SSD acknowledges support from AFRL grant FA8750-17-2-0212 and DARPA D17AP00001.
JDL acknowledges support of the ARO under MURI Award W911NF-11-1-0303.  This is part of the collaboration between US DOD, UK MOD and UK Engineering and Physical Research Council (EPSRC) under the Multidisciplinary University Research Initiative. HL and LW acknowlege support from National Basic Research Program of China (973 Program) (grant no.
2015CB352502), NSFC (61573026) and BJNSF (L172037).
Part of the work is done while SSD was visiting Simons Institute.

\bibliography{simonduref}
\bibliographystyle{icml2019}

\onecolumn
\newpage
\appendix
\noindent\LARGE{\textbf{Appendix}}

\normalsize
In the proof we will use the  geometric series function $g_{\alpha}(n)=\sum_{i=0}^{n-1}\alpha^i$  extensively. Some constants we will define below may be different for different network structures, such as $c_x$, $c_{w,0}$ and $c_{x,0}$. We will also use $c$ to denote a small enough constant, which may be different in different lemmas.
For simplicity, we use $\lambda_0$ to denote $\lambdamin$ in the proofs.

\section{Proof Sketch}
\label{sec:proof_sketch}

Note  we can write the loss as \[
L(\params(k)) = \frac{1}{2}\norm{\vect{y}-\vect{u}(k)}_2^2.
\]
Our proof is by induction. 
Our induction hypothesis is just the following convergence rate of empirical loss.
\begin{condition}\label{cond:linear_converge}
	At the $k$-th iteration, we have \begin{align*}
	\norm{\vect{y}-\vect{u}(k)}_2^2 \le (1-\frac{\eta \lambda_0}{2})^{k} \norm{\vect{y}-\vect{u}(0)}_2^2.
	\end{align*}
\end{condition}
Note this condition implies the conclusions we want to prove.
To prove Condition~\ref{cond:linear_converge}, we consider one iteration on the loss function.
	\begin{align}
	&\norm{\vect{y}-\vect{u}(k+1)}_2^2 \nonumber\\
	= &\norm{\vect{y}-\vect{u}(k) - (\vect{u}(k+1)-\vect{u}(k))}_2^2 \nonumber\\
	= & \norm{\vect{y}-\vect{u}(k)}_2^2 - 2 \left(\vect{y}-\vect{u}(k)\right)^\top \left(\vect{u}(k+1)-\vect{u}(k)\right) + \norm{\vect{u}(k+1)-\vect{u}(k)}_2^2. \label{eqn:loss_expansion}
	\end{align}
This equation shows if $ 2 \left(\vect{y}-\vect{u}(k)\right)^\top \left(\vect{u}(k+1)-\vect{u}(k)\right) >  \norm{\vect{u}(k+1)-\vect{u}(k)}_2^2$, the loss decreases.
Note both terms involves $\vect{u}(k+1)-\vect{u}(k)$, which we will carefully analyze.
To simplify notations, we define \[
u_i'(\params)\triangleq \frac{\partial u_i}{\partial \params}, 
\quad u_i'^{(h)}(\params) \triangleq \frac{\partial u_i}{\partial \mat{W}^{(h)}}
, \quad   u_i'^{(a)}(\params) \triangleq \frac{\partial u_i}{\partial \vect{a}} \quad \text{ and } \quad
L'(\params) = \frac{\partial L(\params)}{\partial \params}, 
 \quad L'^{(h)}(\vect{W}^{(h)}) = \frac{\partial L(\params)}{\partial \mat{W}^{(h)}} , \quad   L'^{(a)}(\params) \triangleq \frac{\partial L}{\partial \vect{a}} 
 .\] 
We look one coordinate of $\vect{u}(k+1)-\vect{u}(k)$.

Using Taylor expansion, we have
	\begin{align*}
	&u_i(k+1) - u_i(k)\\
	=&u_i\left(\params(k)-\eta L'(\params(k))\right)-u_i\left(\params(k)\right)\\
	=&-\int_{s=0}^{\eta} \langle L'(\params(k)), u'_i\left(\params(k)-s L'(\params(k))\right)\rangle ds\\
	=&-\int_{s=0}^{\eta} \langle L'(\params(k)), u'_i\left(\params(k)\right) \rangle ds+\int_{s=0}^{\eta} \langle L'(\params(k)), u'_i\left(\params(k)\right) -u'_i\left(\params(k)-s L'(\params(k))\right) \rangle ds\\
	\triangleq& I^i_1(k)+I^i_2(k).
	\end{align*}
Denote $\vect{I}_1(k) = \left(I_1^1(k),\ldots,I_1^n(k)\right)^\top$ and $\vect{I}_2(k) = \left(I_2^1(k),\ldots,I_2^n(k)\right)^\top$ and so $\vect{u}(k+1)-\vect{u}(k) = \vect{I}_1(k) + \vect{I}_2(k)$.
We will show the $\vect{I}_1(k)$ term, which is proportional to $\eta$, drives the loss function to decrease and the $\vect{I}_2(k)$ term, which is a perturbation term but it is proportional to $\eta^2$ so it is small.
We further unpack the $I_1^i(k)$ term, 
	\begin{align*}
	I^i_1=&-\eta \langle L'(\params(k)), u'_i\left(\params(k)\right) \rangle \\
	=&-\eta \sum_{j=1}^{n}(u_j-y_j) \langle u'_j(\params(k)), u'_i\left(\params(k)\right) \rangle\\
	\triangleq& -\eta \sum_{j=1}^{n}(u_j-y_j) \sum_{h=1}^{H+1}\mat{G}^{(h)}_{ij}(k)
	\end{align*}
According to Section~\ref{sec:technique}, we will only look at $\mat{G}^{(H)}$ matrix which has the following form
\[
\mat{G}^{(H)}_{i,j}(k) = (\vect{x}_i^{(H-1)}(k))^\top \vect{x}_j^{(H-1)}(k)\cdot \frac{c_{\sigma}}{m}\sum_{r=1}^m a_r^2 \sigma'((\params_r^{(H)}(k))^\top\vect{x}_i^{(H-1)}(k))\sigma'((\params_r^{(H)}(k))^\top\vect{x}_j^{(H-1)}(k)).
\]
Now we analyze $\mat{I}_1(k)$.
We can write $\vect{I}_1$ in a more compact form with $\mat{G}(k)$.
\begin{align*}
\vect{I}_1(k) = -\eta \mat{G}(k)\left(\vect{u}(k)-\vect{y}\right).
\end{align*}
Now observe that \begin{align*}
	(\vect{y}-\vect{u}(k))^\top \vect{I}_1(k) =  &\eta \left(\vect{y}-\vect{u}(k)\right)^\top \mat{G}(k)(\vect{y}-\vect{u}(k))  \\
	\ge &\lambda_{\min}\left(\mat{G}(k)\right)\norm{\vect{y}-\vect{u}(k)}_2^2\\
	\ge & \lambda_{\min}\left(\mat{G}^{(H)}(k)\right)\norm{\vect{y}-\vect{u}(k)}_2^2
\end{align*}
Now recall the progress of loss function in Equation~\eqref{eqn:loss_expansion}:\begin{align*}
&\norm{\vect{y}-\vect{u}(k+1)}_2^2  \\
= &\norm{\vect{y}-\vect{u}(k)}_2^2 - 2 \left(\vect{y}-\vect{u}(k)\right)^\top\vect{I}_1(k) -2\left(\vect{y}-\vect{u}(k)\right)^\top\vect{I}_2(k)+  \norm{\vect{u}(k+1)-\vect{u}(k)}_2^2 \\
\le & \left(1-\eta \lambda_{\min}\left(\mat{G}^{(H)}(k)\right)\right)\norm{\vect{y}-\vect{u}(k)}_2^2 -2\left(\vect{y}-\vect{u}(k)\right)^\top\vect{I}_2(k)+  \norm{\vect{u}(k+1)-\vect{u}(k)}_2^2.
\end{align*}
For the perturbation terms, through standard calculations, we can show both $
-2\left(\vect{y}-\vect{u}(k)\right)^\top\vect{I}_2(k)$ and  $\norm{\vect{u}(k+1)-\vect{u}(k)}_2
$  are proportional to $\eta^2 \norm{\vect{y}-\vect{u}(k)}_2^2$
so if we set $\eta$ sufficiently small, this term is smaller than $\eta \lambda_{\min}\left(\mat{G}^{(H)}(k)\right)\norm{\vect{y}-\vect{u}(k)}_2^2$ and thus the loss function decreases with a linear rate.

Therefore, to prove the induction hypothesis, it suffices to prove $\lambda_{\min}\left(\mat{G}^{(H)}(k)\right) \ge \frac{\lambda_0}{2}$ for $k'=0,\ldots,k$, where $\lambda_0$ is independent of $m$.
To analyze the least eigenvalue, we first look at the initialization.
Using assumptions of the population Gram matrix and concentration inequalities, we can show  at the beginning $\norm{\mat{G}^{(H)}(0)-\mat{K}^{(H)}(0)}_2 \le \frac{1}{4}\lambda_0$, which implies
\begin{align*}
	\lambda_{\min}\left(\mat{G}^{(H)}(0)\right) \ge \frac{3}{4}\lambda_0.
\end{align*}

Now for the $k$-th iteration, by matrix perturbation analysis, we know it is sufficient to show $\norm{\mat{G}^{(H)}(k)-\mat{G}^{(H)}(0)}_2 \le \frac{1}{4}\lambda_0$.
To do this, we use a similar approach as in \cite{du2018gradient}. We show as long as $m$ is large enough, every weight matrix is close its initialization in a relative error sense. Ignoring all other parameters except $m$, $\norm{\mat{W}^{(h)}(k) - \mat{W}^{(h)}(0)}_F \lesssim 1$, and thus the average per-neuron distance from initialization is $\frac{\norm{\mat{W}^{(h)}(k) - \mat{W}^{(h)}(0)}_F}{\sqrt{m}} \lesssim \frac{1}{\sqrt{m}}$ which tends to zero as $m$ increases. See Lemma \ref{lem:dist_from_init} for precise statements with all the dependencies.

This fact in turn shows $\norm{\mat{G}^{(H)}(k)-\mat{G}^{(H)}(0)}_2$ is small.
The main difference from \cite{du2018gradient} is that we are considering deep neural networks, and when translating the  small deviation, $\norm{\mat{W}^{(h)}(k) - \mat{W}^{(h)}(0)}_F$ to $\norm{\mat{G}^{(H)}(k)-\mat{G}^{(H)}(0)}_2$, there is an amplification factor which depends on the neural network architecture.

For deep fully connected neural networks, we show this amplification factor is exponential in $H$.
On the other hand, for ResNet and convolutional ResNet we show this amplification factor is only polynomial in $H$.
We further show the width $m$ required is proportional to this amplification factor.

\section{Proofs for Section~\ref{sec:deep_discrete}}
\label{sec:deep_discrete_proof}
	We first derive the formula of  the gradient for the multilayer fully connected neural network \begin{align*}
\frac{\partial L(\params)}{\partial \mat{W}^{(h)}}
= & \left(\frac{c_{\sigma}}{m}\right)^{\frac{H-h+1}{2}}\sum_{i=1}^{n}\left(f(\vect{x}_i,\params)-y_i\right) \vect{x}_i^{(h-1)}\vect{a}^\top \left(\prod_{k=h+1}^{H}\mat{J}_i^{(k)}\mat{W}^{(k)}\right) \mat{J}_i^{(h)}
\end{align*} where \[
\mat{J}^{(h')} \triangleq \diag\left(
\sigma' \left((\vect{w}_{1}^{(h')})^\top \vect{x}^{(h'-1)}\right) , \ldots,
\sigma' \left((\vect{w}_{m}^{(h')})^\top \vect{x}^{(h'-1)}\right)
\right) \in \mathbb{R}^{m \times m}
\] are the derivative matrices induced by the activation function and \begin{align*}
\vect{x}^{(h')} =\sqrt{\frac{c_{\sigma}}{m}} \relu{\mat{W}^{(h')}\vect{x}^{(h'-1)}}.
\end{align*} is the output of the $h'$-th layer.

Through standard calculation, we can get the expression of $\mat{G}^{(H)}_{i,j}$ of the following form
\begin{align}
\mat{G}^{(H)}_{i,j} = (\vect{x}_i^{(H-1)})^\top \vect{x}_j^{(H-1)}\cdot \frac{c_{\sigma}}{m}\sum_{r=1}^m a_r^2 \sigma'((\vect{w}_r^{(H)})^\top\vect{x}_i^{(H-1)})\sigma'((\vect{w}_r^{(H)})^\top\vect{x}_j^{(H-1)}). \label{eqn:H_mlp}
\end{align}

We first present a lemma which shows with high probability the feature of each layer is approximately normalized.
\begin{lem}[Lemma on Initialization Norms]
	\label{lem:lem:init_norm}
	If $\sigma(\cdot)$ is $L-$Lipschitz and $m = \Omega\left(\frac{nHg_C(H)^2}{\delta}\right)$, where $C\triangleq c_{\sigma}L\left(2\abs{\sigma(0)}\sqrt{\frac{2}{\pi}}+2L\right)$,
	then with probability at least $1-\delta$ over random initialization, for every $h\in[H]$ and $i \in [n]$, we have 
	\[ \frac{1}{c_{x,0}}\le  \norm{\vect{x}_i^{(h)}(0)}_2 \le c_{x,0} \] where $c_{x,0} = 2$.
\end{lem}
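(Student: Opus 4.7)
I would prove this by induction on the layer index $h$, conditioning at each step on the output of the previous layer. The base case $h=0$ is immediate since $\|\vect{x}_i^{(0)}(0)\|_2 = \|\vect{x}_i\|_2 = 1$. For the inductive step, fix a data point $i$ and condition on $\vect{x}_i^{(h-1)}(0)$. Because the rows of $\mat{W}^{(h)}$ are i.i.d.\ standard Gaussians, the coordinates of $\mat{W}^{(h)}\vect{x}_i^{(h-1)}(0)$ are conditionally i.i.d.\ $N(0, \|\vect{x}_i^{(h-1)}(0)\|_2^2)$. Hence
\[
a_h \triangleq \|\vect{x}_i^{(h)}(0)\|_2^2 = \frac{c_\sigma}{m}\sum_{r=1}^m \sigma\!\bigl((\mat{W}^{(h)}_r)^\top \vect{x}_i^{(h-1)}(0)\bigr)^2
\]
is an empirical mean whose conditional expectation equals $\psi(\|\vect{x}_i^{(h-1)}(0)\|_2)$, where $\psi(b) \triangleq c_\sigma \expect_{z\sim N(0,1)}[\sigma(bz)^2]$. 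The normalization is chosen precisely so that $\psi(1) = 1$.

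Two estimates then drive the induction. First, the crude pointwise bound $\sigma(z)^2 \le 2\sigma(0)^2 + 2L^2 z^2$ yields a conditional variance of order $1/m$, so Chebyshev's inequality gives, with probability at least $1 - \delta/(nH)$ conditional on the previous layer,
\[
|a_h - \psi(b_{h-1})| \le O\!\bigl(\sqrt{nH/(m\delta)}\bigr) \le \frac{1}{2g_C(H)},
\]
where $b_{h-1} \triangleq \|\vect{x}_i^{(h-1)}(0)\|_2$ and the second inequality follows from the assumed lower bound on $m$. I would use Chebyshev rather than Bernstein here since the statement has a polynomial (not logarithmic) dependence on $1/\delta$. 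Second, the Lipschitz property of $\sigma$ gives
\[
|\sigma(b_1 z)^2 - \sigma(b_2 z)^2| \le L|b_1 - b_2||z|\bigl(2|\sigma(0)| + L(|b_1|+|b_2|)|z|\bigr),
\]
and integrating against $z\sim N(0,1)$ yields $|\psi(b_1) - \psi(b_2)| \le C|b_1 - b_2|$ on bounded intervals, with exactly the stated constant $C = c_\sigma L\bigl(2|\sigma(0)|\sqrt{2/\pi} + 2L\bigr)$.

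Combining the two estimates, $|b_h^2 - 1| \le |a_h - \psi(b_{h-1})| + |\psi(b_{h-1}) - \psi(1)| \le \tfrac{1}{2g_C(H)} + C|b_{h-1} - 1|$. Since $b_h \ge 0$ implies $|b_h - 1| \le |b_h^2 - 1|$, unrolling this linear recurrence gives $|b_h - 1| \le g_C(h)/(2g_C(H)) \le 1/2$, so $b_h \in [1/2, 3/2] \subset [1/c_{x,0}, c_{x,0}]$. A union bound over the $nH$ pairs $(i,h)$ is absorbed into the per-step failure probability $\delta/(nH)$.

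The main obstacle is carefully chaining the conditional arguments: both the Lipschitz constant $C$ and the conditional variance bound rely on the previous-layer norm lying in the good range, so the induction must simultaneously establish concentration at layer $h$ and preserve the norm bound used at layer $h+1$. The appearance of $g_C(H)$ in the width requirement reflects exactly the worst-case amplification of perturbations through $H$ Lipschitz compositions --- the same instability that Equation~\eqref{eqn:why_not_fc} identifies as the root cause of the exponential-in-$H$ overparametrization needed for fully-connected networks.
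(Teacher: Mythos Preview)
Your proposal is correct and follows essentially the same approach as the paper: layerwise induction with a Lipschitz estimate on $b\mapsto c_\sigma\expect_{z\sim N(0,1)}[\sigma(bz)^2]$ yielding the constant $C$, a Chebyshev concentration step contributing the additive $\frac{1}{2g_C(H)}$, and the resulting linear recurrence $|b_h-1|\le C|b_{h-1}-1|+\frac{1}{2g_C(H)}$ that unrolls to $|b_h-1|\le g_C(h)/(2g_C(H))\le 1/2$. The paper organizes the induction and union bound identically; the only cosmetic difference is that it bounds the fourth moment explicitly rather than via your pointwise estimate $\sigma(z)^2\le 2\sigma(0)^2+2L^2z^2$.
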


We follow the proof sketch described in Section~\ref{sec:proof_sketch}.
We first analyze the spectral property of $\mat{G}^{(H)}(0)$ at the initialization phase.
The following lemma lower bounds its least eigenvalue.
This lemma is a direct consequence of results in Section~\ref{sec:general_formulation}.
\begin{lem}[Least Eigenvalue at the Initialization]\label{lem:mlp_least_eigen}
If $m = \Omega\left(\frac{n^2\log(Hn/\delta)2^{O(H)}}{\lambda_0^2}\right)$, we have \begin{align*}
	\lambda_{\min}(\mat{G}^{(H)}(0)) \ge \frac{3}{4}\lambda_0.
\end{align*}
\end{lem}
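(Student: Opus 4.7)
The plan is to prove the lemma by establishing the operator-norm concentration $\|\mat{G}^{(H)}(0) - \mat{K}^{(H)}\|_2 \le \lambda_0/4$ and then invoking Weyl's inequality, which immediately gives
\[
\lambda_{\min}\bigl(\mat{G}^{(H)}(0)\bigr) \;\ge\; \lambda_{\min}\bigl(\mat{K}^{(H)}\bigr) - \|\mat{G}^{(H)}(0) - \mat{K}^{(H)}\|_2 \;\ge\; \lambda_0 - \tfrac{\lambda_0}{4} \;=\; \tfrac{3\lambda_0}{4}.
\]
Since $\|\cdot\|_2 \le n \max_{i,j}|\cdot|$, it is enough to establish an entrywise bound $|\mat{G}^{(H)}_{ij}(0) - \mat{K}^{(H)}_{ij}| \le \lambda_0/(4n)$ with high probability, uniformly over the $n^2$ pairs.

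My approach is an induction on the depth $h$. For each $h \in \{0,1,\dots,H-1\}$ I will maintain the two-part invariant: (i) for every pair $(i,j)$, the empirical covariance block
\[
\widehat{\mat{A}}^{(h)}_{ij} \;=\; \begin{pmatrix} \langle \vect{x}_i^{(h)}(0), \vect{x}_i^{(h)}(0)\rangle & \langle \vect{x}_i^{(h)}(0), \vect{x}_j^{(h)}(0)\rangle \\ \langle \vect{x}_j^{(h)}(0), \vect{x}_i^{(h)}(0)\rangle & \langle \vect{x}_j^{(h)}(0), \vect{x}_j^{(h)}(0)\rangle \end{pmatrix}
\]
deviates from the population block $\mat{A}^{(h)}_{ij}$ by at most some $\epsilon_h$, and (ii) Lemma~5.1 provides uniform norm control $1/c_{x,0} \le \|\vect{x}_i^{(h)}(0)\|_2 \le c_{x,0}$. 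The base case $h=0$ is exact since $\mat{K}^{(0)}_{ij} = \langle \vect{x}_i,\vect{x}_j\rangle$. For the inductive step, conditionally on the lower layers the rows of $\mat{W}^{(h+1)}$ are i.i.d.\ Gaussian, so $\langle \vect{x}_i^{(h+1)}(0), \vect{x}_j^{(h+1)}(0)\rangle$ is an average of $m$ bounded random variables whose mean, by the definition of $\mat{K}^{(h+1)}$, equals the Gaussian integral $c_\sigma \mathbb{E}_{(u,v)\sim N(0,\widehat{\mat{A}}^{(h)}_{ij})}[\sigma(u)\sigma(v)]$ of Definition~5.1 but with $\widehat{\mat{A}}^{(h)}_{ij}$ in place of $\mat{A}^{(h)}_{ij}$. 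Thus two errors combine: a sampling error of order $1/\sqrt{m}$ from Hoeffding (using the Lipschitz/boundedness of $\sigma$ from Condition~3.1) together with a propagation error obtained by bounding the $\mathrm{Lip}$-norm of the map $\mat{A} \mapsto \mathbb{E}_{N(0,\mat{A})}[\sigma(u)\sigma(v)]$ via its derivative (here the smoothness of $\sigma$ in Condition~3.1 is crucial). Each application of this Lipschitz bound inflates the error by a constant factor $C = O(1)$, which yields the recursion $\epsilon_{h+1} \le C\,\epsilon_h + O(\sqrt{\log(nH/\delta)/m})$ and hence $\epsilon_{H-1} \le C^H \cdot O(\sqrt{\log(nH/\delta)/m}) = 2^{O(H)} \sqrt{\log(nH/\delta)/m}$.

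The last layer is special: by the closed form \eqref{eqn:H_mlp}, $\mat{G}^{(H)}_{ij}(0)$ is the product of $\langle \vect{x}_i^{(H-1)}(0),\vect{x}_j^{(H-1)}(0)\rangle$ and a sum of $m$ i.i.d.\ bounded terms (controlled through the Gaussian randomness in $\mat{W}^{(H)}$ and $\vect{a}$). Conditioning on the layers $1,\dots,H-1$, a Hoeffding/Bernstein bound combined with the Lipschitz sensitivity of $\mathbb{E}[\sigma'(u)\sigma'(v)]$ with respect to the covariance matrix (again using the smoothness of $\sigma$) gives
\[
|\mat{G}^{(H)}_{ij}(0) - \mat{K}^{(H)}_{ij}| \;\le\; C\bigl(\epsilon_{H-1} + \sqrt{\log(nH/\delta)/m}\bigr) \;=\; 2^{O(H)}\sqrt{\log(nH/\delta)/m}.
\]
Taking a union bound over the $n^2$ pairs and choosing $m = \Omega\!\left(n^2 2^{O(H)} \log(Hn/\delta)/\lambda_0^2\right)$ makes the right-hand side at most $\lambda_0/(4n)$, delivering the required operator-norm control.

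The main obstacle is the propagation analysis: we must bound the Lipschitz constant of the map sending a $2\times 2$ covariance block $\mat{A}$ to the Gaussian integral $\mathbb{E}_{(u,v)\sim N(0,\mat{A})}[\sigma(u)\sigma(v)]$ in a way that is uniform over the range of covariances that arise from the induction (controlled by Lemma~5.1). This is exactly the Malliavin-calculus argument alluded to in Section~4; the cleanest route is to differentiate under the integral sign, using the smoothness of $\sigma$ and Gaussian integration by parts to express the derivative as another Gaussian expectation of a product of $\sigma'$'s, which is bounded by Condition~3.1. A parallel argument handles the $\sigma'(u)\sigma'(v)$ integral appearing in the definition of $\mat{K}^{(H)}$. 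Once that Lipschitz constant is identified as an $O(1)$ quantity, the exponential blowup $C^H = 2^{O(H)}$ falls out of the recursion, matching the width requirement in the hypothesis. The detailed unified treatment of these Gaussian-process kernels is deferred to Appendix~\ref{sec:general_formulation}, from which Lemma~5.2 then follows as an immediate specialization.
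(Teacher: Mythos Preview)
Your proposal is correct and follows essentially the same route as the paper: the paper's Theorem~\ref{thm:main_general_framework} carries out precisely the layerwise induction you describe, bounding the error propagation via the Lipschitz continuity of $\mat{A}\mapsto \expect_{N(\vect{0},\mat{A})}[\sigma(u)\sigma(v)]$ (Lemma~\ref{lem:22perb}) to obtain the $2^{O(H)}$ blowup, and Lemma~\ref{lem:H_concen_mlp} handles the final $\sigma'$-layer separately exactly as you outline. Combining these with the matrix perturbation bound (Weyl) yields Lemma~\ref{lem:mlp_least_eigen} just as you conclude.
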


Now we proceed to analyze the training process.
We prove the following lemma which characterizes how the perturbation from weight matrices propagates to the input of each layer. This Lemma is used to prove the subsequent lemmas.
\begin{lem}\label{lem:pertubation_of_neuron}
	Suppose for every $h\in[H]$, $\norm{\mat{W}^{(h)}(0)}_2 \le c_{w,0}\sqrt{m}$, $\norm{\vect{x}^{(h)}(0)}_2 \le c_{x,0}$ and $\norm{\mat{W}^{(h)}(k)-\mat{W}^{(h)}(0)}_F \le \sqrt{m} R$ for some constant $c_{w,0},c_{x,0} > 0$ and $R \le c_{w,0}$.
	If $\sigma(\cdot)$ is $L-$Lipschitz, we have \begin{align*}
	\norm{\vect{x}^{(h)}(k)-\vect{x}^{(h)}(0)}_2 \le \sqrt{c_{\sigma}}Lc_{x,0}g_{c_x}(h)R
	\end{align*} where $c_x=2\sqrt{c_{\sigma}}Lc_{w,0}$.

\end{lem}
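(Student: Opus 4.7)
The plan is to prove the bound by induction on the layer index $h$, since $\vect{x}^{(h)}$ is defined recursively from $\vect{x}^{(h-1)}$ via $\vect{x}^{(h)} = \sqrt{c_\sigma/m}\,\sigma(\mat{W}^{(h)}\vect{x}^{(h-1)})$. The base case $h=0$ is immediate: $\vect{x}^{(0)} = \vect{x}$ is the training input, which does not depend on the iterate $k$, so $\norm{\vect{x}^{(0)}(k) - \vect{x}^{(0)}(0)}_2 = 0$, matching the right-hand side since $g_{c_x}(0) = 0$.

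For the inductive step, I would use Lipschitzness of $\sigma$ to pass from differences in pre-activations to differences in activations:
\begin{align*}
\norm{\vect{x}^{(h)}(k) - \vect{x}^{(h)}(0)}_2 \le \sqrt{\tfrac{c_\sigma}{m}}\,L\,\norm{\mat{W}^{(h)}(k)\vect{x}^{(h-1)}(k) - \mat{W}^{(h)}(0)\vect{x}^{(h-1)}(0)}_2.
\end{align*}
Then I would split the inner difference via the standard product-difference trick:
\begin{align*}
\mat{W}^{(h)}(k)\vect{x}^{(h-1)}(k) - \mat{W}^{(h)}(0)\vect{x}^{(h-1)}(0) = \mat{W}^{(h)}(k)\bigl(\vect{x}^{(h-1)}(k) - \vect{x}^{(h-1)}(0)\bigr) + \bigl(\mat{W}^{(h)}(k) - \mat{W}^{(h)}(0)\bigr)\vect{x}^{(h-1)}(0),
\end{align*}
and bound the two terms separately. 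For the first term I would use $\norm{\mat{W}^{(h)}(k)}_2 \le \norm{\mat{W}^{(h)}(0)}_2 + \norm{\mat{W}^{(h)}(k) - \mat{W}^{(h)}(0)}_F \le (c_{w,0} + R)\sqrt{m} \le 2c_{w,0}\sqrt{m}$ (since $R \le c_{w,0}$), together with the inductive hypothesis. For the second term I would use the Frobenius bound on $\mat{W}^{(h)}(k)-\mat{W}^{(h)}(0)$ and the assumed bound $\norm{\vect{x}^{(h-1)}(0)}_2 \le c_{x,0}$. Combining these yields the recursion
\begin{align*}
\norm{\vect{x}^{(h)}(k) - \vect{x}^{(h)}(0)}_2 \le c_x\,\norm{\vect{x}^{(h-1)}(k) - \vect{x}^{(h-1)}(0)}_2 + \sqrt{c_\sigma}\,L\,c_{x,0}\,R,
\end{align*}
with $c_x = 2\sqrt{c_\sigma}L c_{w,0}$. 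Unrolling from the zero base case gives the geometric sum $\sqrt{c_\sigma}L c_{x,0} R \sum_{i=0}^{h-1} c_x^i = \sqrt{c_\sigma}L c_{x,0} g_{c_x}(h) R$, which is exactly the target bound.

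There is no serious obstacle here; the argument is almost entirely bookkeeping. The only point that requires minor care is ensuring the operator-norm bound on $\mat{W}^{(h)}(k)$ uses $\norm{\cdot}_2 \le \norm{\cdot}_F$ when passing from the Frobenius hypothesis on $\mat{W}^{(h)}(k) - \mat{W}^{(h)}(0)$ to a spectral bound, and that the constants $c_\sigma$, $L$, $c_{w,0}$, $c_{x,0}$ combine correctly to produce the named constant $c_x$. The lemma does not rely on any probabilistic statement, so no concentration is needed at this step, the probabilistic content having already been absorbed into the hypotheses on $\norm{\mat{W}^{(h)}(0)}_2$ and $\norm{\vect{x}^{(h)}(0)}_2$ (themselves supplied by Lemma~\ref{lem:lem:init_norm} and a standard Gaussian matrix spectral bound).
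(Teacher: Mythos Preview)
Your proposal is correct and matches the paper's proof essentially line for line: induction on $h$ with trivial base case, Lipschitzness of $\sigma$ followed by the same product-difference split, the bound $\norm{\mat{W}^{(h)}(k)}_2 \le (c_{w,0}+R)\sqrt{m} \le 2c_{w,0}\sqrt{m}$, and then closing the recursion via $c_x\,g_{c_x}(h-1)+1 = g_{c_x}(h)$.
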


Here the assumption of $\norm{\mat{W}^{(h)}(0)}_2 \le c_{w,0}\sqrt{m}$ can be shown using Lemma~\ref{lem:operator_norm_of_random_matrix} and taking union bound over $h\in[H]$, where $c_{w,0}$ is a universal constant. Next, we show with high probability over random initialization,  perturbation in weight matrices leads to small perturbation in the Gram matrix. 
\begin{lem}\label{lem:close_to_init_small_perturbation_smooth}
	Suppose $\sigma(\cdot)$ is $L-$Lipschitz and $\beta-$smooth. Suppose for $h\in[H]$, $\norm{\mat{W}^{(h)}(0)}_2\le c_{w,0}\sqrt{m}$, $\norm{\vect{a}(0)}_2\le a_{2,0}\sqrt{m}$, $\norm{\vect{a}(0)}_4\le a_{4,0}m^{1/4}$ , $\frac{1}{c_{x,0}}\le\norm{\vect{x}^{(h)}(0)}_2 \le c_{x,0}$,  if $\norm{\mat{W}^{(h)}(k)-\mat{W}^{(h)}(0)}_F, \norm{\vect{a}(k)-\vect{a}(0)}_2 \le \sqrt{m}R$ where $R \le c g_{c_x}(H)^{-1}\lambda_0n^{-1}$ and $R\le c g_{c_x}(H)^{-1}$ for some small constant $c$ and $c_x = 2\sqrt{c_{\sigma}}Lc_{w,0}$, we have \begin{align*}
	\norm{\mat{G}^{(H)}(k) - \mat{G}^{(H)}(0)}_2 \le \frac{\lambda_0}{4}.
	\end{align*}

\end{lem}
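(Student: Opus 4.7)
The strategy is to write each entry of the Gram-matrix difference as a telescope of three terms---one for each of the three factors in the formula
$$\mat{G}^{(H)}_{ij} = (\vect{x}_i^{(H-1)})^\top \vect{x}_j^{(H-1)} \cdot \frac{c_\sigma}{m}\sum_{r=1}^m a_r^2\,\sigma'\bigl((\vect{w}_r^{(H)})^\top \vect{x}_i^{(H-1)}\bigr)\,\sigma'\bigl((\vect{w}_r^{(H)})^\top \vect{x}_j^{(H-1)}\bigr),$$
bound each term entrywise by $C\,g_{c_x}(H)\,R$ with $C$ depending only on the constants $c_\sigma,L,\beta,c_{w,0},c_{x,0},a_{2,0},a_{4,0}$, and finally pass from entries to the operator norm via $\norm{A}_2 \le n\max_{i,j}|A_{ij}|$. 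The hypothesis $R \le c\,g_{c_x}(H)^{-1}\lambda_0/n$ is exactly what is needed for this to produce $\lambda_0/4$ once the absolute constant is chosen small enough.

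The first reduction is to feed the assumed bounds on $\norm{\mat{W}^{(h)}(k)-\mat{W}^{(h)}(0)}_F$ and the initialization norms into Lemma~\ref{lem:pertubation_of_neuron}, obtaining
$$\norm{\vect{x}_i^{(H-1)}(k) - \vect{x}_i^{(H-1)}(0)}_2 \le \sqrt{c_\sigma}\,L\,c_{x,0}\,g_{c_x}(H-1)\,R,$$
and, combined with $R \le c\,g_{c_x}(H)^{-1}$, also $\norm{\vect{x}_i^{(H-1)}(k)}_2 \le 2c_{x,0}$. With this feature-level bound in hand, the inner-product telescope term is immediate: since $|\sigma'|\le L$ and $\tfrac{1}{m}\sum_r a_r(0)^2 \le a_{2,0}^2$, the scalar prefactor $\tfrac{c_\sigma}{m}\sum_r a_r^2\,\sigma'\sigma'$ is uniformly $O(1)$, so changing only the outer features contributes $O(c_{x,0}\,g_{c_x}(H)\,R)$.

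The $a_r^2$ telescope term is handled by $a_r(k)^2-a_r(0)^2=(a_r(k)-a_r(0))(a_r(k)+a_r(0))$ and Cauchy--Schwarz: using $|\sigma'|\le L$, it is at most $\tfrac{c_\sigma L^2}{m}\,\norm{\vect{a}(k)-\vect{a}(0)}_2\,(\norm{\vect{a}(k)}_2+\norm{\vect{a}(0)}_2) = O(R)$. The $\sigma'$-telescope term, the delicate one, uses $\beta$-smoothness to bound each $\sigma'\sigma'$ difference by a sum of two pieces, one proportional to $\norm{\vect{w}_r^{(H)}(k)-\vect{w}_r^{(H)}(0)}_2\norm{\vect{x}_i^{(H-1)}(k)}_2$ and the other to $|(\vect{w}_r^{(H)}(0))^\top(\vect{x}_i^{(H-1)}(k)-\vect{x}_i^{(H-1)}(0))|$. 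Weighting by $a_r(0)^2$ and summing, Cauchy--Schwarz gives, for any vectors $z_r$,
$$\sum_r a_r(0)^2\,|z_r| \le \norm{\vect{a}(0)}_4^2\,\sqrt{\textstyle\sum_r z_r^2},$$
so that the first piece is bounded by $\norm{\vect{a}(0)}_4^2\,\norm{\mat{W}^{(H)}(k)-\mat{W}^{(H)}(0)}_F \le a_{4,0}^2\sqrt{m}\cdot\sqrt{m}\,R$ and the second by $\norm{\vect{a}(0)}_4^2\,\norm{\mat{W}^{(H)}(0)}_2\,\norm{\vect{x}_i^{(H-1)}(k)-\vect{x}_i^{(H-1)}(0)}_2 \le a_{4,0}^2\sqrt{m}\cdot c_{w,0}\sqrt{m}\cdot g_{c_x}(H)R$. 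The explicit $1/m$ out front then absorbs the two $\sqrt{m}$ factors, leaving a net $O(g_{c_x}(H)R)$.

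The main obstacle is precisely that second piece of the $\sigma'$ telescope, where the feature perturbation in $\vect{x}_i^{(H-1)}$ is multiplied against the \emph{random} vector $\vect{w}_r^{(H)}(0)$; a naive per-$r$ bound using $\norm{\vect{w}_r^{(H)}(0)}_2 \asymp \sqrt{m}$ followed by $\sum_r a_r(0)^2 \le a_{2,0}^2 m$ inflates the estimate by a fatal $\sqrt{m}$. It is to avoid this that one must simultaneously invoke the $\ell_4$-norm hypothesis on $\vect{a}(0)$ and the operator-norm hypothesis $\norm{\mat{W}^{(H)}(0)}_2 \le c_{w,0}\sqrt{m}$, exploiting cancellations in the quadratic form $\norm{\mat{W}^{(H)}(0)(\vect{x}_i^{(H-1)}(k)-\vect{x}_i^{(H-1)}(0))}_2$ rather than bounding rows separately. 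Everything else is routine bookkeeping of constants.
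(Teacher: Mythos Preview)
Your proposal is correct and matches the paper's proof essentially step for step: the same three-term telescope into $I_1^{i,j}$ (feature inner-product change), $I_2^{i,j}$ ($\sigma'$ change via $\beta$-smoothness, handled with the $\ell_4$–Cauchy--Schwarz trick and the operator-norm bound on $\mat{W}^{(H)}(0)$), and $I_3^{i,j}$ ($a_r^2$ change), followed by passing to the operator norm. The only cosmetic differences are that the paper packages your two-piece split of $z_{i,r}(k)-z_{i,r}(0)$ as ``the same proof as Lemma~\ref{lem:pertubation_of_neuron}'' and bounds $\norm{\cdot}_2$ by $\norm{\cdot}_F$ rather than $n\max_{i,j}|\cdot|$.
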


Here the assumption of $\norm{\vect{a}(0)}_2\le a_{2,0}\sqrt{m}$, $\norm{\vect{a}(0)}_4\le a_{4,0}m^{1/4}$ can be easily obtained using standard concentration inequalities, where $a_{2,0}$ and $a_{4,0}$ are both universal constants. The following lemma shows if the induction holds, we have every weight matrix close to its initialization.
\begin{lem}\label{lem:dist_from_init}
	If Condition~\ref{cond:linear_converge} holds for $k'=1,\ldots,k$, we have for any $s =1,\ldots,k+1$
	\begin{align*}
	&\norm{\mat{W}^{(h)}(s)-\mat{W}^{(h)}(0)}_F, \norm{\vect{a}(s)-\vect{a}(0)}_2 \le  R'\sqrt{m}\\
	&\norm{\mat{W}^{(h)}(s)-\mat{W}^{(h)}(s-1)}_F,  \norm{\vect{a}(s)-\vect{a}(s-1)}_2\le \eta Q'(s-1)
	\end{align*}where $R'=\frac{16c_{x,0}a_{2,0}\left(c_x\right)^H \sqrt{n} \norm{\vect{y}-\vect{u}(0)}_2}{\lambda_0\sqrt{m}}  \le cg_{c_x}(H)^{-1}$ for some small constant $c$ with $c_x=\max\{2\sqrt{c_{\sigma}}Lc_{w,0},1\}$ and $ Q'(s)= 4c_{x,0}a_{2,0}\left(c_x\right)^{H}\sqrt{n} \norm{\vect{y}-\vect{u}(s)}_2$
\end{lem}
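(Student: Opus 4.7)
The plan is to proceed by induction on $s$, simultaneously establishing the per-step bound $\norm{\mat{W}^{(h)}(s) - \mat{W}^{(h)}(s-1)}_F \le \eta Q'(s-1)$ and the cumulative bound $\norm{\mat{W}^{(h)}(s) - \mat{W}^{(h)}(0)}_F \le R'\sqrt{m}$, together with the analogous statements for $\vect{a}$. These are tightly coupled: the per-step bound at step $s$ requires control of feature and weight norms at step $s-1$ (which uses the cumulative bound at step $s-1$ combined with Lemma \ref{lem:pertubation_of_neuron}), while the cumulative bound at step $s+1$ is obtained by summing the per-step bounds for $s' \le s$ and applying Condition \ref{cond:linear_converge}. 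The base case $s=1$ is immediate since $\mat{W}^{(h)}(0) = \mat{W}^{(h)}(0)$ trivially, and the step-one bound comes from evaluating the gradient directly at initialization using the stated norm bounds.

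The core computation is the gradient bound. Starting from the closed form
\[
\frac{\partial L(\params)}{\partial \mat{W}^{(h)}}
= \left(\frac{c_{\sigma}}{m}\right)^{\frac{H-h+1}{2}}\sum_{i=1}^{n}(u_i-y_i)\, \vect{x}_i^{(h-1)}\vect{a}^\top \Bigl(\prod_{k=h+1}^{H}\mat{J}_i^{(k)}\mat{W}^{(k)}\Bigr) \mat{J}_i^{(h)},
\]
I would recognize the summand as an outer product, so its Frobenius norm factors as $\norm{\vect{x}_i^{(h-1)}}_2$ times the $\ell_2$ norm of the remaining factor. I would then control each ingredient via the induction hypothesis: Lemma \ref{lem:pertubation_of_neuron} gives $\norm{\vect{x}_i^{(h-1)}(s)}_2 \le 2c_{x,0}$; the triangle inequality combined with the assumed bounds on $\norm{\mat{W}^{(h)}(0)}_2$ and $\norm{\vect{a}(0)}_2$ gives $\norm{\mat{W}^{(k)}(s)}_2 \le 2c_{w,0}\sqrt{m}$ and $\norm{\vect{a}(s)}_2 \le 2a_{2,0}\sqrt{m}$; and Condition \ref{cond:lip_and_smooth} gives $\norm{\mat{J}_i^{(k)}}_2 \le L$. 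Combining these with Cauchy--Schwarz on $\sum_i |u_i-y_i| \le \sqrt{n}\,\norm{\vect{y}-\vect{u}(s)}_2$, the normalization factor $(c_\sigma/m)^{(H-h+1)/2}$ cancels against the $\sqrt{m}$ factors accumulated from the $\mat{W}^{(k)}$'s and $\vect{a}$, leaving
\[
\norm{\tfrac{\partial L(\params(s))}{\partial \mat{W}^{(h)}}}_F \le 4 c_{x,0} a_{2,0}\, c_x^{H}\, \sqrt{n}\, \norm{\vect{y}-\vect{u}(s)}_2 = Q'(s),
\]
where $c_x = \max\{2\sqrt{c_\sigma}L c_{w,0},1\}$. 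The bound on $\norm{\partial L/\partial \vect{a}}_2$ is analogous. This yields the per-step estimate at step $s+1$.

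For the cumulative bound, I would apply the triangle inequality across iterations and invoke Condition \ref{cond:linear_converge}:
\[
\norm{\mat{W}^{(h)}(s+1) - \mat{W}^{(h)}(0)}_F \le \eta\sum_{k'=0}^{s} Q'(k') \le 4\eta c_{x,0} a_{2,0} c_x^{H}\sqrt{n}\,\norm{\vect{y}-\vect{u}(0)}_2 \sum_{k'=0}^{\infty}\Bigl(1 - \tfrac{\eta \lambda_0}{2}\Bigr)^{k'/2}.
\]
The geometric sum is controlled via $1-\sqrt{1-x} \ge x/2$ for $x \in [0,1]$, giving $\sum_{k' \ge 0}(1-\eta\lambda_0/2)^{k'/2} \le 4/(\eta\lambda_0)$. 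The factor $\eta$ cancels, yielding exactly $R'\sqrt{m}$ with the constant $16$ as stated.

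The main obstacle is the bookkeeping of the coupled induction: the gradient bound at step $s$ presupposes that Lemma \ref{lem:pertubation_of_neuron} is applicable, which requires the smallness $R' \le c\, g_{c_x}(H)^{-1}$. Since $g_{c_x}(H) = \sum_{i=0}^{H-1} c_x^i = 2^{O(H)}$ for $c_x > 1$, it is precisely this condition that forces the width requirement $m = \Omega(n\, 2^{O(H)}/\lambda_0^2)$ appearing in Theorem \ref{thm:main_gd} (first term), so care must be taken that all implicit constants in the chain of bounds are absorbed into the final definition of $c_x$ and the leading factor $16$ without breaking this smallness constraint.
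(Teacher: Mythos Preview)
Your proposal is correct and follows essentially the same approach as the paper: an induction on $s$ that couples the per-step gradient bound (obtained by bounding each factor in the closed-form gradient via Lemma~\ref{lem:pertubation_of_neuron}, the initialization norm bounds, and $\norm{\mat{J}_i^{(k)}}_2\le L$) with the cumulative bound obtained by summing the geometric series from Condition~\ref{cond:linear_converge}. Your bookkeeping of the constants, the cancellation of the $m$-factors, and the geometric-sum estimate $\sum_{k'\ge 0}(1-\eta\lambda_0/2)^{k'/2}\le 4/(\eta\lambda_0)$ are all exactly as in the paper's proof.
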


Now we proceed to analyze the perturbation terms.

\begin{lem}\label{lem:i2_mlp}
If Condition~\ref{cond:linear_converge} holds for $k'=1,\ldots,k$, suppose $\eta\le c\lambda_0\left(n^{2}H^2(c_x)^{3H}g_{2c_x}(H)\right)^{-1}$ for some small constant $c$, we have \begin{align*}
	\norm{\vect{I}_2(k)}_2 \le \frac{1}{8}\eta \lambda_0 \norm{\vect{y}-\vect{u}(k)}_2.
\end{align*}
\end{lem}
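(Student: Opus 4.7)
The plan is to view $\vect{I}_2(k)$ as a second-order-in-$\eta$ remainder and bound it component-wise. Writing $\tilde\params(s)=\params(k)-sL'(\params(k))$, Cauchy--Schwarz inside the integral defining $I_2^i(k)$ gives
$$|I_2^i(k)|\le\int_0^\eta\|L'(\params(k))\|_2\cdot\|u_i'(\params(k))-u_i'(\tilde\params(s))\|_2\,ds,$$
so the problem reduces to two sub-bounds: (i) the gradient norm $\|L'(\params(k))\|_2$, and (ii) a Lipschitz-type bound on $\params\mapsto u_i'(\params)$ along the segment $\{\tilde\params(s):s\in[0,\eta]\}$. The induction hypothesis together with Lemma~\ref{lem:dist_from_init} keeps all iterates within $R'\sqrt{m}$ of initialization; provided $\eta$ is small enough that $\eta\|L'(\params(k))\|_2\le R'\sqrt{m}$, the whole segment lies within $O(R'\sqrt{m})$ of initialization, so the operator-norm bounds $\|\mat{W}^{(h)}\|_2\le 2c_{w,0}\sqrt{m}$, $\|\vect{a}\|_2\le 2a_{2,0}\sqrt{m}$, and (via Lemma~\ref{lem:pertubation_of_neuron}) $\|\vect{x}_i^{(h)}\|_2\le 2c_{x,0}$ hold uniformly on the segment.

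For (i), I will write $L'(\params(k))=\sum_{j=1}^n (u_j(k)-y_j)u_j'(\params(k))$, so that $\|L'(\params(k))\|_2\le\|\vect{y}-\vect{u}(k)\|_2\sqrt{\sum_j\|u_j'(\params(k))\|_2^2}$. The closed form of $u_j'^{(h)}$ is a rank-one matrix whose factors are controlled by the bounds above; the scaling $(c_\sigma/m)^{(H-h+1)/2}$ exactly cancels the $m$-growth in the product, yielding $\|u_j'^{(h)}(\params(k))\|_F\lesssim c_x^{H-h+1}$ and $\|u_j'^{(a)}(\params(k))\|_2\lesssim c_{x,0}$; summing over layers, $\|u_j'(\params(k))\|_2\lesssim\sqrt{H}\,c_x^H$ and therefore $\|L'(\params(k))\|_2\lesssim\sqrt{nH}\,c_x^H\|\vect{y}-\vect{u}(k)\|_2$. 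For (ii), I telescope $u_i'^{(h)}(\params(k))-u_i'^{(h)}(\tilde\params(s))$ across its constituent factors: each $\mat{W}^{(h')}$ or $\vect{a}$ shifts by at most $s\|L'(\params(k))\|_2$; each feature $\vect{x}_i^{(h'-1)}$ shifts by at most $\sqrt{c_\sigma}Lc_{x,0}g_{c_x}(h'{-}1)\cdot s\|L'(\params(k))\|_2/\sqrt{m}$ by a segment-analogue of Lemma~\ref{lem:pertubation_of_neuron}; and each Jacobian $\mat{J}_i^{(h')}$ shifts in operator norm by at most the smoothness constant from Condition~\ref{cond:lip_and_smooth} times the corresponding pre-activation shift. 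Bounding each telescoping summand by the product of the unchanged factors and summing over $h$ produces $\|u_i'(\params(k))-u_i'(\tilde\params(s))\|_2\lesssim H c_x^H g_{2c_x}(H)\cdot s\|L'(\params(k))\|_2$.

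Substituting into the bound on $|I_2^i(k)|$, integrating in $s$, and using $\|\vect{I}_2(k)\|_2\le\sqrt{n}\,\max_i|I_2^i(k)|$ together with the bound from (i) yields
$$\|\vect{I}_2(k)\|_2\lesssim \eta^2\sqrt{n}\,Hc_x^H g_{2c_x}(H)\,\|L'(\params(k))\|_2^2\lesssim \eta^2 n^{3/2}H^2 c_x^{3H} g_{2c_x}(H)\,\|\vect{y}-\vect{u}(k)\|_2^2.$$
Using the induction hypothesis (Condition~\ref{cond:linear_converge}) to replace one factor of $\|\vect{y}-\vect{u}(k)\|_2$ by $\|\vect{y}-\vect{u}(0)\|_2=O(\sqrt{n})$, the assumed upper bound $\eta\le c\lambda_0(n^2H^2c_x^{3H}g_{2c_x}(H))^{-1}$ delivers $\|\vect{I}_2(k)\|_2\le\tfrac{1}{8}\eta\lambda_0\|\vect{y}-\vect{u}(k)\|_2$, as claimed. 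The main obstacle is step (ii): achieving the modest amplification $Hc_x^H g_{2c_x}(H)$ rather than an unacceptable $c_x^{O(H^2)}$ that a careless product-rule expansion would yield. The key is to separate the layer-wise dependencies of $u_i'^{(h)}$ (on $\vect{a}$, on $\mat{W}^{(h'\ge h)}$, on the feature $\vect{x}_i^{(h-1)}$, and on the Jacobians $\mat{J}_i^{(h')}$) so that the feature recursion of Lemma~\ref{lem:pertubation_of_neuron} exactly absorbs the geometric-series factor, with the $1/\sqrt{m}$ in the feature perturbation cancelling the $\sqrt{m}$ from the operator-norm factors, leaving an $m$-independent amplification.
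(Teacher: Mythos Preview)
Your proposal is correct and follows essentially the same approach as the paper: bound $\|L'^{(h)}\|_F$ via the explicit gradient formula (this is the paper's $Q'(k)$), bound $\|u_i'^{(h)}(\params(k))-u_i'^{(h)}(\tilde\params(s))\|_F$ by telescoping across the factors $\vect{x}_i^{(h-1)},\vect{a},\mat{J}_i^{(\cdot)},\mat{W}^{(\cdot)}$ using the segment analogue of Lemma~\ref{lem:pertubation_of_neuron} and the product-difference Lemma~\ref{lem:difference_norm}, then combine and invoke $\|\vect{y}-\vect{u}(0)\|_2=O(\sqrt{n})$. The only cosmetic differences are that the paper works layer-by-layer (summing $\sum_h\|L'^{(h)}\|_F\|u_i'^{(h)}-\cdot\|_F$) rather than via Cauchy--Schwarz on the full parameter vector, and that the Jacobian perturbation must be controlled in Frobenius norm (not operator norm) for Lemma~\ref{lem:difference_norm}; both points are harmless since the per-layer bounds are uniform in $h$ and $\mat{J}_i^{(h')}$ is diagonal.
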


\begin{lem}\label{lem:quadratic_term}
If Condition~\ref{cond:linear_converge} holds for $k'=1,\ldots,k$, suppose $\eta\le c\lambda_0\left(n^{2}H^2(c_x)^{2H}g_{2c_x}(H)\right)^{-1}$ for some small constant $c$, then we have
$\norm{\vect{u}(k+1)-\vect{u}(k)}_2^2\le \frac{1}{8}\eta \lambda_0 \norm{\vect{y}-\vect{u}(k)}_2^2$.
\end{lem}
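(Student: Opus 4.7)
The plan is to bound $\norm{\vect{u}(k+1)-\vect{u}(k)}_2^2$ via the decomposition $\vect{u}(k+1)-\vect{u}(k) = \vect{I}_1(k) + \vect{I}_2(k)$ introduced in the proof sketch, then apply $\norm{\vect{a}+\vect{b}}_2^2 \le 2\norm{\vect{a}}_2^2 + 2\norm{\vect{b}}_2^2$, and show each of the two pieces is at most $\frac{1}{16}\eta \lambda_0\norm{\vect{y}-\vect{u}(k)}_2^2$. Summing then delivers the desired conclusion.

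The $\vect{I}_2(k)$ piece is the easy half. The step-size condition assumed here differs from the one of Lemma~\ref{lem:i2_mlp} only by a factor of $c_x^H \ge 1$, so our hypothesis implies that of Lemma~\ref{lem:i2_mlp} and gives $\norm{\vect{I}_2(k)}_2 \le \tfrac{1}{8}\eta\lambda_0\norm{\vect{y}-\vect{u}(k)}_2$. Squaring produces an $\eta^2\lambda_0^2$ factor, which is dominated by $\eta\lambda_0$ whenever $\eta\lambda_0 \le 1$ (which is enforced by the step-size bound and the constraint on $m$), so $\norm{\vect{I}_2(k)}_2^2 \le \tfrac{1}{16}\eta\lambda_0\norm{\vect{y}-\vect{u}(k)}_2^2$.

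The real work is in bounding $\vect{I}_1(k)$. From the sketch, $\vect{I}_1(k)=-\eta\mat{G}(k)(\vect{u}(k)-\vect{y})$, so it suffices to show $\eta\norm{\mat{G}(k)}_2^2 \le \tfrac{\lambda_0}{16}$. Since each $\mat{G}^{(h)}(k)$ is PSD, triangle inequality gives $\norm{\mat{G}(k)}_2 \le \sum_{h=1}^{H+1}\norm{\mat{G}^{(h)}(k)}_2 \le \sum_{h=1}^{H+1}\tr(\mat{G}^{(h)}(k)) = \sum_{h=1}^{H+1}\sum_{i=1}^n \norm{u_i'^{(h)}(\params(k))}_F^2$. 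Using the rank-one factorization $u_i'^{(h)} = \vect{v}_i(\vect{x}_i^{(h-1)})^\top$ from the chain rule, $\norm{u_i'^{(h)}}_F = \norm{\vect{v}_i}_2\norm{\vect{x}_i^{(h-1)}}_2$. Now Lemma~\ref{lem:dist_from_init} certifies $\norm{\mat{W}^{(h')}(k)-\mat{W}^{(h')}(0)}_F \le R'\sqrt{m}$ with $R' \le c\,g_{c_x}(H)^{-1}$, so together with the initialization bound $\norm{\mat{W}^{(h')}(0)}_2 \le c_{w,0}\sqrt{m}$ we get $\norm{\mat{W}^{(h')}(k)}_2 \le 2c_{w,0}\sqrt{m}$, Lemma~\ref{lem:pertubation_of_neuron} gives $\norm{\vect{x}_i^{(h-1)}(k)}_2 \le 2c_{x,0}$, and $\norm{\vect{a}(k)}_2 \le 2a_{2,0}\sqrt{m}$. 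Expanding $\vect{v}_i$ as the backpropagated product $(c_\sigma/m)^{(H-h+1)/2}\mat{J}_i^{(h)}(\mat{W}^{(h+1)})^\top\mat{J}_i^{(h+1)}\cdots(\mat{W}^{(H)})^\top\mat{J}_i^{(H)}\vect{a}$, each factor contributes $\sqrt{c_\sigma}L$ and one $\mat{W}$ contributes $c_{w,0}$ (after cancellation with the $1/\sqrt{m}$), giving $\norm{\vect{v}_i}_2 \lesssim c_x^{H-h+1}$. Summing $\sum_{h=1}^H c_x^{2(H-h+1)} \lesssim c_x^{2H}g_{c_x^{-2}}(H)$ and adding the $\mat{G}^{(H+1)}$ term (which is $\lesssim n c_{x,0}^2$) produces $\norm{\mat{G}(k)}_2 \lesssim n H c_x^{2H}$, hence $\eta\norm{\mat{G}(k)}_2^2 \lesssim \eta n^2 H^2 c_x^{4H}$. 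Combining with $c_x^{2H} \le g_{2c_x}(H)\cdot O(1)$ (via the comparison $c_x^{2H} \le (2c_x)^H \cdot c_x^H \lesssim c_x^H g_{2c_x}(H)$, absorbing the remaining factor into the unspecified constant), the step-size condition $\eta \le c\lambda_0(n^2H^2 c_x^{2H}g_{2c_x}(H))^{-1}$ with $c$ chosen appropriately yields the required $\eta\norm{\mat{G}(k)}_2^2 \le \lambda_0/16$, completing the $\vect{I}_1$ bound.

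The main obstacle I expect is bookkeeping the layer-by-layer amplification in $\norm{\vect{v}_i}_2$: one must carefully combine the perturbation control from Lemma~\ref{lem:dist_from_init} with the operator-norm bounds on $\mat{W}^{(h')}(0)$ to replace every weight matrix along the backpropagation path by its at-initialization surrogate without accumulating a factor worse than $c_x^H$, and simultaneously verify that the hypothesis $R' \le c\,g_{c_x}(H)^{-1}$ of Lemma~\ref{lem:pertubation_of_neuron} is in force so that the features $\vect{x}_i^{(h-1)}(k)$ remain within a constant factor of their initial values throughout the trajectory. This per-layer stability, combined with summing a geometric series in $c_x^{2}$, is what ultimately justifies the $c_x^{2H}g_{2c_x}(H)$ scaling in the step-size hypothesis.
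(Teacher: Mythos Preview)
Your route through the $\vect{I}_1+\vect{I}_2$ decomposition is different from the paper's and ultimately workable, but two quantitative steps do not close under the stated hypothesis of Lemma~\ref{lem:quadratic_term}.

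First, the implication between step-size conditions is reversed. Lemma~\ref{lem:i2_mlp} requires $\eta\le c\lambda_0\big(n^{2}H^{2}c_x^{\,3H}g_{2c_x}(H)\big)^{-1}$, which (since $c_x\ge 1$) is a \emph{smaller} upper bound than Lemma~\ref{lem:quadratic_term}'s $\eta\le c\lambda_0\big(n^{2}H^{2}c_x^{\,2H}g_{2c_x}(H)\big)^{-1}$. So the present hypothesis does \emph{not} imply that of Lemma~\ref{lem:i2_mlp}, and you cannot invoke it directly to control $\vect{I}_2(k)$. Second, in the $\vect{I}_1$ bound you arrive at $\eta\,\|\mat G(k)\|_2^2\lesssim \eta\, n^2H^2 c_x^{4H}$ and then assert $c_x^{2H}\le O(1)\cdot g_{2c_x}(H)$ to match the hypothesis. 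But $g_{2c_x}(H)\asymp (2c_x)^{H}$, so this comparison is equivalent to $(c_x/2)^{H}\le O(1)$, which fails once $c_x>2$ (and $c_x=2\sqrt{c_\sigma}Lc_{w,0}$ can certainly exceed $2$). Thus your $\vect{I}_1$ estimate demands the stricter step size $\eta\lesssim \lambda_0/(n^2 c_x^{4H})$, not the one assumed in the lemma.

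The paper sidesteps both issues by taking a more direct route: it never uses the $\vect{I}_1+\vect{I}_2$ split here, but instead writes
\[
u_i(k+1)-u_i(k)=\big[\vect a(k+1)-\vect a(k)\big]^\top \vect x_i^{(H)}(k+1)+\vect a(k)^\top\big[\vect x_i^{(H)}(k+1)-\vect x_i^{(H)}(k)\big],
\]
and bounds each piece using Lemma~\ref{lem:dist_from_init} (which gives $\|\vect a(k+1)-\vect a(k)\|_2,\|\mat W^{(h)}(k+1)-\mat W^{(h)}(k)\|_F\le \eta Q'(k)$) together with a one-step version of Lemma~\ref{lem:pertubation_of_neuron} for $\|\vect x_i^{(H)}(k+1)-\vect x_i^{(H)}(k)\|_2$. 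This produces a bound of the form $n\eta^2 Q'(k)^2$ times factors involving $g_{2c_x}(H)$, with $Q'(k)^2\sim c_x^{2H} n\,\|\vect y-\vect u(k)\|_2^2$, and the exponential factors fit the stated hypothesis. Your approach would go through if you either strengthened the step-size assumption to that of Lemma~\ref{lem:i2_mlp} (which is what the main theorem ultimately imposes anyway, since all such factors are $2^{O(H)}$), or rebounded $\|\vect I_1(k)\|_2$ without passing through $\|\mat G(k)\|_2^2$.
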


We now proceed with the proof of Theorem \ref{thm:main_gd}.	By induction, we assume Condition~\ref{cond:linear_converge} for all $k'<k$. Using Lemma \ref{lem:dist_from_init}, this establishes 
\begin{align*}
\norm{\mat{W}^{(h)}(k)-\mat{W}^{(h)}(0)}_F& \le  R'\sqrt{m}\\
& \le R \sqrt{m} \tag{ using the choice of $m$ in the theorem.}
\end{align*}
By Lemma \ref{lem:close_to_init_small_perturbation_smooth}, this establishes $\lambda_{\min} (\mat{G}^{(H)} (k)) \ge \frac{\lambda_0}{2}$.

With these estimates in hand, we are ready to prove the induction hypothesis of Condition~\ref{cond:linear_converge}.
	\begin{align*}
	&\norm{\vect{y}-\vect{u}(k+1)}_2^2 \\
	&= \norm{\vect{y}-\vect{u}(k)}_2^2 -2\eta \left(\vect{y}-\vect{u}(k)\right)^\top  \mat{G}(k) \left(\vect{y}-\vect{u}(k)\right)  -2  \left(\vect{y}-\vect{u}(k)\right)^\top\vect{I}_2 +\norm{\vect{u}(k+1)-\vect{u}(k)}_2^2\\
	 &\le \norm{\vect{y}-\vect{u}(k)}_2^2 -2\eta \left(\vect{y}-\vect{u}(k)\right)^\top \mat{G}^{(H)}(k) \left(\vect{y}-\vect{u}(k)\right)  -2  \left(\vect{y}-\vect{u}(k)\right)^\top\vect{I}_2 +\norm{\vect{u}(k+1)-\vect{u}(k)}_2^2\\
	 &\le (1-\eta \lambda_0) \norm{\vect{y}-\vect{u}(k)}_2^2 -2  \left(\vect{y}-\vect{u}(k)\right)^\top\vect{I}_2 +\norm{\vect{u}(k+1)-\vect{u}(k)}_2^2\\
	&\le  (1-\frac{\eta\lambda_0}{2})\norm{\vect{y}-\vect{u}(k)}_2^2 .
	\end{align*}
The first inequality drops the positive terms $\left(\vect{y}-\vect{u}(k)\right)^\top \sum_{h\in[H+1],h\neq H} \mat{G}^{(h)}(k) \left(\vect{y}-\vect{u}(k)\right)$. The second inequality uses the argument above that establishes $\lambda_{\min} (\mat{G}^{(H)} (k)) \ge \frac{\lambda_0}{2}$. The third inequality uses Lemmas \ref{lem:i2_mlp} and \ref{lem:quadratic_term}.

	\subsection{Proofs of Lemmas}
	\label{sec:technical_proofs}

\begin{proof}[Proof of Lemma~\ref{lem:lem:init_norm}]
	We will bound $\norm{\vect{x}_i^{(h)}(0)}_2$ by induction on layers. The induction hypothesis is that with probability at least $1-(h-1)\frac{\delta}{nH}$ over $\mat{W}^{(1)}(0),\ldots,\mat{W}^{(h-1)}(0)$, for every $1\le h' \le h-1$, $\frac{1}{2}\le 1-\frac{g_{C}(h')}{2g_{C}(H)}\le\norm{\vect{x}_i^{(h')}(0)}_2 \le 1+\frac{g_{C}(h')}{2g_{C}(H)}\le 2$. Note that it is true for $h=1$.
	We calculate the expectation  of $\norm{\vect{x}_i^{(h)}(0)}_2^2$ over the randomness from $\mat{W}^{(h)}(0)$.
Recall
	\begin{align*}
	\norm{\vect{x}_i^{(h)}(0)}_2^2=\frac{c_{\sigma}}{m}\sum_{r=1}^{m} \sigma\left(\vect{w}^{(h)}_{r}(0)^\top \vect{x}_i^{(h-1)}(0) \right)^2.
	\end{align*}
	Therefore we have 
	\begin{align*}
		\expect\left[\norm{\vect{x}_i^{(h)}(0)}_2^2\right] = &c_{\sigma} \expect\left[
		\relu{\vect{w}_{r}^{(h)}(0)^\top \vect{x}_i^{(h-1)}(0)}^2
		\right] \\
		= & c_{\sigma}\expect_{X\sim N(0,1)} \sigma(\norm{\vect{x}_i^{(h-1)}(0)}_2 X)^2.
	\end{align*}
	Note that $\sigma(\cdot)$ is $L-$Lipschitz, for any $\frac{1}{2}\le \alpha \le 2$, we have
	\begin{align*}
&\abs{\expect_{X\sim N(0,1)} \sigma(\alpha X)^2-\expect_{X\sim N(0,1)}\sigma(X)^2}\\
\le&\expect_{X\sim N(0,1)}\abs{\sigma(\alpha X)^2-\sigma(X)^2}\\
\le&L\abs{\alpha -1} \expect_{X\sim N(0,1)}\abs{X\left(\sigma(\alpha X)+\sigma( X)\right)}\\
\le &L\abs{\alpha -1}\expect_{X\sim N(0,1)}\abs{X}\left(\abs{2\sigma(0)}+L\abs{(\alpha +1)X}\right)\\
\le & L\abs{\alpha -1}\left(2\abs{\sigma(0)}\expect_{X\sim N(0,1)}\abs{X}+L\abs{\alpha +1}\expect_{X\sim N(0,1)}X^2 \right)\\
=&L\abs{\alpha -1}\left(2\abs{\sigma(0)}\sqrt{\frac{2}{\pi}}+L\abs{\alpha +1} \right)\\
\le& \frac{C}{c_{\sigma}}\abs{\alpha-1},
	\end{align*}
where $C\triangleq c_{\sigma}L\left(2\abs{\sigma(0)}\sqrt{\frac{2}{\pi}}+2L\right)$, which implies
\begin{align*}
	1-\frac{Cg_{C}(h-1)}{2g_{C}(H)}\le	\expect\left[\norm{\vect{x}_i^{(h)}(0)}_2^2\right]\le 1+\frac{Cg_{C}(h-1)}{2g_{C}(H)}.
\end{align*}
For the variance we have \begin{align*}
\variance\left[\norm{\vect{x}_i^{(h)}(0)}_2^2\right] = &\frac{c_{\sigma}^2}{m} \variance\left[\relu{\vect{w}_r^{(h)}(0)^\top \vect{x}_i^{(h-1)}(0)}^2\right] \\
\le & \frac{c_{\sigma}^2}{m} \expect\left[\relu{\vect{w}_r^{(h)}(0)^\top \vect{x}_i^{(h-1)}(0)}^4\right] \\
\le & \frac{c_{\sigma}^2}{m} \expect\left[\left(\abs{\sigma(0)}+
L\abs{\vect{w}_r^{(h)}(0)^\top \vect{x}_i^{(h-1)}(0)}\right)^4\right] \\
\le &\frac{C_2}{m} .
\end{align*}
where $C_2\triangleq \sigma(0)^4+8\abs{\sigma(0)}^3L\sqrt{2/\pi}+24\sigma(0)^2L^2+64\sigma(0)L^3\sqrt{2/\pi}+512L^4$ and the last inequality we used the formula for the first four absolute moments of Gaussian. 
	
Applying Chebyshev's inequality and plugging in our assumption on $m$, we have  with probability $1-\frac{\delta}{nH}$ over $\mat{W}^{(h)}$,
	\[ \abs{ \norm{\vect{x}_i^{(h)}(0)}_2^2-\expect \norm{\vect{x}_i^{(h)}(0)}_2^2 } \le \frac{1}{2g_C(H)}. \]
Thus with probability $1-h\frac{\delta}{nH}$ over $\mat{W}^{(1)},\ldots,\mat{W}^{(h)}$,
\begin{align*}
 \abs{ \norm{\vect{x}_i^{(h)}(0)}_2-1 } \le \abs{ \norm{\vect{x}_i^{(h)}(0)}_2^2-1 } \le \frac{Cg_{C}(h-1)}{2g_{C}(H)}+\frac{1}{2g(H)}=\frac{g_{C}(h)}{2g_{C}(H)}.
\end{align*}
Using union bounds over $[n]$, we prove the lemma.
\end{proof}

\begin{proof}[Proof of Lemma~\ref{lem:pertubation_of_neuron}]
We prove this lemma by induction.
Our induction hypothesis is \begin{align*}
\norm{\vect{x}^{(h)}(k)-\vect{x}^{(h)}(0)}_2 \le \sqrt{c_{\sigma}}LRc_{x,0}g_{c_x}(h),
\end{align*}	where \begin{align*}
c_x=2\sqrt{c_{\sigma}}Lc_{w,0}.
\end{align*}
For $h=0$, since the input data is fixed, we know the induction hypothesis holds.
Now suppose the induction hypothesis holds for $h'=0,\ldots,h-1$, we consider $h'=h$.
\begin{align*}
\norm{\vect{x}^{(h)}(k)-\vect{x}^{(h)}(0)}_2 = &\sqrt{\frac{c_{\sigma}}{m}} \norm{\relu{\mat{W}^{(h)}(k) \vect{x}^{(h-1)}(k)} -\relu{\mat{W}^{(h)}(0) \vect{x}^{(h-1)}(0)}}_2 \\
\le & \sqrt{\frac{c_{\sigma}}{m}} \norm{\relu{\mat{W}^{(h)}(k) \vect{x}^{(h-1)}(k)} -\relu{\mat{W}^{(h)}(k) \vect{x}^{(h-1)}(0)}}_2\\ 
& + \sqrt{\frac{c_{\sigma}}{m}} \norm{\relu{\mat{W}^{(h)}(k) \vect{x}^{(h-1)}(0)} -\relu{\mat{W}^{(h)}(0) \vect{x}^{(h-1)}(0)}}_2 \\
\le & \sqrt{\frac{c_{\sigma}}{m}}L\left(
\norm{\mat{W}^{(h)}(0)}_2 + \norm{\mat{W}^{(h)}(k)-\mat{W}^{(h)}(0)}_F
\right) \cdot \norm{\vect{x}^{(h-1)}(k)-\vect{x}^{(h-1)}(0)}_2 \\
& + \sqrt{\frac{c_{\sigma}}{m}}L\norm{\mat{W}^{(h)}(k)-\mat{W}^{(h)}(0)}_F \norm{\vect{x}^{h-1}(0)}_2 \\
\le &\sqrt{\frac{c_{\sigma}}{m}}L\left(c_{w,0}\sqrt{m}+R\sqrt{m}\right)\sqrt{c_{\sigma}}LRc_{x,0}g_{c_x}(h-1) + \sqrt{\frac{c_{\sigma}}{m}}L \sqrt{m} R c_{x,0} \\
\le&\sqrt{c_{\sigma}}LRc_{x,0}\left(c_xg_{c_x}(h-1)+1\right)\\
\le& \sqrt{c_{\sigma}}LRc_{x,0}g_{c_x}(h). 
\end{align*}
\end{proof}

\begin{proof}[Proof of Lemma~\ref{lem:close_to_init_small_perturbation_smooth}]
	Because Frobenius-norm of a matrix is bigger than the operator norm, it is sufficient to bound $\norm{\mat{G}^{(H)}(k) - \mat{G}^{(H)}(0)}_F$. 
	For simplicity define $z_{i,r}(k) = \vect{w}_r^{(H)}(k)^\top \vect{x}_i^{(H-1)}(k)$, we have
	\begin{align*}
	&\abs{\mat{G}_{i,j}^{(H)}(k)-\mat{G}_{i,j}^{(H)}(0)} \\
	= &\big{|}\vect{x}_i^{(H-1)}(k)^\top \vect{x}_j^{(H-1)}(k)
	\frac{c_{\sigma}}{m}\sum_{r=1}^{m}a_r(k)^2\sigma'\left(z_{i,r}(k)\right)\sigma'\left(z_{j,r}(k)\right)
	\\
	&-\vect{x}_i^{(H-1)}(0)^\top \vect{x}_j^{(H-1)}(0)
	\frac{c_{\sigma}}{m}\sum_{r=1}^{m}a_r(0)^2\sigma'\left(z_{i,r}(0)\right)\sigma'\left(z_{j,r}(0)\right)
	\big{|} \\
	\le & \abs{\vect{x}_i^{(H-1)}(k)^\top \vect{x}_j^{(H-1)}(k) - \vect{x}_i^{(H-1)}(0)^\top \vect{x}_j^{(H-1)}(0)} \frac{c_{\sigma}}{m}\sum_{r=1}^{m}a_r(0)^2\abs{\sigma'\left(z_{i,r}(k)\right)\sigma'\left(z_{j,r}(k)\right) }\\
	& + \abs{\vect{x}_i^{(H-1)}(0)^\top \vect{x}_j^{(H-1)}(0)} \frac{c_{\sigma}}{m} \abs{\sum_{r=1}^{m}a_r(0)^2\left(	\sigma'\left(z_{i,r}(k)\right)\sigma'\left(z_{j,r}(k)\right)
		- 
		\sigma'\left(z_{i,r}(0)\right)\sigma'\left(z_{j,r}(0)\right)\right)
	} \\
	&+\abs{\vect{x}_i^{(H-1)}(k)^\top \vect{x}_j^{(H-1)}(k)} \frac{c_{\sigma}}{m} \abs{\sum_{r=1}^{m} \left(a_r(k)^2-a_r(0)^2\right)
		\sigma'\left(z_{i,r}(k)\right)\sigma'\left(z_{j,r}(k)\right)
	} \\
	\le & L^2c_{\sigma}a_{2,0}^2\abs{\vect{x}_i^{(H-1)}(k)^\top \vect{x}_j^{(H-1)}(k) - \vect{x}_i^{(H-1)}(0)^\top \vect{x}_j^{(H-1)}(0)} \\
	&+ c_{x,0}^2 \frac{c_{\sigma}}{m} \abs{\sum_{r=1}^{m}a_r(0)^2\left(	\sigma'\left(z_{i,r}(k)\right)\sigma'\left(z_{j,r}(k)\right)
		- 
		\sigma'\left(z_{i,r}(0)\right)\sigma'\left(z_{j,r}(0)\right)\right)
	}\\
&+4L^2c_{x,0}^2 \frac{c_{\sigma}}{m}\sum_{r=1}^{m}\abs{a_r(k)^2-a_r(0)^2}\\
	\triangleq& I_1^{i,j} + I_2^{i,j}+I_3^{i,j}.
	\end{align*}
	For $I_1^{i,j}$, using Lemma~\ref{lem:pertubation_of_neuron}, we have \begin{align*}
	I_1^{i,j} = &L^2c_{\sigma}a_{2,0}^2\abs{\vect{x}_i^{(H-1)}(k)^\top \vect{x}_j^{(H-1)}(k) - \vect{x}_i^{(H-1)}(0)^\top \vect{x}_j^{(H-1)}(0)}  \\
	\le & L^2c_{\sigma}a_{2,0}^2\abs{
		(\vect{x}_i^{(H-1)}(k)-\vect{x}_i^{(H-1)}(0))^\top \vect{x}_j^{(H-1)}(k)} + L^2c_{\sigma}a_{2,0}^2\abs{
		\vect{x}_i^{(H-1)}(0)^\top(\vect{x}_j^{(H-1)}(k)-\vect{x}_j^{(H-1)}(0))}  \\
	\le &c_{\sigma}a_{2,0}^2\sqrt{c_{\sigma}}L^3c_{x,0}g_{c_x}(H)R \cdot (c_{x,0} + \sqrt{c_{\sigma}}Lc_{x,0}g_{c_x}(H)R) +  c_{\sigma}\sqrt{c_{\sigma}}a_{2,0}^2L^3c_{x,0}g_{c_x}(H)R c_{x,0} \\
	\le &3c_{\sigma}a_{2,0}^2 c_{x,0}^2 \sqrt{c_{\sigma}}L^3g_{c_x}(H)R .
	\end{align*}
	For $I_{2}^{i,j}$, we have \begin{align*}
	I_2^{i,j} =&c_{x,0}^2 \frac{c_{\sigma}}{m} \abs{\sum_{r=1}^{m}
		a_r(0)^2\sigma'\left(z_{i,r}(k)\right)\sigma'\left(z_{j,r}(k)\right)
		- 
			a_r(0)^2\sigma'\left(z_{i,r}(0)\right)\sigma'\left(z_{j,r}(0)\right)
	}\\
	\le &c_{x,0}^2 \frac{c_{\sigma}}{m}\sum_{r=1}^{m}		a_r(0)^2\abs{\left( \sigma'\left(z_{i,r}(k)\right)-\sigma'\left(z_{i,r}(0)\right) \right)\sigma'\left(z_{j,r}(k)\right)} +	a_r(0)^2\abs{\left( \sigma'\left(z_{j,r}(k)\right)-\sigma'\left(z_{j,r}(0)\right) \right)\sigma'\left(z_{i,r}(0)\right)}\\
	\le & \frac{\beta L c_{\sigma}c_{x,0}^2}{m} \left(
	\sum_{r=1}^{m} 	a_r(0)^2\abs{z_{i,r}(k)-z_{i,r}(0)}+	a_r(0)^2\abs{z_{j,r}(k)-z_{j,r}(0)}
	\right) \\
	\le & \frac{\beta Lc_{\sigma}a_{4,0}^2 c_{x,0}^2}{\sqrt{m}}\left(\sqrt{\sum_{r=1}^{m}\abs{z_{i,r}(k)-z_{i,r}(0)}^2}+\sqrt{\sum_{r=1}^{m}\abs{z_{j,r}(k)-z_{j,r}(0)}^2}\right) .
	\end{align*}
	Using the same proof for Lemma~\ref{lem:pertubation_of_neuron}, it is easy to see \begin{align*}
	\sum_{r=1}^{m}\abs{z_{i,r}(t)-z_{i,r}(0)}^2 \le c_{x,0}^2g_{c_x}(H)^2mR^2 .
	\end{align*}
	Thus
	\begin{align*}
	I_2^{i,j}\le 2\beta c_{\sigma}a_{4,0}^2c_{x,0}^3 Lg_{c_x}(H)R .
	\end{align*}
	For $I_3^{i,j}$, 
	\begin{align*}
	I_3^{i,j}&=4L^2c_{x,0}^2 \frac{c_{\sigma}}{m}\sum_{r=1}^{m}\abs{a_r(k)^2-a_r(0)^2}\\
	&\le4L^2c_{x,0}^2 \frac{c_{\sigma}}{m}\sum_{r=1}^{m} \abs{a_r(k)-a_r(0)}\abs{a_r(k)}+\abs{a_r(k)-a_r(0)}\abs{a_r(0)}\\
	&\le 12L^2c_{x,0}^2 c_{\sigma}a_{2,0}R.
	\end{align*}
	Therefore we can bound the perturbation\begin{align*}
	\norm{\mat{G}^{(H)}(t) - \mat{G}^{(H)}(0)}_F=&\sqrt{\sum_{(i,j)}^{{n,n}} \abs{\mat{G}_{i,j}^{(H)}(t)-\mat{G}_{i,j}^{(H)}(0)}^2} \\
	\le &\left[\left(2\beta c_{x,0}a_{4,0}^2+3\sqrt{c_{\sigma}}L^2\right)L c_{\sigma}c_{x,0}^2a_{2,0}^2g_{c_x}(H)+12L^2c_{x,0}^2 c_{\sigma}a_{2,0}\right]nR.  \\
	\end{align*}
	Plugging in the bound on $R$, we have the desired result.
	
\end{proof}

\begin{proof}[Proof of Lemma~\ref{lem:dist_from_init}]
	We will prove this corollary by induction. The induction hypothesis is
	\begin{align*}
\norm{\mat{W}^{(h)}(s)-\mat{W}^{(h)}(0)}_F &\le \sum_{s'=0}^{s-1} (1-\frac{\eta \lambda_0}{2})^{s'/2}\frac{1}{4}\eta \lambda_0 R'\sqrt{m}\le R'\sqrt{m}, s\in [k+1],\\
\norm{\vect{a}(s)-\vect{a}(0)}_2 &\le \sum_{s'=0}^{s-1} (1-\frac{\eta \lambda_0}{2})^{s'/2}\frac{1}{4}\eta \lambda_0 R'\sqrt{m}\le R'\sqrt{m}, s\in [k+1].
	\end{align*}
	First it is easy to see it holds for $s'=0$. Now suppose it holds for $s'=0,\ldots,s$, we consider $s'=s+1$. 
	We have
	\begin{align*} 
	&\norm{\mat{W}^{(h)}(s+1)-\mat{W}^{(h)}(s)}_F\\
	= &\eta \norm{
		\left(\frac{c_{\sigma}}{m}\right)^{\frac{H-h+1}{2}} \sum_{i=1}^{n}(y_i-u_i(s))\vect{x}_i^{(h-1)}(s)\left(\vect{a}(s)^\top \left(\prod_{k=h+1}^{H}\mat{J}_i^{(k)}(s)\mat{W}^{(k)}(s) \right)\mat{J}_i^{(h)}(s)\right)
	}_F\\
	\le & \eta \left(\frac{c_{\sigma}}{m}\right)^{\frac{H-h+1}{2}} \norm{\vect{a}(s)}_2 \sum_{i=1}^{n}\abs{y_i-u_i(s)}\norm{\vect{x}_i^{(h-1)}(s)}_2 \prod_{k=h+1}^H \norm{\mat{W}^{(k)}(s)}_2\prod_{k=h}^H \norm{\mat{J}^{(k)}(s)}_2,\\
	&\norm{\vect{a}(s+1)-\vect{a}(s)}_2
	= \eta \norm{
	 \sum_{i=1}^{n}(y_i-u_i(s))\vect{x}_i^{(H)}(s)
	}_2.
	\end{align*}
	To bound $\norm{\vect{x}_i^{(h-1)}(s)}_2$, we can just apply Lemma~\ref{lem:pertubation_of_neuron} and get \begin{align*}
	\norm{\vect{x}_i^{(h-1)}(s)}_2 \le \sqrt{c_{\sigma}}Lc_{x,0}g_{c_x}(h)R' + c_{x,0}\le 2c_{x,0}.
	\end{align*}
	To bound $\norm{\mat{W}^{(k)}(s)}_2$, we use our assumption \begin{align*}
	\prod_{k=h+1}^H \norm{\mat{W}^{(k)}(s)}_2 \le &\prod_{k=h+1}^H \left(
	\norm{\mat{W}^{(k)}(0)}_2 + \norm{\mat{W}^{(k)}(s)-\mat{W}^{(k)}(0)}_2\right)\\
	\le & \prod_{k=h+1}^{H}(c_{w,0}\sqrt{m}+R'\sqrt{m}) \\
	= & \left(c_{w,0}+R'\right)^{H-h}m^{\frac{H-h}{2}}\\
	\le& \left(2c_{w,0}\right)^{H-h}m^{\frac{H-h}{2}}.
	\end{align*}
	Note that $\norm{\mat{J}^{(k)}(s)}_2\le L$. Plugging in these two bounds back, we obtain \begin{align*}
	\norm{\mat{W}^{(h)}(s+1)-\mat{W}^{(h)}(s)}_F \le& 4\eta c_{x,0}a_{2,0} c_x^{H}\sum_{i=1}^{n}\abs{y_i-u(s)}\\
	\le& 4\eta c_{x,0}a_{2,0} c_x^{H}\sqrt{n} \norm{\vect{y}-\vect{u}(s)}_2\\
	=& \eta Q'(s)\\
	\le& (1-\frac{\eta \lambda_0}{2})^{s/2}\frac{1}{4}\eta \lambda_0 R'\sqrt{m}.
	\end{align*}
	Similarly, we have \begin{align*}
	\norm{\vect{a}(s+1)-\vect{a}(s)}_2 \le& 2\eta c_{x,0} \sum_{i=1}^{n}\abs{y_i-u(s)}\\
	\le& \eta Q'(s)\\
	\le& (1-\frac{\eta \lambda_0}{2})^{s/2}\frac{1}{4}\eta \lambda_0 R'\sqrt{m}.
	\end{align*}
	
	Thus
	\begin{align*}
	&\norm{\mat{W}^{(h)}(s+1)-\mat{W}^{(h)}(0)}_F\\
	\le& \norm{\mat{W}^{(h)}(s+1)-\mat{W}^{(h)}(s)}_F+\norm{\mat{W}^{(h)}(s)-\mat{W}^{(h)}(0)}_F\\
	\le&\sum_{s'=0}^{s} \eta (1-\frac{\eta \lambda_0}{2})^{s'/2}\frac{1}{4}\eta \lambda_0 R'\sqrt{m}.\\
	\end{align*}
	Similarly,
		\begin{align*}
	&\norm{\vect{a}(s+1)-\vect{a}(0)}_2\\
	\le&\sum_{s'=0}^{s} \eta (1-\frac{\eta \lambda_0}{2})^{s'/2}\frac{1}{4}\eta \lambda_0 R'\sqrt{m}.\\
	\end{align*}
\end{proof}

\begin{proof}[Proof of Lemma~\ref{lem:i2_mlp}]
Fix $i \in [n]$, we bound
	\begin{align*}
	\abs{I_2^i(k)} \le &\eta \max_{0\le s\le \eta} \sum_{h=1}^{H}  \norm{L'^{(h)}(\params(k))}_F \norm{  u'^{(h)}_i\left(\params(k)\right) -u'^{(h)}_i\left(\params(k)-s L'^{(h)}(\params(k))\right) }_F .
	\end{align*}
For the gradient norm, we have
	\begin{align*}
	& \norm{L'^{(h)}(\params(k))}_F \\
	= &\norm{
		\left(\frac{c_{\sigma}}{m}\right)^{\frac{H-h+1}{2}} \sum_{i=1}^{n}(y_i-u_i(k))\vect{x}_i^{(h-1)}(k)\left(\vect{a}(k)^\top \left(\prod_{l=h+1}^{H}\mat{J}_i^{(l)}(k)\mat{W}^{(l)}(k)\right)\mat{J}_i^{(h)}(k)\right)
	}_F.
	\end{align*}
	Similar to the proof for  Lemma~\ref{lem:dist_from_init}, we have  
	\begin{align*}
	\norm{L'^{(h)}(\params(k))}_F \le Q'(k).
	\end{align*}
	Let $\params(k,s)=\params(k)-s L'(\params(k))$,
	\begin{align*}
	&\norm{  u'^{(h)}_i\left(\params(k)\right) - u'^{(h)}_i\left(\params(k,s)\right)}_F\\
	=&\left(\frac{c_{\sigma}}{m}\right)^{\frac{H-h+1}{2}} \norm{ \vect{x}_i^{(h-1)}(k)\left(\vect{a}(k)^\top \left(\prod_{l=h+1}^{H}\mat{J}_i^{(l)}(k)\mat{W}^{(l)}(k)\right)\mat{J}_i^{(h)}(k)\right) \right.\\&\left.-\vect{x}_i^{(h-1)}(k,s)\left(\vect{a}(k,s)^\top \left(\prod_{l=h+1}^{H}\mat{J}_i^{(l)}(k,s)\mat{W}^{(l)}(k,s)\right)\mat{J}_i^{(h)}(k,s)\right)}_F
	\end{align*}
Through standard calculations, we have
	\begin{align*}
	\norm{	\mat{W}^{(l)}(k)-\mat{W}^{(l)}(k,s)}_F \le &\eta Q'(k),\\
	\norm{	\vect{a}(k)-\vect{a}(k,s)}_2 \le &\eta Q'(k),\\
	\norm{	\vect{x}_i^{(h-1)}(k)-\vect{x}_i^{(h-1)}(k,s)}_F \le &2\eta  \sqrt{c_{\sigma}}Lc_{x,0} g_{2c_x}(H) \frac{Q'(k)}{\sqrt{m}},\\
	\norm{	\mat{J}_i^{(l)}(k)-\mat{J}_i^{(l)}(k,s)}_F \le &2\eta \beta  \sqrt{c_{\sigma}}Lc_{x,0} g_{2c_x}(H) Q'(k).
	\end{align*}
	According to Lemma~\ref{lem:difference_norm}, we have
	\begin{align*}
	&\norm{  u'^{(h)}_i\left(\vect{w}(k)\right) - u'^{(h)}_i\left(\vect{w}(k,s)\right)}_F \\
	\le &4c_{x,0}a_{2,0}c_x^H\eta\frac{Q'(k)}{\sqrt{m}}\left(\frac{H}{2}+\left[\frac{1}{2c_{x,0}}+\frac{H\beta \sqrt{m}}{L}\right]2\sqrt{c_{\sigma}}Lc_{x,0} g_{2c_x}(H) \right)\\
	\le &16H\sqrt{c_{\sigma}}c_{x,0}^2a_{2,0}c_x^Hg_{2c_x}(H) \beta \eta Q'(k).
	\end{align*}
	Thus we have
	\begin{align*}
	\abs{I^i_2}\le 16H^2\sqrt{c_{\sigma}}c_{x,0}^2a_{2,0}c_x^Hg_{2c_x}(H) \beta \eta^2  Q'(k)^2.
	\end{align*}
	Since this holds for all $i \in [n]$, plugging in $\eta$ and noting that $\norm{\vect{y}-\vect{u}(0)}_2=O(\sqrt{n})$, we have \begin{align*}
	\norm{\vect{I}_2(k)}_2	\le \frac{1}{8}\eta \lambda_0 \norm{\vect{y}-\vect{u}(k)}_2.
	\end{align*} 
\end{proof}

\begin{proof}[Proof of Lemma~\ref{lem:quadratic_term}]
	\begin{align*}
	\norm{\vect{u}(k+1)-\vect{u}(k)}_2^2 = & \sum_{i=1}^{n}\left(\vect{a}(k+1)^\top \vect{x}_i^{(H)}(k+1)-\vect{a}(k)^\top \vect{x}_i^{(H)}(k)\right)^2 \\
	= & \sum_{i=1}^{n}\left(\left[\vect{a}(k+1)-\vect{a}(k)\right]^\top \vect{x}_i^{(H)}(k+1)+\vect{a}(k)^\top \left[\vect{x}_i^{(H)}(k+1)-\vect{x}_i^{(H)}(k)\right] \right)^2 \\
	\le &2\norm{\vect{a}(k+1)-\vect{a}(k)}_2^2\sum_{i=1}^{n}\norm{\vect{x}_i^{(H)}(k+1)}_2^2+2\norm{\vect{a}(k)}_2^2\sum_{i=1}^{n}\norm{\vect{x}_i^{(H)}(k+1)-\vect{x}_i^{(H)}(k)}_2^2\\
	\le &8n\eta^2c_{x,0}^2Q'(k)^2+4 n \left(2\eta  \sqrt{c_{\sigma}}Lc_{x,0}a_{2,0}^2 g_{2c_x}(H)Q'(k)\right)^2\\
	\le &\frac{1}{8}\eta \lambda_0 \norm{\vect{y}-\vect{u}(k)}_2^2.
	\end{align*}
\end{proof}

\section{Proofs for Section~\ref{sec:resnet}}
\label{sec:resnet_proof}
The gradient for ResNet is 
\begin{align*}
\frac{\partial L}{\partial \mat{W}^{(h)}}=& \frac{c_{res}}{H\sqrt{m}}
\sum_{i=1}^{n}(y_i-u_i)\vect{x}_i^{(h-1)} \cdot
\left[\vect{a}^\top \prod_{l=h+1}^{H}\left(\mat{I}+\frac{c_{res}}{H\sqrt{m}}\mat{J}_i^{(l)}\mat{W}^{(l)} \right) \mat{J}_i^{(h)}\right]
\end{align*}
For ResNets, $\mat{G}^{(H)}$ has the following form:
 \begin{align}
\mat{G}_{ij}^{(H)}=\frac{c_{res}^2}{H^2m}(\vect{x}_i^{(H-1)})^\top \vect{x}_j^{(H-1)}\sum_{r=1}^{m}a_r^2\sigma'((\vect{w}_r^{(H)})^\top \vect{x}_i^{(H-1)})\sigma'((\vect{w}_r^{(H)})^\top \vect{x}_j^{(H-1)}).\label{eqn:H_resnet}
 \end{align}

Similar to Lemma \autoref{lem:lem:init_norm}, we can show with high probability the feature of each layer is approximately normalized.
\begin{lem}[Lemma on Initialization Norms]
	\label{lem:lem:init_norm_res}
	If $\sigma(\cdot)$ is $L-$Lipschitz and $m = \Omega\left(\frac{n}{\delta}\right)$, assuming $\norm{\mat{W}^{(h)}(0)}_2\le c_{w,0}\sqrt{m}$ for $h\in[2,H]$ and $c_{w,0} \approx 2$ for Gaussian initialization. We have with probability at least $1-\delta$ over random initialization, for every $h\in[H]$ and $i \in [n]$, 
	\[ \frac{1}{c_{x,0}}\le  \norm{\vect{x}_i^{(h)}(0)}_2 \le c_{x,0} \]  for some universal constant $c_{x,0} > 1$ (only depends on $\sigma$).
\end{lem}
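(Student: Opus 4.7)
The plan is to reduce the claim to two qualitatively different estimates: a concentration argument only for the first layer, and a purely deterministic amplification argument for the remaining residual layers that exploits the $\frac{c_{res}}{H\sqrt{m}}$ scaling. This separation is the whole point of why we can get only $m = \Omega(n/\delta)$ here, with no dependence on $H$ at all, in contrast to the fully-connected Lemma~\ref{lem:lem:init_norm} whose statement needed $m = \Omega(nHg_C(H)^2/\delta)$.

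First I would handle $h=1$ exactly as in the proof of Lemma~\ref{lem:lem:init_norm}: since $\mathbf{x}_i^{(1)}(0) = \sqrt{c_\sigma/m}\,\sigma(\mathbf{W}^{(1)}(0)\mathbf{x}_i)$ and $c_\sigma$ was chosen so that $c_\sigma\,\mathbb{E}_{X\sim N(0,1)}\sigma(X)^2 = 1$, we have $\mathbb{E}\|\mathbf{x}_i^{(1)}(0)\|_2^2 = 1$ and $\mathrm{Var}[\|\mathbf{x}_i^{(1)}(0)\|_2^2] = O(1/m)$ (the fourth-moment bound used in Lemma~\ref{lem:lem:init_norm} goes through verbatim). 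By Chebyshev's inequality together with a union bound over $i\in[n]$, the assumption $m = \Omega(n/\delta)$ yields $\|\mathbf{x}_i^{(1)}(0)\|_2 \in [\tfrac{3}{4},\tfrac{5}{4}]$ for all $i$ with probability at least $1-\delta$. This is the only place I use randomness.

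Next, for $h \ge 2$, I would argue deterministically on the event just obtained. Using the residual recursion $\mathbf{x}^{(h)} = \mathbf{x}^{(h-1)} + \frac{c_{res}}{H\sqrt{m}}\sigma(\mathbf{W}^{(h)}(0)\mathbf{x}^{(h-1)})$, the triangle inequality, $L$-Lipschitzness of $\sigma$ (giving $\|\sigma(\mathbf{z})\|_2 \le L\|\mathbf{z}\|_2 + \sqrt{m}|\sigma(0)|$ for $\mathbf{z} \in \mathbb{R}^m$), and the hypothesis $\|\mathbf{W}^{(h)}(0)\|_2 \le c_{w,0}\sqrt{m}$, I get
\begin{align*}
\bigl|\,\|\mathbf{x}_i^{(h)}(0)\|_2 - \|\mathbf{x}_i^{(h-1)}(0)\|_2\,\bigr| \le \frac{c_{res}}{H}\bigl(Lc_{w,0}\|\mathbf{x}_i^{(h-1)}(0)\|_2 + |\sigma(0)|\bigr).
\end{align*}
Setting $\alpha = c_{res}Lc_{w,0}$ and $\beta = c_{res}|\sigma(0)|$, this yields the two one-sided recursions $\|\mathbf{x}_i^{(h)}(0)\|_2 \le (1+\alpha/H)\|\mathbf{x}_i^{(h-1)}(0)\|_2 + \beta/H$ and $\|\mathbf{x}_i^{(h)}(0)\|_2 \ge (1-\alpha/H)\|\mathbf{x}_i^{(h-1)}(0)\|_2 - \beta/H$. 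Iterating from $h=1$ to $h=H$ with the base case from the previous paragraph gives closed-form bounds whose limiting envelopes are $e^{\alpha}(\|\mathbf{x}_i^{(1)}(0)\|_2 + \beta/\alpha) - \beta/\alpha$ from above and $e^{-\alpha}\|\mathbf{x}_i^{(1)}(0)\|_2 - (\beta/\alpha)(1-e^{-\alpha})$ from below. Both are constants independent of $H$, so choosing $c_{x,0}$ to be any constant exceeding the upper envelope and the reciprocal of the lower envelope finishes the proof.

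The main thing to watch, which is essentially the only nontrivial point, is making sure the lower bound does not degenerate: one needs $c_{res}$ small enough (or more precisely, $\alpha$ small enough relative to $\|\mathbf{x}_i^{(1)}(0)\|_2 \gtrsim 1$) that $e^{-\alpha}\cdot\frac{3}{4} - (\beta/\alpha)(1-e^{-\alpha}) > 0$. Since $c_{res} < 1$ is assumed to be a small constant, this is automatic. The quantitative reason the proof is so much cleaner than the fully-connected case is exactly the identity $(1 \pm \alpha/H)^H \to e^{\pm\alpha}$, which replaces the exponential blow-up $g_C(H)^2 = 2^{\Omega(H)}$ of the fully-connected recursion by an $O(1)$ factor, thereby eliminating the dependence of $m$ on $H$ in the initialization-norm bound.
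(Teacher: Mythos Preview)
Your proposal is correct and follows essentially the same route as the paper's proof: Chebyshev plus a union bound over $i\in[n]$ for the first layer, then a purely deterministic recursion for $h\ge 2$ exploiting the $c_{res}/H$ scaling together with $\|\mathbf{W}^{(h)}(0)\|_2\le c_{w,0}\sqrt{m}$ to obtain per-layer multiplicative factors $(1\pm O(1)/H)$ that telescope to $e^{\pm O(1)}$. The only cosmetic difference is that you carry an explicit additive $|\sigma(0)|$ term (your $\beta$) through the recursion, whereas the paper writes the bound as $\|\tfrac{c_{res}}{H\sqrt{m}}\sigma(\mathbf{W}^{(h)}\mathbf{x})\|_2 \le \tfrac{c_{res}c_{w,0}L}{H}\|\mathbf{x}\|_2$ and arrives at $c_{x,0}=2e^{c_{res}c_{w,0}L}$; your handling is slightly more careful when $\sigma(0)\neq 0$, but the arguments are otherwise identical.
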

The following lemma lower bounds $\mat{G}^{(H)}(0)$'s least eigenvalue.
This lemma is a direct consequence of results in Section~\ref{sec:general_formulation}.
\begin{lem}[Least Eigenvalue at the Initialization]\label{lem:resnet_least_eigen}
	If $m = \Omega\left(\frac{n^2\log(Hn/\delta)}{\lambda_0^2}\right)$, we have \begin{align*}
	\lambda_{\min}(\mat{G}^{(H)}(0)) \ge \frac{3}{4}\lambda_0.
	\end{align*}
\end{lem}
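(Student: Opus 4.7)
The plan is to invoke Weyl's inequality to reduce the claim to an operator-norm concentration statement about $\mat{G}^{(H)}(0)$ around the population Gram matrix $\mat{K}^{(H)}$. Specifically, since $\lambda_{\min}(\mat{K}^{(H)}) = \lambda_0$ by definition, Weyl gives
\[
\lambda_{\min}(\mat{G}^{(H)}(0)) \ge \lambda_0 - \norm{\mat{G}^{(H)}(0) - \mat{K}^{(H)}}_2,
\]
so it is enough to show $\norm{\mat{G}^{(H)}(0) - \mat{K}^{(H)}}_2 \le \lambda_0/4$ with probability $1-\delta$ under the stated width. I would control the operator norm through the loose bound $\norm{\cdot}_2 \le n \max_{i,j}|\cdot_{ij}|$, so the target becomes an entrywise bound $\max_{i,j}|\mat{G}^{(H)}_{ij}(0) - \mat{K}^{(H)}_{ij}| = O(\lambda_0/n)$, which, after union bounding over $n^2$ pairs, is exactly what the width assumption $m = \Omega(n^2 \log(Hn/\delta)/\lambda_0^2)$ affords at a Bernstein rate $O(\sqrt{\log(Hn^2/\delta)/m})$.

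The core of the proof is an induction across the $H$ layers on the entrywise closeness of the empirical feature Gram $\mat{G}^{(h),\mathrm{feat}}_{ij}(0) \triangleq (\vect{x}_i^{(h)}(0))^\top \vect{x}_j^{(h)}(0)$ to its population counterpart $\mat{K}^{(h)}_{ij}$ from Definition~\ref{defn:gram_resnet}. Conditioning on $\vect{x}_i^{(h-1)}(0), \vect{x}_j^{(h-1)}(0)$, the ResNet update $\vect{x}^{(h)} = \vect{x}^{(h-1)} + \frac{c_{res}}{H\sqrt{m}}\sigma(\mat{W}^{(h)}\vect{x}^{(h-1)})$ expresses $\mat{G}^{(h),\mathrm{feat}}_{ij}(0)$ as a sum of (a) the previous-layer inner product, (b) two $1/H$-scaled cross terms averaging $\sigma(\vect{w}\cdot\vect{x}^{(h-1)})$, and (c) a $1/H^2$-scaled term averaging $\sigma(\vect{w}\cdot\vect{x}_i^{(h-1)})\sigma(\vect{w}\cdot\vect{x}_j^{(h-1)})$ over the $m$ rows. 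Applying a Bernstein-type inequality to terms (b) and (c) gives a per-layer stochastic error of $O(\sqrt{\log(Hn^2/\delta)/m})$, while Lipschitzness of $\sigma$ and Lemma~\ref{lem:lem:init_norm_res} together imply that a deterministic perturbation $\error_{h-1}$ of the conditioning features propagates into an error $(1 + O(1/H))\,\error_{h-1}$ on the new inner products—this is precisely the ResNet-specific stability already highlighted in Equation~\eqref{eqn:why_resnet}. Iterating $H$ times yields $\error_{H-1} \le (1+O(1/H))^H \cdot O(\sqrt{\log(Hn^2/\delta)/m}) = O(\sqrt{\log(Hn^2/\delta)/m})$, with no exponential-in-$H$ blow-up.

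For the last layer, I would plug the formula~\eqref{eqn:H_resnet} and condition on $\{\vect{x}_i^{(H-1)}(0)\}_{i=1}^n$, which have just been shown to be close to their population values. The quantity
\[
\frac{c_{res}^2}{H^2 m}\sum_{r=1}^m a_r^2\,\sigma'\!\left((\vect{w}_r^{(H)})^\top \vect{x}_i^{(H-1)}\right)\sigma'\!\left((\vect{w}_r^{(H)})^\top \vect{x}_j^{(H-1)}\right)
\]
is then a sub-exponential average over the $m$ independent draws of $(a_r, \vect{w}_r^{(H)})$, whose expectation equals $\mat{K}^{(H)}_{ij}/\mat{K}^{(H-1)}_{ij}$ evaluated at the perturbed features, and whose deviation from the mean is again $O(\sqrt{\log(n^2/\delta)/m})$ by Bernstein. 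Multiplying by the scalar prefactor $(\vect{x}_i^{(H-1)}(0))^\top \vect{x}_j^{(H-1)}(0)$, which the induction has pinned near $\mat{K}^{(H-1)}_{ij}$, and adding a Lipschitz-in-features error from the outer Gaussian expectation (bounded using Condition~\ref{cond:lip_and_smooth}) produces $|\mat{G}^{(H)}_{ij}(0) - \mat{K}^{(H)}_{ij}| = O(\sqrt{\log(Hn^2/\delta)/m})$. Union bounding over $(i,j) \in [n]\times[n]$ and all $H$ induction steps, then using $\norm{\cdot}_2 \le n\max_{i,j}|\cdot_{ij}|$, delivers $\norm{\mat{G}^{(H)}(0) - \mat{K}^{(H)}}_2 \le \lambda_0/4$ under the given $m$.

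I expect the main technical obstacle to be rigorously tracking the inductive propagation through the nonlinearities: the $1+O(1/H)$ Lipschitz amplification per layer must absorb both the feature perturbation and the $1/H$-scaled cross-term Bernstein noise, and one must verify that the Malliavin-calculus-type estimates alluded to in the general formulation apply uniformly (not just pointwise) across $(i,j)$. Concretely, the Bernstein step requires the sub-exponential constants of $\sigma(\vect{w}\cdot\vect{x}^{(h-1)})$ and $\sigma'\sigma'$ to stay uniformly bounded, which is where Condition~\ref{cond:lip_and_smooth} and the norm-stability Lemma~\ref{lem:lem:init_norm_res} get combined. Once these uniform bounds are in place, the rest is arithmetic: the constants hide in $\Omega(\cdot)$, and the final $n^2/\lambda_0^2$ scaling comes from converting an entrywise $\lambda_0/(4n)$ bound into an operator-norm bound of $\lambda_0/4$.
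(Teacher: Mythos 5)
Your overall route is the same as the paper's: a matrix-perturbation (Weyl) reduction to $\norm{\mat{G}^{(H)}(0)-\mat{K}^{(H)}}_2\le\lambda_0/4$, an entrywise bound amplified by a factor $n$, a layer-by-layer induction with conditioning on the previous layers and Bernstein-type concentration at each layer, a $(1+O(1/H))^H=O(1)$ propagation factor special to the residual architecture, and a separate treatment of the last layer where $\sigma'$ appears (this is exactly Theorem~\ref{thm:main_general_framework} specialized to ResNet, combined with Lemma~\ref{lem:H_concen_mlp} and Lemma~\ref{lem:22perb}). So the plan is sound in outline, but as stated your induction hypothesis is too weak to close the induction step.

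The concrete gap is that you only carry the Gram entries through the induction, whereas for ResNet the recursion in Definition~\ref{defn:gram_resnet} is a \emph{coupled} recursion in $\left(\mat{K}^{(h)},\vect{b}^{(h)}\right)$, and the empirical analogue must be tracked jointly. Conditioned on layer $h-1$, the conditional expectation of your cross term (b) is
\begin{align*}
\frac{c_{res}}{H\sqrt{m}}\Big(\sum_{r=1}^m x^{(h-1)}_{i,r}\Big)\,\expect_{u\sim N\left(0,\hat{\mat{K}}^{(h-1)}_{jj}\right)}\left[\relu{u}\right]
=\frac{c_{res}}{H}\,\hat{\vect{b}}_i^{(h-1)}\,\expect_{u\sim N\left(0,\hat{\mat{K}}^{(h-1)}_{jj}\right)}\left[\relu{u}\right],
\end{align*}
where $\hat{\vect{b}}_i^{(h-1)}=\frac{1}{\sqrt m}\sum_r x^{(h-1)}_{i,r}$ is \emph{not} a function of the empirical Gram entries, so closeness of $\hat{\mat{K}}^{(h-1)}$ to $\mat{K}^{(h-1)}$ alone does not let you compare this term with the population term $\frac{c_{res}}{H}\vect{b}_i^{(h-1)}\expect[\relu{u}]$ in Equation~\eqref{eqn:kernel_resnet}. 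You need a parallel per-layer concentration and propagation bound for $\max_i\abs{\hat{\vect{b}}_i^{(h)}-\vect{b}_i^{(h)}}$ — the paper's bound \eqref{eqn:b_bound} alongside \eqref{eqn:K_bound} — with the same $1+O(1/H)$ per-layer amplification, and the Lipschitz propagation itself should be phrased as Lipschitzness of the map $(\mat{K}^{(h-1)},\vect{b}^{(h-1)})\mapsto(\mat{K}^{(h)},\vect{b}^{(h)})$ in the covariance and bias arguments (Lemmas~\ref{lem:1perb}--\ref{lem:2p2pperb}), not as a perturbation of ``conditioning features,'' since the population recursion defines no features to be perturbed from. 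Once you add this joint induction, the rest of your argument (the final-layer Bernstein step conditioned on $\{\vect{x}_i^{(H-1)}(0)\}$, the Lipschitz transfer from $\hat{\mat{K}}^{(H-1)}$ to the $\sigma'$-expectation, and the $n\cdot\max_{ij}$ operator-norm conversion) goes through as in the paper.
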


Next, we characterize how the perturbation on the weight matrices affects the input of each layer.
\begin{lem}\label{lem:pertubation_of_neuron_res}
	Suppose $\sigma(\cdot)$ is $L$-Lipschitz and for $h\in[H]$, $\norm{\mat{W}^{(h)}(0)}_2 \le c_{w,0}\sqrt{m}$, $\norm{\vect{x}^{(h)}(0)}_2 \le c_{x,0}$ and $\norm{\mat{W}^{(h)}(k)-\mat{W}^{(h)}(0)}_F \le \sqrt{m} R$ for some constant $c_{w,0},c_{x,0} > 0$ and $R\le c_{w,0}$ .
	Then we have \begin{align*}
	\norm{\vect{x}^{(h)}(k)-\vect{x}^{(h)}(0)}_2 \le \left(\sqrt{c_{\sigma}}L+\frac{c_{x,0}}{c_{w,0}}\right)e^{2c_{res}c_{w,0}L} R .
	\end{align*} 
\end{lem}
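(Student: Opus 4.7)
The plan is to prove the bound by induction on $h$, following the same template as Lemma~\ref{lem:pertubation_of_neuron} for the fully-connected case but exploiting the identity skip connection to convert what was a geometric series $g_{c_x}(h)$ into the bounded quantity $(1+O(1/H))^H \le e^{O(1)}$. Write $\Delta_h \triangleq \norm{\vect{x}^{(h)}(k)-\vect{x}^{(h)}(0)}_2$. For the base case $h=1$, the pre-residual layer is the same as in the fully-connected setting, so one Lipschitz step gives $\Delta_1 \le \sqrt{c_\sigma/m}\,L\,\norm{\mat{W}^{(1)}(k)-\mat{W}^{(1)}(0)}_F \norm{\vect{x}}_2 \le \sqrt{c_\sigma}\,L\,R$ using $\norm{\vect{x}}_2\le 1$.

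For the inductive step at $h\ge 2$, subtract the two recursions to get
\begin{equation*}
\vect{x}^{(h)}(k)-\vect{x}^{(h)}(0) = \bigl(\vect{x}^{(h-1)}(k)-\vect{x}^{(h-1)}(0)\bigr)+\tfrac{c_{res}}{H\sqrt{m}}\bigl[\sigma(\mat{W}^{(h)}(k)\vect{x}^{(h-1)}(k))-\sigma(\mat{W}^{(h)}(0)\vect{x}^{(h-1)}(0))\bigr].
\end{equation*}
Bound the bracket by Lipschitzness of $\sigma$ and the usual split
\begin{equation*}
\norm{\mat{W}^{(h)}(k)\vect{x}^{(h-1)}(k)-\mat{W}^{(h)}(0)\vect{x}^{(h-1)}(0)}_2\le \norm{\mat{W}^{(h)}(k)}_2\Delta_{h-1}+\norm{\mat{W}^{(h)}(k)-\mat{W}^{(h)}(0)}_F\,\norm{\vect{x}^{(h-1)}(0)}_2,
\end{equation*}
and use $\norm{\mat{W}^{(h)}(k)}_2\le(c_{w,0}+R)\sqrt{m}\le 2c_{w,0}\sqrt{m}$ together with the hypotheses on the weights and features. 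After the $\frac{1}{\sqrt{m}}$ scaling cancels the $\sqrt{m}$ factors, this yields the scalar recurrence
\begin{equation*}
\Delta_h \le \Bigl(1+\tfrac{2c_{res}c_{w,0}L}{H}\Bigr)\Delta_{h-1}+\tfrac{c_{res}L c_{x,0}}{H}\,R.
\end{equation*}

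Now unroll the recurrence. Setting $\alpha=1+\tfrac{2c_{res}c_{w,0}L}{H}$ and $\beta=\tfrac{c_{res}Lc_{x,0}R}{H}$, iteration gives $\Delta_h\le \alpha^{h-1}\Delta_1+\beta\cdot\tfrac{\alpha^{h-1}-1}{\alpha-1}$, and the closed form of the geometric sum simplifies the second term to $\tfrac{c_{x,0}R}{2c_{w,0}}(\alpha^{h-1}-1)$. Bounding $\alpha^{h-1}\le\alpha^H\le e^{2c_{res}c_{w,0}L}$ and plugging in $\Delta_1\le\sqrt{c_\sigma}LR$ delivers
\begin{equation*}
\Delta_h \le \Bigl(\sqrt{c_\sigma}L+\tfrac{c_{x,0}}{2c_{w,0}}\Bigr)e^{2c_{res}c_{w,0}L}R \le \Bigl(\sqrt{c_\sigma}L+\tfrac{c_{x,0}}{c_{w,0}}\Bigr)e^{2c_{res}c_{w,0}L}R,
\end{equation*}
as claimed.

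There is no real mathematical obstacle here; the content is linear-algebra bookkeeping. The conceptual point to keep in view is that the $\tfrac{1}{H}$ prefactor from the ResNet scaling is what turns the per-layer multiplicative constant from $c_x=2\sqrt{c_\sigma}Lc_{w,0}$ (as in Lemma~\ref{lem:pertubation_of_neuron}) into $1+\tfrac{2c_{res}c_{w,0}L}{H}$; this is precisely the stability phenomenon of Equation~\eqref{eqn:why_resnet} that removes the exponential-in-$H$ blow-up seen in the fully-connected analysis.
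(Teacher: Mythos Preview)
Your proof is correct and follows essentially the same approach as the paper: induction on $h$ with base case $\Delta_1\le\sqrt{c_\sigma}LR$, the same Lipschitz-plus-split estimate yielding the recurrence $\Delta_h\le(1+\tfrac{2c_{res}c_{w,0}L}{H})\Delta_{h-1}+\tfrac{c_{res}Lc_{x,0}}{H}R$, and then unrolling with $(1+x/H)^H\le e^x$. In fact you spell out the geometric-sum closure that the paper merely summarizes as ``simple calculations show.''
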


Next, we characterize how the perturbation on the weight matrices affect $\mat{G}^{(H)}$.
\begin{lem}\label{lem:close_to_init_small_perturbation_res_smooth}
	Suppose $\sigma(\cdot)$ is differentiable, $L-$Lipschitz and $\beta-$smooth. Using the same notations in Lemma~\ref{lem:close_to_init_small_perturbation_smooth},  if $\norm{\mat{W}^{(h)}(k)-\mat{W}^{(h)}(0)}_F, \norm{\vect{a}(k)-\vect{a}(0)}_2\le \sqrt{m}R$ where $R \le c \lambda_0H^2n^{-1}$ and $R\le c$ for some small constant $c$, we have  \begin{align*}
	\norm{\mat{G}^{(H)}(k) - \mat{G}^{(H)}(0)}_2 \le \frac{\lambda_0}{2}.
	\end{align*}
\end{lem}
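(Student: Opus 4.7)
The plan is to mirror the proof of Lemma~\ref{lem:close_to_init_small_perturbation_smooth} for the fully-connected case, but substituting the ResNet perturbation-propagation bound (Lemma~\ref{lem:pertubation_of_neuron_res}) at every stage so that the explosive factor $g_{c_x}(H)$ is replaced by an $O(1)$ constant, and exploiting the explicit $c_{res}^2/H^2$ prefactor that the residual scaling leaves in front of the $H$-th layer's Gram matrix (see~\eqref{eqn:H_resnet}). First I would reduce operator norm to Frobenius norm, fix indices $(i,j)$, set $z_{i,r}(k)=\vect{w}_r^{(H)}(k)^\top\vect{x}_i^{(H-1)}(k)$, and split
\[
\mat{G}^{(H)}_{ij}(k)-\mat{G}^{(H)}_{ij}(0) = I_1^{ij}+I_2^{ij}+I_3^{ij}
\]
according to changes in (i) the feature inner product $\langle \vect{x}_i^{(H-1)},\vect{x}_j^{(H-1)}\rangle$, (ii) the activation-derivative product $\sigma'(z_{i,r})\sigma'(z_{j,r})$, and (iii) the squared output weights $a_r^2$, exactly as in the fully-connected proof.

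For $I_1^{ij}$, I would telescope the inner-product difference and apply Lemma~\ref{lem:pertubation_of_neuron_res} together with the assumed $\norm{\vect{x}^{(H-1)}(0)}_2\le c_{x,0}$ bound; the result is a term proportional to $R$ rather than $g_{c_x}(H)R$. For $I_2^{ij}$, I would use $\beta$-smoothness to get $|\sigma'(z_{i,r}(k))-\sigma'(z_{i,r}(0))|\le \beta|z_{i,r}(k)-z_{i,r}(0)|$, then invoke Cauchy--Schwarz weighted by $a_r(0)^2$, controlled using $\norm{\vect{a}(0)}_4\le a_{4,0}m^{1/4}$; a short computation of the same flavor as Lemma~\ref{lem:pertubation_of_neuron_res} gives $\sum_r|z_{i,r}(k)-z_{i,r}(0)|^2 \lesssim m R^2$ with no $H$-blowup, since both the change in $\mat{W}^{(H)}$ and the change in $\vect{x}^{(H-1)}$ contribute only $O(R)$. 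For $I_3^{ij}$, I would factor $a_r(k)^2-a_r(0)^2=(a_r(k)-a_r(0))(a_r(k)+a_r(0))$ and apply Cauchy--Schwarz using $\norm{\vect{a}(k)-\vect{a}(0)}_2\le\sqrt{m}R$ and $\norm{\vect{a}(0)}_2\le a_{2,0}\sqrt{m}$.

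Summing the three bounds, each of which inherits the prefactor $c_{res}^2/H^2$ from~\eqref{eqn:H_resnet}, and then squaring and summing over $(i,j)\in[n]\times[n]$ yields an estimate of the form
\[
\norm{\mat{G}^{(H)}(k)-\mat{G}^{(H)}(0)}_F \;\le\; C\,\frac{n R}{H^2},
\]
where $C$ depends only on $c_\sigma,c_{res},L,\beta,c_{w,0},c_{x,0},a_{2,0},a_{4,0}$ and is free of any $H$ or $n$ dependence. Plugging in the hypothesis $R\le c\lambda_0 H^2/n$ for sufficiently small $c$ then delivers the claimed bound $\lambda_0/2$.

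The main obstacle is bookkeeping: one must verify at every step that the residual scaling actually converts what would otherwise be a Gr\"onwall recursion with multiplier $1+O(1)$ (giving $g_{c_x}(H)$) into one with multiplier $1+O(1/H)$ (giving $(1+O(1/H))^H=O(1)$), and that this $H$-stability is preserved through the chained application of triangle inequalities, smoothness, and Cauchy--Schwarz in the three terms. The critical place is $I_2^{ij}$: one needs $\sum_r|z_{i,r}(k)-z_{i,r}(0)|^2$ to scale like $mR^2$ uniformly in $H$, which requires using Lemma~\ref{lem:pertubation_of_neuron_res} rather than naively propagating layer-by-layer deviations. Once that $H$-independent control is in place, the remainder of the argument is a direct translation of the fully-connected proof, with the final polynomial-in-$H$ dependence emerging cleanly from the $c_{res}^2/H^2$ prefactor.
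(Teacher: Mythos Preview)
The proposal is correct and follows essentially the same approach as the paper's own proof: the same three-term split $I_1^{ij}+I_2^{ij}+I_3^{ij}$ (feature inner product, activation-derivative product, squared output weights), the same use of Lemma~\ref{lem:pertubation_of_neuron_res} to obtain $O(R)$ feature perturbation in place of $g_{c_x}(H)R$, the same $\ell^4$/Cauchy--Schwarz argument for $I_2$ together with the $H$-independent bound $\sum_r|z_{i,r}(k)-z_{i,r}(0)|^2\lesssim mR^2$, and the same exploitation of the $c_{res}^2/H^2$ prefactor in~\eqref{eqn:H_resnet} to arrive at $\norm{\mat{G}^{(H)}(k)-\mat{G}^{(H)}(0)}_F\le C\,nR/H^2$. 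Your identification of the $z$-deviation bound in $I_2$ as the delicate step is also exactly where the paper's proof differs nontrivially from the fully-connected argument.
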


We prove Theorem~\ref{thm:resnet_gd} by induction.
Our induction hypothesis is just the following convergence rate of empirical loss.

A directly corollary of this condition is the following bound of deviation from the initialization.
The proof only involves standard calculations so we defer it to appendix.
\begin{lem}\label{lem:dist_from_init_resnet}
	If Condition~\ref{cond:linear_converge} holds for $k'=1,\ldots,k$, we have for any $s \in [k+1]$
	\begin{align*}
	&\norm{\mat{W}^{(h)}(s)-\mat{W}^{(h)}(0)}_F, \norm{\vect{a}(s)-\vect{a}(0)}_2\le  R'\sqrt{m},\\
	&\norm{\mat{W}^{(h)}(s)-\mat{W}^{(h)}(s-1)}_F,\norm{\vect{a}(s)-\vect{a}(s-1)}_2\le \eta Q'(s-1),
	\end{align*}where $R'=\frac{16 c_{res}c_{x,0}a_{2,0}Le^{2c_{res}c_{w,0}L} \sqrt{n} \norm{\vect{y}-\vect{u}(0)}_2}{H\lambda_0\sqrt{m}} <c$ for some small constant $c$ and \\$ Q'(s)= 4c_{res}c_{x,0}a_{2,0}Le^{2c_{res}c_{w,0}L}\sqrt{n} \norm{\vect{y}-\vect{u}(s)}_2/H$.
\end{lem}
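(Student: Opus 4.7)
The plan is to mirror the induction scheme used in the proof of Lemma~\ref{lem:dist_from_init} for fully-connected networks, but exploit the ResNet-specific fact that the Jacobian product reduces to $\prod_{l=h+1}^{H}\bigl(\mat{I}+\tfrac{c_{res}}{H\sqrt{m}}\mat{J}_i^{(l)}\mat{W}^{(l)}\bigr)$, whose operator norm is bounded by $(1+\tfrac{2c_{res}c_{w,0}L}{H})^{H}\le e^{2c_{res}c_{w,0}L}$ — an $H$-independent constant rather than the $(2c_{w,0})^{H-h}m^{(H-h)/2}$ that appeared in the fully-connected case. This is exactly the amplification factor that produces $R'$ and $Q'$ in the statement.

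I would set up a joint induction over $s\in\{0,1,\ldots,k+1\}$ with the strengthened hypothesis
\[
\norm{\mat{W}^{(h)}(s)-\mat{W}^{(h)}(0)}_F,\ \norm{\vect a(s)-\vect a(0)}_2\ \le\ \sum_{s'=0}^{s-1}\bigl(1-\tfrac{\eta\lambda_0}{2}\bigr)^{s'/2}\tfrac{\eta\lambda_0}{4}R'\sqrt{m}\ \le\ R'\sqrt{m},
\]
which is vacuous at $s=0$. Given the hypothesis at step $s$, Lemma~\ref{lem:pertubation_of_neuron_res} gives $\norm{\vect x_i^{(h-1)}(s)}_2\le 2c_{x,0}$ (using $R'\le c$ small), while $\norm{\mat W^{(l)}(s)}_2\le 2c_{w,0}\sqrt m$, $\norm{\mat J_i^{(l)}(s)}_2\le L$, and $\norm{\vect a(s)}_2\le 2a_{2,0}\sqrt m$. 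Plugging these bounds into the ResNet gradient formula from the top of Section~\ref{sec:resnet_proof} and using Cauchy--Schwarz on $\sum_i|y_i-u_i(s)|\le\sqrt n\,\norm{\vect y-\vect u(s)}_2$, I obtain
\[
\norm{\mat W^{(h)}(s+1)-\mat W^{(h)}(s)}_F\ \le\ \eta\cdot\frac{c_{res}}{H\sqrt m}\cdot 2a_{2,0}\sqrt m\cdot\sqrt n\,\norm{\vect y-\vect u(s)}_2\cdot 2c_{x,0}\cdot e^{2c_{res}c_{w,0}L}\cdot L\ =\ \eta Q'(s),
\]
and the analogous (easier) bound for $\norm{\vect a(s+1)-\vect a(s)}_2$ follows from $\tfrac{\partial L}{\partial\vect a}=\sum_i(u_i-y_i)\vect x_i^{(H)}$ and $\norm{\vect x_i^{(H)}(s)}_2\le 2c_{x,0}$.

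To close the induction I invoke Condition~\ref{cond:linear_converge} at step $s$ to replace $\norm{\vect y-\vect u(s)}_2$ by $(1-\tfrac{\eta\lambda_0}{2})^{s/2}\norm{\vect y-\vect u(0)}_2$, then telescope:
\[
\norm{\mat W^{(h)}(s+1)-\mat W^{(h)}(0)}_F\ \le\ \sum_{s'=0}^{s}\eta Q'(s')\ \le\ \eta Q'(0)\sum_{s'=0}^{\infty}\bigl(1-\tfrac{\eta\lambda_0}{2}\bigr)^{s'/2}\ \le\ \frac{4Q'(0)}{\lambda_0}.
\]
Substituting the definition of $Q'(0)$ gives exactly $R'\sqrt m$ with the claimed constant $16$, and the same computation works for $\vect a$. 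The bound $R'<c$ follows from the assumption on $m$ in Theorem~\ref{thm:resnet_gd} together with $\norm{\vect y-\vect u(0)}_2=O(\sqrt n)$, which simultaneously justifies the use of Lemma~\ref{lem:pertubation_of_neuron_res} and keeps the operator-norm bounds on $\mat W^{(l)}(s)$ within the factor $2$ assumed above.

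The only subtle point — and the reason this lemma is genuinely different from Lemma~\ref{lem:dist_from_init} — is the operator-norm bound on the product of ResNet Jacobian factors. I would verify carefully that $\prod_{l=h+1}^{H}\norm{\mat I+\tfrac{c_{res}}{H\sqrt m}\mat J_i^{(l)}(s)\mat W^{(l)}(s)}_2\le \bigl(1+\tfrac{c_{res}L\cdot 2c_{w,0}\sqrt m}{H\sqrt m}\bigr)^{H-h}\le e^{2c_{res}c_{w,0}L}$ uniformly in $s\le k$, because this is precisely where the $\poly(H)$ (rather than $2^{O(H)}$) dependence enters and where the $e^{2c_{res}c_{w,0}L}$ constant in $R'$ and $Q'$ originates. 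Everything else is bookkeeping of the same flavour as in the fully-connected warm-up.
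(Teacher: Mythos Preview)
Your proposal is correct and follows essentially the same approach as the paper's own proof: the same strengthened induction hypothesis, the same use of the ResNet gradient formula together with the key observation that $\prod_{l=h+1}^{H}\norm{\mat I+\tfrac{c_{res}}{H\sqrt m}\mat J_i^{(l)}(s)\mat W^{(l)}(s)}_2\le e^{2c_{res}c_{w,0}L}$, and the same telescoping via Condition~\ref{cond:linear_converge}. You are also right that this operator-norm bound on the product is exactly the place where the ResNet analysis diverges from the fully-connected one and where the $H$-independent constant $e^{2c_{res}c_{w,0}L}$ originates.
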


The next lemma bounds the $\vect{I}_2$ term.
\begin{lem}\label{lem:resnet_I2}
If Condition~\ref{cond:linear_converge} holds for $k'=1,\ldots,k$ and $\eta\le c\lambda_0H^2n^{-2}$ for some small constant $c$, we have \begin{align*}
\norm{\vect{I}_2(k)}_2 \le \frac{1}{8}\eta \lambda_0 \norm{\vect{y}-\vect{u}(k)}_2.
\end{align*}
\end{lem}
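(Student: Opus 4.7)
The plan is to mirror the structure of the proof of Lemma \ref{lem:i2_mlp}, but exploit the residual scaling $\frac{c_{res}}{H\sqrt{m}}$ to replace the exponential-in-$H$ factors by polynomial-in-$H$ factors. Starting from the integral form
\[
\abs{I_2^i(k)} \le \eta \max_{0\le s\le \eta} \sum_{h=1}^{H} \norm{L'^{(h)}(\params(k))}_F \, \norm{u_i'^{(h)}(\params(k))-u_i'^{(h)}(\params(k,s))}_F,
\]
where $\params(k,s)=\params(k)-sL'(\params(k))$, the main tasks are (a) bounding $\norm{L'^{(h)}(\params(k))}_F$ by $Q'(k)$, and (b) bounding the layerwise derivative perturbation.

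For (a), the identical computation that produced $Q'(k)$ in the proof of Lemma \ref{lem:dist_from_init_resnet} applies verbatim: use $\norm{\vect a(k)}_2\le 2a_{2,0}\sqrt m$, Lemma \ref{lem:pertubation_of_neuron_res} to bound $\norm{\vect x_i^{(h-1)}(k)}_2\le 2c_{x,0}$, and the crucial ResNet estimate
\[
\Bigl\|\prod_{l=h+1}^{H}\Bigl(\mat I+\tfrac{c_{res}}{H\sqrt m}\mat J_i^{(l)}(k)\mat W^{(l)}(k)\Bigr)\Bigr\|_2 \le \Bigl(1+\tfrac{c_{res}(c_{w,0}+R')L}{H}\Bigr)^{H-h}\le e^{2c_{res}c_{w,0}L},
\]
which is constant in $H$. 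This yields $\norm{L'^{(h)}(\params(k))}_F\le Q'(k)$ with the explicit form given in Lemma \ref{lem:dist_from_init_resnet}.

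For (b), I would expand $u_i'^{(h)}(\params)$ explicitly and decompose $u_i'^{(h)}(\params(k))-u_i'^{(h)}(\params(k,s))$ into contributions from the perturbation in $\vect a$, $\mat W^{(l)}$, $\mat J_i^{(l)}$, and $\vect x_i^{(h-1)}$ along the segment $[\params(k),\params(k,s)]$. The base perturbation bounds are
\[
\norm{\mat W^{(l)}(k)-\mat W^{(l)}(k,s)}_F,\;\norm{\vect a(k)-\vect a(k,s)}_2 \le \eta Q'(k),
\]
and then Lemma \ref{lem:pertubation_of_neuron_res} (applied with radius $\eta Q'(k)/\sqrt m$) plus the smoothness of $\sigma$ gives
\[
\norm{\vect x_i^{(h-1)}(k)-\vect x_i^{(h-1)}(k,s)}_2 \lesssim \eta Q'(k)/\sqrt m,\qquad
\norm{\mat J_i^{(l)}(k)-\mat J_i^{(l)}(k,s)}_F \lesssim \beta \eta Q'(k),
\]
with constants independent of $H$. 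The key step is to handle the product $\prod_{l=h+1}^{H}(\mat I+\tfrac{c_{res}}{H\sqrt m}\mat J_i^{(l)}\mat W^{(l)})$ by a telescoping identity: replacing one factor at a time produces $H-h$ terms, each of which is of size $\frac{1}{H\sqrt m}\cdot(\eta Q'(k)\sqrt m + \beta\eta Q'(k)\cdot c_{w,0}\sqrt m)\cdot e^{2c_{res}c_{w,0}L}$ after using the product norm bound on the unchanged factors. Summing the $H-h$ contributions cancels the $1/H$, leaving a factor of order $\beta\eta Q'(k)$. Multiplying by the prefactor $\frac{c_{res}}{H\sqrt m}$ and combining with the perturbations from $\vect x_i^{(h-1)}$, $\mat J_i^{(h)}$, and $\vect a$ yields
\[
\norm{u_i'^{(h)}(\params(k))-u_i'^{(h)}(\params(k,s))}_F \lesssim \frac{\beta\,\eta\, Q'(k)}{H\sqrt m}\cdot C,
\]
where $C$ is polynomial in $c_{res}, c_{w,0}, c_{x,0}, a_{2,0}, L$ and independent of $H,n,m$.

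Summing over $h\in[H]$ loses a factor of $H$, giving $\abs{I_2^i(k)}\lesssim \beta\, C\,\eta^2 Q'(k)^2/\sqrt m$. Substituting $Q'(k)=O(\sqrt n\,\norm{\vect y-\vect u(k)}_2/H)$ and taking the $\ell_2$ norm over $i\in[n]$ produces
\[
\norm{\vect I_2(k)}_2 \lesssim \frac{\beta C\,\eta^2 n^{3/2}\,\norm{\vect y-\vect u(k)}_2^2}{H^2\sqrt m}.
\]
Since $\norm{\vect y-\vect u(k)}_2\le \norm{\vect y-\vect u(0)}_2=O(\sqrt n)$ under the induction hypothesis and $m$ is at least a constant, the assumed bound $\eta\le c\lambda_0 H^2 n^{-2}$ gives $\norm{\vect I_2(k)}_2\le \tfrac18 \eta\lambda_0\norm{\vect y-\vect u(k)}_2$ for $c$ small enough.

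The main obstacle is the telescoping bookkeeping in step (b): I must show that replacing a single factor in $\prod_{l=h+1}^H(\mat I+\tfrac{c_{res}}{H\sqrt m}\mat J_i^{(l)}\mat W^{(l)})$ by its perturbed version incurs a cost that carries a $1/H$ rather than $1/\sqrt H$, so that summing $H-h$ contributions keeps the final estimate of order $1$ in $H$ (not $H$), and summing over $h$ only costs one extra $H$. Done carelessly, the exponential in Equation~\eqref{eqn:why_not_fc} reappears; the $1/H$ scaling of the residual block is exactly what averts this and is the sole reason the $H$-dependence in the final step-size requirement is only $\eta\lesssim \lambda_0 H^2/n^2$ rather than $\lambda_0/(n^2 2^{O(H)})$ as in the fully-connected case.
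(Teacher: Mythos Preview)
Your plan is essentially the same as the paper's proof: bound $\norm{L'^{(h)}(\params(k))}_F$ by $Q'(k)$ exactly as in Lemma~\ref{lem:dist_from_init_resnet}, and then control the layerwise derivative perturbation via a telescoping/product-difference bound (which is precisely Lemma~\ref{lem:difference_norm}). The key ResNet-specific observation---that each factor $\mat I+\tfrac{c_{res}}{H\sqrt m}\mat J_i^{(l)}\mat W^{(l)}$ has operator norm $1+O(1/H)$ and perturbation $O(\eta Q'(k)/H)$, so the accumulated product difference stays $O(\eta Q'(k))$ rather than blowing up exponentially---is exactly right and is the heart of the argument.

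There is one accounting slip. In your final bound
\[
\norm{u_i'^{(h)}(\params(k))-u_i'^{(h)}(\params(k,s))}_F \lesssim \frac{\beta\,\eta\, Q'(k)}{H\sqrt m}\cdot C,
\]
the $1/\sqrt m$ should not survive. When you ``multiply by the prefactor $\frac{c_{res}}{H\sqrt m}$,'' you must also account for $\norm{\vect a(k)}_2=O(\sqrt m)$ multiplying the product, and for the fact that $\norm{\mat J_i^{(h)}(k)-\mat J_i^{(h)}(k,s)}_F\lesssim \beta\eta Q'(k)$ carries no $1/\sqrt m$. These $\sqrt m$'s cancel the $1/\sqrt m$ from the prefactor, so the correct layerwise bound is $O(\beta\eta Q'(k)/H)$; the paper obtains exactly this via Lemma~\ref{lem:difference_norm}. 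After summing over $h\in[H]$ you then get $\abs{I_2^i(k)}\lesssim \eta^2 Q'(k)^2$ (no $1/\sqrt m$), and taking the $\ell_2$ norm over $i$, inserting $Q'(k)=O(\sqrt n\,\norm{\vect y-\vect u(k)}_2/H)$ and $\norm{\vect y-\vect u(k)}_2\le\norm{\vect y-\vect u(0)}_2=O(\sqrt n)$, the condition $\eta\le c\lambda_0 H^2 n^{-2}$ still yields the claim. So the slip is harmless for the conclusion, but your displayed bound as written is not the one your own calculation justifies.
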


Next we bound the quadratic term.
\begin{lem}\label{lem:quadratic_resnet}
If Condition~\ref{cond:linear_converge} holds for $k'=1,\ldots,k$ and $\eta\le c\lambda_0H^2n^{-2}$ for some small constant $c$, we have
$\norm{\vect{u}(k+1)-\vect{u}(k)}_2^2\le \frac{1}{8}\eta \lambda_0 \norm{\vect{y}-\vect{u}(k)}_2^2$.
\end{lem}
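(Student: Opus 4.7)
The plan is to mimic the corresponding argument for the fully-connected case (Lemma~\ref{lem:quadratic_term}), substituting the ResNet-specific norm bounds provided by Lemma~\ref{lem:pertubation_of_neuron_res} and Lemma~\ref{lem:dist_from_init_resnet}. Concretely, for each $i \in [n]$ I would start from
\begin{align*}
u_i(k+1)-u_i(k)
&=\bigl[\vect{a}(k+1)-\vect{a}(k)\bigr]^\top \vect{x}_i^{(H)}(k+1)
+\vect{a}(k)^\top\!\bigl[\vect{x}_i^{(H)}(k+1)-\vect{x}_i^{(H)}(k)\bigr],
\end{align*}
apply $(A+B)^2 \le 2A^2+2B^2$, and sum over $i$ to obtain
\begin{align*}
\norm{\vect{u}(k+1)-\vect{u}(k)}_2^2
\le 2\norm{\vect{a}(k+1)-\vect{a}(k)}_2^2\sum_{i=1}^n\norm{\vect{x}_i^{(H)}(k+1)}_2^2
+2\norm{\vect{a}(k)}_2^2\sum_{i=1}^n\norm{\vect{x}_i^{(H)}(k+1)-\vect{x}_i^{(H)}(k)}_2^2.
\end{align*}

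Next I would control each factor. By the induction hypothesis of Condition~\ref{cond:linear_converge} (holding for $k' \le k$) together with Lemma~\ref{lem:dist_from_init_resnet}, the per-step movement satisfies $\norm{\vect{a}(k+1)-\vect{a}(k)}_2 \le \eta Q'(k)$ and $\norm{\mat{W}^{(h)}(k+1)-\mat{W}^{(h)}(k)}_F \le \eta Q'(k)$ for every $h\in[H]$. Also the same lemma gives $\norm{\vect{a}(k)}_2 \le a_{2,0}\sqrt{m}+R'\sqrt{m}=O(\sqrt{m})$. To bound $\norm{\vect{x}_i^{(H)}(k+1)}_2$, I combine the initialization estimate $\norm{\vect{x}_i^{(H)}(0)}_2 \le c_{x,0}$ (Lemma~\ref{lem:lem:init_norm_res}) with Lemma~\ref{lem:pertubation_of_neuron_res} applied at $R=R'\le c$, yielding $\norm{\vect{x}_i^{(H)}(k+1)}_2 \le 2c_{x,0}$. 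The key new step is to obtain a one-step perturbation of the last-layer feature, which I get by invoking Lemma~\ref{lem:pertubation_of_neuron_res} at the pair $(\mat{W}^{(h)}(k), \mat{W}^{(h)}(k+1))$, replacing $R$ by $\eta Q'(k)/\sqrt{m}$; this yields
\begin{align*}
\norm{\vect{x}_i^{(H)}(k+1)-\vect{x}_i^{(H)}(k)}_2
\le \Bigl(\sqrt{c_\sigma}L+\tfrac{c_{x,0}}{c_{w,0}}\Bigr)e^{2c_{res}c_{w,0}L}\,\frac{\eta Q'(k)}{\sqrt m}.
\end{align*}

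Plugging these into the displayed bound and using that $\sum_i \norm{\vect{x}_i^{(H)}(k+1)}_2^2 \le 4c_{x,0}^2 n$, the $m$ factors from $\norm{\vect{a}(k)}_2^2$ and the $1/\sqrt m$ in the feature perturbation cancel exactly; the same cancellation makes the bound dimension-free (as in the MLP analog). Collecting constants gives
\begin{align*}
\norm{\vect{u}(k+1)-\vect{u}(k)}_2^2
\le C_1\, n\, \eta^2 Q'(k)^2
\end{align*}
for a constant $C_1$ depending only on the activation and $c_{res},c_{w,0},c_{x,0},a_{2,0}$. Finally, substituting $Q'(k)^2 = C_2 n H^{-2}\norm{\vect{y}-\vect{u}(k)}_2^2$ from Lemma~\ref{lem:dist_from_init_resnet} produces a bound $C_3 \eta^2 n^2 H^{-2}\norm{\vect{y}-\vect{u}(k)}_2^2$, which is at most $\tfrac18 \eta \lambda_0 \norm{\vect{y}-\vect{u}(k)}_2^2$ precisely when $\eta \le c\lambda_0 H^2/n^2$ for a small absolute constant $c$, matching the hypothesis.

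The only non-routine step is the second application of Lemma~\ref{lem:pertubation_of_neuron_res}, where the ``initialization'' is actually the iterate $\mat{W}^{(h)}(k)$ rather than $\mat{W}^{(h)}(0)$; I would verify that its hypotheses ($\norm{\mat{W}^{(h)}(k)}_2 \le 2c_{w,0}\sqrt m$ and $\tfrac{1}{2c_{x,0}}\le \norm{\vect{x}_i^{(h)}(k)}_2\le 2c_{x,0}$) do hold under Condition~\ref{cond:linear_converge} by combining Lemma~\ref{lem:dist_from_init_resnet} (small weight movement) with Lemma~\ref{lem:pertubation_of_neuron_res} applied at $R=R'$; this is the one place where the argument must be unpacked carefully to confirm that the ResNet constants propagate cleanly.
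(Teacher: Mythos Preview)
Your proposal is correct and follows essentially the same approach as the paper's proof: the same add-and-subtract decomposition of $u_i(k+1)-u_i(k)$, the same $(A+B)^2\le 2A^2+2B^2$ step, and the same bounds on $\norm{\vect{a}(k+1)-\vect{a}(k)}_2$, $\norm{\vect{a}(k)}_2$, $\norm{\vect{x}_i^{(H)}(k+1)}_2$, and the one-step feature perturbation $\norm{\vect{x}_i^{(H)}(k+1)-\vect{x}_i^{(H)}(k)}_2\le \eta c_x Q'(k)/\sqrt m$. If anything, you are more careful than the paper in flagging that the perturbation lemma must be invoked with ``initialization'' at $\params(k)$ and checking its hypotheses there; the paper simply asserts the resulting bound.
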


Now using the same argument as in the proof for multilayer fully connected neural network, we finish our proof for ResNet.

	\subsection{Proofs of Lemmas}
	\label{sec:technical_proofs_resnet}
	\begin{proof}[Proof of Lemma~\ref{lem:lem:init_norm_res}]
	We will bound $\norm{\vect{x}_i^{(h)}(0)}_2$ layer by layer. For the first layer, we can calculate
\begin{align*}
\expect\left[\norm{\vect{x}_i^{(1)}(0)}_2^2\right] = &c_{\sigma} \expect\left[
\relu{\vect{w}_{r}^{(1)}(0)^\top \vect{x}_i}^2
\right] \\
= & c_{\sigma}\expect_{X\sim N(0,1)} \sigma(X)^2\\
=& 1.
\end{align*}
\begin{align*}
\variance\left[\norm{\vect{x}_i^{(1)}(0)}_2^2\right] = &\frac{c_{\sigma}^2}{m} \variance\left[\relu{\vect{w}_r^{(1)}(0)^\top \vect{x}_i(0)}^2\right] \\
\le & \frac{c_{\sigma}^2}{m} \expect_{X\sim N(0,1)} \sigma(X)^4 \\
\le & \frac{c_{\sigma}^2}{m} \expect\left[\left(\abs{\sigma(0)}+
L\abs{\vect{w}_r^{(1)}(0)^\top \vect{x}_i}\right)^4\right] \\
\le &\frac{C_2}{m} ,
\end{align*}
where $C_2\triangleq \sigma(0)^4+4\abs{\sigma(0)}^3L\sqrt{2/\pi}+6\sigma(0)^2L^2+8\abs{\sigma(0)}L^3\sqrt{2/\pi}+32L^4$. We have with probability at least $1-\frac{\delta}{n}$,
\begin{align*}
 \frac{1}{2}\le\norm{\vect{x}_i^{(1)}(0)}_2 \le 2.
\end{align*}
	 By definition we have for $2\le h\le H$,
	\begin{align*}
\norm{\vect{x}_i^{(h-1)}(0)}_2&-\norm{\frac{c_{res}}{H\sqrt{m}}\relu{\mat{W}^{(h)}(0)\vect{x}_i^{(h-1)}(0) }}_2 \le \norm{\vect{x}^{(h)}(0)}_2 \\ &\le \norm{\vect{x}_i^{(h-1)}(0)}_2+\norm{\frac{c_{res}}{H\sqrt{m}}\relu{\mat{W}^{(h)}(0)\vect{x}^{(h-1)}(0) }}_2,
	\end{align*}
	where
	\[\norm{\frac{c_{res}}{H\sqrt{m}}\relu{\mat{W}^{(h)}(0)\vect{x}_i^{(h-1)}(0) }}_2 \le \frac{c_{res}c_{w,0}L}{H}\norm{\vect{x}_i^{(h-1)}(0)}_2. \]
	Thus 
		\[\norm{\vect{x}_i^{(h-1)}(0)}_2\left(1-\frac{c_{res}c_{w,0}L}{H}\right)  \le \norm{\vect{x}^{(h)}(0)}_2 \le \norm{\vect{x}_i^{(h-1)}(0)}_2\left(1+\frac{c_{res}c_{w,0}L}{H}\right) ,\]
		which implies
		\[\frac{1}{2}e^{-c_{res}c_{w,0}L}  \le \norm{\vect{x}^{(h)}(0)}_2 \le 2e^{c_{res}c_{w,0}L}.\]
Choosing $c_{x,0}=2e^{c_{res}c_{w,0}L}$ and using union bounds over $[n]$, we prove the lemma.

\end{proof}

\begin{proof}[Proof of Lemma~\ref{lem:pertubation_of_neuron_res}]
	We prove this lemma by induction.
	Our induction hypothesis is \begin{align*}
	\norm{\vect{x}^{(h)}(k)-\vect{x}^{(h)}(0)}_2 \le g(h)  ,
	\end{align*}	where \begin{align*}
	g(h) = g(h-1)\left[1+\frac{2c_{res}c_{w,0}L}{H}\right] + \frac{L}{H}R c_{x,0}.
	\end{align*}
	For $h=1$, we have
	\begin{align*}
	\norm{\vect{x}^{(1)}(k)-\vect{x}^{(1)}(0)}_2&\le \sqrt{\frac{c_{\sigma}}{m}} \norm{\relu{\mat{W}^{(1)}(k) \vect{x}} -\relu{\mat{W}^{(1)}(0) \vect{x}}}_2\\
&	\le \sqrt{\frac{c_{\sigma}}{m}}\norm{\mat{W}^{(1)}(k)-\mat{W}^{(1)}(0)}_F \le \sqrt{c_{\sigma}}LR,
	\end{align*}
	which implies $g(1)=\sqrt{c_{\sigma}}LR$, for $2\le h\le H$, we have
	\begin{align*}
&	\norm{\vect{x}^{(h)}(k)-\vect{x}^{(h)}(0)}_2 \le \frac{c_{res}}{H\sqrt{m}} \norm{\relu{\mat{W}^{(h)}(k) \vect{x}^{(h-1)}(k)} -\relu{\mat{W}^{(h)}(0) \vect{x}^{(h-1)}(0)}}_2\\&+\norm{\vect{x}^{(h-1)}(k)-\vect{x}^{(h-1)}(0)}_2 \\
& 	\le \frac{c_{res}}{H\sqrt{m}} \norm{\relu{\mat{W}^{(h)}(k) \vect{x}^{(h-1)}(k)} -\relu{\mat{W}^{(h)}(k) \vect{x}^{(h-1)}(0)}}_2\\ 
	& + \frac{c_{res}}{H\sqrt{m}} \norm{\relu{\mat{W}^{(h)}(k) \vect{x}^{(h-1)}(0)} -\relu{\mat{W}^{(h)}(0) \vect{x}^{(h-1)}(0)}}_2\\ &+\norm{\vect{x}^{(h-1)}(k)-\vect{x}^{(h-1)}(0)}_2\\
&	\le  \frac{c_{res}L}{H\sqrt{m}}\left(
	\norm{\mat{W}^{(h)}(0)}_2 + \norm{\mat{W}^{(h)}(k)-\mat{W}^{(h)}(0)}_F
	\right) \cdot \norm{\vect{x}^{(h-1)}(k)-\vect{x}^{(h-1)}(0)}_2 \\
	& + \frac{c_{res}L}{H\sqrt{m}}\norm{\mat{W}^{(h)}(k)-\mat{W}^{(h)}(0)}_F \norm{\vect{x}^{h-1}(0)}_2 +\norm{\vect{x}^{(h-1)}(k)-\vect{x}^{(h-1)}(0)}_2\\
&	\le \left[1+\frac{c_{res}L}{H\sqrt{m}}\left(c_{w,0}\sqrt{m}+R\sqrt{m}\right)\right]g(h-1) + \frac{c_{res}L}{H\sqrt{m}} \sqrt{m} R c_{x,0}\\
&	\le \left(1+\frac{2c_{res}c_{w,0}L}{H}\right)g(h-1)+\frac{c_{res}}{H} Lc_{x,0}R . 
	\end{align*}
	Lastly, simple calculations show $g(h) \le \left(\sqrt{c_{\sigma}}L+\frac{c_{x,0}}{c_{w,0}}\right)e^{2c_{res}c_{w,0}L} R$. 
	
\end{proof}

\begin{proof}[Proof of Lemma~\ref{lem:close_to_init_small_perturbation_res_smooth}]
	Similar to the proof of Lemma~\ref{lem:close_to_init_small_perturbation_smooth}, we can obtain
	\begin{align*}
	&\abs{\mat{G}_{i,j}^{(H)}(k)-\mat{G}_{i,j}^{(H)}(0)} \le \frac{c_{res}^2}{H^2}\left(I_1^{i,j} + I_2^{i,j}+I_3^{i,j}\right) .
	\end{align*}
	For $I_1^{i,j}$, using Lemma~\ref{lem:pertubation_of_neuron_res}, we have \begin{align*}
	I_1^{i,j} = &L^2a_{2,0}^2\abs{\vect{x}_i^{(H-1)}(k)^\top \vect{x}_j^{(H-1)}(k) - \vect{x}_i^{(H-1)}(0)^\top \vect{x}_j^{(H-1)}(0)}  \\
	\le & L^2a_{2,0}^2\abs{
		(\vect{x}_i^{(H-1)}(k)-\vect{x}_i^{(H-1)}(0))^\top \vect{x}_j^{(H-1)}(k)} + L^2a_{2,0}^2\abs{
		\vect{x}_i^{(H-1)}(0)^\top(\vect{x}_i^{(H-1)}(k)-\vect{x}_i^{(H-1)}(0))}  \\
	\le & c_{x}L^2 a_{2,0}^2R \cdot (c_{x,0} + c_{x} R) + c_{x,0} c_x L^2a_{2,0}^2R \\
	\le &3 c_{x,0} c_xL^2a_{2,0}^2 R,
	\end{align*}
	where $c_x \triangleq \left(\sqrt{c_{\sigma}}L+\frac{c_{x,0}}{c_{w,0}}\right)e^{2c_{res}c_{w,0}L} $.
	To bound $I_{2}^{i,j}$, we have 
	\begin{align*}
	I_2^{i,j} =&c_{x,0}^2 \frac{1}{m} \abs{\sum_{r=1}^{m}
		a_r(0)^2\sigma'\left(z_{i,r}(k)\right)\sigma'\left(z_{j,r}(k)\right)
		- 
		a_r(0)^2\sigma'\left(z_{i,r}(0)\right)\sigma'\left(z_{j,r}(0)\right)
	}\\
	\le &c_{x,0}^2 \frac{1}{m}\sum_{r=1}^{m}		a_r(0)^2\abs{\left( \sigma'\left(z_{i,r}(k)\right)-\sigma'\left(z_{i,r}(0)\right) \right)\sigma'\left(z_{j,r}(k)\right)} +	a_r(0)^2\abs{\left( \sigma'\left(z_{j,r}(k)\right)-\sigma'\left(z_{j,r}(0)\right) \right)\sigma'\left(z_{i,r}(0)\right)}\\
	\le & \frac{\beta L c_{x,0}^2}{m} \left(
	\sum_{r=1}^{m} 	a_r(0)^2\abs{z_{i,r}(k)-z_{i,r}(0)}+	a_r(0)^2\abs{z_{j,r}(k)-z_{j,r}(0)}
	\right) \\
	\le & \frac{\beta La_{4,0}^2 c_{x,0}^2}{\sqrt{m}}\left(\sqrt{\sum_{r=1}^{m}\abs{z_{i,r}(k)-z_{i,r}(0)}^2}+\sqrt{\sum_{r=1}^{m}\abs{z_{j,r}(k)-z_{j,r}(0)}^2}\right) .
	\end{align*}
	Using the same proof for Lemma~\ref{lem:pertubation_of_neuron_res}, it is easy to see \begin{align*}
	\sum_{r=1}^{m}\abs{z_{i,r}(k)-z_{i,r}(0)}^2 \le \left(2c_xc_{w,0}+c_{x,0}\right)^2L^2mR^2 .
	\end{align*}
	Thus
	\begin{align*}
	I_2^{i,j}\le 2\beta c_{x,0}^2 \left(2c_xc_{w,0}+c_{x,0}\right)L^2 R .
	\end{align*}
	The bound of $I_3^{i,j}$ is similar to that in Lemma~\ref{lem:close_to_init_small_perturbation_smooth},
		\begin{align*}
	I_3^{i,j}
	&\le 12L^2c_{x,0}^2 a_{2,0}R.
	\end{align*}
	Therefore we can bound the perturbation\begin{align*}
	\norm{\mat{G}^{(H)}(k) - \mat{G}^{(H)}(0)}_F=&\sqrt{\sum_{(i,j)}^{{n,n}} \abs{\mat{G}_{i,j}^{(H)}(k)-\mat{G}_{i,j}^{(H)}(0)}^2} \\
	\le &\frac{c_{res}^2L^2nR}{H^2}\left[3 c_{x,0} c_xa_{2,0}^2+2\beta c_{x,0}^2 \left(2c_xc_{w,0}+c_{x,0}\right)a_{4,0}^2+12c_{x,0}^2 a_{2,0}\right]. \\
	\end{align*}
	Plugging in the bound on $R$, we have the desired result.
	
\end{proof}

\begin{proof}[Proof of Lemma~\ref{lem:dist_from_init_resnet}]
	We will prove this corollary by induction. The induction hypothesis is
		\begin{align*}
	\norm{\mat{W}^{(h)}(s)-\mat{W}^{(h)}(0)}_F &\le \sum_{s'=0}^{s-1} (1-\frac{\eta \lambda_0}{2})^{s'/2}\frac{1}{4}\eta \lambda_0 R'\sqrt{m}\le R'\sqrt{m}, s\in [k+1],\\
	\norm{\vect{a}(s)-\vect{a}(0)}_2 &\le \sum_{s'=0}^{s-1} (1-\frac{\eta \lambda_0}{2})^{s'/2}\frac{1}{4}\eta \lambda_0 R'\sqrt{m}\le R'\sqrt{m}, s\in [k+1].
	\end{align*}
	First it is easy to see it holds for $s'=0$. Now suppose it holds for $s'=0,\ldots,s$, we consider $s'=s+1$. Similar to Lemma~\ref{lem:dist_from_init}, we have
	\begin{align*} 
	&\norm{\mat{W}^{(h)}(s+1)-\mat{W}^{(h)}(s)}_F\\
	\le &\eta \frac{Lc_{res}}{H\sqrt{m}} \norm{\vect{a}}_2 \sum_{i=1}^{n}\abs{y_i-u_i(s)}\norm{\vect{x}^{(h-1)}_i(s)}_2 \prod_{k=h+1}^H \norm{\mat{I}+\frac{c_{res}\lambda^{3/2}}{H\sqrt{m}}\mat{J}_i^{(k)}(s)\mat{W}^{(k)}(s) }_2\\
	\le&  2\eta c_{res}c_{x,0}La_{2,0}e^{2c_{res}c_{w,0}L}\sqrt{n} \norm{\vect{y}-\vect{u}(s)}_2/H\\
	=& \eta Q'(s)\\
	\le& (1-\frac{\eta \lambda_0}{2})^{s/2}\frac{1}{4}\eta \lambda_0 R'\sqrt{m},\\
	\end{align*}
		Similarly, we have \begin{align*}
	\norm{\vect{a}(s+1)-\vect{a}(s)}_2 \le& 2\eta c_{x,0} \sum_{i=1}^{n}\abs{y_i-u(s)}\\
	\le& \eta Q'(s)\\
	\le& (1-\frac{\eta \lambda_0}{2})^{s/2}\frac{1}{4}\eta \lambda_0 R'\sqrt{m}.
	\end{align*}
	Thus
	\begin{align*}
	&\norm{\mat{W}^{(h)}(s+1)-\mat{W}^{(h)}(0)}_F\\
	\le& \norm{\mat{W}^{(h)}(s+1)-\mat{W}^{(h)}(s)}_F+\norm{\mat{W}^{(h)}(s)-\mat{W}^{(h)}(0)}_F\\
	\le&\sum_{s'=0}^{s} \eta (1-\frac{\eta \lambda_0}{2})^{s'/2}\frac{1}{4}\eta \lambda_0 R'\sqrt{m}.\\
	\end{align*}
		Similarly,
	\begin{align*}
	&\norm{\vect{a}(s+1)-\vect{a}(0)}_2\\
	\le&\sum_{s'=0}^{s} \eta (1-\frac{\eta \lambda_0}{2})^{s'/2}\frac{1}{4}\eta \lambda_0 R'\sqrt{m}.\\
	\end{align*}
\end{proof}

\begin{proof}[Proof of Lemma~\ref{lem:resnet_I2}]
Similar to Lemma~\ref{lem:i2_mlp}, we first bound the gradient norm.
\begin{align*}
&\norm{L'^{(h)}(\vect{w}(k))}_F \\
= &\big{\|}\frac{c_{res}}{H\sqrt{m}}
	\sum_{i=1}^{n}(y_i-u_i(k))\vect{x}_i^{(h-1)}(k)  \cdot
	\left[\vect{a}(k)^\top \prod_{l=h+1}^{H}\left(\mat{I}+\frac{c_{res}}{H\sqrt{m}}\mat{J}_i^{(l)}(k)\mat{W}^{(l)}(k) \right)\mat{J}_i^{(h)}(k) \right]
\big{\|}_F\\
\le &\frac{c_{res}L}{H\sqrt{m}} \norm{\vect{a}(k)}_2 \sum_{i=1}^{n}\abs{y_i-u_i(k)}\norm{\vect{x}^{(h-1)}(k)}_2 \prod_{k=h+1}^H \norm{\mat{I}+\frac{c_{res}}{H\sqrt{m}}\mat{J}_i^{(k)}(k)\mat{W}^{(k)}(k) }_2.
\end{align*}
We have bounded the RHS in the proof for Lemma~\ref{lem:dist_from_init_resnet}, thus
	\begin{align*}
	\norm{L'^{(h)}(\params(k))}_F \le \lambda_0 Q'(k).
	\end{align*}
	Let $\params(k,s)=\params(k)-s L'(\params(k))$, we have
	\begin{align*}
	&\norm{  u'^{(h)}_i\left(\params(k)\right) - u'^{(h)}_i\left(\params(k,s)\right)}_F=\\
	&\frac{c_{res}}{H\sqrt{m}}\norm{ \vect{x}_i^{(h-1)}(k)\vect{a}(k)^\top \prod_{l=h+1}^{H}\left(\mat{I}+\frac{c_{res}}{H\sqrt{m}}\mat{J}_i^{(l)}(k)\mat{W}^{(l)}(k) \right)\mat{J}_i^{(h)}(k) \right.\\&\left.
-\vect{x}_i^{(h-1)}(k,s)\vect{a}(k,s)^\top	\prod_{l=h+1}^{H}\left(\mat{I}+\frac{c_{res}}{H\sqrt{m}}\mat{J}_i^{(l)}(k,s)\mat{W}^{(l)}(k,s) \right)\mat{J}_i^{(h)}(k,s)}_F.
	\end{align*}
Through standard calculations, we have
	\begin{align*}
	\norm{	\mat{W}^{(l)}(k)-\mat{W}^{(l)}(k,s)}_F \le &\eta Q'(k),\\
		\norm{	\vect{a}(k)-\vect{a}(k,s)}_F \le &\eta Q'(k),\\
	\norm{	\vect{x}_i^{(h-1)}(k)-\vect{x}_i^{(h-1)}(k,s)}_F \le &\eta c_x \frac{Q'(k)}{\sqrt{m}},\\
	\norm{	\mat{J}^{(l)}(k)-\mat{J}^{(l)}(k,s)}_F \le &2\left(c_{x,0}+c_{w,0}c_x\right)\eta \beta   Q'(k),
	\end{align*}
	where $c_x\triangleq\left(\sqrt{c_{\sigma}}L+\frac{c_{x,0}}{c_{w,0}}\right)e^{3c_{res}c_{w,0}L}$.
	According to Lemma~\ref{lem:difference_norm}, we have
	\begin{align*}
	&\norm{  u'^{(h)}_i\left(\params(k)\right) - u'^{(h)}_i\left(\params(k,s)\right)}_F \\
	\le &\frac{4}{H}c_{res}c_{x,0}La_{2,0}e^{2Lc_{w,0}}\eta\frac{Q'(k)}{\sqrt{m}}\left(\frac{c_x}{c_{x,0}}+\frac{2}{L}\left(c_{x,0}+c_{w,0}c_x\right)\beta \sqrt{m}+4c_{w,0}\left(c_{x,0}+c_{w,0}c_x\right)\beta+L+1 \right)\\
	\le &\frac{32}{H}c_{res}c_{x,0}a_{2,0}e^{2Lc_{w,0}} \left(c_{x,0}+c_{w,0}c_x\right) \beta \eta Q'(k).
	\end{align*}
	Thus we have 
	\begin{align*}
	\abs{I^i_2}\le 32c_{res}c_{x,0}a_{2,0}e^{2Lc_{w,0}} \left(c_{x,0}+c_{w,0}c_x\right) \beta \eta^2  Q'(k)^2 \le \frac{1}{8}\eta \lambda_0 \norm{\vect{y}-\vect{u}(k)}_2,
	\end{align*}
	where we used the bound of $\eta$ and that $\norm{\vect{y}-\vect{u}(0)}_2=O(\sqrt{n})$,.
\end{proof}

\begin{proof}[Proof of Lemma~\ref{lem:quadratic_resnet}]
	\begin{align*}
\norm{\vect{u}(k+1)-\vect{u}(k)}_2^2 = & \sum_{i=1}^{n}\left(\vect{a}(k+1)^\top \vect{x}_i^{(H)}(k+1)-\vect{a}(k)^\top \vect{x}_i^{(H)}(k)\right)^2 \\
= & \sum_{i=1}^{n}\left(\left[\vect{a}(k+1)-\vect{a}(k)\right]^\top \vect{x}_i^{(H)}(k+1)+\vect{a}(k)^\top \left[\vect{x}_i^{(H)}(k+1)-\vect{x}_i^{(H)}(k)\right] \right)^2 \\
\le &2\norm{\vect{a}(k+1)-\vect{a}(k)}_2^2\sum_{i=1}^{n}\norm{\vect{x}_i^{(H)}(k+1)}_2^2+2\norm{\vect{a}(k)}_2^2\sum_{i=1}^{n}\norm{\vect{x}_i^{(H)}(k+1)-\vect{x}_i^{(H)}(k)}_2^2\\
\le &8n\eta^2c_{x,0}^2Q'(k)^2+4 n \left(\eta  a_{2,0}c_xQ'(k)\right)^2\\
\le &\frac{1}{8}\eta \lambda_0 \norm{\vect{y}-\vect{u}(k)}_2^2.
\end{align*}
\end{proof}

\section{Proofs for Section~\ref{sec:conv_resnet}}
\label{sec:conv_resnet_proof}
For CNN, denote $\vect{x}_{i,l}=\phi\left(\vect{x}_{i,l}\right)_{:,l}$, $\mat{G}^{(H)}$ has the following form:
 \begin{align}
 \mat{G}_{ij}^{(H)}=\frac{c_{res}^2}{H^2m}\sum_{r=1}^m \left[\sum_{l=1}^{p}a_{l,r} \vect{x}_{i,l}^{(H-1)} \sigma'\left( \left(\vect{w}_r^{(H)}\right)^\top \vect{x}_{i,l}^{(H-1)}\right)\right]^\top \left[ \sum_{k=1}^{p}a_{k,r} \vect{x}_{j,k}^{(H-1)} \sigma'\left(\left(\vect{w}_r^{(H)}\right)^\top \vect{x}_{j,k}^{(H-1)}\right) \right]. \label{eqn:H_convresnet}
 \end{align}

We define a constant $c_{\sigma,c_0}=\left(\min_{c_0\le\alpha \le 1}\expect_{X\sim N(0,1)}\sigma(\alpha X)^2\right)^{-1}>0$, where $0< c_0\le1$.
In particular, it is easy to see for smooth ReLU,  $c_{\sigma,\frac{1}{\sqrt{p}}}= \poly(p)$.

Similar to Lemma \autoref{lem:lem:init_norm}, we can show with high probability the feature of each layer is approximately normalized.
\begin{lem}[Lemma on Initialization Norms]
	\label{lem:lem:init_norm_cnn}
	If $\sigma(\cdot)$ is $L-$Lipschitz and $m = \Omega\left(\frac{p^2n}{c_{\sigma,\frac{1}{\sqrt{p}}}^2\delta}\right)$, assuming $\norm{\mat{W}^{(h)}(0)}_2\le c_{w,0}\sqrt{m}$ for $h\in[H]$,
	we have with probability at least $1-\delta$ over random initialization, for every $h\in[H]$ and $i \in [n]$, 
	\[ \frac{1}{c_{x,0}}\le  \norm{\vect{x}_i^{(h)}(0)}_F \le c_{x,0} \]  for some constant $c_{x,0} =poly(p) > 1$.
\end{lem}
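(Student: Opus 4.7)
The plan is to mirror the strategy used for Lemma 9.1 on ResNet (Lemma~\ref{lem:lem:init_norm_res}), handling the extra patch structure and $p$ dependence carefully. I will induct on the depth $h$, controlling both an upper and a lower bound on $\|\vect{x}_i^{(h)}(0)\|_F$ simultaneously. The induction has two regimes: a concentration argument for the base case $h=1$ where $\mat{W}^{(1)}$ is fully random, and a deterministic propagation argument for $h \ge 2$ that rides on the assumed spectral bound $\|\mat{W}^{(h)}(0)\|_2 \le c_{w,0}\sqrt{m}$ and the residual scaling $c_{res}/(H\sqrt{m})$.

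For the base case, I will write
\[
\|\vect{x}_i^{(1)}(0)\|_F^2 \;=\; \frac{c_\sigma}{m}\sum_{r=1}^m \sum_{l=1}^{p} \sigma\!\left(\mat{W}^{(1)}_{r,:}(0)\,\phi_1(\vect{x}_i)_{:,l}\right)^{\!2},
\]
let $\alpha_l := \|\phi_1(\vect{x}_i)_{:,l}\|_2$, and compute the expectation as $c_\sigma \sum_{l=1}^p \expect_{X\sim N(0,1)}[\sigma(\alpha_l X)^2]$. Since $1 \le \sum_l \alpha_l^2 \le q$, the Lipschitz property of $\sigma$ gives an upper bound $O(p)$, while the pigeonhole argument guarantees some $\alpha_l \ge 1/\sqrt{p}$, yielding a lower bound of $c_\sigma/c_{\sigma,1/\sqrt{p}} = \Omega(1/\poly(p))$ via the definition of $c_{\sigma,c_0}$. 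A fourth-moment computation with Chebyshev's inequality, using $m=\Omega(p^2 n/(c_{\sigma,1/\sqrt{p}}^2\delta))$, concentrates $\|\vect{x}_i^{(1)}(0)\|_F^2$ around its mean within a constant factor, with failure probability at most $\delta/(nH)$.

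For the inductive step $h\ge 2$, I will use the reverse and forward triangle inequality on the residual update together with the Lipschitz bound $\|\sigma(\mat{W}^{(h)}\phi_h(\vect{x}^{(h-1)}))\|_F \le L\|\mat{W}^{(h)}\|_2 \|\phi_h(\vect{x}^{(h-1)})\|_F + |\sigma(0)|\sqrt{mp}$ and the patch estimate $\|\phi_h(\vect{x}^{(h-1)})\|_F \le \sqrt{q}\|\vect{x}^{(h-1)}\|_F$. This yields the one-step bound
\[
\bigl|\|\vect{x}_i^{(h)}(0)\|_F - \|\vect{x}_i^{(h-1)}(0)\|_F\bigr| \;\le\; \frac{c_{res}}{H}\Bigl(L c_{w,0}\sqrt{q}\,\|\vect{x}_i^{(h-1)}(0)\|_F + |\sigma(0)|\sqrt{p}\Bigr).
\]
Iterating $H$ times gives factors $(1 \pm c_{res}L c_{w,0}\sqrt{q}/H)^H$, which are bounded above and below by absolute constants since $q$ is treated as constant, together with an additive $O(c_{res}|\sigma(0)|\sqrt{p})$ term. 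Choosing $c_{x,0}$ to be a sufficiently large $\poly(p)$ constant absorbs both contributions, establishing the two-sided bound for every layer.

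The main technical subtlety, as in Lemma~\ref{lem:lem:init_norm_res}, is the lower bound: naive iteration of $(1-c_{res}Lc_{w,0}\sqrt{q}/H)$ subtracted by an additive $\sqrt{p}/H$ term must remain bounded away from zero; this works because the multiplicative factor converges to a positive constant $e^{-c_{res}Lc_{w,0}\sqrt{q}}$ as $H\to\infty$, while the cumulative additive drift is only $O(\sqrt{p})$, so choosing $c_{x,0}=\poly(p)$ large enough yields $1/c_{x,0} \le \|\vect{x}_i^{(h)}(0)\|_F$. I then finish with a union bound over $i\in[n]$ and $h\in[H]$, noting that the only randomness used outside the fixed spectral bound assumption lives in the base case, so only one union bound over $n$ is paid in the probability.
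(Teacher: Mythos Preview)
Your approach matches the paper's: Chebyshev concentration at layer $1$, then deterministic triangle-inequality propagation for $h\ge 2$ using the spectral bound and the $c_{res}/(H\sqrt{m})$ scaling. The paper actually drops your $|\sigma(0)|\sqrt{p}$ additive term in the inductive step, writing simply $\|\sigma(\mat{W}\phi(\vect{x}))\|_F \le Lc_{w,0}\sqrt{qm}\,\|\vect{x}\|_F$, which is only literally correct when $\sigma(0)=0$; you are being more careful by retaining it.

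That extra care, however, leaves a gap in your lower-bound argument as written. Your base case yields only $\|\vect{x}_i^{(1)}(0)\|_F \ge \Omega\bigl(c_{\sigma,1/\sqrt{p}}^{-1/2}\bigr)$, potentially as small as $1/\poly(p)$, while the cumulative additive drift you identify over $H$ layers is $\Theta(c_{res}|\sigma(0)|\sqrt{p})$. When $\sigma(0)\ne 0$ these scales are incompatible: the iterated lower bound $e^{-c_{res}Lc_{w,0}\sqrt{q}}\|\vect{x}_i^{(1)}(0)\|_F - O(\sqrt{p})$ can be negative, and no choice of $c_{x,0}$ repairs that---enlarging $c_{x,0}$ only lowers the target $1/c_{x,0}$, it does not make a negative quantity positive. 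The fix, precisely in the case $\sigma(0)\ne 0$, is to sharpen the base case rather than the target: since each patch norm satisfies $\alpha_l\in[0,\sqrt{q}]$, every one of the $p$ terms contributes at least $\mu_0:=\min_{0\le\alpha\le\sqrt{q}}\expect_{X\sim N(0,1)}[\sigma(\alpha X)^2]>0$ (positive by continuity on a compact interval and $\sigma\not\equiv 0$), so in fact $\expect\|\vect{x}_i^{(1)}(0)\|_F^2\ge c_\sigma p\,\mu_0$ and hence $\|\vect{x}_i^{(1)}(0)\|_F=\Omega(\sqrt{p})$ after concentration---enough to dominate the $O(\sqrt{p})$ drift. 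Alternatively, follow the paper and absorb the $\sigma(0)$ contribution into the multiplicative constant.
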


The following lemma lower bounds $\mat{G}^{(H)}(0)$'s least eigenvalue.
This lemma is a direct consequence of results in Section~\ref{sec:general_formulation}.
\begin{lem}[Least Eigenvalue at the Initialization]\label{lem:conv_resnet_least_eigen}
	If $m = \Omega\left(\frac{n^2p^2\log(Hn/\delta)}{\lambda_0^2}\right)$, we have \begin{align*}
	\lambda_{\min}(\mat{G}^{(H)}(0)) \ge \frac{3}{4}\lambda_0.
	\end{align*}
\end{lem}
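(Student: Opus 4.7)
The plan is to reduce the claim to the operator-norm concentration inequality $\|\mat{G}^{(H)}(0) - \mat{K}^{(H)}\|_2 \le \lambda_0/4$, after which Weyl's inequality combined with $\lambda_{\min}(\mat{K}^{(H)}) = \lambda_0$ immediately yields $\lambda_{\min}(\mat{G}^{(H)}(0)) \ge 3\lambda_0/4$. Since $\mat{G}^{(H)}(0)$ is $n\times n$, it suffices by a union bound to control the entrywise deviation $|\mat{G}^{(H)}_{ij}(0) - \mat{K}^{(H)}_{ij}| \le \lambda_0/(4n)$ uniformly over the $n^2$ pairs, because $\|\cdot\|_2 \le \|\cdot\|_F$.

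First I would establish a layer-by-layer concentration statement. Define the empirical patch-wise Gram matrix $\widetilde{\mat{K}}^{(h)}_{ij}(0) = \vect{x}_i^{(h)}(0)^\top \vect{x}_j^{(h)}(0) \in \mathbb{R}^{p\times p}$ and show by induction on $h$ that $\|\widetilde{\mat{K}}^{(h)}_{ij}(0) - \mat{K}^{(h)}_{ij}\|_F \le \epsilon_h$ simultaneously for all $(i,j)$, with failure probability growing linearly in $h$. Conditional on the previous layer, each entry of $\widetilde{\mat{K}}^{(h)}_{ij}$ is an average of $m$ independent bounded functions of the rows of $\mat{W}^{(h)}(0)$, so Hoeffding (or Bernstein after a soft truncation of Gaussian tails) supplies a fresh-randomness contribution of $O(\poly(p)\sqrt{\log(Hn/\delta)/m})$. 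The $c_{res}/(H\sqrt m)$ scaling in the skip connection~\eqref{eqn:kernel_conv_resnet} ensures that the inherited error propagates as $\epsilon_h \le (1 + O(1/H))\epsilon_{h-1} + O(\poly(p)\sqrt{\log(Hn/\delta)/m})$, which, unrolled over $H$ layers as in~\eqref{eqn:why_resnet}, gives $\epsilon_{H-1} = O(\poly(p)\sqrt{\log(Hn/\delta)/m})$ rather than the $2^{O(H)}$ blow-up of the fully-connected case~\eqref{eqn:why_not_fc}. I would then apply the same concentration scheme to the top-layer expression~\eqref{eqn:H_convresnet} to bound $\mat{G}^{(H)}_{ij}(0)$ around its conditional expectation; the double patch summation $\sum_l\sum_k$ in~\eqref{eqn:H_convresnet} concentrates in a coupled way over $p^2$ terms, which is exactly what produces the $p^2$ factor in the final requirement on $m$. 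The conditional expectation in turn differs from $\mat{K}^{(H)}_{ij}$ by $O(\epsilon_{H-1})$ via Lipschitz continuity of the Gaussian expectation in its $2p$-dimensional covariance $\mat{A}^{(H-1)}_{ij}$. Balancing both error sources against $\lambda_0/(4n)$ yields the stated $m = \Omega(n^2 p^2 \log(Hn/\delta)/\lambda_0^2)$.

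The main technical obstacle will be proving the correct Lipschitz constant for the map sending a $p\times p$ positive-semidefinite covariance $\mat{A}$ to the nonlinear Gaussian expectation $\mathbb{E}_{(\mat u,\mat v)\sim N(0,\mat A)}[\sigma(\mat u)^\top\sigma(\mat v)]$ and its $\sigma'$-variant at the top layer, in an appropriate matrix norm, without paying more than $\poly(p)$ in the patch dimension. Because Conditions~\ref{cond:lip_and_smooth} and~\ref{cond:analytic} already give a well-behaved univariate activation, I would lift the one-dimensional Lipschitz property to the matrix-valued setting via a Cholesky-based Gaussian coupling or a Stein/Malliavin-type identity, matching the unified initialization framework that Section~\ref{sec:general_formulation} is advertised to supply for all three architectures. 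Once that Lipschitz bound is in place, the lemma really is, as stated, a direct specialization of the general initialization analysis to the CNN Gram recursion of Definition~\ref{defn:gram_convresnet}, with the patch dimension $p$ tracked carefully through every concentration and propagation step.
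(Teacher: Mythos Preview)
Your proposal is correct and follows essentially the same approach as the paper: the paper also derives this lemma as a direct specialization of its general initialization framework (Theorem~\ref{thm:main_general_framework} for layers $1,\ldots,H-1$ combined with Lemma~\ref{lem:H_concen_mlp} for the top layer), using exactly the layer-wise concentration plus $(1+O(1/H))$ error propagation you describe, with the Lipschitz bound on the covariance-to-Gaussian-expectation map supplied by Lemma~\ref{lem:2p2pperb}. The only cosmetic difference is that the paper tracks entrywise $\|\cdot\|_\infty$ rather than $\|\cdot\|_F$ on the $p\times p$ blocks and attributes the $p^2$ factor to the intermediate Gram matrices being effectively $np\times np$ (Remark~\ref{rem:constants}), but this is equivalent to your accounting.
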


Next, we prove the following lemma which characterizes how the perturbation from weight matrices propagates to the input of each layer.
 
 \begin{lem}\label{lem:pertubation_of_neuron_cnn}
 	Suppose $\sigma(\cdot)$ is $L-$Lipschitz and for $h\in[H]$, $\norm{\mat{W}^{(h)}(0)}_2 \le c_{w,0}\sqrt{m}$, $\norm{\vect{x}^{(h)}(0)}_F \le c_{x,0}$ and $\norm{\mat{W}^{(h)}(k)-\mat{W}^{(h)}(0)}_F \le \sqrt{m} R$ for some constant $c_{w,0},c_{x,0} > 1$ and $R\le c_{w,0}$ .
 	Then we have \begin{align*}
 	\norm{\vect{x}^{(h)}(k)-\vect{x}^{(h)}(0)}_F \le \left(\sqrt{c_{\sigma}}L\sqrt{q}+\frac{c_{x,0}}{c_{w,0}}\right)e^{2c_{w,0}L\sqrt{q}c_{res}} R.
 	\end{align*} 
 \end{lem}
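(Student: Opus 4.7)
The plan is to follow the same induction-on-depth strategy as in Lemma~\ref{lem:pertubation_of_neuron_res}, with the only essential change being that the convolutional patch operator $\phi_h(\cdot)$ inflates Frobenius norms by at most $\sqrt{q}$, which will propagate as an extra factor of $\sqrt{q}$ wherever a weight matrix acts on a feature. The induction hypothesis will be that there is a sequence $g(h)$ with $g(1)=\sqrt{c_\sigma}L\sqrt{q}R$ and $g(h)\le\bigl(1+\tfrac{2c_{res}c_{w,0}L\sqrt{q}}{H}\bigr)g(h-1)+\tfrac{c_{res}L\sqrt{q}c_{x,0}R}{H}$, such that $\norm{\vect{x}^{(h)}(k)-\vect{x}^{(h)}(0)}_F\le g(h)$.

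For the base case $h=1$, I would write
\[
\vect{x}^{(1)}(k)-\vect{x}^{(1)}(0)=\sqrt{\tfrac{c_\sigma}{m}}\bigl[\sigma(\mat{W}^{(1)}(k)\phi_1(\vect{x}))-\sigma(\mat{W}^{(1)}(0)\phi_1(\vect{x}))\bigr],
\]
bound the right-hand side in Frobenius norm by $\sqrt{c_\sigma/m}\,L\,\norm{\mat{W}^{(1)}(k)-\mat{W}^{(1)}(0)}_F\,\norm{\phi_1(\vect{x})}_2$, and use $\norm{\phi_1(\vect{x})}_2\le\norm{\phi_1(\vect{x})}_F\le\sqrt{q}\norm{\vect{x}}_F\le\sqrt{q}$ together with $\norm{\mat{W}^{(1)}(k)-\mat{W}^{(1)}(0)}_F\le\sqrt{m}R$.

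For the inductive step $h\ge 2$, I would use the skip-connection identity
\[
\vect{x}^{(h)}(k)-\vect{x}^{(h)}(0)=\bigl[\vect{x}^{(h-1)}(k)-\vect{x}^{(h-1)}(0)\bigr]+\tfrac{c_{res}}{H\sqrt{m}}\bigl[\sigma(\mat{W}^{(h)}(k)\phi_h(\vect{x}^{(h-1)}(k)))-\sigma(\mat{W}^{(h)}(0)\phi_h(\vect{x}^{(h-1)}(0)))\bigr],
\]
then split the bracketed difference into a term where only $\phi_h(\vect{x}^{(h-1)})$ changes and a term where only $\mat{W}^{(h)}$ changes. The first term is bounded by $L\norm{\mat{W}^{(h)}(k)}_2\sqrt{q}\,g(h-1)$ using Lipschitzness of $\sigma$, the patch bound $\norm{\phi_h(\cdot)}_F\le\sqrt{q}\norm{\cdot}_F$, and $\norm{\mat{W}^{(h)}(k)}_2\le(c_{w,0}+R)\sqrt{m}\le 2c_{w,0}\sqrt{m}$; the second term is bounded by $L\norm{\mat{W}^{(h)}(k)-\mat{W}^{(h)}(0)}_F\sqrt{q}\,c_{x,0}\le L\sqrt{m}R\sqrt{q}c_{x,0}$. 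Multiplying by the scaling $c_{res}/(H\sqrt{m})$ and adding $g(h-1)$ yields exactly the recursion above.

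The final step is to solve the recursion explicitly. Unrolling gives
\[
g(h)\le(1+\alpha)^{h-1}g(1)+\tfrac{\beta}{\alpha}\bigl[(1+\alpha)^{h-1}-1\bigr]
\]
with $\alpha=2c_{res}c_{w,0}L\sqrt{q}/H$ and $\beta=c_{res}L\sqrt{q}c_{x,0}R/H$. Since $h\le H$, we have $(1+\alpha)^{h-1}\le e^{\alpha H}=e^{2c_{res}c_{w,0}L\sqrt{q}}$, and $\beta/\alpha=c_{x,0}R/(2c_{w,0})$, which gives
\[
g(h)\le\Bigl(\sqrt{c_\sigma}L\sqrt{q}+\tfrac{c_{x,0}}{c_{w,0}}\Bigr)e^{2c_{w,0}L\sqrt{q}c_{res}}R,
\]
as claimed. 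I do not expect any real obstacle: the only novelty compared with Lemma~\ref{lem:pertubation_of_neuron_res} is tracking the $\sqrt{q}$ factor from the patch operator consistently, and the same $(1+O(1/H))^H$ telescoping keeps the bound independent of $H$ in exactly the ResNet-style way.
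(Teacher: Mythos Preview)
Your proposal is correct and follows essentially the same route as the paper's own proof: the same induction on $h$, the same base case $g(1)=\sqrt{c_\sigma}L\sqrt{q}R$, the same add-and-subtract splitting of the inductive step into a ``weight changes'' term and a ``feature changes'' term, and the same linear recursion $g(h)\le(1+2c_{res}c_{w,0}L\sqrt{q}/H)g(h-1)+c_{res}L\sqrt{q}c_{x,0}R/H$ solved via $(1+\alpha)^{H}\le e^{\alpha H}$. The only cosmetic difference is that you explicitly unroll the recursion and compute $\beta/\alpha$, whereas the paper simply asserts the final bound after stating the recursion.
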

 
 Next, we show with high probability over random initialization,  perturbation in weight matrices leads to small perturbation in the Gram matrix.
\begin{lem}\label{lem:close_to_init_small_perturbation_cnn_smooth}
	Suppose $\sigma(\cdot)$ is differentaible, $L-$Lipschitz and $\beta-$smooth. Using the same notations in Lemma~\ref{lem:close_to_init_small_perturbation_smooth},  if $\norm{\vect{a}_{:,i}}_2\le a_{2,0}\sqrt{m}$ and $\norm{\vect{a}_{:,i}}_4\le a_{4,0}m^{1/4}$ for any $i\in [ p]$, $\norm{\mat{W}^{(h)}(k)-\mat{W}^{(h)}(0)}_F,\norm{ \vect{a}(k)-\vect{a}(0)}_F \le \sqrt{m}R$ where $R \le c \lambda_0H^2\left(n \right)^{-1}poly(p)^{-1}$ for some small constant $c$, we have  \begin{align*}
	\norm{\mat{G}^{(H)}(k) - \mat{G}^{(H)}(0)}_2 \le \frac{\lambda_0}{2}.
	\end{align*}
\end{lem}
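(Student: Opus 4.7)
The plan is to follow the same three-term decomposition used in Lemma~\ref{lem:close_to_init_small_perturbation_res_smooth} for ResNet, but now carefully track the extra dimension coming from the $p$ patches. Since the operator norm is bounded by the Frobenius norm, it is enough to bound $\norm{\mat{G}^{(H)}(k)-\mat{G}^{(H)}(0)}_F$, so I will bound each entry $\abs{\mat{G}^{(H)}_{ij}(k)-\mat{G}^{(H)}_{ij}(0)}$ uniformly and then sum over $(i,j)\in[n]^2$.

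For a fixed pair $(i,j)$, I will expand $\mat{G}^{(H)}_{ij}$ from Equation~\eqref{eqn:H_convresnet}. After writing $z_{i,l,r}(k) \triangleq (\vect{w}_r^{(H)}(k))^\top \vect{x}_{i,l}^{(H-1)}(k)$, the quantity inside the sum $\sum_r$ has the form (patch-$l$, patch-$k$ bilinear in the $\vect{x}^{(H-1)}$'s, with derivative factors and $a_{l,r}a_{k,r}$ weights). I will split the difference into three pieces:
\begin{itemize}
\item $T_1$: the change due to $\vect{x}_{i,l}^{(H-1)}(k)-\vect{x}_{i,l}^{(H-1)}(0)$ (and similarly for $j$), holding the $\sigma'$ factors and $\vect{a}$ at $0$;
\item $T_2$: the change due to $\sigma'(z(k))-\sigma'(z(0))$, holding $\vect{x}^{(H-1)}$ at the perturbed value and $\vect{a}$ at $0$;
\item $T_3$: the change due to $a_{l,r}(k)a_{k,r}(k)-a_{l,r}(0)a_{k,r}(0)$.
\end{itemize}
Term $T_1$ is bounded using Lemma~\ref{lem:pertubation_of_neuron_cnn}: the per-patch feature change is bounded by $\sqrt{q}\norm{\vect{x}^{(H-1)}(k)-\vect{x}^{(H-1)}(0)}_F \le c_{res,x} R$, where $c_{res,x}=\left(\sqrt{c_\sigma}L\sqrt{q}+c_{x,0}/c_{w,0}\right)e^{2c_{w,0}L\sqrt{q}c_{res}}$, and the remaining factor $\frac{c_{res}^2}{H^2 m}\sum_r a_{l,r}(0)a_{k,r}(0)L^2$ is $O(c_{res}^2 L^2 a_{2,0}^2 p/H^2)$ after summing over the $p\times p$ patch pairs. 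Term $T_2$ uses $\beta$-smoothness of $\sigma$: $\abs{\sigma'(z(k))-\sigma'(z(0))}\le \beta\abs{z(k)-z(0)}$, and the same argument as in Lemma~\ref{lem:pertubation_of_neuron_cnn} bounds $\sum_r \abs{z_{i,l,r}(k)-z_{i,l,r}(0)}^2\le \poly(p) m R^2$; combining with Cauchy–Schwarz on the $a_{4,0}$ bound produces a factor $\beta a_{4,0}^2 \poly(p)\cdot L R / H^2$. Term $T_3$ uses $\abs{a_{l,r}(k)^2-a_{l,r}(0)^2}\le 2\abs{a_{l,r}(k)-a_{l,r}(0)}\cdot(a_{l,r}(0)+\norm{\vect{a}(k)-\vect{a}(0)}_F)$ and gives an $O(c_{res}^2 L^2 a_{2,0}\poly(p)/H^2)\cdot R$ bound.

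Putting the three together, I get $\abs{\mat{G}^{(H)}_{ij}(k)-\mat{G}^{(H)}_{ij}(0)}\le \frac{C\,\poly(p)}{H^2}R$ for an absolute constant $C$ depending on $c_\sigma, L, \beta, c_{w,0}, c_{x,0}, a_{2,0}, a_{4,0}, c_{res}$. Summing over $(i,j)\in[n]^2$ gives
$$\norm{\mat{G}^{(H)}(k)-\mat{G}^{(H)}(0)}_2 \;\le\; \norm{\mat{G}^{(H)}(k)-\mat{G}^{(H)}(0)}_F \;\le\; \frac{Cn\,\poly(p)}{H^2}\,R,$$
and the hypothesis $R\le c\lambda_0 H^2 n^{-1}\poly(p)^{-1}$ with small enough $c$ yields the claimed $\lambda_0/2$ bound.

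The main obstacle is the careful bookkeeping of the $p$-factors: each of the three terms accumulates a different power of $p$ because the inner sums run over $p$ patches per input, and because $\phi_h$ blows norms up by at most $\sqrt{q}$. I expect $T_2$ to be the tightest of the three because it combines (i) the $\sqrt{q}$-amplified feature perturbation through $z(k)-z(0)$, (ii) the $a_{4,0}^2$-Cauchy–Schwarz step that saves a factor of $\sqrt{m}$, and (iii) a double sum over patch pairs $(l,k)\in[p]^2$. Tracking these three sources together — and verifying they collapse into a single $\poly(p)$ factor absorbed by the hypothesis on $R$ — is where I expect the calculation to be least mechanical; everything else parallels Lemma~\ref{lem:close_to_init_small_perturbation_res_smooth}.
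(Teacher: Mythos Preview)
Your proposal is correct and follows essentially the same three-term decomposition as the paper's proof (their $I_1^{i,j},I_2^{i,j},I_3^{i,j}$ match your $T_1,T_2,T_3$), with the same use of Lemma~\ref{lem:pertubation_of_neuron_cnn}, $\beta$-smoothness combined with Cauchy--Schwarz and the $a_{4,0}$ bound, and the same Frobenius-norm aggregation over $(i,j)$. One minor slip: in $T_3$ the relevant cross term is $a_{l,r}(k)a_{k,r}(k)-a_{l,r}(0)a_{k,r}(0)$ (distinct patch indices $l,k$) rather than $a_{l,r}^2$, but the telescoping bound goes through identically.
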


\begin{lem}\label{lem:dist_from_init_cnn}
	If Condition~\ref{cond:linear_converge} holds for $k'=1,\ldots,k$, we have for any $s \in [k+1]$
	\begin{align*}
	&\norm{\mat{W}^{(h)}(s)-\mat{W}^{(h)}(0)}_F, \norm{\vect{a}(s)-\vect{a}(0)}_F\le  R'\sqrt{m},\\
	&\norm{\mat{W}^{(h)}(s)-\mat{W}^{(h)}(s-1)}_F,\norm{\vect{a}(s)-\vect{a}(s-1)}_F\le \eta Q'(s-1),
	\end{align*}where $R'=\frac{16c_{res} c_{x,0}L\sqrt{pq}e^{2c_{res}c_{w,0}La_{2,0}\sqrt{q}} \sqrt{n} \norm{\vect{y}-\vect{u}(0)}_2}{H\lambda_0\sqrt{m}} <c$ for some small constant $c$ and \[Q'(s) = 4c_{res}c_{x,0}La_{2,0}\sqrt{pq}e^{2c_{res}c_{w,0}L\sqrt{q}}\sqrt{n} \norm{\vect{y}-\vect{u}(s)}_2/H.\]
\end{lem}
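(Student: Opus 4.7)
The plan is to prove the lemma by induction on $s$, following the same template as Lemma~\ref{lem:dist_from_init_resnet} but carefully tracking the extra factors introduced by the patch operator $\phi_h$. The base case $s=0$ is trivial. For the inductive step, I assume the bounds hold for all $s'\le s$ and deduce them for $s+1$ by controlling the gradients, then summing a geometric series using Condition~\ref{cond:linear_converge}.

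The first task is a per-iteration gradient bound of the form $\norm{\partial L/\partial \mat{W}^{(h)}(s)}_F \le Q'(s)$. Differentiating the convolutional ResNet recursion in Equation~\eqref{eqn:kernel_conv_resnet} gives, by the chain rule, an expression analogous to the ResNet case but with each layer's Jacobian acting through the patch operator. Bounding its Frobenius norm produces four ingredients: (i) $\norm{\vect{a}(s)}_F \le 2a_{2,0}\sqrt{m}$ from the inductive hypothesis and the initialization estimate; (ii) $\sum_i |y_i-u_i(s)| \le \sqrt{n}\norm{\vect{y}-\vect{u}(s)}_2$; (iii) $\norm{\phi_h(\vect{x}_i^{(h-1)}(s))}_F \le \sqrt{q}\norm{\vect{x}_i^{(h-1)}(s)}_F \le 2\sqrt{q}c_{x,0}$, using Lemma~\ref{lem:pertubation_of_neuron_cnn} together with the inductive hypothesis and the patch-operator norm bound from the problem setup; and (iv) the product $\prod_{l=h+1}^H \norm{\mat{I}+\frac{c_{res}}{H\sqrt{m}}\mat{J}_i^{(l)}(s)\mat{W}^{(l)}(s)\,\phi_l(\cdot)}_2 \le \prod_{l=h+1}^H(1+\frac{2c_{res}c_{w,0}L\sqrt{q}}{H}) \le e^{2c_{res}c_{w,0}L\sqrt{q}}$, where the $\sqrt{q}$ factor comes from the patch operator's Frobenius bound and we have used $\norm{\mat{W}^{(l)}(s)}_2 \le \norm{\mat{W}^{(l)}(0)}_2 + \norm{\mat{W}^{(l)}(s)-\mat{W}^{(l)}(0)}_F \le 2c_{w,0}\sqrt{m}$. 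Multiplying these four ingredients and including the prefactor $\frac{c_{res}}{H\sqrt{m}}$ yields the stated $Q'(s)$. The gradient with respect to $\vect{a}$ is bounded similarly and in fact easier, since it only involves $\norm{\vect{x}_i^{(H)}(s)}_F$.

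Given this gradient bound, the gradient descent update immediately gives $\norm{\mat{W}^{(h)}(s+1)-\mat{W}^{(h)}(s)}_F \le \eta Q'(s)$ and the analogous bound for $\vect{a}$. Now I invoke Condition~\ref{cond:linear_converge} to obtain $\norm{\vect{y}-\vect{u}(s)}_2 \le (1-\eta\lambda_0/2)^{s/2}\norm{\vect{y}-\vect{u}(0)}_2$, so
\begin{align*}
\norm{\mat{W}^{(h)}(s+1)-\mat{W}^{(h)}(0)}_F &\le \sum_{s'=0}^{s} \eta Q'(s') \le \frac{2}{\lambda_0}\cdot\frac{4c_{res}c_{x,0}La_{2,0}\sqrt{pq}e^{2c_{res}c_{w,0}L\sqrt{q}}\sqrt{n}\norm{\vect{y}-\vect{u}(0)}_2}{H} \cdot 2 \\
&\le R'\sqrt{m},
\end{align*}
where I used $\sum_{s'=0}^{\infty}(1-\eta\lambda_0/2)^{s'/2} \le 4/(\eta\lambda_0)$ and matched coefficients with the definition of $R'$. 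The same argument closes the induction for $\vect{a}$.

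The main obstacle is keeping the $\sqrt{q}$ and $\poly(p)$ factors organized: the patch operator contributes a $\sqrt{q}$ inside the exponential (through the per-layer multiplicative factor), a $\sqrt{q}$ in the prefactor (from $\norm{\phi_h(\vect{x}_i^{(h-1)})}_F$), and a $\sqrt{p}$ from summing over the $p$ output pixels when bounding the $\vect{a}$-gradient. Once these are accounted for, the proof is a direct transcription of Lemma~\ref{lem:dist_from_init_resnet}, so no new technical ideas are needed beyond the patch-operator bookkeeping.
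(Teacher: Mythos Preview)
Your proposal is correct and follows essentially the same induction-on-$s$ argument as the paper: bound the per-step gradient via $\norm{\vect{a}}_F$, $\norm{\phi_h(\vect{x}^{(h-1)})}_F\le\sqrt{q}\,\norm{\vect{x}^{(h-1)}}_F$, and the product of $(1+O(1/H))$ layer operators, then sum the resulting geometric series using Condition~\ref{cond:linear_converge}. One minor bookkeeping slip: the factor $\sqrt{p}$ in $Q'(s)$ actually enters through $\norm{\vect{a}(s)}_F\le 2a_{2,0}\sqrt{pm}$ (since $\vect{a}\in\mathbb{R}^{m\times p}$ with per-column bound $\norm{\vect{a}_{:,i}}_2\le a_{2,0}\sqrt{m}$), not from the $\vect{a}$-gradient as you wrote in (i), but this does not affect the argument.
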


The follow lemma bounds the norm of $\vect{I}_2$.
\begin{lem}\label{lem:cnn_I2}
If Condition~\ref{cond:linear_converge} holds for $k'=1,\ldots,k$ and $\eta\le c\lambda_0H^2n^{-2}poly(1/p)$ for some small constant $c$, we have \begin{align*}
\norm{\vect{I}_2(k)}_2 \le \frac{1}{8}\eta \lambda_0 \norm{\vect{y}-\vect{u}(k)}_2.
\end{align*}
\end{lem}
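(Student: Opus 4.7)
The plan is to mimic the structure of the proofs of Lemma~\ref{lem:i2_mlp} and Lemma~\ref{lem:resnet_I2}, with adjustments to handle the patch operator $\phi_h$ and the matrix-valued nature of $\vect{a}$ and $\vect{x}^{(h)}$ in the convolutional ResNet. Recall that
\[
I_2^i(k) = \int_{s=0}^{\eta} \bigl\langle L'(\params(k)),\; u_i'(\params(k)) - u_i'(\params(k)-sL'(\params(k))) \bigr\rangle\, ds,
\]
so by the triangle inequality it suffices to bound, for each layer $h$ and for every $s\in[0,\eta]$, both $\|L'^{(h)}(\params(k))\|_F$ and $\|u_i'^{(h)}(\params(k)) - u_i'^{(h)}(\params(k,s))\|_F$, where $\params(k,s)\triangleq \params(k)-sL'(\params(k))$; and similarly for the $\vect{a}$-block.

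First, I would control the gradient norms. Using the explicit chain-rule expression for $\partial L/\partial \mat{W}^{(h)}$ in the convolutional ResNet (with the patch operator $\phi_h$ inserted everywhere a pre-activation appears), the same computation that led to Lemma~\ref{lem:dist_from_init_cnn} gives $\|L'^{(h)}(\params(k))\|_F, \|L'^{(a)}(\params(k))\|_F \le Q'(k)$. For this step I rely on: (a) $\norm{\phi_h(\mat{x})}_F \le \sqrt{q}\|\mat{x}\|_F$, (b) the operator-norm bound $\norm{\mat{I}+\tfrac{c_{res}}{H\sqrt{m}}\mat{J}_i^{(l)}\mat{W}^{(l)}}_2 \le 1+O(1/H)$ (which gives the $e^{2c_{res}c_{w,0}L\sqrt{q}}$ factor when multiplied over $H$ layers), and (c) the per-layer norm bound $\|\vect{x}_i^{(h-1)}(k)\|_F \le 2c_{x,0}$ from Lemma~\ref{lem:pertubation_of_neuron_cnn} applied with the radius $R'$ from Lemma~\ref{lem:dist_from_init_cnn}.

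Next, to bound $\|u_i'^{(h)}(\params(k)) - u_i'^{(h)}(\params(k,s))\|_F$, I would apply Lemma~\ref{lem:difference_norm} (the abstract product-rule perturbation bound used in the previous two proofs). The ingredients I need are uniform bounds on how each factor in the product
\[
\phi_h(\vect{x}_i^{(h-1)}) \;\cdot\; \vect{a}^\top \prod_{l=h+1}^{H}\bigl(\mat{I}+\tfrac{c_{res}}{H\sqrt{m}}\mat{J}_i^{(l)}\mat{W}^{(l)}\bigr)\mat{J}_i^{(h)}
\]
changes when we move from $\params(k)$ to $\params(k,s)$. Through the routine calculations (using the Lipschitz/smooth properties of $\sigma$ from Condition~\ref{cond:lip_and_smooth}, the $\sqrt{q}$ blow-up from $\phi_h$, and $s\le \eta$, $\|sL'(\params(k))\|_F\le \eta Q'(k)$) one obtains
\[
\|\mat{W}^{(l)}(k)-\mat{W}^{(l)}(k,s)\|_F,\ \|\vect{a}(k)-\vect{a}(k,s)\|_F \le \eta Q'(k),
\]
\[
\|\vect{x}_i^{(h-1)}(k)-\vect{x}_i^{(h-1)}(k,s)\|_F \le O(\eta Q'(k)/\sqrt{m}),\qquad
\|\mat{J}_i^{(l)}(k)-\mat{J}_i^{(l)}(k,s)\|_F \le O(\eta \beta Q'(k)),
\]
with the constants carrying appropriate $\poly(p)$ dependence due to the patch-operator factor. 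Plugging these into Lemma~\ref{lem:difference_norm} yields, as in the ResNet case,
\[
\|u_i'^{(h)}(\params(k)) - u_i'^{(h)}(\params(k,s))\|_F \le \frac{C\,\poly(p)}{H}\,\eta\, Q'(k),
\]
where the $1/H$ factor is crucial and comes from the $c_{res}/H$ scaling of the residual block (the same mechanism as in Equation~\eqref{eqn:why_resnet}).

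Finally, I combine: summing over $h\in[H+1]$ and integrating $s$ over $[0,\eta]$ gives
\[
|I_2^i(k)| \le H\cdot \eta Q'(k)\cdot \frac{C\,\poly(p)}{H}\,\eta Q'(k) = C\,\poly(p)\,\eta^2\,Q'(k)^2.
\]
Then $\|\vect{I}_2(k)\|_2 \le \sqrt{n}\max_i|I_2^i(k)| \le C\poly(p)\sqrt{n}\,\eta^2 Q'(k)^2$. Substituting the definition $Q'(k) = O(\sqrt{n}\|\vect{y}-\vect{u}(k)\|_2/H)$ and using $\eta \le c\lambda_0 H^2 n^{-2}\poly(1/p)$ for a sufficiently small $c$ reduces this to $\tfrac{1}{8}\eta \lambda_0 \|\vect{y}-\vect{u}(k)\|_2$, as required. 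The main technical obstacle will be keeping careful track of the $\poly(p)$ factors arising from the patch operator $\phi_h$ (both through the $\sqrt{q}$ blow-up and through the summations over the $p$ spatial locations in the output layer $\vect{a}\in\mathbb{R}^{m\times p}$), and ensuring that these are absorbed by the $\poly(p)$ slack in the assumed bounds on $m$ and $\eta$.
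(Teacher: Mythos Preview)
Your proposal is correct and follows essentially the same route as the paper's own proof: bound $\|L'^{(h)}(\params(k))\|_F\le Q'(k)$ via the computation behind Lemma~\ref{lem:dist_from_init_cnn}, bound $\|u_i'^{(h)}(\params(k))-u_i'^{(h)}(\params(k,s))\|_F$ by feeding the one-step perturbation estimates for $\mat{W}^{(l)},\vect{a},\vect{x}_i^{(h-1)},\mat{J}_i^{(l)}$ into Lemma~\ref{lem:difference_norm}, then combine and absorb the $\poly(p)$ factors using the assumed step size. The paper's write-up is terser (it simply says ``similar to the proof of Lemma~\ref{lem:i2_mlp}'' and records the resulting constant), but the skeleton and all the key ingredients match yours.
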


Next we also bound the quadratic term.
\begin{lem}\label{lem:cnn_quadratic}
If Condition~\ref{cond:linear_converge} holds for $k'=1,\ldots,k$ and $\eta\le c\lambda_0H^2n^{-2}poly(1/p)$ for some small constant $c$, we have
$\norm{\vect{u}(k+1)-\vect{u}(k)}_2^2\le \frac{1}{8}\eta \lambda_0 \norm{\vect{y}-\vect{u}(k)}_2^2$.
\end{lem}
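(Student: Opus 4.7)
The plan is to mimic the proofs of Lemma~\ref{lem:quadratic_term} and Lemma~\ref{lem:quadratic_resnet}, but carefully tracking the extra $\poly(p)$ factors coming from the convolutional structure (the $\sqrt{q}$ blow-up in Lemma~\ref{lem:pertubation_of_neuron_cnn} and the fact that $\vect{a}\in\mathbb{R}^{m\times p}$ contributes a factor $\sqrt{p}$ in Frobenius norm). The starting point is the bilinear representation $u_i(k)=\inner{\vect{a}(k),\vect{x}_i^{(H)}(k)}$, so that
\begin{align*}
u_i(k+1)-u_i(k) &= \inner{\vect{a}(k+1)-\vect{a}(k),\,\vect{x}_i^{(H)}(k+1)} + \inner{\vect{a}(k),\,\vect{x}_i^{(H)}(k+1)-\vect{x}_i^{(H)}(k)}.
\end{align*}
Squaring, using $(a+b)^2\le 2a^2+2b^2$ and Cauchy--Schwarz (in Frobenius norm, as in Lemma~\ref{lem:quadratic_resnet}), then summing over $i\in[n]$ yields
\begin{align*}
\norm{\vect{u}(k+1)-\vect{u}(k)}_2^2
\le 2\norm{\vect{a}(k+1)-\vect{a}(k)}_F^2 \sum_{i=1}^n \norm{\vect{x}_i^{(H)}(k+1)}_F^2
+ 2\norm{\vect{a}(k)}_F^2 \sum_{i=1}^n \norm{\vect{x}_i^{(H)}(k+1)-\vect{x}_i^{(H)}(k)}_F^2.
\end{align*}

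Next I would bound the four pieces using earlier lemmas. By Lemma~\ref{lem:dist_from_init_cnn}, $\norm{\vect{a}(k+1)-\vect{a}(k)}_F\le \eta Q'(k)$ and $\norm{\mat{W}^{(h)}(k+1)-\mat{W}^{(h)}(k)}_F\le \eta Q'(k)$ for every $h\in[H]$. By Lemma~\ref{lem:lem:init_norm_cnn} and the triangle inequality with Lemma~\ref{lem:pertubation_of_neuron_cnn}, $\norm{\vect{x}_i^{(H)}(k+1)}_F\le 2c_{x,0}$; also $\norm{\vect{a}(k)}_F^2\le 2\norm{\vect{a}(0)}_F^2\le 2pa_{2,0}^2 m$ since $\vect{a}\in\mathbb{R}^{m\times p}$. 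Applying Lemma~\ref{lem:pertubation_of_neuron_cnn} one step at a time with $R=\eta Q'(k)/\sqrt{m}$ gives
\begin{align*}
\norm{\vect{x}_i^{(H)}(k+1)-\vect{x}_i^{(H)}(k)}_F \;\le\; \bigl(\sqrt{c_\sigma}L\sqrt{q}+c_{x,0}/c_{w,0}\bigr)e^{2c_{w,0}L\sqrt{q}c_{res}}\cdot \tfrac{\eta Q'(k)}{\sqrt{m}} \;\triangleq\; c_x\tfrac{\eta Q'(k)}{\sqrt{m}}.
\end{align*}

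Plugging these estimates in gives
\begin{align*}
\norm{\vect{u}(k+1)-\vect{u}(k)}_2^2 \;\le\; 8n c_{x,0}^2 \eta^2 Q'(k)^2 + 4np a_{2,0}^2 c_x^2 \eta^2 Q'(k)^2 \;=\; C\, n\, \poly(p)\, \eta^2 Q'(k)^2,
\end{align*}
for an appropriate constant $C$ and a polynomial factor absorbing $p,q,a_{2,0},c_x,c_{x,0}$. Substituting the definition of $Q'(k)$ from Lemma~\ref{lem:dist_from_init_cnn}, namely $Q'(k)^2 \lesssim \poly(p)\,n\,\norm{\vect{y}-\vect{u}(k)}_2^2/H^2$, produces a bound of the form $\eta^2 n^2 \poly(p)H^{-2}\norm{\vect{y}-\vect{u}(k)}_2^2$. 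The assumed step size $\eta\le c\lambda_0H^2 n^{-2}\poly(1/p)$ then reduces this to $\tfrac{1}{8}\eta\lambda_0\norm{\vect{y}-\vect{u}(k)}_2^2$, as required.

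The only real obstacle is purely bookkeeping: making sure the $\poly(p)$ in $c_x$, in $\norm{\vect{a}(k)}_F^2$, and in $Q'(k)^2$ all combine into a single $\poly(p)$ factor that matches the $\poly(1/p)$ in the step-size condition. Once the dependencies on $p$, $q$, and the exponential $e^{2c_{w,0}L\sqrt{q}c_{res}}$ (which is a constant since we treat $q$ as a constant) are tracked cleanly, the argument is structurally identical to the ResNet case; no new ideas are needed beyond those already developed for Lemma~\ref{lem:quadratic_resnet}.
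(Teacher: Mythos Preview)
Your proposal is correct and follows essentially the same approach as the paper: the same bilinear decomposition of $u_i(k+1)-u_i(k)$, the same Cauchy--Schwarz/Frobenius bound, and the same piecewise estimates via Lemmas~\ref{lem:lem:init_norm_cnn}, \ref{lem:pertubation_of_neuron_cnn}, and \ref{lem:dist_from_init_cnn}, arriving at $8n c_{x,0}^2\eta^2 Q'(k)^2 + 4np\,a_{2,0}^2 c_x^2\eta^2 Q'(k)^2$ before invoking the step-size bound. Your remark that the only remaining work is tracking the $\poly(p)$ factors is exactly right; structurally nothing beyond the ResNet argument is needed.
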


Now using the same argument as in the proof for multilayer fully connected neural network, we finish our proof for CNN.

	\subsection{Proofs of Lemmas}
	\label{sec:technical_proofs_conv_resnet}
	\begin{proof}[Proof of Lemma~\ref{lem:lem:init_norm_cnn}]
	We will bound $\norm{\vect{x}_i^{(h)}(0)}_F$ layer by layer. For the first layer, we can calculate
	\begin{align*}
	\expect\left[\norm{\vect{x}_i^{(1)}(0)}_F^2\right] = &c_{\sigma}\sum_{l=1}^{p_1} \expect\left[
	\relu{\vect{w}_{r}^{(1)}(0)^\top \vect{x}_{i,l}}^2
	\right] \\
	\ge&\frac{c_{\sigma}}{c_{\sigma,\frac{1}{\sqrt{p}}}},
	\end{align*}
	where the inequality we use the definition of $c_{\sigma,\frac{1}{\sqrt{p}}}$ and the fact that there must exist $l'\in [p]$ such that $\norm{\vect{x}_{i,l'}}_2^2 \ge \frac{1}{p_1}\ge\frac{1}{p}$. For the variance,
	\begin{align*}
	\variance\left[\norm{\vect{x}_i^{(1)}(0)}_F^2\right] = &\frac{c_{\sigma}^2}{m} \variance\left[\sum_{l=1}^{p_1}\relu{\vect{w}_r^{(1)}(0)^\top \vect{x}_{i,l}}^2\right] \\
	\le & \frac{c_{\sigma}^2}{m} \expect\left[\left(\sum_{l=1}^{p_1} \left(\abs{\sigma(0)}+
	L\abs{\vect{w}_r^{(1)}(0)^\top \vect{x}_{i,l}}\right)^2\right)^2\right] \\
	\le &\frac{p^2C_2}{m} ,
	\end{align*}
	where $C_2\triangleq \sigma(0)^4+4\abs{\sigma(0)}^3L\sqrt{2/\pi}+6\sigma(0)^2L^2+8\abs{\sigma(0)}L^3\sqrt{2/\pi}+32L^4$. We have with probability at least $1-\frac{\delta}{n}$,
	\begin{align*}
\norm{\vect{x}_i^{(1)}(0)}_F^2 \ge \frac{c_{\sigma}}{2c_{\sigma,\frac{1}{\sqrt{p}}}}.
	\end{align*}
	It is easy to get its upper bound
	\begin{align*}
	 \norm{\vect{x}_i^{(1)}(0)}_F^2  = \frac{c_{\sigma}}{m}\norm{\sigma\left(\mat{W}^{(1)}\phi(\vect{x}_i)\right)}_F^2\le qL^2c_{\sigma}c_{w,0}^2.
	\end{align*}
	By defination we have for $2\le h\le H$
	\begin{align*}
	\norm{\vect{x}_i^{(h-1)}(0)}_F-\norm{\frac{c_{res}}{H\sqrt{m}}\relu{\mat{W}^{(h)}(0)\phi\left(\vect{x}_i^{(h-1)}(0)\right) }}_F \le \norm{\vect{x}_i^{(h)}(0)}_F \\\le \norm{\vect{x}_i^{(h-1)}(0)}_F+\norm{\frac{c_{res}}{H\sqrt{m}}\relu{\mat{W}^{(h)}(0)\phi\left(\vect{x}_i^{(h-1)}(0)\right)}}_F,
	\end{align*}
	where
	\[\norm{\frac{c_{res}}{H\sqrt{m}}\relu{\mat{W}^{(h)}(0)\phi\left(\vect{x}_i^{(h-1)}(0)\right) }}_F \le \frac{\sqrt{q}c_{res}c_{w,0}L}{H}\norm{\vect{x}_i^{(h-1)}(0)}_F .\]
	Thus 
	\[\norm{\vect{x}_i^{(h-1)}(0)}_F\left(1-\frac{\sqrt{q}c_{res}c_{w,0}L}{H}\right)  \le \norm{\vect{x}^{(h)}(0)}_F \le \norm{\vect{x}_i^{(h-1)}(0)}_F\left(1+\frac{\sqrt{q}c_{res}c_{w,0}L}{H}\right), \]
	which implies
	\[\sqrt{\frac{c_{\sigma}}{2c_{\sigma,\frac{1}{\sqrt{p}}}} }e^{-\sqrt{q}c_{res}c_{w,0}L}  \le \norm{\vect{x}^{(h)}(0)}_F \le \sqrt{qL^2c_{\sigma}c_{w,0}^2}e^{\sqrt{q}c_{res}c_{w,0}L}.\]
	Choosing $c_{x,0}=\max\{\sqrt{qL^2c_{\sigma}c_{w,0}^2},\sqrt{\frac{2c_{\sigma,\frac{1}{\sqrt{p}}}}{c_{\sigma}} }\}e^{\sqrt{q}c_{res}c_{w,0}L}$ and using union bounds over $[n]$, we prove the lemma.
	
\end{proof}

\begin{proof}[Proof of Lemma~\ref{lem:pertubation_of_neuron_cnn}]
	We prove this lemma by induction.
	Our induction hypothesis is \begin{align*}
	\norm{\vect{x}^{(h)}(k)-\vect{x}^{(h)}(0)}_F \le g(h)  ,
	\end{align*}	where \begin{align*}
	g(h) = g(h-1)\left[1+\frac{2c_{res}c_{w,0}L\sqrt{q}}{H}\right] + \frac{c_{res}L\sqrt{q}}{H}R c_{x,0}.
	\end{align*}
	For $h=1$, we have
	\begin{align*}
	\norm{\vect{x}^{(1)}(k)-\vect{x}^{(1)}(0)}_F\le &\sqrt{\frac{c_{\sigma}}{m}} \norm{\relu{\mat{W}^{(1)}(k) \phi_1(\vect{x})} -\relu{\mat{W}^{(1)}(0)\phi_1(\vect{x}) }}_F\\
	\le& \sqrt{\frac{c_{\sigma}}{m}}L\sqrt{q}\norm{\mat{W}^{(1)}(k)-\mat{W}^{(1)}(0)}_F \le \sqrt{c_{\sigma}}L\sqrt{q}R,
	\end{align*}
	which implies $g(1)=\sqrt{c_{\sigma}}L\sqrt{q}R$, for $2\le h\le H$, we have
	\begin{align*}
	&\norm{\vect{x}^{(h)}(k)-\vect{x}^{(h)}(0)}_F\\ \le &\frac{c_{res}}{H\sqrt{m}} \norm{\relu{\mat{W}^{(h)}(k)\phi_h\left(\vect{x}^{(h-1)}(k)\right) } -\relu{\mat{W}^{(h)}(0)\phi_h\left(\vect{x}^{(h-1)}(0)\right)}}_F+\norm{\vect{x}^{(h-1)}(k)-\vect{x}^{(h-1)}(0)}_F \\
	\le & \frac{c_{res}}{H\sqrt{m}} \norm{\relu{\mat{W}^{(h)}(k)\phi_h\left(\vect{x}^{(h-1)}(k)\right)} -\relu{\mat{W}^{(h)}(k)\phi_h\left(\vect{x}^{(h-1)}(0)\right)}}_F\\ 
	& + \frac{c_{res}}{H\sqrt{m}} \norm{\relu{\mat{W}^{(h)}(k)\phi_h\left(\vect{x}^{(h-1)}(0)\right)} -\relu{\mat{W}^{(h)}(0)\phi_h\left(\vect{x}^{(h-1)}(0)\right)}}_F +\norm{\vect{x}^{(h-1)}(k)-\vect{x}^{(h-1)}(0)}_F\\
	\le & \frac{L\sqrt{q}c_{res}}{H\sqrt{m}}\left(
	\norm{\mat{W}^{(h)}(0)}_2 + \norm{\mat{W}^{(h)}(k)-\mat{W}^{(h)}(0)}_F
	\right) \cdot \norm{\vect{x}^{(h-1)}(k)-\vect{x}^{(h-1)}(0)}_F \\
	& + \frac{L\sqrt{q}c_{res}}{H\sqrt{m}}\norm{\mat{W}^{(h)}(k)-\mat{W}^{(h)}(0)}_F \norm{\vect{x}^{h-1}(0)}_F +\norm{\vect{x}^{(h-1)}(k)-\vect{x}^{(h-1)}(0)}_F\\
	\le &\left[1+\frac{L\sqrt{q}c_{res}}{H\sqrt{m}}\left(c_{w,0}\sqrt{m}+R\sqrt{m}\right)\right]g(h-1) + \frac{L\sqrt{q}c_{res}}{H\sqrt{m}} \sqrt{m} R c_{x,0}\\
	\le &\left(1+\frac{2c_{w,0}L\sqrt{q}c_{res}}{H}\right)g(h-1)+\frac{1}{H} L\sqrt{q}c_{res}c_{x,0}R . 
	\end{align*}
	Lastly, simple calculations show $g(h) \le\left(\sqrt{c_{\sigma}}L\sqrt{q}+\frac{c_{x,0}}{c_{w,0}}\right)e^{2c_{w,0}L\sqrt{q}c_{res}} R$. 
	
\end{proof}

\begin{proof}[Proof of Lemma~\ref{lem:close_to_init_small_perturbation_cnn_smooth}]
Similar to Lemma~\ref{lem:close_to_init_small_perturbation_res_smooth}, define $z_{i,l,r}=\left(\vect{w}_r^{(H)}\right)^\top \vect{x}_{i,l}^{(H-1)}$, we have
	\begin{align*}
	&\abs{\mat{G}_{i,j}^{(H)}(k)-\mat{G}_{i,j}^{(H)}(0)} \\
	= &\frac{c_{res}^2}{H^2}\big{|}\sum_{l=1}^{p}\sum_{k=1}^{p}\vect{x}_{i,l}^{(H-1)}(k)^\top \vect{x}_{j,k}^{(H-1)}(k)
	\frac{1}{m}\sum_{r=1}^{m}a_{r,l}(k)a_{r,k}(k)\sigma'\left(z_{i,l,r}(k)\right)\sigma'\left(z_{j,k,r}(k)\right) 
	\\
	&-\sum_{l=1}^{p}\sum_{k=1}^{p}\vect{x}_{i,l}^{(H-1)}(0)^\top \vect{x}_{j,k}^{(H-1)}(0)
	\frac{1}{m}\sum_{r=1}^{m}a_{r,l}(0)a_{r,k}(0)\sigma'\left(z_{i,l,r}(0)\right)\sigma'\left(z_{j,k,r}(0)\right) 
	\big{|} \\
	\le & \frac{ c_{res}^2L^2a_{2,0}^2}{H^2}\abs{\sum_{l=1}^{p}\sum_{k=1}^{p}\vect{x}_{i,l}^{(H-1)}(k)^\top \vect{x}_{j,k}^{(H-1)}(k)- \vect{x}_{i,l}^{(H-1)}(0)^\top \vect{x}_{j,k}^{(H-1)}(0)} \\
	& + \frac{c_{res}^2}{H^2}\sum_{l=1}^{p}\sum_{k=1}^{p}\abs{\vect{x}_{i,l}^{(H-1)}(0)^\top \vect{x}_{j,k}^{(H-1)}(0) }\frac{1}{m} \sum_{r=1}^{m}\abs{a_{r,l}(0)a_{r,k}(0)}\abs{
		\sigma'\left(z_{i,l,r}(k)\right)\sigma'\left(z_{j,k,r}(k)\right) 
		-	\sigma'\left(z_{i,l,r}(0)\right)\sigma'\left(z_{j,k,r}(0)\right) 
	} \\
	&+\frac{c_{res}^2}{H^2}L^2\sum_{l=1}^{p}\sum_{k=1}^{p}\abs{\vect{x}_{i,l}^{(H-1)}(k)^\top \vect{x}_{j,k}^{(H-1)}(k) }\frac{1}{m} \sum_{r=1}^{m}\abs{a_{r,l}(k)a_{r,k}(k)-a_{r,l}(0)a_{r,k}(0)}
	 \\
	\triangleq& \frac{c_{res}^2}{H^2}\left(I_1^{i,j} + I_2^{i,j}+I_3^{i,j}\right).
	\end{align*}
	For $I_1^{i,j}$, using Lemma~\ref{lem:pertubation_of_neuron_cnn}, we have \begin{align*}
	I_1^{i,j} = &L^2a_{2,0}^2 \abs{\sum_{l=1}^{p}\sum_{k=1}^{p}\vect{x}_{i,l}^{(H-1)}(k)^\top \vect{x}_{j,k}^{(H-1)}(k)- \vect{x}_{i,l}^{(H-1)}(0)^\top \vect{x}_{j,k}^{(H-1)}(0)} \\
	\le & L^2a_{2,0}^2\sum_{l=1}^{p}\sum_{k=1}^{p}\abs{
		(\vect{x}_{i,l}^{(H-1)}(k)-\vect{x}_{i,l}^{(H-1)}(0))^\top \vect{x}_{j,k}^{(H-1)}(k)} + L^2a_{2,0}^2\sum_{l=1}^{p}\sum_{k=1}^{p}\abs{
		\vect{x}_{i,l}^{(H-1)}(0)^\top(\vect{x}_{j,k}^{(H-1)}(k)-\vect{x}_{j,k}^{(H-1)}(0))}  \\
	\le& L^2a_{2,0}^2\sqrt{\sum_{l=1}^{p}\sum_{k=1}^{p}\norm{\vect{x}_{i,l}^{(H-1)}(k)-\vect{x}_{i,l}^{(H-1)}(0)}_2^2}\sqrt{\sum_{l=1}^{p}\sum_{k=1}^{p}\norm{\vect{x}_{j,k}^{(H-1)}(k)}_2^2} \\
	&+L^2a_{2,0}^2\sqrt{\sum_{l=1}^{p}\sum_{k=1}^{p}\norm{\vect{x}_{i,l}^{(H-1)}(0)}_2^2}\sqrt{\sum_{l=1}^{p}\sum_{k=1}^{p}\norm{\vect{x}_{j,k}^{(H-1)}(k)-\vect{x}_{j,k}^{(H-1)}(0)}_2^2}\\
	\le & L^2a_{2,0}^2p \norm{\vect{x}_{i}^{(H-1)}(k)-\vect{x}_{i}^{(H-1)}(0)}_F\norm{\vect{x}_{j}^{(H-1)}(k)}_F + L^2a_{2,0}^2p \norm{\vect{x}_{i}^{(H-1)}(0)}_F\norm{\vect{x}_{j}^{(H-1)}(k)-\vect{x}_{j}^{(H-1)}(0)}_F \\
	\le &3 c_{x,0} c_xL^2 a_{2,0}^2p R,
	\end{align*}
	where $c_x \triangleq \left(\sqrt{c_{\sigma}}L\sqrt{q}+\frac{c_{x,0}}{c_{w,0}}\right)e^{2c_{res}c_{w,0}L\sqrt{q}} $.
	To bound $I_{2}^{i,j}$, we have \begin{align*}
	I_2^{i,j}= &
	\sum_{l=1}^{p}\sum_{k=1}^{p}\abs{\vect{x}_{i,l}^{(H-1)}(0)^\top \vect{x}_{j,k}^{(H-1)}(0) }\frac{1}{m}  \sum_{r=1}^{m}\abs{a_{r,l}(0)a_{r,k}(0)}\abs{
		\sigma'\left(z_{i,l,r}(k)\right)\sigma'\left(z_{j,k,r}(k)\right) 
		-	\sigma'\left(z_{i,l,r}(0)\right)\sigma'\left(z_{j,k,r}(0)\right) 
	}\\
	\le&\sum_{l=1}^{p}\sum_{k=1}^{p}\abs{\vect{x}_{i,l}^{(H-1)}(0)^\top \vect{x}_{j,k}^{(H-1)}(0) }\frac{\beta L}{m} \left(\sum_{r=1}^{m}\abs{a_{r,l}(0)a_{r,k}(0)}\left(\abs{z_{i,l,r}(k)-z_{i,l,r}(0)}+\abs{z_{j,k,r}(k)-z_{j,k,r}(0)}\right)\right)  \\
	\le&\frac{\beta L}{m} \sqrt{\sum_{l=1}^{p}\sum_{k=1}^{p}\norm{\vect{x}_{i,l}^{(H-1)}(0)}_2^2\norm{ \vect{x}_{j,k}^{(H-1)}(0)}_2^2 }\\ &\left(\sqrt{\sum_{l=1}^{p}\sum_{k=1}^{p}\left(\sum_{r=1}^{m}\abs{a_{r,l}(0)a_{r,k}(0)}\abs{z_{i,l,r}(k)-z_{i,l,r}(0)}\right)^2}+\sqrt{\sum_{l=1}^{p}\sum_{k=1}^{p}\left(\sum_{r=1}^{m}\abs{a_{r,l}(0)a_{r,k}(0)}\abs{z_{j,k,r}(k)-z_{j,k,r}(0)}\right)^2}\right)\\
	\le & \frac{\beta Lc_{x,0}^2a_{4,0}^2}{m} \left(
	\sqrt{m\sum_{l=1}^{p}\sum_{k=1}^{p}\sum_{r=1}^{m}\abs{z_{i,l,r}(k)-z_{i,l,r}(0)}^2}+\sqrt{m\sum_{l=1}^{p}\sum_{k=1}^{p}\sum_{r=1}^{m}\abs{z_{j,k,r}(k)-z_{j,k,r}(0)}^2}
	\right) \\
	\le & \frac{\beta La_{4,0}^2\sqrt{p} c_{x,0}^2}{\sqrt{m}}\left(\norm{\vect{z}_i}_F+\norm{\vect{z}_j}_F\right) .
	\end{align*}

	Using the same proof for Lemma~\ref{lem:pertubation_of_neuron_cnn}, it is easy to see \begin{align*}
	\norm{\vect{z}_i}_F\le \left(2c_xc_{w,0}\sqrt{q}+c_{x,0}\right) R \sqrt{m}.
	\end{align*}
	Thus
	\begin{align*}
	I_2^{i,j}\le 2\beta La_{4,0}^2\sqrt{p} c_{x,0}^2\left(2c_xc_{w,0}\sqrt{q}+c_{x,0}\right) R .
	\end{align*}
		Similarly for $I_3^{i,j}$, we have
	\begin{align*}
	I_3^{i,j}=&\frac{c_{res}^2}{H^2}L^2\sum_{l=1}^{p}\sum_{k=1}^{p}\abs{\vect{x}_{i,l}^{(H-1)}(k)^\top \vect{x}_{j,k}^{(H-1)}(k) }\frac{1}{m} \sum_{r=1}^{m}\abs{a_{r,l}(k)a_{r,k}(k)-a_{r,l}(0)a_{r,k}(0)}\\
	\le &\frac{c_{res}^2}{H^2}L^2\sum_{l=1}^{p}\sum_{k=1}^{p}\abs{\vect{x}_{i,l}^{(H-1)}(k)^\top \vect{x}_{j,k}^{(H-1)}(k) }\frac{1}{m} \sum_{r=1}^{m} \left(\abs{a_{r,l}(k)-a_{r,l}(0)}\abs{a_{r,k}(k)} +\abs{a_{r,k}(k)-a_{r,k}(0)}\abs{a_{r,l}(0)} \right)\\
	\le &\frac{c_{res}^2}{H^2}L^2\sum_{l=1}^{p}\sum_{k=1}^{p}\abs{\vect{x}_{i,l}^{(H-1)}(k)^\top \vect{x}_{j,k}^{(H-1)}(k) }\frac{1}{m}  \left(\norm{\vect{a}_{:,l}(k)-\vect{a}_{:,l}(0)}_2\norm{\vect{a}_{:,k}(k)}_2 + \norm{\vect{a}_{:,k}(k)-\vect{a}_{:,k}(0)}_2\norm{\vect{a}_{:,l}(0)}_2 \right)\\
	\le &\frac{c_{res}^2}{H^2{m}}L^2\sqrt{\sum_{l=1}^{p}\sum_{k=1}^{p}\norm{\vect{x}_{i,l}^{(H-1)}(k)}_2^2\norm{ \vect{x}_{j,k}^{(H-1)}(k)}_2^2 }\\ &\left(\sqrt{\sum_{l=1}^{p}\sum_{k=1}^{p}\norm{\vect{a}_{:,l}(k)-\vect{a}_{:,l}(0)}_2^2\norm{\vect{a}_{:,k}(k)}_2^2 } +\sqrt{\sum_{l=1}^{p}\sum_{k=1}^{p}\norm{\vect{a}_{:,k}(k)-\vect{a}_{:,k}(0)}_2^2\norm{\vect{a}_{:,l}(0)}_2^2}\right)\\
	\le&\frac{c_{res}^2}{H^2{m}}L^2\norm{\vect{x}_{i}^{(H-1)}(k)}_F\norm{ \vect{x}_{j}^{(H-1)}(k)}_F\left(\norm{\vect{a}(k)-\vect{a}(0)}_F\norm{\vect{a}(k)}_F+(\norm{\vect{a}(k)-\vect{a}(0)}_F\norm{\vect{a}(0)}_F\right)\\
	\le& \frac{12a_{2,0}c_{res}^2c_{x,0}^2L^2\sqrt{p}R}{H^2}.
	\end{align*}
	Therefore we can bound the perturbation\begin{align*}
	\norm{\mat{G}^{(H)}(k) - \mat{G}^{(H)}(0)}_2\le&\norm{\mat{G}^{(H)}(k) - \mat{G}^{(H)}(0)}_F\\
	=&\sqrt{\sum_{(i,j)}^{{n,n}} \abs{\mat{G}_{i,j}^{(H)}(k)-\mat{G}_{i,j}^{(H)}(0)}^2} \\
	\le &\frac{c_{res}^2}{H^2}\left[3 c_{x,0} c_xLa_{2,0}^2p+2\beta c_{x,0}^2a_{4,0}^2\sqrt{p} \left(2c_xc_{w,0}\sqrt{q}+c_{x,0}\right)+12c_{x,0}^2La_{2,0}\sqrt{p}\right]LnR  .
	\end{align*}
	Plugging in the bound on $R$, we have the desired result.	
\end{proof}

\begin{proof}[Proof of Lemma~\ref{lem:dist_from_init_cnn}]
	We will prove this corollary by induction. The induction hypothesis is
		\begin{align*}
	\norm{\mat{W}^{(h)}(s)-\mat{W}^{(h)}(0)}_F &\le \sum_{s'=0}^{s-1} (1-\frac{\eta \lambda_0}{2})^{s'/2}\frac{1}{4}\eta \lambda_0 R'\sqrt{m}\le R'\sqrt{m}, s\in [k+1],\\
	\norm{\vect{a}(s)-\vect{a}(0)}_F &\le \sum_{s'=0}^{s-1} (1-\frac{\eta \lambda_0}{2})^{s'/2}\frac{1}{4}\eta \lambda_0 R'\sqrt{m}\le R'\sqrt{m}, s\in [k+1].
	\end{align*}
	First it is easy to see it holds for $s'=0$. Now suppose it holds for $s'=0,\ldots,s$, we consider $s'=s+1$. Similar to Lemma~\ref{lem:dist_from_init}, we have
	\begin{align*} 
	&\norm{\mat{W}^{(h)}(s+1)-\mat{W}^{(h)}(s)}_F\\
	\le &\eta \frac{c_{res}L}{H\sqrt{m}} \norm{\vect{a}}_F \sum_{i=1}^{n}\abs{y_i-u(s)}\norm{\phi_h\left(\vect{x}^{(h-1)}(s)\right)}_F \prod_{k=h+1}^H \norm{\mat{I}+\frac{c_{res}}{H\sqrt{m}}\mat{W}^{(k)}(s)\phi_k }_{op}\\
	\le&  2\eta c_{res} c_{x,0}La_{2,0}\sqrt{pq}e^{2c_{res}c_{w,0}L\sqrt{q}}\sqrt{n} \norm{\vect{y}-\vect{u}(s)}_2/H\\
	=& \eta Q'(s)\\
	\le& (1-\frac{\eta \lambda_0}{2})^{s/2}\frac{1}{4}\eta \lambda_0 R'\sqrt{m},
	\end{align*}
	where $\norm{\cdot}_{op}$ denotes the operator norm. 	Similarly, we have \begin{align*}
	\norm{\vect{a}(s+1)-\vect{a}(s)}_2 \le& 2\eta c_{x,0} \sum_{i=1}^{n}\abs{y_i-u(s)}\\
	\le& \eta Q'(s)\\
	\le& (1-\frac{\eta \lambda_0}{2})^{s/2}\frac{1}{4}\eta \lambda_0 R'\sqrt{m}.
	\end{align*}
	Thus
	\begin{align*}
	&\norm{\mat{W}^{(h)}(s+1)-\mat{W}^{(h)}(0)}_F\\
	\le& \norm{\mat{W}^{(h)}(s+1)-\mat{W}^{(h)}(s)}_F+\norm{\mat{W}^{(h)}(s)-\mat{W}^{(h)}(0)}_F\\
	\le&\sum_{s'=0}^{s} \eta (1-\frac{\eta \lambda_0}{2})^{s'/2}\frac{1}{4}\eta \lambda_0 R'\sqrt{m}.
	\end{align*}
		Similarly,
	\begin{align*}
	&\norm{\vect{a}(s+1)-\vect{a}(0)}_2\\
	\le&\sum_{s'=0}^{s} \eta (1-\frac{\eta \lambda_0}{2})^{s'/2}\frac{1}{4}\eta \lambda_0 R'\sqrt{m}.\\
	\end{align*}
\end{proof}

\begin{proof}[Proof of Lemma~\ref{lem:cnn_I2}]
\begin{align*}
\abs{I_2^i} \le &\eta \max_{0\le s\le \eta} \sum_{h=1}^{H}  \norm{L'^{(h)}(\params(k))}_F \norm{  u'^{(h)}_i\left(\params(k)\right) -u'^{(h)}_i\left(\params(k)-s L'^{(h)}(\params(k))\right) }_F .
\end{align*}
For the gradient norm, we have 
\begin{align*}
&\norm{L'^{(h)}(\params(k))}_F\\
\le &\frac{Lc_{res}}{H\sqrt{m}} \norm{\vect{a}(k)}_F \sum_{i=1}^{n}\abs{y_i-u_i(k)}\norm{\phi_h\left(\vect{x}_i^{(h-1)}(k)\right)}_F \prod_{k=h+1}^H \norm{\mat{I}+\frac{c_{res}}{H\sqrt{m}}\mat{J}_i^{(k)}(k)\mat{W}^{(k)}(k)\phi_k }_{op},
\end{align*}
which we have bounded in Lemma~\ref{lem:dist_from_init_cnn}, thus
\begin{align*}
\norm{L'^{(h)}(\params(k))}_F \le  Q'(k).
\end{align*}
Let $\params(k,s)=\params(k)-s L'(\params(k))$
,similar to the proof of Lemma~\ref{lem:i2_mlp}, we have
\begin{align*}
&\norm{  u'^{(h)}_i\left(\params(k)\right) - u'^{(h)}_i\left(\params(k,s)\right)}_F \\
\le &\frac{2}{H}c_{res}c_{x,0}La_{2,0}\sqrt{q}e^{2c_{res}Lc_{w,0}\sqrt{q}}\eta\frac{Q'(k)}{\sqrt{m}}\left(\frac{c_x}{c_{x,0}}+\frac{2}{L}\left(c_{x,0}+c_{w,0}c_x\right)\beta \sqrt{m}+4\sqrt{q}c_{w,0}\left(c_{x,0}+c_{w,0}c_x\right)\beta\sqrt{m}+(L+1)\sqrt{q} \right)\\
\le &\frac{24}{H}c_{res}c_{x,0}La_{2,0}\sqrt{q}c_{w,0}e^{2c_{res}Lc_{w,0}\sqrt{q}} \left(c_{x,0}+c_{w,0}c_x\right) \beta \eta Q'(k).
\end{align*}
Thus
\begin{align*}
\abs{I^i_2}\le 24c_{res}c_{x,0}La_{2,0}\sqrt{q}c_{w,0}e^{2c_{res}Lc_{w,0}} \left(c_{x,0}+c_{w,0}c_x\right) \beta \eta^2 \lambda_0\sqrt{m} Q'(k) R' \le \frac{1}{8}\eta \lambda_0 \norm{\vect{y}-\vect{u}(k)}_2.
\end{align*}
where we used the bound of $\eta$ and that $\norm{\vect{y}-\vect{u}(0)}_2=O(\sqrt{n})$.
\end{proof}

\begin{proof}[Proof of Lemma~\ref{lem:cnn_quadratic}]
\begin{align*}
\norm{\vect{u}(k+1)-\vect{u}(k)}_2^2 = & \sum_{i=1}^{n}\left(\langle \vect{a}(k+1), \vect{x}_i^{(H)}(k+1)\rangle-\langle \vect{a}(k), \vect{x}_i^{(H)}(k+1)\rangle\right)^2 \\
\le& \sum_{i=1}^{n}\left(\langle \vect{a}(k+1)-\vect{a}(k), \vect{x}_i^{(H)}(k+1) \rangle+\langle \vect{a}(k), \vect{x}_i^{(H)}(k+1)-\vect{x}_i^{(H)}(k)\rangle \right)^2\\
\le &2\norm{\vect{a}(k+1)-\vect{a}(k)}_F^2\sum_{i=1}^{n}\norm{\vect{x}_i^{(H)}(k+1)}_F^2+2\norm{\vect{a}(k)}_F^2\sum_{i=1}^{n}\norm{\vect{x}_i^{(H)}(k+1)-\vect{x}_i^{(H)}(k)}_F^2\\
\le & 8n\eta^2c_{x,0}^2Q'(k)^2+4 np \left(\eta a_{2,0} c_xQ'(k)\right)^2\\
\le &\frac{1}{8}\eta \lambda_0 \norm{\vect{y}-\vect{u}(k)}_2^2.
\end{align*}
\end{proof}

\section{Analysis of Random Initialization}
	\label{sec:general_formulation}
	\subsection{A General Framework for Analyzing Random Initialization in First $(H-1)$ Layers}
In this section we provide a self-contained framework to analyze the Gram matrix at the initialization phase.
There are two main objectives.
First, we provide the expression of the Gram matrix as $m \rightarrow \infty$, i.e., the population Gram matrix.
Second, we quantitatively study how much over-parameterization is needed to ensure the Gram matrix generated by the random initialization.
The bound will depend on number of samples $n$ and properties of the activation function.
This analysis framework is fully general that it can explain fully connected neural network, ResNet, convolutional neural considered in this paper and other neural network architectures that satisfy the general setup defined below.

We begin with some notations.
Suppose that we have a sequence of real vector spaces
\begin{equation*}
	\mathbb{R}^{p^{(0)}}\rightarrow \mathbb{R}^{p^{(1)}}\rightarrow \cdots\rightarrow \mathbb{R}^{p^{(H)}}.
\end{equation*}
\begin{rem}
For fully-connected neural network and ResNet, $p^{(0)} = p^{(1)} =\ldots =p^{(H)} = 1$.
For convolutional neural network, $p^{(h)}$ is the number of patches of the $h$-th layer.
\end{rem}

For each pair $(\mathbb{R}^{p^{(h-1)}},\mathbb{R}^{p^{(h)}})$, let $\linsub
\subset \linfunc(\mathbb{R}^{p^{(h-1)}},\mathbb{R}^{p^{(h)}})= \mathbb{R}^{p^{(h)}\times p^{(h-1)}}$ be a linear subspace.
\begin{rem}
For convolutional neural network, the dimension of $\linsub$ is the filter size.
\end{rem}
In this section, by Gaussian distribution $\gaussian$ over a $q$-dimensional subspace $\linsub$, we mean that for a basis $\{\vect{e}_1,\ldots,\vect{e}_q\}$ of $\linsub$ and $(v_1,\ldots,v_q) \sim N(\vect{0},\mat{I})$ such that $\sum_{i=1}^{q}v_i\vect{e}_i \sim \gaussian$.
In this section, we equip one Gaussian distribution $\gaussian^{(h)}$ with each linear subspace $\linsub^{(h)}$.
By an abuse of notation, we also use $\linsub$ to denote a transformation. For $\mat{K} \in \mathbb{R}^{p^{(h-1)}\times p^{(h-1)}}$, we let \[
\linsub^{(h)}\left(\mat{K}\right) = \expect_{\mat{W}\sim\gaussian^{(h)}}\left[
\mat{W}\mat{K}\mat{W}^\top
\right].
\]

We also consider a deterministic linear mapping $\detmap^{(h)}:\mathbb{R}^{n^{(h-1)}}\rightarrow \mathbb{R}^{n^{(h)}}$.
For this section, we denote $\detmap^{(1)} = \mat{0}$, i.e., the zero mapping.
\begin{rem}
	For full-connected neural networks, we take $\detmap^{(h)}$ to be the zero mapping.
	For ResNet and convolutional ResNet, we take $\detmap^{(h)}$ to be the identity mapping.
\end{rem}
Let $\activate^{(1)},\cdots,\activate^{(H)}$ be a sequence of  activation functions over $\mathbb{R}$.
Note here we use $\activate$ instead of $\sigma$ to denote the activation function because we will incorporate the scaling in $\activate$ for the ease of presentation and the full generality.

Now we recursively define the output of each layer in this setup.
In the following, we use $h \in [H]$ to index layers, $i \in [n]$ to index data points, $\alpha, \beta, \gamma \in [m]$ or $[d]$ to index channels (for CNN) or weight vectors (for fully connected neural networks or ResNet).
\begin{rem}
$d=1$ for fully connected neural network and ResNet and $d \ge 1$ for convolutional neural network because $d$ represents the number of input channels.
\end{rem}
We denote $\vect{X}^{(h),[\alpha]}_i$ an $p^{(h)}$-dimensional vector which is the output at $(h-1)$-th layer.
We have the following recursive formula
	\begin{align*}
	\vect{X}^{(1),(\alpha)}_i =&\rho^{(h)}\left(\sum_ \beta \mat{W}^{(h),(\alpha)}_{(\beta)}\vect{X}^{(h-1),(\beta)}_i\right)\\
	\vect{X}^{(h),(\alpha)}_i =&\detmap^{(h)}(\vect{X}^{(h-1),(\alpha)}_i)+\rho^{(h)}\left(\frac{\sum_ \beta \mat{W}^{(h),(\alpha)}_{(\beta)}\vect{X}^{(h-1),(\beta)}_i}{\sqrt{m}}\right)
	\end{align*}
	where $\mat{W}^{(h),(\alpha)}_{(\beta)}$  is $p^{(h)}\times p^{(h-1)}$ matrix generated according to the following rule
	\begin{itemize}
		\item for $h=1$, $\mat{W}^{(h),(\alpha)}_{[\beta]}$ is defined for $1\le \alpha\le m$ and $1\le \beta\le d$; for $h>1$, $\mat{W}^{(h),(\alpha)}_{(\beta)}$ is defined for $1\le \alpha\le m$ and $1\le \beta\le m$;
		\item the set of random variables $\{\mat{W}^{(h),(\alpha)}_{(\beta)}\}_{h,\alpha,\beta}$ are independently generated;
		\item for fixed $h, \alpha, \beta$, $\mat{W}^{(h),(\alpha)}_{(\beta)}\sim \gaussian^{(h)}$.
	\end{itemize}
\begin{rem}
Choosing $\rho^{(h)}(z)$ to be $\relu{z}$ and $\detmap^{(h)}$ to be the zero mapping, we recover the fully-connected architecture.
Choosing $\rho^{(h)}(z)$ to be $\frac{c_{res}}{H}\relu{z}$ and $\detmap^{(h)}$ to be the identity mapping, we recover ResNet architecture.
\end{rem}
\begin{rem}
	Note here $\vect{X}_i^{(h)} = \vect{x}_i^{(h)}\sqrt{m}$ for $h \ge 1$ and 
	$\vect{X}_i^{(h)} = \vect{x}_i^{(h)}$ for $h = 0$ in the main text.
	We change the scaling here to simplify the calculation of expectation and the covariance in this section.
\end{rem}

With these notations, we first define the population Gram matrices recursively.

\begin{defn}\label{defn:pop_kernel}
We fix $(i,j) \in [n] \times [n]$, for $h=1,\ldots,H$.
The population Gram matrices are defined according to the following formula
\begin{align}
\mat{K}^{(0)}_{ij}=&\sum_\gamma (\vect{X}^{(0),[\gamma]}_i)^\top \vect{X}^{(0),[\gamma]}_j, \nonumber  \\
\vect{b}_i^{(0)} =&\vect{0},\nonumber\\
\mat{K}^{(h)}_{ij}=&\detmap^{(h)}\mat{K}^{(h-1)}_{ij}\detmap^{(h)\top}+\expect_{(\vect{U},\vect{V})}\left(\rho(\vect{U}) \detmap^{(h)}(\vect{\bias}^{(h-1)}_j)^\top+(\detmap^{(h)}(\vect{\bias}^{(h-1)}_i)) \rho(\vect{V})^\top+\rho(\vect{U})\rho(\vect{V})^\top)\right), \nonumber \\
\vect{b}_i^{(h)}=& \detmap^{(h)}(\vect{b}_i^{(h-1)})+\expect_{\mat{U}}\rho^{(h)}(\mat{U}),\label{eqn:kernel_defn}
\end{align}
where
\begin{equation}
(\mat{U},\mat{V})\sim N\left(\vect{0},\left(\begin{array}{ccc}
\linsub\left(\mat{K}^{(h-1)}_{ii}\right)  & \linsub\left(\mat{K}^{(h-1)}_{ij}\right) \\
\linsub\left(\mat{K}^{(h-1)}_{ji}\right) & \linsub\left(\mat{K}^{(h-1)}_{jj}\right)\\
\end{array}\right)\right). \label{eqn:uv_dist}
\end{equation}
\end{defn}
Notice that the Gram matrix of the next layer $\mat{K}^{(h)}$ not only depends on the previous layer's Gram matrix $\mat{K}^{(h-1)}$ but also depends on a ``bias" term $\vect{\bias}^{(h-1)}$.

Given the population Gram matrices defined in Equation~\eqref{eqn:kernel_defn} and~\eqref{eqn:uv_dist}, we derive the following quantitative bounds which characterizes how much over-parameterization, i.e., how large $m$ is needed to ensure the randomly generated Gram matrices is close to the population Gram matrices.

\begin{thm}\label{thm:main_general_framework}
With probability $1- \delta$ over the $\left\{\mat{W}_{(\beta)}^{(h),(\alpha)}\right\}_{h,\alpha,\beta}$, for any $ 1\le h\le H-1,1\le i,j\le n,$
	\begin{equation}
	\label{eqn:K_bound}
	\norm{\frac{1}{m}\sum\limits_{\alpha=1}^m(\vect{X}^{(h),(\alpha)}_i)^\top \vect{X}^{(h),(\alpha)}_j-\mat{K}^{(h)}_{ij}}_\infty\le \mathcal{E}\sqrt{\frac{\log (Hn\max_h p^{(h)}/\delta)}{m}}
	\end{equation}
	and any $h\in[H-1], \forall 1\le i\le n,$
\begin{equation}
	\label{eqn:b_bound}
\norm{\frac{1}{m}\sum_{\alpha=1}^m \vect{X}^{(h),(\alpha)}_i-\vect{b}^{(h)}_i}_\infty\le \mathcal{E} \sqrt{\frac{\log (Hn\max_h p^{(h)}/\delta)}{m}}
\end{equation}
The error constant $\error$ satisfies there exists an absolute constant $C>0$ such that
\begin{align*}
\error\le C \left(\prod_{h=2}^{H-1} \left(A_{(h)}+\Lambda_{(h)}\mathfrak{W}+ C_{(h)}A_{(h)}B\mathfrak{W}+C_{(h)} A_{(h)}\sqrt{\mathfrak{W}_{(h)}M}\right)\right)\times
		 \max\{\mathfrak{W}\sqrt{(1+ C^2_{(1)})M^2},\sqrt{C^2_{(1)}M}\}
\end{align*}
	where $M,B,\Lambda_{(h)},C_{(h)},A_{(h)},\wbound_{(h)}$ are defined by:
	\begin{itemize}
		\item $M=1+100\max_{i,j,p,q,h}|\linsub^{(h)}(\mat{K}^{(h-1)}_{ij})_{pq}|$,
		\item $A_{(h)}=1+\max\left\{\|\detmap^{(h)}\|_{L^\infty\rightarrow L^\infty},\|\detmap^{(h)}(\cdot)\detmap^{(h)\top}\|_{L^\infty\rightarrow L^\infty}\right\}$,
		\item $B=1+100\max_{i,p,h}|\vect{b}^{(h)}_{ip}|$,
		\item $C_{(h)}=\abs{\activate(0)}+\sup_{x\in \mathbb{R}}\abs{\activate'(x)}$,
		\item $\rhobound_{(h)}$ is a constant that only depends on $\rho^{(h)}$,
		\item $\wbound_{(h)}=1+\|\mathcal{W}^{(h)}\|_{L^\infty\rightarrow L^\infty}$.
	\end{itemize}
\end{thm}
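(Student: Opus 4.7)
The plan is to prove both (\ref{eqn:K_bound}) and (\ref{eqn:b_bound}) simultaneously by induction on the layer index $h$, with the base case $h=0$ holding by definition of $\mat{K}^{(0)}_{ij}$. For the inductive step, I will condition on all earlier-layer weights $\{\mat{W}^{(h')}\}_{h' < h}$ so that the previous-layer features $\vect{X}^{(h-1),(\alpha)}_i$ become fixed, and the fresh Gaussians $\mat{W}^{(h),(\alpha)}_{(\beta)}$ make the pre-activations
\begin{equation*}
\vect{Z}^{(h),(\alpha)}_i := \frac{1}{\sqrt{m}}\sum_{\beta} \mat{W}^{(h),(\alpha)}_{(\beta)}\vect{X}^{(h-1),(\beta)}_i
\end{equation*}
i.i.d.\ across $\alpha$ and jointly Gaussian in $i$, with conditional covariance $\linsub^{(h)}(\hat{\mat{K}}^{(h-1)}_{ij})$ where $\hat{\mat{K}}^{(h-1)}_{ij}$ is the empirical $(h{-}1)$-th Gram matrix. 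At the end, a union bound over $h\in[H-1]$, $(i,j)\in[n]^2$, and coordinates in $[p^{(h)}]^2$ absorbs the $\log(Hn\max_h p^{(h)}/\delta)$ factor.

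The one-layer error splits into a \emph{stochastic} part and a \emph{bias} part via the triangle inequality
\begin{equation*}
\big\|\hat{\mat{K}}^{(h)}_{ij}-\mat{K}^{(h)}_{ij}\big\|_\infty \leq \underbrace{\big\|\hat{\mat{K}}^{(h)}_{ij}-\expect\big[\hat{\mat{K}}^{(h)}_{ij}\,\big|\,\mat{W}^{(<h)}\big]\big\|_\infty}_{\text{conditional concentration}} + \underbrace{\big\|\expect\big[\hat{\mat{K}}^{(h)}_{ij}\,\big|\,\mat{W}^{(<h)}\big] - \mat{K}^{(h)}_{ij}\big\|_\infty}_{\text{propagated bias}}.
\end{equation*}
Expanding $\hat{\mat{K}}^{(h)}_{ij}$ via the recursion produces four pieces: a deterministic $\detmap^{(h)}\hat{\mat{K}}^{(h-1)}_{ij}\detmap^{(h)\top}$ term, two cross terms $\rho^{(h)}(\vect{Z})\detmap^{(h)}(\vect{X})^\top$, and the nonlinear term $\rho^{(h)}(\vect{Z}_i)\rho^{(h)}(\vect{Z}_j)^\top$. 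For the concentration part, since $\rho^{(h)}$ has bounded derivative and the $\vect{Z}$'s are Gaussian with covariance bounded by $M$, each summand is sub-exponential with parameter controlled by $C_{(h)}^2 M$, $C_{(h)}A_{(h)}B\wbound$, etc.; Bernstein's inequality gives deviations of order $\error_{\mathrm{conc}}\sqrt{\log/m}$ where $\error_{\mathrm{conc}}$ matches the corresponding summands in the stated multiplier. The bias term for $\vect{b}^{(h)}_i$ is handled identically using scalar Gaussian concentration.

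The genuine obstacle is bounding the propagated bias, because the conditional expectation is an integral against a Gaussian whose covariance $\linsub^{(h)}(\hat{\mat{K}}^{(h-1)}_{ij})$ differs from the target $\linsub^{(h)}(\mat{K}^{(h-1)}_{ij})$. The plan is to write
\begin{equation*}
\expect\big[\rho^{(h)}(\vect{U}_1)\rho^{(h)}(\vect{V}_1)^\top\big]-\expect\big[\rho^{(h)}(\vect{U}_2)\rho^{(h)}(\vect{V}_2)^\top\big] = \int_0^1 \frac{d}{dt}\expect\big[\rho^{(h)}(\vect{U}_t)\rho^{(h)}(\vect{V}_t)^\top\big]\,dt
\end{equation*}
along a Gaussian interpolation path $\Sigma_t = (1-t)\Sigma_2+t\Sigma_1$, and apply a Stein/Malliavin-style integration-by-parts identity to turn the derivative into an expression of the form $\expect[(\rho^{(h)})'(\vect{U}_t)(\rho^{(h)})'(\vect{V}_t)^\top]\cdot(\Sigma_1-\Sigma_2)$. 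Since $(\rho^{(h)})'$ is bounded by $C_{(h)}$, this yields a bias bound of order $C_{(h)}^2\|\linsub^{(h)}(\hat{\mat{K}}^{(h-1)}_{ij}-\mat{K}^{(h-1)}_{ij})\|_\infty \lesssim C_{(h)}^2\wbound_{(h)}\error_{h-1}\sqrt{\log/m}$; the cross terms propagate similarly and contribute the $C_{(h)}A_{(h)}B\wbound$ factor, while the deterministic term propagates with multiplier $A_{(h)}$ (using the $\|\cdot\|_{L^\infty\to L^\infty}$ bound on $\detmap^{(h)}\cdot\detmap^{(h)\top}$).

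Finally, I will combine the two contributions and iterate. Each layer multiplies the accumulated error by at most $A_{(h)}+\Lambda_{(h)}\wbound+C_{(h)}A_{(h)}B\wbound+C_{(h)}A_{(h)}\sqrt{\wbound_{(h)}M}$ and adds a fresh concentration contribution of comparable size; the base contribution at layer $1$ is $\max\{\wbound\sqrt{(1+C_{(1)}^2)M^2},\sqrt{C_{(1)}^2 M}\}$ coming from the $\rho\rho^\top$ and cross terms with deterministic input features, and taking the product over $h=2,\ldots,H-1$ gives the stated bound on $\error$. The key tool at every step is that all relevant moments of $\rho^{(h)}(\vect{Z})$ remain bounded by the quantities $M$, $B$, $C_{(h)}$, which is why the argument is stable even though the recursion is compounded.
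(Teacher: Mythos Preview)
Your proposal is correct and follows essentially the same architecture as the paper's proof: induction on $h$, conditioning on earlier layers so that the pre-activations are i.i.d.\ Gaussian in $\alpha$, a triangle-inequality split into a concentration term (handled by Bernstein/Hoeffding) and a bias-propagation term, and then iterating the resulting one-step recursion to get the product form of $\error$. The only noteworthy difference is in how you bound the covariance-perturbation term $\big|\expect_{\hat{\mat A}}[\rho(\vect U)\rho(\vect V)^\top]-\expect_{\mat A}[\rho(\vect U)\rho(\vect V)^\top]\big|$: the paper proves this as a separate lemma (Lemma~\ref{lem:2p2pperb}, built on Lemma~\ref{lem:22perb}) by parametrizing the $2\times2$ covariance as $(a,b,\rho)$ and differentiating directly---using a Hermite expansion for the $\rho$-derivative---whereas you use a Gaussian interpolation path plus Stein/Price integration by parts; note that your interpolation formula as written only captures the off-diagonal second derivative $\rho'(\vect U)\rho'(\vect V)^\top$ and omits the diagonal terms involving $\rho''(\vect U)\rho(\vect V)$, but these are equally bounded under Condition~\ref{cond:lip_and_smooth}, so the approach still goes through and yields the same Lipschitz constant $\Lambda_{(h)}$.
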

\begin{rem}
	\label{rem:constants}
For fully-connected neural networks, we have $M = O(1),A_{(h)} =0, B = O(1), C_{(h)} = O(1), \Lambda_{(h)} = O(1), \wbound_{(h)} = O(1)$, so we need $m = \Omega\left(\frac{n^2\log(Hn/\delta)2^{O(H)}}{\lambda_0^2}\right)$.
For ResNet, we have $M = O(1),A_{(h)} =1, B = O(1), C_{(h)} = O(\frac{1}{H}), \Lambda_{(h)} = O(\frac{1}{H}), \wbound_{(h)} = O(1)$, so we need $m = \Omega\left(\frac{n^2\log(Hn/\delta)}{\lambda_0^2}\right)$.
The convolutional ResNet has the same parameters as ResNet but because the Gram matrix is $np \times np$, so we need $m = \Omega\left(\frac{n^2p^2\log(Hnp/\delta)}{\lambda_0^2}\right)$.
\end{rem}

\begin{proof}[Proof of Theorem~\ref{thm:main_general_framework}]
The proof is by induction.
For the base case, $h=1$, recall
\begin{equation*}
\vect{X}^{(1),[\alpha]}_i=\activate^{(1)}(\sum_ \beta \mat{W}^{(1),(\alpha)}_{(\beta)}\vect{X}^{(0),(\beta)}_i).
\end{equation*}
	
We define
\begin{equation*}
\vect{U}^{(1),(\alpha)}_i=\sum_\beta W^{(1),(\alpha)}_{(\beta)}\vect{X}^{(0),(\beta)}_i.
\end{equation*}

By our generating process of $\left\{\mat{W}_{(\beta)}^{(h),(\alpha)}\right\}_{h,\alpha,\beta}$,
the collection $\{\vect{U}_i^{(1),(\beta)}\}_{1\le i\le n,1\le \beta\le m}$ is a mean-zero Gaussian variable with covariance matrix:
\begin{align*}
	&\expect \vect{U}_i^{(1),(\alpha)} \left(\vect{U}_j^{(1),(\beta)}\right)^\top\\
	=&\expect\sum_ {\gamma, \gamma'} \mat{W}_{(\gamma)}^{(1),(\alpha)} \vect{X}^{(0),(\gamma)}_i  \left(\vect{X}^{(0),(\gamma')\top}_i\right)^\top \left(\mat{W}_{(\gamma')}^{(1),(\beta)}\right)^\top\\
	=&\delta_{\alpha \beta}\linsub^{(1)}\left(\sum_\gamma \left(\vect{X}^{(0),(\gamma)}_i\vect{X}_j^{(0),(\gamma)}\right)^\top\right)\\
	=&\delta_{\alpha \beta}\mathcal{W}^{(1)}(\mat{K}^{(0)}_{ij})
\end{align*}
Therefore, we have
\begin{align*}
\expect\left[\frac{1}{m}\sum_{i=1}^m\vect{X}_i^{(1),(\alpha)}\vect{X}_j^{(1),(\alpha)\top}\right]=&\mat{K}^{(1)}_{ij}\\
\mathbb{E}\left[\frac{1}{m}\sum_{i=1}^m\vect{X}_i^{(1),(\alpha)}\right]=&\vect{b}^{(1)}_i.
\end{align*}
Now we have calculated the expectation.
Note since inside the expectation is an average, we can apply standard  standard Bernstein bounds and Hoeffding bound and obtain the following concentration inequalities.
With probability at least $1- \frac{\delta}{H}$, we have
\begin{align*}
\max_{i,j}\norm{\frac{1}{m}\sum_{i=1}^m\vect{X}_i^{(1),(\alpha)}\vect{X}_j^{(1),(\alpha)\top}-\mat{K}^{(1)}_{ij}}_\infty\le&\sqrt{\frac{16(1+2 C^2_{(1)}/\sqrt{\pi})M^2\log(4Hn^2(p^{(1)})^2/\delta)}{m}},\\
\max_{i,p}\abs{\frac{1}{m}\sum_{\alpha=1}^m \vect{X}_{ip}^{(1),(\alpha)}- \vect{b}_{ip}^{(1)}}
\le&\sqrt{\frac{2C^2_{(1)}M\log(2np^{(1)}H/\delta)}{m}}
\end{align*}

Now we prove the induction step.
 Define for $1\le h\le H$
\begin{align*}
	\hat{\mat{K}}_{ij}^{(h)}=&\frac{1}{m}\sum_\gamma \vect{X}^{(h),(\gamma)}_i\left(\vect{X}^{(h),(\gamma)}_j\right)^\top\\
	\hat{\vect{b}}_{i}^{(h)}=&\frac{1}{m}\sum_\gamma \vect{X}^{(1),(\gamma)}_i
\end{align*}
In the following, by $\mathbb{E}^{(h)}$ we mean taking expectation conditioned on first $(h-1)$ layers.

Now suppose that Equation~\eqref{eqn:K_bound} and~\eqref{eqn:b_bound} hold for $1\le l\le h$ with probability at least $1- \frac{h}{H}\delta$, now we want to show the equations holds for $h+1$ with probability at least $1- \delta/H$ conditioned on previous layers satisfying Equation~\eqref{eqn:K_bound} and~\eqref{eqn:b_bound}.
Let $l=h+1$.
recall	
\begin{align*}
\vect{X}^{(l),(\alpha)}_i=\detmap^{(l)}(\vect{X}^{(l-1)})+\activate^{(l)}\left(\frac{\sum_\beta \mat{W}^{(l),(\alpha)}_{(\beta)}\vect{X}^{(l-1),(\beta)}_i}{\sqrt{m}}\right).
\end{align*}
Similar to the base case, denote
\begin{align*}
\mat{U}^{(l),(\alpha)}_i=\frac{\sum_ \beta \mat{W}^{(l),(\alpha)}_{(\beta)}\vect{X}^{(l-1),(\beta)}_i}{\sqrt{m}}.
\end{align*}
Again note that $\{\vect{U}_i^{(l),(\beta)}\}_{1\le i\le n,1\le \beta\le m}$ is a collection of mean-zero Gaussian variables with covariance matrix:
\begin{align*}
\expect \left[\vect{U}_i^{(1),(\alpha)} \left(\vect{U}_j^{(1),(\beta)}\right)^\top\right]=
\delta_{\alpha \beta}\linsub^{(l)}(\hat{\mat{K}}_{ij}^{(l-1)})
\end{align*}

Now we get the following formula for the expectation:
\begin{align*}
\expect^{(l)}[\hat{\mat{K}}^{(l)}_{ij}]=&\detmap^{(l)}\hat{\mat{K}}^{(l-1)}_{ij}\left(\detmap^{(l)}\right)^\top+\expect_{(\vect{U},\vect{V})}\left(\rho^{(l)}(\vect{U})^\top \detmap^{(l)}(\hat{\vect{b}}^{(l-1)}_j)+(\detmap^{(l)}(\hat{\vect{b}}^{(l-1)}_i))^\top \rho^{(l)}(\vect{V})+\rho^{(l)}(\vect{U})^\top\rho^{(l)}(\vect{V}))\right)\\
\expect^{(l)}\hat{\vect{b}}_i^{(l)}=&\detmap^{(l)}(\hat{\vect{b}}_i^{(l-1)})+\expect_{\vect{U}}\rho^{(l)}(\vect{U})
\end{align*}
with
\begin{align*}
		(\vect{U},\vect{V})\sim N\left(\vect{0},\left(\begin{array}{ccc}
		\linsub^{(l)}(\hat{\mat{K}}_{ii}^{(l-1)}) & \linsub^{(l)}( \hat{\mat{K}}_{ij}^{(l-1)})\\
		\linsub^{(l)}( \hat{\mat{K}}_{ji}^{(l-1)}) & \linsub^{(l)}( \hat{\mat{K}}_{jj}^{(l-1)})\\
		\end{array}\right)\right)
\end{align*}
Same as the base case, applying concentration inequalities, we have with probability at least $1- \delta/H$,
\begin{align*}
\max_{ij}\|\expect^{(l)}\hat{\mat{K}}^{(l)}_{ij}- \hat{\mat{K}}^{(l)}_{ij}\|_\infty\le &\sqrt{\frac{16(1+2 C^2_{(l)}/\sqrt{\pi})M^2\log(4Hn^2(p^{(l)})^2/\delta)}{m}},\\
\max_{i}\|\expect^{(l)}\hat{\vect{b}}^{(l)}_{i}-\hat{\vect{b}}^{(l)}_{i}\|_\infty\le &\sqrt{\frac{2C^2_{(l)}M\log(2np^{(1)}H/\delta)}{m}}
\end{align*}

Now it remains to bound the differences
\begin{align*}
\max_{ij}\norm{\expect^{(l)}\hat{\mat{K}}_{ij}^{(l)}- \mat{K}_{ij}^{(l)}}_{\infty} \text{ and }
\max_{i}\norm{\expect^{(l)}\hat{\vect{b}}^{(l)}_{ij}- \vect{b}^{(l)}_{ij}}_{\infty}
\end{align*}
which determine how the error propagates through layers.

We analyze the error directly. 
\begin{align*}
&\norm{\expect^{(l)}\hat{\mat K}_{ij}^{(l)}- \mat K_{ij}^{(l)}}_{\infty}\\
\le& \norm{\detmap^{(l)}\hat{\mat K}^{(l-1)}_{ij}\detmap^{(l)\top}-\detmap^{(l)}\mat{K}^{(l-1)}_{ij}\detmap^{(l)\top}}_{\infty}\\
&+\norm{\expect_{(\vect{U},\vect{V})\sim \hat{\mat{A}}}\rho^{(l)}(\vect{U})^\top \detmap^{(l)}(\hat{\vect{b}}^{(l-1)}_j)-\expect_{(\vect{U},\vect{V})\sim \mat{A}}\rho^{(l)}(\vect{U})^\top \detmap^{(l)}(\vect{b}^{(l-1)}_j)}_{\infty}\\
&+\norm{\expect_{(U,V)\sim \hat{\mat{A}}}(\detmap^{(l)}(\hat{\vect{b}}^{(l-1)}_i))^\top \rho^{(l)}(\vect{V})-\expect_{(\vect{U},\vect{V})\sim \mat A}(\detmap^{(l)}(\vect{b}^{(l-1)}_i))^\top \rho^{(l)}(\vect{V})}_{\infty}\\
&+\norm{\expect_{(\vect{U},\vect{V})\sim \hat{\mat{A}}}\activate^{(l)}(\vect{U})^\top\rho^{(l)}(\vect{V}))-\expect_{(\vect{U},\vect{V})\sim \mat{A}}\rho^{(l)}(\vect{U})^\top\rho^{(l)}(\vect{V}))}_{\infty}\\
\le&  \norm{\detmap^{(l)}\hat{\mat K}^{(l-1)}_{ij}\detmap^{(l)\top}-\detmap^{(l)}\mat{K}^{(l-1)}_{ij}\detmap^{(l)\top}}_{\infty}\\
&+\norm{\expect_{(\vect{U},\vect{V})\sim \hat{\mat A}}\rho^{(l)}(\vect{U})^\top \detmap^{(l)}(\hat{\vect{b}}^{(l-1)}_j)-\expect_{(\vect{U},\vect{V})\sim \hat{\mat A}}\rho^{(l)}(\vect{U})^\top \detmap^{(l)}(\vect{b}^{(l-1)}_j)}_{\infty}\\
&+\norm{\expect_{(\vect{U},\vect{V})\sim \hat{\mat A}}\rho^{(l)}(\vect{U})^\top \detmap^{(l)}(\vect{b}^{(l-1)}_j)-\expect_{(\vect{U},\vect{V})\sim \mat{A}}\rho^{(l)}(\vect{U})^\top \detmap^{(l)}(\vect{b}^{(l-1)}_j)}_{\infty}\\
&+\norm{\expect_{(\vect{U},\vect{V})\sim \hat{\mat A}}(\vect{a}^{(l)}(\hat{\vect{b}}^{(l-1)}_i))^\top \rho^{(l)}(\vect{V})-\expect_{(\vect{U},\vect{V})\sim \hat{\mat{A}}}(\detmap^{(l)}(\vect{b}^{(l-1)}_i))^\top \rho^{(l)}(\vect{V})}_{\infty}\\
&+\norm{\expect_{(\vect{U},\vect{V})\sim \hat{A}}(\detmap^{(l)}(\vect{b}^{(l-1)}_i))^\top \rho^{(l)}(\vect{V})-\expect_{(\vect{U},\vect{V})\sim \mat{A}}(\detmap^{(l)}(\vect{b}^{(l-1)}_i))^\top \rho^{(l)}(\vect{V})}_{\infty}\\
&+\norm{\expect_{(\vect{U},\vect{V})\sim \hat{\mat{A}}}\activate^{(l)}(\vect{U})^\top\rho^{(l)}(\vect{V}))-\expect_{(\vect{U},\vect{V})\sim \mat{A}}\rho^{(l)}(\vect{U})^\top\rho^{(l)}(\vect{V}))}_{\infty}
\end{align*}
where we define
\begin{align*}
\hat{\mat A}=\left(\begin{array}{ccc}
\linsub^{(l)}(\hat{\mat 
K}_{ii}^{(l-1)}) & \linsub^{(l)}( \hat{\mat K}_{ij}^{(l-1)})\\
\linsub^{(l)}( \hat{\mat K}_{ji}^{(l-1)}) & \linsub^{(l)}( \hat{\mat K}_{jj}^{(l-1)})\\
\end{array}\right)
\text{and }
\mat{A}=\left(\begin{array}{ccc}
\linsub^{(l)}(\mat 
K_{ii}^{(l-1)}) & \linsub^{(l)}(\mat K_{ij}^{(l-1)})\\
\linsub^{(l)}( \mat K_{ji}^{(l-1)}) & \linsub^{(l)}( \mat K_{jj}^{(l-1)})\\
\end{array}\right)
\end{align*}

By definition, we have
\begin{align*}
\|\mat A-\hat{\mat A}\|_\infty\le &\wbound\max\limits_{ij}\|\hat{\mat K}^{(l-1)}_{ij}-\mat K^{(l-1)}_{ij}\|_{\infty}\text{ and }\\
\norm{\detmap^{(l)}\hat{\mat K}^{(l-1)}_{ij}\detmap^{(l)\top}-\detmap^{(l)}\mat K^{(l-1)}_{ij}\detmap^{(l)\top}}_{\infty}\le & A_{(l)}\max\limits_{ij}\|\hat{\mat K}^{(l-1)}_{ij}-\mat K^{(l-1)}_{ij}\|_{\infty}.
\end{align*}
We can also estimate other terms
\begin{align*}
&\norm{\expect_{(\vect{U},\vect{V})\sim \hat{\mat A}}\rho^{(l)}(\vect{U})^\top \detmap^{(l)}(\hat{\vect{b}}^{(l-1)}_j)-\expect_{(\vect{U},\vect{V})\sim \hat{ \mat A}}\rho^{(l)}(U)^\top \detmap^{(l)}(\vect{b}^{(l-1)}_j)}_{\infty}\\
\le &\norm{\expect_{(\vect{U},\vect{V})\sim \hat{\mat A}}\rho^{(l)}(\vect{U})^\top \detmap^{(l)}\left(\hat{\vect{b}}^{(l-1)}_j-\vect{b}^{(l-1)}_j\right)}_{\infty}\\
\le &C_{(l)}A_{(l)}\sqrt{\wbound \max_{ij}\|\hat{\mat K}_{ij}^{(l)}\|_\infty}\max_{i}\norm{\hat{\vect{b}}^{(l-1)}_{ij}- \vect{b}^{(l-1)}_{ij}}_{\infty}\\
\le &C_{(l)} A_{(l)}\sqrt{\mathfrak{W}_{(l)}M}\max_{i}\norm{\hat{\vect{b}}^{(l-1)}_{ij} - \vect{b}^{(l-1)}_{ij}}_{\infty},
\end{align*}

	\begin{align*}
&\norm{\expect_{(\vect{U},\vect{V})\sim \hat{\mat A}}\rho^{(l)}(\vect{U})^\top \detmap^{(l)}(\vect b^{(l-1)}_j) - \expect_{(\vect U,\vect V)\sim \mat A}\rho^{(l)}(\vect U)^\top \detmap^{(l)}(\vect{b}^{(l-1)}_j)}_{\infty}\\
\le &A_{(l)}B C_{(l)}\|\mat A-\hat{\mat A}\|_\infty \\
\le  &A_{(l)}BC_{(l)} \wbound \max\limits_{ij}\|\hat{\mat K}^{(l-1)}_{ij}-\mat K^{(l-1)}_{ij}\|_{\infty},
	\end{align*}
and
	\begin{align*}
&\norm{\expect_{(\vect{U},\vect{V})\sim \hat{\mat A}}\rho^{(l)}(\vect{U})^\top\rho^{(l)}(\vect{V})-\expect_{(\vect{U},\vect{V})\sim \mat A}\rho^{(l)}(U)^\top\rho^{(l)}(V)}_{\infty}\\
\le &\rhobound_{(l)}\|\mat A-\hat{\mat A}\|_{\infty}\\
\le &\rhobound_{(l)}\wbound\max\limits_{ij}\|\hat{\mat K}^{(l-1)}_{ij}-\mat K^{(l-1)}_{ij}\|_{\infty}.
	\end{align*}
where we have used Lemma~\ref{lem:2p2pperb}.

Putting these estimates together, we have
	
\begin{align*}
&\max_{ij}\|\expect^{(l)}\hat{\mat K}_{ij}^{(l)}- \mat K_{ij}^{(l)}\|_\infty\\
\le& \left(A_{(l)}+\rhobound_{(l)}\wbound+2C_{(l)}A_{(l)}B\wbound\right)\max\limits_{ij}\|\hat{\mat K}^{(l-1)}_{ij}-\mat K^{(l-1)}_{ij}\|_{\infty}+2C_{(l)} A_{(l)}\sqrt{\wbound_{(l)}M}\max_{i}\|\hat{\vect b}^{(l-1)}_{ij}- \vect{b}^{(l-1)}_{ij}\|_\infty\\
\le& \left(A_{(l)}+\rhobound_{(l)}\wbound+2C_{(l)}A_{(l)}B\wbound+2C_{(l)} A_{(l)}\sqrt{\wbound_{(l)}M}\right)\left(\max\limits_{ij}\|\hat{\mat K}^{(l-1)}_{ij}-\mat K^{(l-1)}_{ij}\|_{\infty}\vee\max_{i}\|\hat{\vect b}^{(l-1)}_{ij}- \vect b^{(l-1)}_{ij}\|_\infty\right)
\end{align*}
	and
\begin{align*}
\max_{i}\norm{\expect^{(l)}\hat{\vect b}^{(l)}_{ij}- \vect b^{(l)}_{ij}}_{\infty}&\le 
\rhobound_{(l)}\wbound\max_{ij}\norm{\hat{\mat K}^{(l-1)}_{ij} - \mat K^{(l-1)}_{ij}}_\infty+
A_{(l)}\max_{i}\norm{\hat{\vect b}^{(l-1)}_{ij}- \vect b^{(l-1)}_{ij}}_\infty\\
&\le (A_{(l)}+\rhobound_{(l)}\wbound)\left(\max_{ij}\|\hat{\mat K}^{(l-1)}_{ij}-\mat K^{(l-1)}_{ij}\|_\infty\vee\max_{i}\|\hat{\vect b}^{(l-1)}_{ij}- \vect{b}^{(l-1)}_{ij}\|_\infty\right).
\end{align*}

These two bounds imply the theorem.
\end{proof}

\subsection{From $\mat{K}^{(H-1)}$ to $\mat{K}^{(H)}$}
Recall $\mat{K}^{(H)}$ defined in Equation~\eqref{eqn:kernel_mlp},~\eqref{eqn:kernel_resnet} and~\eqref{eqn:kernel_conv_resnet}.
Note the definition of $\mat{K}^{(H)}$ is qualitatively different from that of  $\mat{K}^{(h)}$ for $h = 1,\ldots,H-1$ because $\mat{K}^{(H)}$ depends on $\mat{K}^{(H)}$ and $\sigma'(\cdot)$ instead of $\sigma(\cdot)$.
Therefore, we take special care of $\mat{K}^{(H)}$.
Further note $\mat{K}^{(H)}$ for our three architectures have the same form and only differ in scaling and dimension, so we will only prove the bound for the fully-connected architecture.
The generalization to ResNet and convolutional ResNet is straightforward.

\begin{lem}\label{lem:H_concen_mlp}
For $(i,j) \in [n] \times [n]$, define 
\begin{align*}
\hat{\mat{K}}_{ij}^{(H-1)} = \hat{\mat{K}}_{ij}^{(H-1)}\expect_{\vect{w}\sim N(\vect{0},\mat{I})}\left[\sigma'(\vect{w}^\top \vect{x}_i^{(H-1)}(0))\sigma'(\vect{w}^\top \vect{x}_j^{(H-1)}(0))\right].
\end{align*}
and suppose $\abs{\hat{\mat{K}}_{ij}^{(H-1)}-\mat{K}_{ij}^{(H-1)}} \le \frac{c\lambda_0}{n^2}$ for some small constant $c>0$.
Then if $m = \Omega\left(\frac{n^2\log(n/\delta)}{\lambda_0^2}\right)$, we have with probability at least $1-\delta$ over $\{\vect{w}_r^{(H)}(0)\}_{r=1}^m$ and $\{a_r(0)\}_{r=1}^m$, we have $\norm{\mat{G}^{(H)}(0)-\mat{K}^{(H)}}_{op} \le \frac{\lambda_0}{4}$.
\end{lem}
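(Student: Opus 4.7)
The plan is to bound the entries of $\mat{G}^{(H)}(0)-\mat{K}^{(H)}$ by $\lambda_0/(4n)$ with high probability and then pass to the operator norm via $\|\cdot\|_{op}\le\|\cdot\|_F\le n\cdot\max_{ij}|\cdot|_{ij}$. Writing $z_{i,r}:=\langle \vect{w}^{(H)}_r(0),\vect{x}^{(H-1)}_i(0)\rangle$ and $\hat{\mat{K}}^{(H-1)}_{ij}=\langle\vect{x}^{(H-1)}_i(0),\vect{x}^{(H-1)}_j(0)\rangle$, Equation~\eqref{eqn:H_mlp} and Definition~\ref{defn:gram_mlp} give
\begin{align*}
\mat{G}^{(H)}_{ij}(0)&=\hat{\mat{K}}^{(H-1)}_{ij}\cdot\tfrac{c_\sigma}{m}\sum_{r=1}^m a_r(0)^2\sigma'(z_{i,r})\sigma'(z_{j,r}),\\
\mat{K}^{(H)}_{ij}&=c_\sigma\,\mat{K}^{(H-1)}_{ij}\cdot E^{\star}_{ij},\quad E^{\star}_{ij}:=\expect_{(u,v)\sim N(\vect{0},\mat{A}_{ij}^{(H-1)})}[\sigma'(u)\sigma'(v)].
\end{align*}
I introduce the hybrid $\tilde{E}_{ij}:=\expect_{\vect{w}\sim N(\vect{0},\mat{I})}[\sigma'(\langle\vect{w},\vect{x}^{(H-1)}_i(0)\rangle)\sigma'(\langle\vect{w},\vect{x}^{(H-1)}_j(0)\rangle)]$ and split the entrywise error as
\begin{align*}
\mat{G}^{(H)}_{ij}(0)-\mat{K}^{(H)}_{ij}
&=\hat{\mat{K}}^{(H-1)}_{ij}\Bigl(\tfrac{c_\sigma}{m}\sum_r a_r(0)^2\sigma'(z_{i,r})\sigma'(z_{j,r})-c_\sigma\tilde{E}_{ij}\Bigr)\\
&\quad+c_\sigma\tilde{E}_{ij}\bigl(\hat{\mat{K}}^{(H-1)}_{ij}-\mat{K}^{(H-1)}_{ij}\bigr)+c_\sigma\mat{K}^{(H-1)}_{ij}\bigl(\tilde{E}_{ij}-E^{\star}_{ij}\bigr),
\end{align*}
and show each piece is $O(\lambda_0/n^2)$.

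For the stochastic term, I condition on the first $H{-}1$ layers so that $\vect{x}^{(H-1)}_i(0)$ becomes deterministic while $(a_r(0),\vect{w}^{(H)}_r(0))$ remain independent with their Gaussian distributions. Each summand $a_r(0)^2\sigma'(z_{i,r})\sigma'(z_{j,r})$ is bounded by $c^2 a_r(0)^2$ (since $|\sigma'|\le c$ by Condition~\ref{cond:lip_and_smooth}), hence sub-exponential with norm $O(c^2)$, and by independence of $a_r(0)$ and $\vect{w}^{(H)}_r(0)$ together with $\expect[a^2]=1$ its mean is exactly $\tilde{E}_{ij}$. Bernstein's inequality plus a union bound over the $n^2$ pairs, under the hypothesis $m=\Omega(n^2\log(n/\delta)/\lambda_0^2)$, gives a fluctuation of order $\lambda_0/n^2$, and the prefactor $\hat{\mat{K}}^{(H-1)}_{ij}$ is $O(1)$ by Lemma~\ref{lem:lem:init_norm}. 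The second (prefactor) term is immediately $O(\lambda_0/n^2)$ from the lemma's hypothesis together with $|\tilde{E}_{ij}|\le c^2$.

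The third term is a Gaussian covariance-perturbation estimate: the distribution of $(\langle\vect{w},\vect{x}^{(H-1)}_i(0)\rangle,\langle\vect{w},\vect{x}^{(H-1)}_j(0)\rangle)$ is $N(\vect{0},\hat{\mat{A}})$ with $\hat{\mat{A}}_{pq}=\hat{\mat{K}}^{(H-1)}_{pq}$ for $p,q\in\{i,j\}$, which differs from $\mat{A}_{ij}^{(H-1)}$ only by $O(\lambda_0/n^2)$ entrywise by hypothesis. Invoking the same type of helper bound used in the proof of Theorem~\ref{thm:main_general_framework} (a perturbation inequality linear in $\|\hat{\mat{A}}-\mat{A}\|_\infty$ for expectations of products of Lipschitz-and-bounded test functions against a $2\times 2$ Gaussian), applied with the test function $\sigma'(\cdot)\sigma'(\cdot)$, yields $|\tilde{E}_{ij}-E^{\star}_{ij}|=O(\lambda_0/n^2)$. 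Summing the three contributions gives $|\mat{G}^{(H)}_{ij}(0)-\mat{K}^{(H)}_{ij}|\le\lambda_0/(4n)$, whence $\|\mat{G}^{(H)}(0)-\mat{K}^{(H)}\|_{op}\le\lambda_0/4$.

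The main obstacle is the Bernstein step, because the weights $a_r(0)^2$ make the summands unbounded; this is resolved cleanly by using that $\sigma'$ is uniformly bounded, so each term is sub-exponential with a constant-order Orlicz norm. A secondary subtlety is the third (covariance-perturbation) term, since $\sigma'$ is only Lipschitz rather than $C^3$; this is dispatched using the same perturbation lemma the authors apply to $\rho(\vect{U})\rho(\vect{V})^\top$ in the general initialization framework, which requires only Lipschitzness and boundedness of the test function.
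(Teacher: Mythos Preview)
Your proposal is correct and follows essentially the same two-step decomposition as the paper: a concentration step for $\mat{G}^{(H)}(0)-\hat{\mat{K}}^{(H)}$ (your first term) and a covariance-perturbation step via Lemma~\ref{lem:22perb} applied to $\sigma'$ for $\hat{\mat{K}}^{(H)}-\mat{K}^{(H)}$ (your second and third terms combined). One minor slip: with $m=\Omega(n^2\log(n/\delta)/\lambda_0^2)$ the Bernstein fluctuation is only $O(\lambda_0/n)$, not $O(\lambda_0/n^2)$ as you wrote, but $O(\lambda_0/n)$ is exactly what your entrywise-to-Frobenius bound needs, so the argument still closes.
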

\begin{proof}[Proof of Lemma~\ref{lem:H_concen_mlp}]
We decompose
\begin{align*}
\mat{G}^{(H)}(0) - \mat{K}^{(H)} 
=  \left(\mat{G}^{(H)}(0) - \hat{\mat{K}}^{(H)} \right)+ \left(\hat{\mat{K}}^{(H)} -\mat{K}^{(H)}\right).
\end{align*}
Recall $\mat{G}^{(H)}$ defined in Equation~\eqref{eqn:H_mlp}.
Based on its expression, it is straightforward to use concentration inequality to show if $m = \Omega\left(\frac{n^2\log(n/\delta)}{\lambda_0^2}\right)$, we have 
\begin{align*}
\norm{\mat{G}^{(H)}(0) - \hat{\mat{K}}^{(H)} }_{op} \le \frac{\lambda_0}{8}.
\end{align*}
For the other 

Recall $\mat{A}_{ij}^{(H)} = \begin{pmatrix}
\mat{K}^{(H-1)}_{ii} & \mat{K}^{(H-1)}_{ij} \\
\mat{K}^{(H-1)}_{ji}  & \mat{K}^{(H-1)}_{jj} 
\end{pmatrix}$ and let $\hat{\mat{A}}_{ij}^{(H)} = \begin{pmatrix}
\hat{\mat{K}}^{(H-1)}_{ii} & \hat{\mat{K}}^{(H-1)}_{ij} \\
\hat{\mat{K}}^{(H-1)}_{ji}  & \hat{\mat{K}}^{(H-1)}_{jj} 
\end{pmatrix}$.

According to Lemma~\ref{lem:22perb} (viewing $\sigma'(\cdot)$ as the $\sigma(\cdot)$ in Lemma~\ref{lem:22perb}), we know \begin{align*}
\abs{\expect_{(\vect{U})\sim \hat{\mat{A}}_{ij}} \left[\sigma'(u)\sigma'(v)\right] - \expect_{(u,v)\sim \mat{A}_{ij}} \left[\sigma'(u)\sigma'(v)\right]} \le C\abs{\hat{\mat{A}}_{ij}-\mat{\mat{A}}_{ij}}
\end{align*} for some constant $C>0$.
Since $c$ is small enough, we directly have \begin{align*}
\norm{\hat{\mat{K}}^{(H)} -\mat{K}^{(H)}}_{op} \le \frac{\lambda_0}{8}
\end{align*}
\end{proof}

\begin{rem}
Combing Theorem~\ref{thm:main_general_framework}, Lemma~\ref{lem:H_concen_mlp}  and standard matrix perturbation bound directly have Lemma~\ref{lem:mlp_least_eigen}.
Similarly we can prove Lemma~\ref{lem:resnet_least_eigen} and Lemma~\ref{lem:conv_resnet_least_eigen}.
\end{rem}

\section{Full Rankness of $\mat{K}^{(h)}$}
\label{sec:justification}
\subsection{Full Rankness of $\mat{K}^{(h)}$ for the Fully-connected Neural Network}
\label{sec:fullrank_fc}
In this section we show as long as no two input vectors are parallel, then $\mat{K}^{(H)}$ defined in Equation~\eqref{eqn:kernel_resnet} is strictly positive definite.
\begin{prop}\label{prop:fullrank_fc}
Assume $\sigma(\cdot)$ satisfies Condition~\ref{cond:analytic} and for any $i,j \in [n], i\neq j$, $\vect{x}_i \not\parallel \vect{x}_j$.
Then we have $\lambdamin > 0$ where $\lambdamin$ is defined in Equation~\eqref{eqn:kernel_mlp}.
\end{prop}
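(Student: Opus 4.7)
The plan is to induct on $h$ and show that $\mat{K}^{(h)}$ is strictly positive definite for every $h=1,\ldots,H-1$, and then handle the final layer separately using a Hadamard product argument. The central analytic input is the classical fact that, if $\sigma$ is analytic and non-polynomial, then for any finite collection of pairwise non-parallel unit vectors $\{\vect{z}_i\}_{i=1}^n$ in a (possibly infinite-dimensional) Hilbert space $\mathcal{H}$, the matrix
\begin{align*}
M_{ij} = \expect_{\vect{w}\sim\gaussian}\left[\sigma(\langle\vect{w},\vect{z}_i\rangle)\sigma(\langle\vect{w},\vect{z}_j\rangle)\right]
\end{align*}
is strictly positive definite, where the expectation is taken with respect to the Gaussian measure associated with the inner product (equivalently, one embeds $\vect{z}_i$ in a finite-dimensional subspace and uses a standard Gaussian). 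This follows from the Hermite expansion of $\sigma$: non-polynomiality forces infinitely many Hermite coefficients to be nonzero, and for any non-parallel $\vect{z}_i,\vect{z}_j$ the feature maps $\vect{w}\mapsto\sigma(\langle\vect{w},\vect{z}_i\rangle)$ are linearly independent in $L^2(\gamma)$.

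For the induction, observe that $\mat{K}^{(h-1)}$ being strictly positive definite lets us write $\mat{K}^{(h-1)}_{ij}=\langle\vect{z}_i^{(h-1)},\vect{z}_j^{(h-1)}\rangle_{\mathcal{H}}$ for linearly independent vectors $\vect{z}_i^{(h-1)}$ in some Hilbert space. Any two such linearly independent vectors are automatically non-parallel, so we may apply the lemma above (with the Gaussian whose covariance is the Gram matrix $\mat{A}^{(h)}_{ij}$) to conclude that $\mat{K}^{(h)}$ is also strictly positive definite. The base case $h=1$ is immediate: the hypothesis that no two $\vect{x}_i$ are parallel plus unit norm gives directly pairwise non-parallel inputs, so the same lemma yields strict positive definiteness of $\mat{K}^{(1)}$. (Note the normalization $c_\sigma$ is just a positive scalar and does not affect positive definiteness.)

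For the final layer, $\mat{K}^{(H)}_{ij}=c_\sigma\mat{K}^{(H-1)}_{ij}\cdot E_{ij}$ with
\begin{align*}
E_{ij}=\expect_{(u,v)^\top\sim N(\vect{0},\mat{A}^{(H-1)}_{ij})}\left[\sigma'(u)\sigma'(v)\right].
\end{align*}
The matrix $E$ is PSD as a Gram matrix of the functions $\vect{w}\mapsto\sigma'(\langle\vect{w},\vect{z}_i^{(H-1)}\rangle)$, and its diagonal entries $E_{ii}=\expect[\sigma'(u)^2]$ are strictly positive because $\sigma'\not\equiv 0$ (since $\sigma$ is non-polynomial, hence non-constant, and analytic so $\sigma'$ cannot vanish on a set of positive measure). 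Then $\mat{K}^{(H)}$ is the Hadamard product of the strictly positive definite matrix $c_\sigma\mat{K}^{(H-1)}$ with the PSD matrix $E$ whose diagonal is strictly positive, so by the Schur product theorem $\mat{K}^{(H)}$ is strictly positive definite, giving $\lambda_{\min}(\mat{K}^{(H)})>0$. The main obstacle is cleanly justifying the Hilbert-space realization of non-parallel features at each inductive step and invoking the Hermite-expansion lemma in that generality; routine concentration-type calculations play no role here, since this is an exact statement about the population Gram matrix.
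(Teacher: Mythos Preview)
Your proof is correct, and the inductive skeleton matches the paper's: show $\mat{K}^{(h)}$ is strictly positive definite for $h=1,\ldots,H-1$ by realizing $\mat{K}^{(h-1)}$ as a Gram matrix of non-parallel vectors and applying a kernel positive-definiteness lemma, then treat the last layer separately. The differences lie in how the two key steps are justified. For the intermediate layers, you invoke the Hermite-expansion fact that non-polynomial analytic $\sigma$ makes the functions $\vect{w}\mapsto\sigma(\langle\vect{w},\vect{z}_i\rangle)$ linearly independent in $L^2(\gamma)$; the paper instead proves this directly (Lemma~\ref{lem:kernel-positive}) by differentiating a hypothetical linear dependence $(n-2)$ times and using that $\{\vect{z}_i^{\otimes(n-1)}\}$ are linearly independent (Lemma~\ref{lem:linear_ind}). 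Both routes ultimately lean on the same tensor-power fact, but the paper's version is self-contained while yours treats it as a black box. For the final layer, the approaches genuinely diverge: the paper proves a dedicated Lemma~\ref{lem:gradient-kernel-positive} showing that the feature maps $\vect{w}\mapsto\sigma'(\vect{w}^\top\vect{z}_i)\vect{z}_i$ are linearly independent, again via differentiation. Your Schur-product argument is cleaner here, since once $\mat{K}^{(H-1)}$ is known to be strictly positive definite and $E$ is PSD with strictly positive diagonal (which follows immediately from $\sigma'\not\equiv 0$ and analyticity), the conclusion drops out of the Schur product theorem with no further analytic work on $\sigma'$. The trade-off is that the paper's approach yields a bit more---it shows the $\sigma'$-kernel $E$ is itself strictly positive definite---whereas yours only needs the weaker property that $E$ is PSD with positive diagonal.
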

\begin{proof}[Proof of Proposition~\ref{prop:fullrank_fc}]
By our assumption on the data point and using Lemma~\ref{lem:kernel-positive} we know $\mat{K}^{(1)}$ is strictly positive definite.

By letting $\mat{Z}= \mat{D}^{1/2} \mat{U}^\top$ , where $\mat{U}\mat{D} \mat{U}^\top =\mat{K}^{h}$. We then use Lemma~\ref{lem:kernel-positive} inductively for $(H-2)$ times to conclude $\mat{K}^{(H-1)}$ is strictly positive definite.
Lastly we use Lemma~\ref{lem:gradient-kernel-positive} to finish the proof.
\end{proof}

\begin{lem}\label{lem:kernel-positive}
Assume $\sigma(\cdot)$ is analytic and not a polynomial function.
Consider data $Z = \{\vect z_i \}_{i\in [n]}$ of $n$ non-parallel points (meaning $\vect z_i \notin \text{span}(\vect z_j)$ for all $i \neq j$).
Define \[\mat G (Z)_{ij} = \E_{\vect{w}\sim N(\vect{0},\mat{I})}[\sigma(\vect w^\top \vect z_i) \sigma(\vect w^\top \vect z_j) ].\] Then $\lambda_{\min} (\mat G(Z)) >0$.
\end{lem}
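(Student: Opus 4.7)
The plan is to show that $\mat{G}(Z)$ is strictly positive definite by the usual quadratic-form argument: suppose $\alpha \in \mathbb{R}^n$ satisfies $\alpha^\top \mat{G}(Z)\alpha = 0$, and deduce $\alpha = 0$. Since
\[
\alpha^\top \mat{G}(Z)\alpha = \E_{\vw \sim N(\vect{0},\mat{I})}\!\left[\left(\sum_{i=1}^n \alpha_i\, \sigma(\vw^\top \vect{z}_i)\right)^{\!2}\right],
\]
this forces the continuous function $f(\vw) \triangleq \sum_i \alpha_i \sigma(\vw^\top \vect{z}_i)$ to vanish on a set of full Gaussian measure and hence on a nonempty open set. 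Because $\sigma$ is real-analytic on $\mathbb{R}$, each $\vw \mapsto \sigma(\vw^\top \vect z_i)$ is real-analytic on $\mathbb{R}^d$, so $f$ is real-analytic; by the identity theorem for real-analytic functions on the connected domain $\mathbb{R}^d$, $f \equiv 0$ everywhere.

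Next I would extract polynomial identities from this by looking at one-dimensional slices. Fix any direction $\vect{e}\in\mathbb{R}^d$ and expand $\sigma$ in its Taylor series $\sigma(s) = \sum_{k\ge 0} c_k s^k$ (valid in some neighborhood of $0$, where $c_k = \sigma^{(k)}(0)/k!$). Then for small $t$,
\[
0 \;=\; f(t\vect{e}) \;=\; \sum_{k\ge 0} t^k\, c_k \sum_{i=1}^n \alpha_i\, (\vect{e}^\top \vect{z}_i)^k,
\]
so by uniqueness of Taylor coefficients, $c_k \sum_i \alpha_i (\vect{e}^\top \vect{z}_i)^k = 0$ for every $k$. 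The hypothesis that $\sigma$ is not a polynomial guarantees that $c_k \neq 0$ for infinitely many $k$; denote this set $S \subset \mathbb{N}$. Thus for all $k \in S$ and all $\vect{e} \in \mathbb{R}^d$,
\[
\sum_{i=1}^n \alpha_i\, (\vect{e}^\top \vect{z}_i)^k \;=\; 0.
\]

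Finally I would conclude by a dominant-term argument on $k \to \infty$. Reorder the $\vect{z}_i$ so that $\|\vect{z}_1\| = \max_i \|\vect z_i\|$. Plug in $\vect{e} = \vect{z}_1$: for every $k \in S$,
\[
\alpha_1 \|\vect{z}_1\|^{2k} \;=\; -\sum_{i\ge 2} \alpha_i (\vect{z}_1^\top \vect{z}_i)^k.
\]
Non-parallelism gives strict Cauchy--Schwarz, $|\vect{z}_1^\top \vect{z}_i| < \|\vect{z}_1\|\|\vect{z}_i\| \le \|\vect{z}_1\|^2$ for $i \ne 1$, so dividing by $\|\vect{z}_1\|^{2k}$ and letting $k \to \infty$ along $S$ forces $\alpha_1 = 0$. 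Peeling off $\vect{z}_1$ and repeating the argument on the remaining $n-1$ vectors (they are still pairwise non-parallel) yields $\alpha_i = 0$ for all $i$ by induction on $n$.

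The main conceptual step is the passage from a Gaussian-a.s.\ identity to a polynomial identity, which hinges on two uses of analyticity: the identity theorem extending $f \equiv 0$ from an open set to all of $\mathbb{R}^d$, and the Taylor expansion of $\sigma$ producing the key fact that non-polynomial $\sigma$ has infinitely many nonzero Taylor coefficients. The non-parallelism hypothesis enters only in the very last step via strict Cauchy--Schwarz, which is the cleanest way to avoid a Vandermonde-style tensor-rank computation on $\{\vect{z}_i^{\otimes k}\}$.
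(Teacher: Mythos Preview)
Your proof is correct and takes a genuinely different route from the paper's. The paper also starts from $\sum_i \alpha_i \sigma(\vw^\top \vect{z}_i) \equiv 0$, but instead of restricting to a line and Taylor-expanding, it differentiates $(n-1)$ times in $\vw$ to obtain the tensor identity $\sum_i \alpha_i\, \sigma^{(n-1)}(\vw^\top \vect{z}_i)\, \vect{z}_i^{\otimes (n-1)} = 0$, then invokes a separately proved lemma (their Lemma~\ref{lem:linear_ind}) that $\{\vect{z}_i^{\otimes(n-1)}\}_{i=1}^n$ are linearly independent whenever the $\vect{z}_i$ are pairwise non-parallel, and finally picks a generic $\vw$ where $\sigma^{(n-1)}$ does not vanish. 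In other words, the paper does exactly the ``Vandermonde-style tensor-rank computation'' you chose to sidestep. Your dominant-term argument (choose $\vect{e}=\vect{z}_1$ with maximal norm, use strict Cauchy--Schwarz, send $k\to\infty$ along the infinite set of nonzero Taylor coefficients, then peel and induct) is more self-contained and elementary, since it avoids both the auxiliary tensor lemma and the genericity argument for $\vw$. The paper's approach, on the other hand, is slightly more structural: it pins down a single finite derivative order at which the identity already forces all $\alpha_i=0$, rather than relying on an asymptotic in $k$.
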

\begin{proof}[Proof of Lemma~\ref{lem:kernel-positive}]
The feature map induced by the kernel $\mat G$ is given by $\phi_{\vect z} (\vect w) = \sigma(\vect w^\top \vect z) \vect z$. To show that $\mat G(Z)$ is strictly positive definite, we need to show $\phi_{\vect z_1}(\vect w), \ldots, \phi_{\vect z_n} (\vect w)$ are linearly independent functions.
Assume that there are $a_i$ such that
	\begin{align*}
	0 &= \sum_i a_i \phi_{\vect z_i} = \sum_i a_i \sigma( \vect w^\top  \vect z_i ) \vect z_i.
	\end{align*}
	We wish to show that $a_i=0$. Differentiating the above equation $(n-2)$ times with respect to $\vect w$, we have
	\begin{align*}
	0&= \sum_i \left( a_i \sigma^{(n-1)} ( \vect w^\top \vect z_i) \right) \vect  z_i ^{\otimes (n-1)}.
	\end{align*}
Using Lemma~\ref{lem:linear_ind}, we know $\left\{\vect{z}_i^{\otimes (n-1)}\right\}_{i=1}^n$ are linearly independent.
Therefore, we must have  $a_i \sigma^{(n-1)} (\vect  w^\top \vect z_i) = 0$ for all $i$.
Now choosing a $\vect{w}$ such that $\sigma^{(n-1)}\left(\vect{w}^\top \vect{z}_i\right) \neq 0$ for all $i \in [n]$ (such $\vect{w}$ exists because of our assumption on $\sigma$), we have $a_i = 0$ for all $i \in [n]$.
\end{proof}

\begin{lem}\label{lem:gradient-kernel-positive}
Assume $\sigma(\cdot)$ is analytic and not a polynomial function.
Consider data $Z = \{\vect z_i \}_{i\in [n]}$ of $n$ non-parallel points (meaning $\vect z_i \notin \text{span}(\vect z_j)$ for all $i \neq j$).
Define \[\mat G(Z)_{ij} = \E_{\vect{w}\sim N(\vect{0},\mat{I})}[\sigma'(\vect w^\top \vect z_i) \sigma'(\vect w^\top \vect z_j) (\vect z_i ^\top \vect z_j)].\] Then $\lambda_{\min} (\mat G(Z)) >0$.
\end{lem}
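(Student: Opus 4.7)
The plan is to essentially reuse the proof of Lemma~\ref{lem:kernel-positive}, since the only difference between the two lemmas is whether the feature map uses $\sigma$ or $\sigma'$, and our assumption on $\sigma$ (analytic and non-polynomial) transfers immediately to $\sigma'$: any derivative of an analytic function is analytic, and if $\sigma'$ were a polynomial then $\sigma$ itself would be one (up to an additive constant). Thus $\sigma'$ also satisfies Condition~\ref{cond:analytic}, and we may simply mimic the earlier argument.

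Concretely, I would introduce the feature map $\phi_{\vect{z}}(\vect{w}) = \sigma'(\vect{w}^\top \vect{z}) \vect{z}$ viewed as an $\R^d$-valued function of $\vect{w}$, so that $\mat{G}(Z)_{ij} = \E_{\vect{w}}[\langle \phi_{\vect{z}_i}(\vect{w}), \phi_{\vect{z}_j}(\vect{w})\rangle]$. Strict positive definiteness of $\mat{G}(Z)$ is equivalent to linear independence of $\phi_{\vect{z}_1}, \ldots, \phi_{\vect{z}_n}$ as $\R^d$-valued functions on $\R^d$. Assume a linear combination
\[
\sum_{i=1}^n a_i \sigma'(\vect{w}^\top \vect{z}_i)\, \vect{z}_i = 0 \quad \text{for all } \vect{w}.
\]
Differentiating this identity $(n-2)$ times in $\vect{w}$ and taking the tensor structure into account, one obtains
\[
\sum_{i=1}^n a_i\, \sigma^{(n-1)}(\vect{w}^\top \vect{z}_i)\, \vect{z}_i^{\otimes (n-1)} = 0.
\]
Since the $\vect{z}_i$ are pairwise non-parallel, Lemma~\ref{lem:linear_ind} gives linear independence of $\{\vect{z}_i^{\otimes (n-1)}\}_{i=1}^n$, forcing $a_i \sigma^{(n-1)}(\vect{w}^\top \vect{z}_i) = 0$ for every $i$ and every $\vect{w}$.

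To conclude $a_i = 0$ it suffices to find a single $\vect{w}$ at which $\sigma^{(n-1)}(\vect{w}^\top \vect{z}_i) \ne 0$ simultaneously for all $i$. Because $\sigma'$ is analytic and not a polynomial, neither is $\sigma^{(n-1)}$, so for each fixed $\vect{z}_i \ne 0$ the analytic function $\vect{w} \mapsto \sigma^{(n-1)}(\vect{w}^\top \vect{z}_i)$ does not vanish identically; hence its zero set in $\R^d$ has Lebesgue measure zero. A finite union of such null sets is still null, so almost every $\vect{w}$ works, giving $a_i = 0$ for all $i$ and completing the proof. There is no substantive obstacle beyond verifying that Condition~\ref{cond:analytic} is inherited by $\sigma'$; the rest is a verbatim adaptation of the previous lemma's argument, with the lone bookkeeping change that we differentiate $(n-2)$ times (rather than applying one extra derivative) because the feature map already involves $\sigma'$.
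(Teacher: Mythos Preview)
Your proof is correct and follows essentially the same approach as the paper's own argument: introduce the feature map $\phi_{\vect z}(\vect w)=\sigma'(\vect w^\top\vect z)\vect z$, differentiate the assumed linear dependence $(n-2)$ times, invoke Lemma~\ref{lem:linear_ind} on $\{\vect z_i^{\otimes(n-1)}\}$, and use the non-polynomial analytic assumption to find a $\vect w$ where the relevant higher derivative of $\sigma$ is nonzero. Your derivative bookkeeping (yielding $\sigma^{(n-1)}$ rather than the paper's $\sigma^{(n)}$) is in fact the correct count, and your added remarks on why $\sigma'$ inherits Condition~\ref{cond:analytic} and the measure-zero justification for the simultaneous choice of $\vect w$ only make the argument cleaner.
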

\begin{proof}[Proof of Lemma~\ref{lem:gradient-kernel-positive}]
The feature map induced by the kernel $\mat G$ is given by $\phi_{\vect z} (\vect w) = \sigma'(\vect w^\top \vect z) \vect z$. To show that $\mat G(Z)$ is strictly positive definite, we need to show $\phi_{\vect z_1}(\vect w), \ldots, \phi_{\vect z_n} (\vect w)$ are linearly independent functions.
Assume that there are $a_i$ such that
	\begin{align*}
	0 &= \sum_i a_i \phi_{\vect z_i} = \sum_i a_i \sigma'( \vect w^\top  \vect z_i ) \vect z_i.
	\end{align*}
	We wish to show that $a_i=0$. Differentiating the above equation $(n-2)$ times with respect to $\vect w$, we have
	\begin{align*}
	0&= \sum_i \left( a_i \sigma^{(n)} ( \vect w^\top \vect z_i) \right) \vect  z_i ^{\otimes (n-1)}.
	\end{align*}
Using Lemma~\ref{lem:linear_ind}, we know $\left\{\vect{z}_i^{\otimes (n-1)}\right\}_{i=1}^n$ are linearly independent.
Therefore, we must have  $a_i \sigma^n (\vect  w^\top \vect z_i) = 0$ for all $i$.
Now choosing a $\vect{w}$ such that $\sigma^{(n)}\left(\vect{w}^\top \vect{z}_i\right) \neq 0$ for all $i \in [n]$ (such $\vect{w}$ exists because of our assumption on $\sigma$), we have $a_i = 0$ for all $i \in [n]$.
\end{proof}

\subsection{Full Rankness of $\mat{K}^{(h)}$ for ResNet}
\label{sec:fullrank_resnet}

In this section we show as long as no two input vectors are parallel, then $\mat{K}^{(H)}$ defined in Equation~\eqref{eqn:kernel_resnet} is strictly positive definite.
Furthermore, $\lambdamin$ does not depend inverse exponentially in $H$.


\begin{prop}\label{prop:H_lambda_0}
	Assume $\sigma(\cdot)$ satisfies Condition~\ref{cond:analytic} and for any $i,j \in [n], i\neq j$, $\vect{x}_i \not\parallel \vect{x}_j$.
	Recall that in Equation~\eqref{eqn:kernel_resnet}, we define \[\mat{K}^{(H)}_{ij} =  c_H \mat{K}^{(H-1)}_{ij} \cdot \expect_{\left(u,v\right)^\top \sim N\left(\vect{0},\begin{pmatrix}
		\mat{K}^{(H-1)}_{ii} & \mat{K}^{(H-1)}_{ij} \\
		\mat{K}^{(H-1)}_{ji} & \mat{K}^{(H-1)}_{jj}
		\end{pmatrix}\right) }\left[\sigma'(u)\sigma'(v)\right],
	\]
	where $c_H \sim \frac{1}{H^2}$.
	Then we have $\lambda_{\min} (\mat K^{(H)}) \ge c_H \kappa$, where $\kappa$ is a constant that only depends on the activation $\sigma$ and the input data.
In particular, $\kappa$ does not depend on the depth.
\label{prop:resnet-depth-ind}
\end{prop}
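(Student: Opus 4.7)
The plan is to prove Proposition~\ref{prop:H_lambda_0} in three main steps: establish positive definiteness of $\mat{K}^{(1)}$ directly, propagate it through the ResNet recursion to $\mat{K}^{(H-1)}$ without any loss that grows with depth (this is where the ResNet scaling plays its essential role), and finally pass from $\mat{K}^{(H-1)}$ to $\mat{K}^{(H)}$ via a Hadamard-product argument. For the base case, observe that $\mat{K}^{(1)}_{ij}=c_{\sigma}\expect_{\vect{w}\sim N(\vect{0},\mat{I})}[\sigma(\vect{w}^{\top}\vect{x}_i)\sigma(\vect{w}^{\top}\vect{x}_j)]$ is, up to the constant $c_\sigma$, exactly the kernel treated in Lemma~\ref{lem:kernel-positive}. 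Since $\sigma$ is analytic and non-polynomial and the $\vect{x}_i$ are pairwise non-parallel, that lemma gives $\lambda_{\min}(\mat{K}^{(1)})\ge\kappa_1$ for some $\kappa_1>0$ depending only on $\sigma$ and the data.

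For the telescoping step, write each increment as
\[
\mat{K}^{(h)}_{ij}-\mat{K}^{(h-1)}_{ij} = \tfrac{c_{res}}{H}\bigl(\alpha_i^{(h-1)}+\alpha_j^{(h-1)}\bigr) + \tfrac{c_{res}^2}{H^2}Q^{(h-1)}_{ij},
\]
where $\alpha_i^{(h-1)}=\vect{b}_i^{(h-1)}\expect_{u\sim N(0,\mat{K}_{ii}^{(h-1)})}[\sigma(u)]$ and $Q^{(h-1)}_{ij}=\expect[\sigma(u)\sigma(v)]$ is PSD (a Gram matrix). A routine induction using the Lipschitz hypothesis on $\sigma$ shows that $\mat{K}_{ii}^{(h)}\in[c_1,c_2]$ and $|\vect{b}_i^{(h)}|\le c_3$ uniformly in $h$, because each increment of both quantities is $O(1/H)$ across $H$ steps. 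Consequently $\|\alpha^{(h-1)}\|_2\le C_0$, and the rank-$2$ cross matrix $\alpha^{(h-1)}\vect{1}^\top + \vect{1}(\alpha^{(h-1)})^\top$ has operator norm at most $2\sqrt{n}\|\alpha^{(h-1)}\|_2$, yielding $\|\mat{K}^{(h)}-\mat{K}^{(h-1)}\|_2\le C_1 c_{res}/H$ for a data-dependent constant $C_1$. Telescoping over $h=2,\ldots,H-1$ gives $\|\mat{K}^{(H-1)}-\mat{K}^{(1)}\|_2\le C_1 c_{res}$; choosing $c_{res}\le\kappa_1/(2C_1)$, which is consistent with the setup's requirement that $c_{res}$ be a small positive constant, yields $\lambda_{\min}(\mat{K}^{(H-1)})\ge\kappa_1/2$ independently of $H$.

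For the last step, observe that $\mat{K}^{(H)}=c_H\, \mat{K}^{(H-1)}\circ E^{(H-1)}$, where $E^{(H-1)}_{ij}=\expect_{U\sim N(\vect{0},\mat{K}^{(H-1)})}[\sigma'(U_i)\sigma'(U_j)]$ is PSD, being $\expect[\sigma'(U)\sigma'(U)^\top]$. The Schur-Oppenheim inequality $\lambda_{\min}(A\circ B)\ge\lambda_{\min}(A)\cdot\min_i B_{ii}$ (valid for PSD $A,B$) then gives
\[
\lambda_{\min}(\mat{K}^{(H)})\ge c_H\cdot\tfrac{\kappa_1}{2}\cdot\min_i\expect[\sigma'(U_i)^2].
\]
The last factor is strictly positive: $\sigma'$ is analytic and not identically zero (otherwise $\sigma$ would be constant, contradicting Condition~\ref{cond:analytic}), and $U_i$ is Gaussian with variance in $[c_1,c_2]$, so $\sigma'(U_i)^2>0$ on a set of positive measure. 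Taking $\kappa$ to be the product of these data-dependent positive constants completes the argument.

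The main obstacle is the telescoping step. The cross matrix $\alpha^{(h-1)}\vect{1}^\top + \vect{1}(\alpha^{(h-1)})^\top$ is not PSD (it has one negative eigenvalue on the span of $\vect{1}$ and $\alpha^{(h-1)}$), so it cannot simply be dropped, and a naive triangle inequality across $H-1$ layers could accumulate an $O(1)$ perturbation in operator norm that a priori destroys positive-definiteness. The resolution is that every increment carries an explicit factor of $c_{res}/H$, so the telescoped perturbation is $O(c_{res})$; the freedom to choose $c_{res}$ small relative to $\lambda_{\min}(\mat{K}^{(1)})$ and the uniform-in-$h$ bounds on $\mat{K}_{ii}^{(h)}$ and $\vect{b}_i^{(h)}$ is precisely what makes the depth-independent estimate possible, and is the structural reason ResNet avoids the $2^{O(H)}$ blow-up seen in the fully-connected case.
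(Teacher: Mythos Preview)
Your approach differs substantially from the paper's, and the final Schur--Oppenheim step is a clean alternative to the paper's compactness argument; that part is correct. The gap is in the telescoping step. Your bound on the rank-two cross term gives $\|\alpha^{(h-1)}\vect{1}^\top+\vect{1}(\alpha^{(h-1)})^\top\|_2\le 2\sqrt{n}\,\|\alpha^{(h-1)}\|_2$, but each coordinate of $\alpha^{(h-1)}$ is only $O(1)$, so $\|\alpha^{(h-1)}\|_2=O(\sqrt{n})$ and your $C_1$ is of order $n$. The requirement $c_{res}\le \kappa_1/(2C_1)$ then forces $c_{res}$ to be chosen depending on $n$ and on $\lambda_{\min}(\mat{K}^{(1)})$, whereas in the paper $c_{res}\in(0,1)$ is a fixed architectural constant, not a data-dependent tuning parameter. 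As written, your argument does not prove the proposition for the stated architecture.

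The paper circumvents this entirely by a centering trick (Lemma~\ref{lem:lambda_lower_bound}): rather than bounding $\|\mat{K}^{(h)}-\mat{K}^{(h-1)}\|_2$, it tracks $\mat{K}^{(h)}_{ij}-\vect{b}_i^{(h)}\vect{b}_j^{(h)}$. A direct computation shows that the increment of this centered quantity at each layer is exactly the covariance $\mathrm{Cov}\bigl(\rho^{(h)}(U),\rho^{(h)}(V)\bigr)$, which is PSD as a $2\times 2$ (indeed as an $n\times n$) matrix. Thus the $2\times 2$ blocks $\begin{pmatrix}\mat{K}^{(h)}_{ii}&\mat{K}^{(h)}_{ij}\\ \mat{K}^{(h)}_{ji}&\mat{K}^{(h)}_{jj}\end{pmatrix}-\begin{pmatrix}\vect{b}_i^{(h)}\\ \vect{b}_j^{(h)}\end{pmatrix}\begin{pmatrix}\vect{b}_i^{(h)}&\vect{b}_j^{(h)}\end{pmatrix}$ are monotone nondecreasing in the PSD order, with no loss whatsoever and for any $c_{res}$. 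Your troublesome indefinite term $\alpha\vect{1}^\top+\vect{1}\alpha^\top$ is precisely what gets absorbed by the subtraction of $\vect{b}\vect{b}^\top$. From there the paper uses a compactness argument (the function $g$) together with Lemma~\ref{lem:gradient-kernel-positive}, but you could equally well combine the centering monotonicity for the full $n\times n$ matrix with your Schur product step, provided you verify that $\mat{K}^{(1)}-\vect{b}^{(1)}(\vect{b}^{(1)})^\top$ is strictly positive definite (which follows from affine independence of the functions $\vect{w}\mapsto\sigma(\vect{w}^\top\vect{x}_i)$ by the same analytic non-polynomial argument as in Lemma~\ref{lem:kernel-positive}).
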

\begin{proof}[Proof of Proposition~\ref{prop:H_lambda_0}]
	First note	$\mat K_{ii} ^{(H-1)} \in [ 1/c_{x,0}^2, c_{x,0}^2  ]$ for all $H$, so it is in a bounded range that does not depend on the depth (c.f. Lemma~\ref{lem:lem:init_norm_res}).
	Define a function $$\mat{G}: \mathbb{R}^{n \times n} \rightarrow \mathbb{R}^{n\times n}$$ such that 
$\mat{G}(\mat K)_{ij} = \mat K_{ij} \expect_{
	\left(u,v\right)^\top \sim N\left(\vect{0},\begin{pmatrix}
	\mat{K}_{ii} & \mat{K}_{ij} \\
	\mat{K}_{ji} & \mat{K}_{jj}
	\end{pmatrix}\right)
	} [ \sigma'(u) \sigma'(v)]$. 
	Now define a scalar function \[g(\lambda) = \min_{\mat K: \mat K \succ 0, \frac{1}{c_{x,0}^2 }\le \mat K_{ii}\le c_{x,0 }  , \lambda(\mat K) \ge \lambda} \lambda_{\min} (\mat G(\mat K))  \] with \[\lambda(\mat K) = \min_{ij} \begin{pmatrix}
	\mat{K}_{ii} & \mat{K}_{ij} \\
	\mat{K}_{ji} & \mat{K}_{jj}
	\end{pmatrix}.\]
	
	By Lemma~\ref{lem:lambda_lower_bound}, we know
	$\lambda (\mat K^{(H-1)}) \ge  c_H \lambda\left(\mat{K}^{(0)}\right)$. 
	
	Next, let $\mat U\mat D\mat U^\top =\mat K^{(H-1)}$ be the eigen-ecomposition of $\mat K$, and $\mat Z= \mat D^{1/2} \mat U^\top$ be the feature embedding into $\mathbb{R}^n$. Since $\begin{pmatrix}
	\vect z_i ^\top \vect z_i & \vect z_i ^\top \vect z_j\\
	\vect z_j ^\top \vect z_i & \vect z_j ^\top \vect z_j 
	\end{pmatrix}$  is full rank, then $\vect z_i \notin \text{span}(\vect z_j)$. Then using Lemma \ref{lem:gradient-kernel-positive} , we know $g(\lambda\left(\mat{K}^{(0)}\right)) >0$. Thus we have established that $\lambda_{\min} (\mat K^{(H)}) \ge c_H g(\lambda\left(\mat{K}^{(0)}\right))$ , where $g(\lambda\left(\mat{K}^{(0)}\right))$ only depends on the input data and activation $\sigma$. In particular, it is independent of the depth.
\end{proof}

\begin{lem}\label{lem:lambda_lower_bound}
If $\detmap^{(h)}$ is the identity mapping defined in Section~\ref{sec:general_formulation}, then $\lambda\left(\mat{K}^{(H)}\right) \ge \min_{(i,j) \in [n] \times [n]} \lambda_{\min}\left(\begin{array}{ccc}
\mat K^{(0)}_{ii} & \mat K^{(0)}_{ij}\\
\mat K^{(0)}_{ji} & \mat K^{(0)}_{jj}\\
\end{array}\right)$.
\end{lem}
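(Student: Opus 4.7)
My plan is to avoid a direct PSD-monotonicity argument on $\mat{K}^{(h)}$ itself, which would fail: once we substitute $\detmap^{(h)}=\mathrm{Id}$ into Definition~\ref{defn:pop_kernel}, the naive increment $\mat{K}^{(h)}-\mat{K}^{(h-1)}$ contains sign-indefinite cross terms of the form $\vect{b}_i^{(h-1)}\mathbb{E}[\rho(\vect{V})]^\top$ and is not generally PSD. Instead I would pass to the \emph{bias-centered} kernel
\[
\bar{\mat{K}}^{(h)}_{ij}\;\triangleq\;\mat{K}^{(h)}_{ij}-\vect{b}_i^{(h)}\vect{b}_j^{(h)\top},
\]
which, as I will show, has an increment that is a bona fide cross-covariance matrix and is therefore automatically PSD.

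The computation is a one-liner. With $\detmap^{(h)}=\mathrm{Id}$, Definition~\ref{defn:pop_kernel} gives
\[
\mat{K}^{(h)}_{ij}=\mat{K}^{(h-1)}_{ij}+\mathbb{E}[\rho(\vect{U})]\vect{b}_j^{(h-1)\top}+\vect{b}_i^{(h-1)}\mathbb{E}[\rho(\vect{V})]^\top+\mathbb{E}[\rho(\vect{U})\rho(\vect{V})^\top],
\]
while expanding $\vect{b}_i^{(h)}\vect{b}_j^{(h)\top}=(\vect{b}_i^{(h-1)}+\mathbb{E}[\rho(\vect{U})])(\vect{b}_j^{(h-1)}+\mathbb{E}[\rho(\vect{V})])^\top$ produces $\vect{b}_i^{(h-1)}\vect{b}_j^{(h-1)\top}$, the same two mixed terms, and $\mathbb{E}[\rho(\vect{U})]\mathbb{E}[\rho(\vect{V})]^\top$. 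Subtracting, the two mixed pairs cancel exactly, leaving
\[
\bar{\mat{K}}^{(h)}_{ij}\;=\;\bar{\mat{K}}^{(h-1)}_{ij}+\mathrm{Cov}\bigl(\rho^{(h)}(\vect{U}_i),\,\rho^{(h)}(\vect{U}_j)\bigr),
\]
where $(\vect{U}_1,\dots,\vect{U}_n)$ is the jointly Gaussian vector with block covariance $\linsub^{(h)}(\mat{K}^{(h-1)})$ from \eqref{eqn:uv_dist}.

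Now I would run induction on $h$ with joint hypothesis ``$\mat{K}^{(h)}_{\{i,j\}}\succeq 0$ and $\bar{\mat{K}}^{(h)}_{\{i,j\}}\succeq\mat{K}^{(0)}_{\{i,j\}}$'' for every fixed pair $(i,j)$. The base case is $\vect{b}^{(0)}=\vect{0}$. In the step, the previous layer PSD-ness makes the Gaussian $(\vect{U}_i,\vect{U}_j)$ well-defined, so the $\{i,j\}$-block of the increment above is the covariance matrix of the sub-vector $(\rho(\vect{U}_i),\rho(\vect{U}_j))$, hence PSD; telescoping gives $\bar{\mat{K}}^{(h)}_{\{i,j\}}\succeq\bar{\mat{K}}^{(0)}_{\{i,j\}}=\mat{K}^{(0)}_{\{i,j\}}$. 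Finally $\mat{K}^{(h)}_{\{i,j\}}=\bar{\mat{K}}^{(h)}_{\{i,j\}}+\tilde{\vect{b}}\,\tilde{\vect{b}}^\top$ with $\tilde{\vect{b}}=(\vect{b}_i^{(h)\top},\vect{b}_j^{(h)\top})^\top$ adds a rank-one PSD correction, preserving the lower bound. Setting $h=H$ and minimizing $\lambda_{\min}$ over $(i,j)$ yields $\lambda(\mat{K}^{(H)})\ge\lambda(\mat{K}^{(0)})$, which is precisely the claim.

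The only conceptual obstacle is identifying the correct centering. Subtracting $\vect{b}^{(h-1)}\vect{b}^{(h-1)\top}$ (the ``old'' mean that visibly appears in the recursion) leaves uncontrolled residual cross terms. The right choice is $\vect{b}^{(h)}\vect{b}^{(h)\top}$, i.e., the \emph{updated} mean, because that is exactly the mean of the hypothetical NNGP random vector whose second moment is $\mat{K}^{(h)}$; this is the only choice that makes the mixed terms annihilate and exposes the hidden covariance structure.
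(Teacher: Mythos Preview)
Your proposal is correct and follows essentially the same route as the paper: center by the \emph{updated} bias to form $\bar{\mat{K}}^{(h)}_{ij}=\mat{K}^{(h)}_{ij}-\vect{b}^{(h)}_i\vect{b}^{(h)\top}_j$, observe that on the $\{i,j\}$-block the increment $\bar{\mat{K}}^{(h)}-\bar{\mat{K}}^{(h-1)}$ is the covariance matrix of $(\rho(\vect{U}_i),\rho(\vect{U}_j))$ and hence PSD, telescope down to $h=0$ where $\vect{b}^{(0)}=\vect{0}$, and finally add back the rank-one $\tilde{\vect b}\tilde{\vect b}^\top$. Your explicit remark on why centering by $\vect{b}^{(h-1)}$ fails is a nice addition that the paper leaves implicit.
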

\begin{proof}[Proof of Lemma~\ref{lem:lambda_lower_bound}]
First recall
	
	\begin{equation*}
	(\vect{U},\vect{V})\sim N\left(\vect{0},\left(\begin{array}{ccc}
	\linsub^{(h)}(\mat K_{ii}^{(h-1)}) & \linsub^{(h)}( \mat K_{ij}^{(h-1)})\\
	\linsub^{(h)}(\mat K_{ji}^{(h-1)}) & \linsub^{(h)}( \mat K_{jj}^{(h-1)})\\
	\end{array}\right)\right)
	\end{equation*}
Then we compute
	\begin{equation*}
	\begin{split}
	&\mat K^{(h)}_{ij}-\vect b^{(h)}_i\vect b^{(h)\top}_j\\
	&=\detmap^{(h)}\mat K^{(h-1)}_{ij}\detmap^{(h)\top}+\expect_{(\vect U,\vect V)}\Big(\rho(\vect U) \detmap^{(h)}(\vect b^{(h-1)}_j)^\top+(\detmap^{(h)}(\vect b^{(h-1)}_i)) \rho(\vect V)^\top+\rho(\vect U)\rho(\vect V)^\top)\Big)\\
	&-\left(\detmap^{(h)}(\vect b_i^{(h-1)})+
	\expect_{\vect U}\rho^{(h)}(\vect U)\right)\left(\detmap^{(h)}(\vect b_j^{(h-1)})+\expect_{\vect V}\rho^{(h)}(\vect V))\right)^\top\\
	&=\detmap^{(h)}\left(\mat K^{(h-1)}_{ij} - \vect b_i^{(h-1)}\vect b_j^{(h-1)\top}\right)\detmap^{(h)\top}+\expect_{(\vect U,\vect V)}\Big(\rho(\vect U)\rho(\vect V)^\top)\Big)-\left(\expect_{\vect U}\rho^{(h)}(\vect U)\right)\left(\expect_{\vect V}\rho^{(h)}(\vect V))\right)^\top\\
	\end{split}
	\end{equation*}
	
	For ResNet, $\detmap^{(h)}$ is the identity mapping so
we have	
	\begin{equation*}
	\begin{split}
	&\mat K^{(h)}_{ij}-\vect b^{(h)}_i\vect b^{(h)\top}_j\\
	&=\mat K^{(h-1)}_{ij}-\vect b_i^{(h-1)}\vect b_j^{(h-1)\top}+\expect_{(\vect U,\vect V)}\Big(\rho(\vect U)\rho(\vect V)^\top)\Big)-\left(\expect_{\vect U}\rho^{(h)}(\vect U)\right)\left(\expect_{\vect V}\rho^{(h)}(\vect V))\right)^\top.
	\end{split}
	\end{equation*}
	
To proceed, we calculate
	
	\begin{equation*}
	\begin{split}
	&\left(\begin{array}{ccc}
	\mat K^{(h)}_{ii} & \mat K^{(h)}_{ij} \\
	\mat K^{(h)}_{ji} & \mat K^{(h)}_{jj}\\
	\end{array}\right)-\left(\begin{array}{ccc}
	\vect b_i^{(h)}\\
	\vect b_j^{(h)} \\
	\end{array}\right)\left(\begin{array}{ccc}
	\vect b_i^{(h)\top},
	\vect b_j^{(h)\top} \\
	\end{array}\right)\\
	=&\left(\begin{array}{ccc}
	\mat K^{(h-1)}_{ii} & \mat K^{(h-1)}_{ij} \\
	\mat K^{(h-1)}_{ji} & \mat K^{(h-1)}_{jj}\\
	\end{array}\right)-\left(\begin{array}{ccc}
	\vect b_i^{(h-1)}\\
	\vect b_j^{(h-1)} \\
	\end{array}\right)\left(\begin{array}{ccc}
	\vect b_i^{(h-1)\top},
	\vect b_j^{(h-1)\top} \\
	\end{array}\right)\\
&+ \left(\expect_{\vect U,\vect V}\left(\begin{array}{ccc}
	\rho^{(h)}(\vect U)\rho^{(h)}(\vect U)^{\top} & \rho^{(h)}(\vect U)\rho^{(h)}(\vect V)^{\top} \\
	\rho^{(h)}(\vect V)\rho^{(h)}(\vect U)^{\top} & \rho^{(h)}(\vect V)\rho^{(h)}(\vect V)^{\top}\\
	\end{array}\right)-
	\expect_{\vect U,\vect V}\left(\begin{array}{ccc}
	\rho(\vect U)\\
	\rho(\vect V) \\
	\end{array}\right)\expect_{\vect U,\vect V}\left(\begin{array}{ccc}
	\rho(\vect U)^{\top},
	\rho(\vect V)^{\top} \\
	\end{array}\right)\right)\\
\ge&\left(\begin{array}{ccc}
	\mat K^{(h-1)}_{ii} & \mat K^{(h-1)}_{ij} \\
	\mat K^{(h-1)}_{ji}  & \mat K^{(h-1)}_{jj}\\
	\end{array}\right)-\left(\begin{array}{ccc}
	\vect b_i^{(h-1)}\\
	\vect b_j^{(h-1)} \\
	\end{array}\right)\left(\begin{array}{ccc}
	\vect b_i^{(h-1)\top},
	\vect b_j^{(h-1)\top} \\
	\end{array}\right)
	\end{split}
	\end{equation*}
	
	As a result, we have

	\begin{equation}
	\begin{split}
	&\lambda_{\min}\left(\begin{array}{ccc}
	\mat K^{(h)}_{ii} & \mat K^{(h)}_{ij} \\
	\mat K^{(h)}_{ji} & \mat K^{(h)}_{jj}\\
	\end{array}\right)\\
	\ge&\lambda_{\min}\left(\begin{array}{ccc}
	\mat K^{(h)}_{ii} & \mat K^{(h)}_{ij} \\
	\mat K^{(h)}_{ji} & \mat K^{(h)}_{jj}\\
	\end{array}\right)-\left(\begin{array}{ccc}
	\vect b_i^{(h)}\\
	\vect b_j^{(h)} \\
	\end{array}\right)\left(\begin{array}{ccc}
	\vect b_i^{(h)\top},
	\vect b_j^{(h)\top} \\
	\end{array}\right)\\
	\ge&\min \lambda_{\min}\left(\begin{array}{ccc}
	\mat K^{(h-1)}_{ii} & \mat K^{(h-1)}_{ij} \\
	\mat K^{(h-1)}_{ji} & \mat K^{(h-1)}_{jj}\\
	\end{array}\right)-\left(\begin{array}{ccc}
	\vect b_i^{(h-1)}\\
	\vect b_j^{(h-1)} \\
	\end{array}\right)\left(\begin{array}{ccc}
	\vect b_i^{(h-1)\top},
	\vect b_j^{(h-1)\top} \\
	\end{array}\right)\\
	\ge &\cdots\\
	\ge&\lambda_{\min}\left(\begin{array}{ccc}
	\mat K^{(0)}_{ii} & \mat K^{(0)}_{ij} \\
	\mat K^{(0)}_{ji} & \mat K^{(0)}_{jj}\\
	\end{array}\right)-\left(\begin{array}{ccc}
	\vect b_i^{(0)}\\
	\vect b_j^{(0)} \\
	\end{array}\right)\left(\begin{array}{ccc}
	\vect b_i^{(0)\top},
	\vect b_j^{(0)\top} \\
	\end{array}\right) \\
	= &\lambda_{\min}\left(\begin{array}{ccc}
	\mat K^{(0)}_{ii} & \mat K^{(0)}_{ij} \\
	\mat K^{(0)}_{ji} & \mat K^{(0)}_{jj}\\
	\end{array}\right).
	\end{split}
	\label{eq:non-decreasing}
	\end{equation}
We now prove the theorem.
\end{proof}

\section{Useful Technical Lemmas}
\label{sec:useful_lemmas}
\begin{lem}\label{lem:difference_norm}
	Given a set of matrices $\{\mat{A}_i,\mat{B}_i:i\in [n]\}$, if $\norm{\mat{A}_i}_2\le M_i$, $\norm{\mat{B}_i}_2\le M_i$ and $\norm{\mat{A}_i-\mat{B}_i}_F\le \alpha_i M_i$, we have
	\begin{align*}
	\norm{\prod_{i=1}^{n}\mat{A}_i-\prod_{i=1}^{n}\mat{B}_i}_F \le \left(\sum_{i=1}^{n}\alpha_i\right)\prod_{i=1}^{n}M_i .
	\end{align*}
\end{lem}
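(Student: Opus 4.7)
The plan is to prove Lemma~\ref{lem:difference_norm} via a standard telescoping hybrid argument. The starting point will be the identity
\[
\prod_{i=1}^{n}\mat{A}_i-\prod_{i=1}^{n}\mat{B}_i \;=\; \sum_{k=1}^{n}\Bigl(\prod_{i=1}^{k-1}\mat{A}_i\Bigr)\bigl(\mat{A}_k-\mat{B}_k\bigr)\Bigl(\prod_{i=k+1}^{n}\mat{B}_i\Bigr),
\]
which follows by a trivial induction (or by noting that the right-hand side telescopes when expanded). I would verify this identity in one line before using it.

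Next, I will apply the triangle inequality in Frobenius norm and then use the submultiplicative inequality $\norm{\mat{X}\mat{Y}\mat{Z}}_F \le \norm{\mat{X}}_2 \norm{\mat{Y}}_F \norm{\mat{Z}}_2$ to each summand. This reduces the problem to bounding the prefix/suffix operator norms $\norm{\prod_{i<k}\mat{A}_i}_2 \le \prod_{i<k}M_i$ and $\norm{\prod_{i>k}\mat{B}_i}_2 \le \prod_{i>k}M_i$, which are immediate from the assumption $\norm{\mat{A}_i}_2,\norm{\mat{B}_i}_2 \le M_i$ and the submultiplicativity of the operator norm. Inserting these bounds together with $\norm{\mat{A}_k-\mat{B}_k}_F \le \alpha_k M_k$ yields
\[
\norm{\prod_{i=1}^{n}\mat{A}_i-\prod_{i=1}^{n}\mat{B}_i}_F \le \sum_{k=1}^{n}\Bigl(\prod_{i<k}M_i\Bigr)\alpha_k M_k\Bigl(\prod_{i>k}M_i\Bigr) = \Bigl(\sum_{k=1}^{n}\alpha_k\Bigr)\prod_{i=1}^{n}M_i,
\]
which is exactly the claim.

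There is no real obstacle here; the lemma is a routine inequality that factors the ``product perturbation'' across layers. The only thing to be careful about is the correct mixing of Frobenius and operator norms in the middle step, so that the single Frobenius factor lands on the perturbation term $\mat{A}_k-\mat{B}_k$ and the surrounding factors are bounded in operator norm. This is the reason the identity is set up with $\mat{A}$'s on the left and $\mat{B}$'s on the right of the perturbed index (rather than some other convention). For completeness I would also mention the boundary cases $k=1$ and $k=n$ by adopting the usual convention that an empty product equals the identity, which trivially has operator norm $1$.
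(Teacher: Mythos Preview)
Your proposal is correct and follows exactly the same telescoping-hybrid argument as the paper's proof, including the same decomposition with $\mat{A}$'s on the left and $\mat{B}$'s on the right of the perturbation. If anything, your write-up is more explicit than the paper's about the mixed-norm inequality $\norm{\mat{X}\mat{Y}\mat{Z}}_F \le \norm{\mat{X}}_2\norm{\mat{Y}}_F\norm{\mat{Z}}_2$ and the empty-product convention.
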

\begin{proof}[Proof of Lemma~\ref{lem:difference_norm}]
	\begin{align*}
	&\norm{\prod_{i=1}^{n}\mat{A}_i-\prod_{i=1}^{n}\mat{B}_i}_F \\
	=&\norm{\sum_{i=1}^{n} \left(\prod_{j=1}^{i-1}\mat{A}_j\right)\left(\mat{A}_i-\mat{B}_i\right) \left(\prod_{k=i+1}^{n}\mat{B}_k\right)}_F\\
	\le&\sum_{i=1}^{n}\norm{ \left(\prod_{j=1}^{i-1}\mat{A}_j\right)\left(\mat{A}_i-\mat{B}_i\right) \left(\prod_{k=i+1}^{n}\mat{B}_k\right)}_F\\
	\le&\left(\sum_{i=1}^{n}\alpha_i\right)\prod_{i=1}^{n}M_i.
	\end{align*}
\end{proof}

\begin{lem}\label{lem:operator_norm_of_random_matrix}
	Given a matrix $\mat{W}\in\mathbb{R}^{m\times cm}$ with $\mat{W}_{i,j}\sim N(0,1)$, where $c$ is a constant. We have with probability at least $1-\exp\left(-\frac{\left(c_{w,0}-\sqrt{c}-1\right)^2m}{2}\right)$
	\[\norm{\mat{W}}_2\le c_{w,0}\sqrt{m},\]
	where $c_{w,0}>\sqrt{c}+1$ is a constant.
\end{lem}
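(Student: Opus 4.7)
The plan is to combine Gaussian concentration of Lipschitz functions with Gordon's comparison bound on the expected spectral norm of a Gaussian matrix. Viewing $\mat{W}$ as a point in $\mathbb{R}^{m \cdot cm}$ equipped with the Euclidean (Frobenius) norm, the map $\mat{W} \mapsto \norm{\mat{W}}_2 = \sup_{u \in S^{m-1},\, v \in S^{cm-1}} u^\top \mat{W} v$ is $1$-Lipschitz, since for any $\mat{W}_1, \mat{W}_2$, $\abs{\norm{\mat{W}_1}_2 - \norm{\mat{W}_2}_2} \le \norm{\mat{W}_1 - \mat{W}_2}_2 \le \norm{\mat{W}_1 - \mat{W}_2}_F$.

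By the Borell--Tsirelson--Ibragimov--Sudakov Gaussian concentration inequality applied to this $1$-Lipschitz function of $mc m$ i.i.d.\ standard Gaussians, for any $t > 0$,
\[
\prob\left(\norm{\mat{W}}_2 \ge \expect \norm{\mat{W}}_2 + t\right) \le \exp\!\left(-\frac{t^2}{2}\right).
\]
Next I would invoke Gordon's inequality (a consequence of Slepian--Gordon comparison for Gaussian processes), which gives the sharp bound $\expect \norm{\mat{W}}_2 \le \sqrt{m} + \sqrt{cm} = (1+\sqrt{c})\sqrt{m}$ for an $m \times cm$ matrix with i.i.d.\ $N(0,1)$ entries.

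Combining the two, I would set $t = (c_{w,0} - 1 - \sqrt{c})\sqrt{m}$, which is positive by the assumption $c_{w,0} > 1 + \sqrt{c}$. Then
\[
\prob\!\left(\norm{\mat{W}}_2 \ge c_{w,0}\sqrt{m}\right) \le \prob\!\left(\norm{\mat{W}}_2 \ge \expect\norm{\mat{W}}_2 + t\right) \le \exp\!\left(-\frac{(c_{w,0} - \sqrt{c} - 1)^2 m}{2}\right),
\]
which is exactly the stated bound.

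The only nontrivial ingredient is Gordon's bound on $\expect \norm{\mat{W}}_2$, which is the step I would expect to be the main obstacle if a self-contained proof were required; however, it is a textbook fact (see, e.g., Davidson--Szarek or Vershynin's high-dimensional probability notes). An alternative self-contained route avoiding Gordon is a standard $\epsilon$-net argument: take $\frac{1}{4}$-nets $\cN_1 \subset S^{m-1}$ and $\cN_2 \subset S^{cm-1}$ of cardinalities $9^m$ and $9^{cm}$ respectively, bound $\norm{\mat{W}}_2 \le 2 \max_{u \in \cN_1, v \in \cN_2} u^\top \mat{W} v$, apply the Gaussian tail bound for each fixed pair (so $u^\top \mat{W} v \sim N(0,1)$), and union bound over the net; this yields the same scaling up to constants. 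Either route completes the proof.
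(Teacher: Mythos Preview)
Your proof is correct and follows essentially the same approach as the paper: the paper simply cites the deviation bound $\prob(\lambda_{\max}(\mat{W}) > \sqrt{m}+\sqrt{cm}+t) \le e^{-t^2/2}$ from Vershynin and plugs in $t=(c_{w,0}-\sqrt{c}-1)\sqrt{m}$, while you unpack that very bound into its two standard ingredients (Gaussian Lipschitz concentration plus Gordon's expectation bound).
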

\begin{proof}[Proof of Lemma~\ref{lem:operator_norm_of_random_matrix}]
	The lemma is a consequence of well-known deviations bounds concerning the singular values of Gaussian random matrices \cite{vershynin2010introduction}
	\begin{align*}
		P\left(\lambda_{\max}\left(\mat{W}\right)>\sqrt{m}+\sqrt{cm}+t\right)\le e^{t^2/2}.
	\end{align*}
	Choosing $t=\left(c_{w,0}-\sqrt{c}-1\right)\sqrt{m}$, we prove the lemma.
\end{proof}

\begin{lem}
\label{lem:1perb}
Assume $\relu{\cdot}$ satisfies Condition~\ref{cond:lip_and_smooth}.
For $a,b \in \mathbb{R}$ with $\frac{1}{c}<\min(a,b)$, $\max(a,b) < c$ for some constant $c > 0$, we have \[
\left|\expect_{z \sim N(0,a)}[\sigma(z)] - \expect_{z \sim N(0,b)}[ \sigma(z)]\right| \le C\abs{a-b}.
\] for some constant $C>0$ that depends only on $c$ and the constants in Condition~\ref{cond:lip_and_smooth}.
\end{lem}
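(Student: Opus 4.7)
The plan is to reduce the problem to a coupling argument in a single Gaussian. Write $z \sim N(0,1)$ and observe that $\sqrt{a}\,z \sim N(0,a)$ and $\sqrt{b}\,z \sim N(0,b)$, so the difference of expectations can be rewritten as
\[
\expect_{z \sim N(0,a)}[\sigma(z)] - \expect_{z \sim N(0,b)}[\sigma(z)] \;=\; \expect_{z \sim N(0,1)}\bigl[\sigma(\sqrt{a}\,z) - \sigma(\sqrt{b}\,z)\bigr].
\]
This replaces a comparison between two different measures with a pointwise comparison under a common one, at which point the Lipschitz half of Condition~\ref{cond:lip_and_smooth} can be applied directly.

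Next I would invoke Lipschitzness to bound the integrand pointwise by $L\,|z|\,|\sqrt{a}-\sqrt{b}|$, where $L$ is the Lipschitz constant of $\sigma$ supplied by Condition~\ref{cond:lip_and_smooth}. Taking expectations gives the factor $\expect|z| = \sqrt{2/\pi}$. The remaining step is the elementary identity
\[
|\sqrt{a}-\sqrt{b}| \;=\; \frac{|a-b|}{\sqrt{a}+\sqrt{b}},
\]
combined with the lower bound $\sqrt{a}+\sqrt{b} \ge 2/\sqrt{c}$ that follows from the hypothesis $\min(a,b) \ge 1/c$. Putting everything together yields the stated inequality with $C = L\sqrt{2/\pi}\cdot \sqrt{c}/2$, which depends only on $c$ and on the Lipschitz constant from the condition.

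There is no real obstacle here: Condition~\ref{cond:lip_and_smooth} supplies exactly the Lipschitz estimate needed, and the lower bound on $\min(a,b)$ is used only to keep $\sqrt{a}+\sqrt{b}$ away from zero so that the square-root parameterization is itself Lipschitz. The upper bound $\max(a,b) < c$ is not strictly needed for this particular bound, but is consistent with how the lemma will be invoked in the downstream perturbation analyses of $\mat{K}^{(h)}$ (e.g.\ in the proof of Theorem~\ref{thm:main_general_framework}), where the diagonal entries of the Gram matrices are controlled in a bounded range by Lemmas~\ref{lem:lem:init_norm}, \ref{lem:lem:init_norm_res}, and~\ref{lem:lem:init_norm_cnn}.
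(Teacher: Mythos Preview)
Your proof is correct and follows essentially the same route as the paper: both reparametrize via a standard Gaussian and exploit the Lipschitz property of $\sigma$. The only cosmetic difference is that the paper bounds the derivative $\frac{d}{d\alpha}\,\expect_{z\sim N(0,1)}[\sigma(\alpha z)] = \expect[z\,\sigma'(\alpha z)]$ and then invokes the mean value theorem, whereas you apply the Lipschitz estimate to the difference directly; your version is arguably cleaner and makes the dependence on the lower bound $\min(a,b)\ge 1/c$ explicit through the factor $1/(\sqrt{a}+\sqrt{b})$.
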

\begin{proof}[Proof of Lemma~\ref{lem:1perb}]
We compute for any $\min (a,b)\le \alpha \le  \max(a,b)$\begin{align*}
	\abs{\frac{d\expect_{z \sim N(0,\alpha)} [\relu{z}]}{d\alpha}} 
	= \abs{\frac{d\expect_{z \sim N(0,1)} [\relu{\alpha z}]}{d\alpha} }
= \abs{ \expect_{z \sim N(0,1)}[z\sigma'(\alpha z)] } \le C.
\end{align*}
Applying Taylor's Theorem we finish the proof.
\end{proof}

\begin{lem}
\label{lem:22perb}
Assume $\relu{\cdot}$ satisfies Condition~\ref{cond:lip_and_smooth}.
Suppose that there exists some constant $c>0$ such that $ \mat A= \begin{bmatrix} a_1^2 & \rho a_1 b_1\\ \rho_1 a_1b_1 & b_1^2 \end{bmatrix}$, $\frac{1}{c}\le \min(a_1,b_1)$, $\max(a_1,b_1) \le c$, $ \mat B= \begin{bmatrix} a_2^2 & \rho_2 a_2 b_2\\ \rho a_2b_2 & b_2^2 \end{bmatrix}$, $\frac{1}{c}\le\min(a_2,b_2)$, $\max(a_2,b_2) \le c$

 and $\mat A,\mat B \succ 0$. 	Define 	$F(\mat A)=\mathbb{E}_{(u,v)\sim N(\vect{0}, \mat A)}\sigma(u)\sigma (v)$.
	Then, we have
\begin{align*}
\abs{F(\mat A) -F(\mat B)} \le C \| \mat A- \mat B \|_F 
 \le 2C\|\mat A-\mat B \|_\infty.
\end{align*} for some constant $C>0$ that depends only on $c$ and the constants in Condition~\ref{cond:lip_and_smooth}.
\end{lem}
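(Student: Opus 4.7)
}

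The strategy is a Gaussian interpolation argument combined with Price's identity (Gaussian integration by parts). Define the linear path of covariance matrices
\[
\mat{\Sigma}(t) \;=\; (1-t)\mat{A} + t\mat{B}, \qquad t\in[0,1],
\]
which stays positive semidefinite since $\mat{A},\mat{B}\succeq 0$, and let $h(t) := F(\mat{\Sigma}(t)) = \mathbb{E}_{(u,v)\sim N(\vect{0},\mat{\Sigma}(t))}[\sigma(u)\sigma(v)]$. The claim $|F(\mat{A})-F(\mat{B})| = |h(1)-h(0)| \le \int_0^1 |h'(t)|\,dt$ reduces the problem to a uniform bound on $h'(t)$.

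Next I would compute the derivative using the standard fact that for a smooth bounded-growth $f$, $\partial_{\Sigma_{ii}} \mathbb{E}_{N(\vect{0},\mat{\Sigma})}[f] = \tfrac{1}{2}\mathbb{E}[\partial_i^2 f]$ and $\partial_{\Sigma_{12}}\mathbb{E}_{N(\vect{0},\mat{\Sigma})}[f] = \mathbb{E}[\partial_1\partial_2 f]$. Applied to $f(u,v)=\sigma(u)\sigma(v)$, the chain rule gives
\[
h'(t) \;=\; \tfrac{1}{2}(B_{11}-A_{11})\,\mathbb{E}[\sigma''(u)\sigma(v)] + \tfrac{1}{2}(B_{22}-A_{22})\,\mathbb{E}[\sigma(u)\sigma''(v)] + (B_{12}-A_{12})\,\mathbb{E}[\sigma'(u)\sigma'(v)],
\]
where the expectation is under $(u,v)\sim N(\vect{0},\mat{\Sigma}(t))$. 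By Condition~\ref{cond:lip_and_smooth}, $|\sigma'|\le c$ and $|\sigma''|\le c$ almost everywhere, and $|\sigma(z)|\le c+c|z|$. Since $\mat{\Sigma}(t)_{11}\in[1/c^2, c^2]$ and $\mat{\Sigma}(t)_{22}\in[1/c^2,c^2]$ (diagonals are convex combinations of $a_i^2$ and $b_i^2$, which are bounded), the variances of $u$ and $v$ are bounded, so $\mathbb{E}[|\sigma(u)|]$ and $\mathbb{E}[|\sigma(v)|]$ are bounded by a constant depending only on $c$. Thus each of the three expectations above is bounded by a constant $C'$ depending only on $c$, giving $|h'(t)|\le C' \cdot 3\|\mat{A}-\mat{B}\|_\infty \le C\|\mat{A}-\mat{B}\|_F$. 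Integrating over $t\in[0,1]$ yields the desired Lipschitz bound, and the final inequality $\|\mat{A}-\mat{B}\|_F \le 2\|\mat{A}-\mat{B}\|_\infty$ follows since $\mat{A}-\mat{B}$ is $2\times 2$.

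\paragraph{Main obstacle.} The only subtle point is justifying Price's identity when $\mat{\Sigma}(t)$ may be degenerate (if the off-diagonal is close to saturating the Cauchy--Schwarz bound along the path). I would handle this by perturbing $\mat{\Sigma}(t)$ to $\mat{\Sigma}(t)+\epsilon\mat{I}$, applying Price's formula in the nondegenerate regime where it is standard, passing the bound to the limit $\epsilon\to 0$ by dominated convergence (the integrand $\sigma(u)\sigma(v)$ has quadratic growth, controlled by the bounded variance), and observing that all the uniform bounds on $h'(t)$ derived above are insensitive to this regularization.
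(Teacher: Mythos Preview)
Your proof is correct and takes a genuinely different route from the paper's. The paper parameterizes the covariance as $\begin{bmatrix} a^2 & \rho ab \\ \rho ab & b^2\end{bmatrix}$, rewrites $F$ as $\E_{(z_1,z_2)\sim N(0,C_\rho)}[\sigma(az_1)\sigma(bz_2)]$ with $C_\rho=\begin{pmatrix}1&\rho\\\rho&1\end{pmatrix}$, and bounds $\partial F/\partial a$, $\partial F/\partial b$ by Cauchy--Schwarz in $L^2$. For $\partial F/\partial\rho$ it invokes the Hermite expansion $\sigma(az)=\sum_i\alpha_i h_i(z)$ so that $F=\sum_i\alpha_i\beta_i\rho^i$ and $|\partial_\rho F|\le(\sum_i i\alpha_i^2)^{1/2}(\sum_i i\beta_i^2)^{1/2}$, finite because $\sigma'$ is bounded; then it chain-rules back to the matrix entries. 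Your Price-identity approach differentiates directly in the covariance entries and skips both the reparameterization and the Hermite machinery, which is cleaner. The tradeoff is that your diagonal terms bring in $\sigma''$, so you use the full smoothness half of Condition~\ref{cond:lip_and_smooth} (Lipschitz $\sigma'$, hence $\sigma''\in L^\infty$ a.e.), whereas the paper's argument only needs $\sigma'\in L^\infty$.

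One small correction: your ``main obstacle'' is not actually an obstacle. The lemma assumes $\mat A,\mat B\succ 0$ strictly, and the set of positive definite matrices is convex, so $\mat\Sigma(t)=(1-t)\mat A+t\mat B\succ 0$ for every $t\in[0,1]$; no $\epsilon\mat I$ regularization is needed. The only regularity point worth flagging is that Condition~\ref{cond:lip_and_smooth} gives $\sigma''$ only almost everywhere; if you want to be fully rigorous you would mollify $\sigma$ (not $\mat\Sigma$), apply Price's formula to the smooth approximant, and pass to the limit using the uniform bounds you already have.
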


\begin{proof}	
Let $ \mat A'= \begin{bmatrix} a^2 & \rho a b\\ \rho ab & b^2 \end{bmatrix} \succ 0$ 
with $\min(a_1,a_2)\le a\le \max(a_1,a_2)$, $\min(b_1,b_2) \le b \le \max(b_1,b_2)$ and $\min(\rho_1,\rho_2) \le \rho \le \max(\rho_1,\rho2)$. We can express 
\begin{align*}
F(\mat A') = \E_{(z_1,z_2)\sim \cN(0,C) } \sigma(a z_1) \sigma(bz_2)
\text{ with }\mat C=\begin{pmatrix} 1 & \rho \\ \rho &1 \end{pmatrix}.
\end{align*}

Recall $\gaussianspace =\{f: \int f(z) e^{-z^2/2} dz <\infty\}$ is the Gaussian function space.
We compute
\begin{align*}
\frac{dF}{da} &= \E[ \sigma'(az_1) \sigma(bz_2) z_1]\\
\Big|\frac{dF}{da}\Big| &\le \|\sigma'(az_1) z_1 \|_{L^2} \| \sigma(b z_2)\|_{L^2} \tag{$\|f\|_{L^2} := (\E f(z)^2)^{1/2}$, Cauchy}\\
&<\infty \tag{by Condition~\ref{cond:lip_and_smooth}}
\end{align*}
By the same argument, we have
\begin{align*}
\Big|\frac{dF}{db}\Big| <\infty 
\end{align*}
Next, let $\sigma_a (z):= \sigma(az) $ with Hermite expansion $\sigma_a (z) = \sum_{i=0}^\infty \alpha_i h_i (z)$ and similarly $\sigma_b (z) = \sum_i \beta_i h_i(z)$. Using the orthonormality that $\E [h_i (z) h_j(z)  ]=1_{i=j}$,
\begin{align*}
F(A) = \sum_{i=0}^\infty \alpha_i \beta_i \rho^i.
\end{align*}
Differentiating, we have 
\begin{align*}
\Big| \frac{dF}{d\rho} \Big|&= \Big|\sum_{i=1}^\infty \alpha_i \beta_i i \rho^{i-1}\Big|\\
&<\big( \sum_{i=1}^\infty \alpha_i ^2 i \big)^{1/2} \big( \sum_{i=1}^\infty \beta_i ^2 i \big)^{1/2} \tag{ $\rho=1$ and Cauchy}\\
&<\infty \tag{ Condition~\ref{cond:lip_and_smooth}}
\end{align*}

Note by Condition~\ref{cond:lip_and_smooth} we know there exists $B_\rho$, $B_a$ and $B_b$ such that $\big| \frac{dF}{d \rho}\big| \le B_\rho$,$ \big| \frac{dF}{d a}\big|\le B_a$,\text{ and } $\big| \frac{dF}{d b}\big|\le B_b$. 

Next, we bound $\nabla_{\mat A'} F(\mat A')$.
We see that 
\begin{align*}
\Big|\frac{dF}{dA_{11}' }\Big| &\le \Big| \frac{dF}{da}\Big| \Big|\frac{da}{dA_{11}'}\Big|\\
&\le B_a \frac{1}{2 \sqrt{A_{11}'}} \tag{since $ a= \sqrt{A_{11}'}$}\\
	&\le\frac12 B_a /c\\
	\Big|\frac{dF}{dA_{11}' }\Big| &\le \frac12 B_b /c \tag{analogous argument asa bove.}
\end{align*}

Using the change of variables, let $$g(A_{11}', A_{22}',A_{12}') = [ \sqrt{A_{11}'}, \sqrt{A_{22}'}, A_{12}'/ \sqrt{A_{11}' A_{22}'}]=[a, b, \rho].$$
By chain rule, we know
\begin{align*}
\frac{\partial F}{\partial A_{12}' } = \frac{\partial F}{\partial a}\frac{\partial a}{\partial  A_{12}'} + \frac{\partial  F}{\partial b}\frac{\partial b}{\partial A_{12}'}+\frac{\partial F}{\partial \rho}\frac{\partial \rho}{\partial  A_{12}'}= \frac{\partial F}{\partial \rho} \frac{\partial \rho}{\partial A_{12}'}.
\end{align*}
We can easily verify that $|\frac{\partial \rho}{\partial A_{12}'} |\le 1/c^2$, and so we have 
\begin{align*}
\big|\frac{\partial F}{\partial A_{12}' }\big| &\le \frac{B_\rho}{c^2}.
\end{align*}
Similarly, we have 
\begin{align*}
\Big| \frac{\partial F}{\partial A_{11}'} \Big| \le \frac{B_a}{c^2}\\
\Big| \frac{\partial F}{\partial A_{22}'} \Big| \le \frac{B_b}{c^2}
\end{align*}
Define $ B_\sigma= \max(B_a,B_b,B_\rho)$.
This establishes $\|\nabla F(\mat A') \|_F \le 2 B_\sigma/ c^2 \le C$ for some constant $C>0$. Thus by Taylor's Theorem, we have 
\begin{align*}
| F(\mat A) -F(\mat B) | \le C \| \mat A- \mat B \|_F 
 \le 2C\|\mat A-\mat B \|_\infty .
\end{align*}

\end{proof}

With Lemma~\ref{lem:1perb} and~\ref{lem:22perb}, we can prove the following useful lemma.
\begin{lem}\label{lem:2p2pperb}
Suppose $\relu{\cdot}$ satisfies Condition~\ref{cond:lip_and_smooth}
For a positive definite matrix $\mat{A} \in \mathbb{R}^{2p \times 2p}$, define \begin{align*}
\mat{F}(\mat{A}) = \expect_{\vect{U}\sim N(0,\mat{A})}\left[\relu{\vect{U}}\relu{\vect{U}}^\top\right], \\
\mat{G}(\mat{A}) =\expect_{ \vect{U}\sim N(\vect{0},\mat{A})}  \left[\relu{\vect{U}}\right].
\end{align*}
Then for any two positive definite matrices $\mat{A},\mat{B}$ with $\frac{1}{c}\le \mat{A}_{ii},\mat{B}_{ii} \le c$ for some constant $c>0$, we have \begin{align*}
\norm{\mat{G}(\mat{A})-\mat{G}(\mat{B})}_{\infty} \vee\norm{\mat{F}(\mat{A})-\mat{F}(\mat{B})}_{\infty} \le C\norm{\mat{A}-\mat{B}}_{\infty}
\end{align*}
for some constant $C>0$.
\end{lem}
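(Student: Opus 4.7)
The plan is to reduce the $2p$-dimensional statement to the one- and two-dimensional perturbation bounds already established in Lemmas~\ref{lem:1perb} and~\ref{lem:22perb}, by exploiting the fact that each scalar entry of $\mat{G}(\mat{A})$ or $\mat{F}(\mat{A})$ only depends on at most a $2 \times 2$ principal submatrix of $\mat{A}$.

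First, for $\mat{G}$: the $i$-th coordinate is $\mat{G}(\mat{A})_i = \expect_{U \sim N(0, \mat{A}_{ii})}[\sigma(U)]$, which depends solely on the scalar $\mat{A}_{ii}$. Since $\mat{A}_{ii}, \mat{B}_{ii} \in [1/c, c]$ by hypothesis, Lemma~\ref{lem:1perb} (with variance parameters $\mat{A}_{ii}$ and $\mat{B}_{ii}$) gives $|\mat{G}(\mat{A})_i - \mat{G}(\mat{B})_i| \le C_1 |\mat{A}_{ii} - \mat{B}_{ii}| \le C_1 \|\mat{A} - \mat{B}\|_\infty$. Taking the maximum over $i \in [2p]$ controls $\|\mat{G}(\mat{A}) - \mat{G}(\mat{B})\|_\infty$.

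Second, for $\mat{F}$: the $(i,j)$-entry is $\mat{F}(\mat{A})_{ij} = \expect[\sigma(U_i)\sigma(U_j)]$ with $(U_i, U_j)$ having covariance equal to the $2\times 2$ principal submatrix $\mat{A}_{\{i,j\}} = \begin{pmatrix} \mat{A}_{ii} & \mat{A}_{ij} \\ \mat{A}_{ji} & \mat{A}_{jj}\end{pmatrix}$ (and analogously for $\mat{B}$). Because $\mat{A}, \mat{B} \succ 0$, every $2 \times 2$ principal submatrix is positive definite, and the diagonal bounds $1/c \le \mat{A}_{ii},\mat{B}_{ii} \le c$ let us apply Lemma~\ref{lem:22perb} directly to obtain $|\mat{F}(\mat{A})_{ij} - \mat{F}(\mat{B})_{ij}| \le 2C_2 \|\mat{A}_{\{i,j\}} - \mat{B}_{\{i,j\}}\|_\infty \le 2C_2 \|\mat{A} - \mat{B}\|_\infty$, uniformly over $(i,j)$.

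Setting $C = \max(C_1, 2C_2)$ yields both bounds simultaneously. The only minor subtlety to check is that the hypotheses of Lemma~\ref{lem:22perb} are genuinely satisfied for every principal $2\times 2$ submatrix: positive definiteness is inherited from $\mat{A} \succ 0$, and the diagonal bound is inherited coordinate-wise; no extra conditions on the off-diagonal entries are needed since the Lemma's constant $C$ only depends on the diagonal bound $c$ and on Condition~\ref{cond:lip_and_smooth}. There is no main obstacle here — the proof is a direct $(i,j)$-wise reduction to the lemmas already in hand.
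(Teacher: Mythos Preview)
Your proposal is correct and follows exactly the same approach as the paper: the paper's proof is the single sentence ``The result follows by applying Lemma~\ref{lem:1perb} to all coordinates and applying Lemma~\ref{lem:22perb} to all $2\times 2$ submatrices,'' and your argument is precisely a fleshed-out version of this reduction.
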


\begin{proof}[Proof of Lemma~\ref{lem:2p2pperb}]
The result follows by applying Lemma~\ref{lem:1perb} to all coordiniates and applying Lemma~\ref{lem:22perb} to all $2\times 2$ submatrices.
\end{proof}

\begin{lem}\label{lem:linear_ind}
If $\vect{v}_1,\ldots,\vect{v}_n \in \mathbb{R}^d$ satisfy that $\norm{\vect{v}_i}_2 = 1$ and non-parallel (meaning $\vect{v}_i \notin \text{span}(\vect{v}_j)$ for $i \neq j$), then the matrix $\begin{bmatrix}
\vectorize{\vect{v}_1^{\otimes n}},\ldots, \vectorize{\vect{v}_n^{\otimes n}} 
\end{bmatrix}
\in \mathbb{R}^{d^n \times n}$ has rank-$n$.
\end{lem}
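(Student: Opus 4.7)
The plan is to prove the claim by induction on $n$, exploiting the multilinearity of the tensor power to peel off one vector at a time. The base case $n=1$ is immediate: the single column is the unit vector $\vect{v}_1\ne\vect{0}$, so the matrix has rank one. The inductive hypothesis I will use is that for \emph{any} ambient dimension $d'$, any $n-1$ pairwise non-parallel vectors in $\mathbb{R}^{d'}$ have linearly independent $(n-1)$-th tensor powers.

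For the inductive step, assume pairwise non-parallel unit vectors $\vect{v}_1,\ldots,\vect{v}_n\in\mathbb{R}^d$ are given (necessarily $d\ge 2$ once $n\ge 2$), and suppose a dependence $\sum_{i=1}^n a_i\vect{v}_i^{\otimes n}=0$ holds. Viewing each $\vect{v}_i^{\otimes n}$ as an $n$-multilinear form and contracting the first slot against an arbitrary test vector $\vect{w}\in\mathbb{R}^d$ produces
\[
\sum_{i=1}^n a_i\,\langle\vect{v}_i,\vect{w}\rangle\,\vect{v}_i^{\otimes(n-1)}\;=\;\vect{0}\quad\text{in }\mathbb{R}^{d^{n-1}}.
\]
The central step is to choose $\vect{w}$ inside the hyperplane $\vect{v}_1^{\perp}$ so that $\langle\vect{v}_i,\vect{w}\rangle\ne 0$ for every $i\ge 2$. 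For each such $i$ the set of "bad" $\vect{w}$'s is $\vect{v}_1^{\perp}\cap\vect{v}_i^{\perp}$, which is a \emph{proper} subspace of $\vect{v}_1^{\perp}$: equality $\vect{v}_1^{\perp}\subseteq\vect{v}_i^{\perp}$ would give $\vect{v}_i\in(\vect{v}_1^{\perp})^{\perp}=\mathrm{span}(\vect{v}_1)$, contradicting non-parallelism. A generic $\vect{w}\in\vect{v}_1^{\perp}$ therefore avoids the finite union of these proper subspaces.

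With such a $\vect{w}$ fixed, the contracted relation becomes $\sum_{i=2}^n\bigl(a_i\langle\vect{v}_i,\vect{w}\rangle\bigr)\vect{v}_i^{\otimes(n-1)}=\vect{0}$, involving only the $n-1$ vectors $\vect{v}_2,\ldots,\vect{v}_n$, which are still pairwise non-parallel. The inductive hypothesis gives linear independence of their $(n-1)$-th tensor powers, forcing $a_i\langle\vect{v}_i,\vect{w}\rangle=0$ and hence $a_i=0$ for every $i\ge 2$. The original equation then collapses to $a_1\vect{v}_1^{\otimes n}=\vect{0}$, which yields $a_1=0$ since $\vect{v}_1\ne\vect{0}$. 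The only non-routine moment is the existence of a suitable $\vect{w}$; once this is secured by the genericity argument above, the proof is a clean multilinear-algebra induction with no significant obstacle.
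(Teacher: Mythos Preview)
Your proof is correct. Both your argument and the paper's proceed by induction on $n$ and reduce to the $(n-1)$-case by contracting one slot of the tensor relation $\sum_i a_i\vect{v}_i^{\otimes n}=0$; the difference is purely in how the contracting vector is chosen. The paper contracts against each coordinate vector $\vect{e}_p$, obtains the family $\sum_i(\alpha_i\vect{v}_{i,p})\vect{v}_i^{\otimes(n-1)}=0$, and then takes a suitable linear combination of two such relations to cancel the $i=1$ term before invoking the inductive hypothesis. You instead pick a single $\vect{w}\in\vect{v}_1^{\perp}$ generic enough to keep $\langle\vect{v}_i,\vect{w}\rangle\neq 0$ for $i\ge 2$, which kills the first term in one stroke. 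Your route is a bit shorter and avoids the coordinate bookkeeping; the paper's route avoids the ``finite union of proper subspaces'' genericity argument. Both are equally valid implementations of the same inductive idea.
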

\begin{proof}[Proof of Lemma~\ref{lem:linear_ind}]
We prove by induction. For $n=2$, $v_1 v_1 ^\top, v_2 v_2 ^\top$ are linearly independent under the non-parallel assumption. 
By induction suppose $\{\vectorize{\vect{v}_1^{\otimes n-1}},\ldots, \vectorize{\vect{v}_{n-1}^{\otimes n-1}} \}$ are linearly independent. 
Suppose the conclusion does not hold, then there exists $\alpha_1,\ldots,\alpha_n \in \mathbb{R}$ not identically $0$, such that \begin{align*}
\sum_{i=1}^n \alpha_i \vectorize{\vect{v}_i^{\otimes n}} = 0,
\end{align*}
which implies for $p=1,\ldots,d$\begin{align*}
	\sum_{i=1}^{n} (\alpha_i \vect{v}_{i,p}) \vectorize{\vect{v}_i^{\otimes (n-1)}} = 0.
\end{align*}
Note by induction hypothesis any size $(n-1)$ subset of\\ $\left\{\vectorize{\vect{v}_1^{\otimes (n-1)}},\ldots,\vectorize{\vect{v}_n^{\otimes (n-1)}}\right\}$ is linearly independent.
This implies if $\alpha_i \vect{v}_{i,p} = 0$ for some $i \in [n]$ and $p\in[d]$, then we must have $\alpha_{j} \vect{v}_{j,p} = 0$ for all $j \in [n]$.
Combining this observation with the assumption that every $\vect{v}_i$ is non-zero, there must exist  $p \in [d]$ such that $\vect{v}_{i,p} \neq 0$ for all $i \in [n]$. Without loss of generality, we assume $\vect{v}_{i,1} \neq 0 $ for all $i \in [n]$.

Next, note if there exists $\alpha_i = 0$, then we have $\alpha_j =0$ for all $j \in [n]$ because $\vect{v}_{j,p} \neq 0$ for all $j \in [n]$ and the linear independence induction hypothesis. Therefore from now on we assume $\alpha_i \neq 0$ for all $i \in [n]$.

For any $p \in [d]$ ,
we have \begin{align*}
\sum_{i=1}^{n} (\alpha_i \vect{v}_{i,p}) \vectorize{\vect{v}_i^{\otimes (n-1)}} = 0\text{ and } \sum_{i=1}^{n} (\alpha_i \vect{v}_{i,1}) \vectorize{\vect{v}_i^{\otimes (n-1)}} = 0.
\end{align*}

By multiplying the second equation by $\frac{\vect{v}_{1,p}}{\vect{v}_{1,1}}$ and subtracting, \begin{align*}
\sum_{i=2}^{n} (\alpha_i \vect{v}_{i,p} -\alpha_i \frac{\vect{v}_{1,p}}{\vect{v}_{1,1}} \vect{v}_{i,1}) \vectorize{\vect{v}_i^{\otimes (n-1)}}=0.
\end{align*}
Using the linear independence induction hypothesis,  we know for $i =2,\ldots,n$:\begin{align*}
\frac{\vect{v}_{i,p}}{\vect{v}_{1,1}}= \frac{\vect{v}_{1,p} }{\vect{v}_{1,1}}.
\end{align*} 
Therefore we know \begin{align*}
\frac{\vect{v}_{1,p}}{\vect{v}_{1,1}} = \cdots = \frac{\vect{v}_{n,p}}{\vect{v}_{n,1}}.
\end{align*}

Thus there exists $c_2,\ldots,c_d \in \mathbb{R}^d$ such that \begin{align*}
	\vect{v}_{i,p} = c_p\vect{v}_{i,1} \text{ for all $i \in[n]$.}
\end{align*}
Note this implies all $\vect{v}_i$, $i \in [n]$ are on the same line.
This contradicts with the non-parallel assumption.
\end{proof}

\end{document}